\documentclass{article} 
\usepackage{iclr2027_conference,times}

\usepackage{amsmath,amsfonts,bm}

\def\eqref#1{equation~\ref{#1}}

\def\1{\bm{1}}

\DeclareMathAlphabet{\mathsfit}{\encodingdefault}{\sfdefault}{m}{sl}
\SetMathAlphabet{\mathsfit}{bold}{\encodingdefault}{\sfdefault}{bx}{n}

\usepackage{hyperref}
\usepackage{url}
\hypersetup{colorlinks=true,linkcolor=blue,citecolor=blue,urlcolor=blue,breaklinks=true}

\usepackage{amsthm,placeins}
\newtheorem{fevtheorem}{Theorem}[section]
\newtheorem{fevdefinition}[fevtheorem]{Definition}
\newtheorem{fevlemma}[fevtheorem]{Lemma}
\newtheorem{fevproposition}[fevtheorem]{Proposition}
\newtheorem{fevcorollary}[fevtheorem]{Corollary}
\theoremstyle{remark}
\newtheorem{fevremark}[fevtheorem]{Remark}
\theoremstyle{plain}
\usepackage{titletoc}
\usepackage{booktabs,array,graphicx,xcolor,colortbl}

\providecolor{FEline}{HTML}{C6CBD0}
\definecolor{FlexEvoAppGreen}{HTML}{008378}
\definecolor{FlexEvoAppRed}{HTML}{AD5363}
\definecolor{FlexEvoAppLine}{HTML}{C6CBD0}
\definecolor{FlexEvoAppLabel}{HTML}{363B40}

\providecommand{\FlexEvoAppBetter}[1]{%
  \textcolor{FlexEvoAppGreen}{#1}}
\providecommand{\FlexEvoAppWorse}[1]{%
  \textcolor{FlexEvoAppRed}{#1}}

\providecommand{\FlexEvoAppHeader}[1]{%
  \shortstack[c]{#1}}

\providecommand{\FlexEvoAppTest}[2]{%
  \addlinespace[2.5pt]
  \multicolumn{#1}{@{}l}{%
    \textcolor{FlexEvoAppLabel}{\textbf{#2}}%
  }\\[1pt]}
\usepackage{xcolor,colortbl}
\providecolor{FEline}{HTML}{C6CBD0}
\definecolor{FEAline}{HTML}{C6CBD0}
\definecolor{FEAGreen}{HTML}{008378}
\colorlet{FEABest}{FEAGreen!30}   
\colorlet{FEASecond}{FEAGreen!10} 
\usepackage{booktabs,array,xcolor,caption,graphicx}
\definecolor{FEgain}{HTML}{008378}
\definecolor{FEloss}{HTML}{BE4057}
\newcommand{\FlexEvoTableEntry}[3]{#1\raisebox{1.5pt}{\hspace{0.25pt}%
  \textcolor{#2}{\fontsize{4.7}{4.7}\selectfont #3}}}
\newcommand{\FlexEvoTableGain}[2]{\FlexEvoTableEntry{#1}{FEgain}{#2}}
\newcommand{\FlexEvoTableLoss}[2]{\FlexEvoTableEntry{#1}{FEloss}{#2}}
\newcommand{\FlexEvoTestHeader}[1]{\multicolumn{1}{c}{\fontsize{6}{6.5}\selectfont #1}}
\newcommand{\FlexEvoScenario}[1]{\multicolumn{19}{@{}l}{\textit{#1}}\\[-0.3pt]}

\usepackage{booktabs,multirow,array,amssymb,amsmath}
\usepackage{colortbl,xcolor,rotating}

\usepackage{amsmath}
\usepackage{amssymb}
\usepackage{amsfonts}
\usepackage{booktabs}  
\usepackage{multirow}  
\usepackage{graphicx}  
\usepackage{multirow}
\usepackage{booktabs}
\usepackage{threeparttable}
\usepackage{siunitx}
\usepackage{algorithm}
\usepackage{algpseudocode}
\usepackage{amsmath,amssymb} 
\definecolor{TopOne}{RGB}{91,155,213}      
\definecolor{TopTwo}{RGB}{221,235,247}     
\definecolor{TopThree}{RGB}{217,217,217}   

\usepackage{mathrsfs}

\theoremstyle{remark}

\usepackage{booktabs}
\usepackage{array}
\usepackage{xurl}
\usepackage{tabularx}

\title{Robust Biomolecular Complex Design Across Protein Conformational Landscapes}

\author{
\begin{tabular}{@{}l@{}}
\textbf{Qingyuan Zeng}$^{1}$, \textbf{Zongqi Xu}$^{2,1}$, \textbf{Anglin Liu}$^{1}$, \textbf{Ziqi Gong}$^{1}$, \textbf{Pengxiang Cai}$^{1}$,\\
\textbf{Zixin Guan}$^{3}$, \textbf{Yunan Chen}$^{1}$, \textbf{Sen Gao}$^{1}$, \textbf{Min Zhou}$^{1}$, \textbf{Jintai Chen}$^{1}$\thanks{Corresponding author}\\
$^{1}$The Hong Kong University of Science and Technology (Guangzhou)\\
$^{2}$Southwest Minzu University\\
$^{3}$Guangzhou University of Chinese Medicine\\
\texttt{jintaichen@hkust-gz.edu.cn}
\end{tabular}%
}
\iclrfinalcopy 
\begin{document}

\maketitle

\begin{abstract}

Proteins populate conformational ensembles, yet structure-based biomolecular design typically optimizes candidates against a single target conformation. Consequently, a candidate that fits one state can lose favorable interactions or develop steric clashes when the target adopts another. We introduce \textsc{FlexEvo}, a model-agnostic evolutionary framework that adapts candidates once at inference time from a single target conformation to improve compatibility with alternative natural conformations unseen during adaptation, without retraining the source model or requiring a conformational ensemble. \textsc{FlexEvo} casts cross-state adaptation as geometry-constrained bi-objective optimization, balancing preservation of input-state interactions against robustness to plausible conformational perturbations. To limit the search space and reduce invalid structural edits, geometry-derived \emph{FlexBoxes} define protected anchor regions, adaptable regions for local exploration, and forbidden regions for clash avoidance. A unified all-atom representation supports topology-preserving adaptation across diverse binder categories, while Pareto selection preserves nondominated candidates across the two objectives. We evaluate \textsc{FlexEvo} across multiple generation baselines and nine representative binder categories spanning diverse molecular sizes and structural topologies. \textsc{FlexEvo} reduces the category-balanced mean relative
performance degradation from 47.8\% to 4.4\%, while adding only
1.4--3.1 minutes of adaptation per sample.
These results establish single-state inference-time adaptation
as a practical route toward robust biomolecular complex design
across protein conformational landscapes.

\end{abstract}

\section{Introduction}

Structure-based prediction and design of biomolecular complexes provide a computational route to developing molecular binders for target proteins. Recent advances in geometric learning and generative modeling have enabled the prediction and generation of three-dimensional candidate complexes conditioned on target protein structures \citep{targetdiff,decompdiff,pepglad,rfdiffusion}. However, many existing approaches rely on a single-conformation assumption at inference time, optimizing candidates against one supplied or predicted realization of the binding pocket or interaction surface \citep{flexsbdd,dynamicflow}. Proteins instead populate conformational ensembles, where transitions among apo, holo, and alternative functional states can reshape binding geometries, disrupt favorable contacts, or introduce steric clashes \citep{boehr2006dynamic,boehr2009,dynamicbind}. Consequently, a candidate optimized for one target state may lose compatibility when the same target adopts another naturally occurring conformation.

\begin{figure}
    \centering
    \includegraphics[width=1\linewidth]{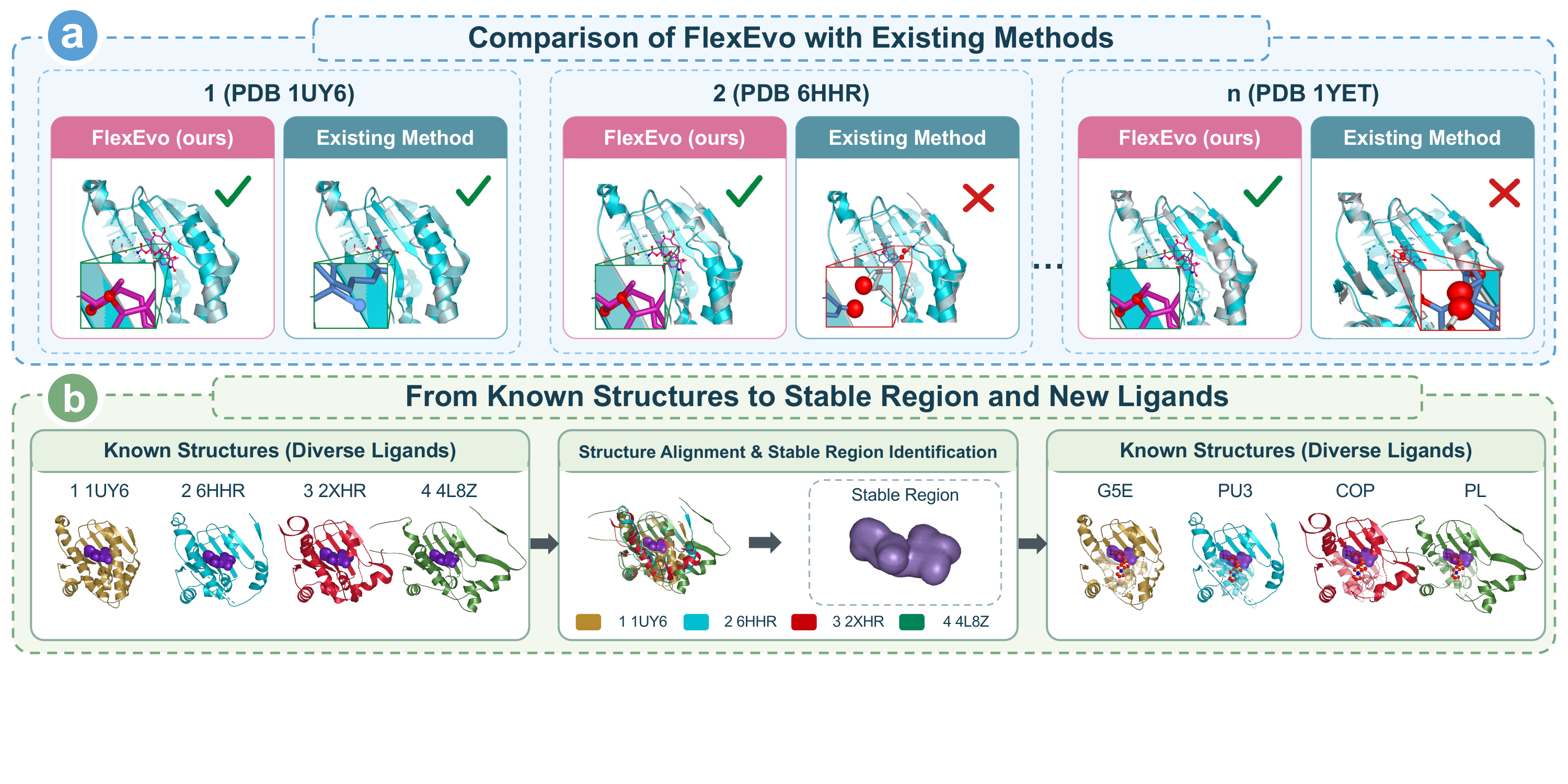}
    \caption{Motivation for cross-conformation ligand design. (a) Comparison of FlexEvo and single-state design methods, illustrating ligand compatibility across alternative target conformations. (b) Alignment of known protein–ligand complexes highlights a shared stable binding region, motivating ligand design that preserves conserved interactions while accommodating conformational variation.}
    \label{fig:1}
\end{figure}

Existing approaches can be broadly categorized into two settings. Static structure-conditioned predictors and generators treat the target as a fixed structural input, leaving conformational variation outside the candidate-design objective \citep{targetdiff,pepglad,rfdiffusion}. Flexibility-aware methods relax this assumption by incorporating receptor ensembles \citep{huang2006ensemble}, modeling conformational motions \citep{induced_fit_docking,dynamicbind}, or jointly generating receptor--binder complexes \citep{neuralplexer,flexsbdd,dynamicflow}, but this does not by itself establish that a single candidate will remain compatible with natural target conformations unseen during optimization. This distinction matters in practice because the target state encountered after design is often unknown: a candidate optimized for one state may lose favorable contacts or develop steric clashes in another. This gap raises a central question: \textit{can a candidate be adapted once from a single observed conformation and still remain compatible with multiple natural conformations of the same target, without redesigning it for each state?}

Achieving this goal requires accommodating unobserved target variation while preserving favorable interactions in the observed state. Target conformational changes range from side-chain and loop rearrangements in compact pockets to backbone and interface remodeling across extended surfaces, while binder adaptation may require coordinated changes in pose, torsions, backbone geometry, and side-chain conformations \citep{dynamicbind,flexpepdock,rosettadock4}. Moreover, the number of atomic variables grows with binder size, rapidly expanding the search space. Because bonds, torsions, and steric constraints couple these variables, unrestricted edits can produce invalid geometries \citep{buttenschoen2024posebusters}. Beyond biomolecular design, enforcing geometric constraints has also been shown to improve perceptual grounding in medical multimodal models \citep{medscout}, and related medical deep learning work has applied attention and convolutional architectures to IVIM-DWI and pharmacokinetic MRI analysis \citep{liu2022ivim,zeng2021attentionmvi,zeng2021pharmacokinetic}.

To address these challenges, we introduce \textsc{FlexEvo}, a model-agnostic evolutionary adaptation framework that refines candidates from existing structure-conditioned predictors and generators without modifying their source models. Given a single observed target conformation and an initial candidate population, \textsc{FlexEvo} formulates cross-state adaptation as bi-objective optimization over a geometry-constrained search space, balancing retention of input-state interactions against robustness to plausible conformational perturbations. Perturbation robustness is used as a proxy for compatibility with unseen natural target states. To accommodate structural diversity across binder classes, \textsc{FlexEvo} combines a unified all-atom representation of target--candidate complexes with binder-specific structural variation and repair. To limit the search space and reduce invalid structural edits, \textsc{FlexEvo} maps local packing and contact geometry of the input complex, together with protein excluded volume, to local \emph{FlexBoxes}: protected anchor regions, adaptable regions for local exploration, and forbidden regions for clash avoidance. \textsc{FlexEvo} uses structural feedback to update the spatial
constraints guiding adaptation, with candidate moves guided by
target geometry and available bond topology.
Pareto selection retains nondominated solutions across the
two objectives. Related lines of work similarly couple evolutionary search with model robustness and multi-objective trade-offs, including gradient--evolutionary multiform attacks \citep{gong2025crossmodality}, attention-shift cross-task attacks \citep{zeng2024crosstask}, and decision-based black-box attacks on image-to-text models \citep{zeng2024aaa}. Our contributions can be summarized as follows. \begin{itemize} 

\item We formulate inference-time binder adaptation from a
single target conformation as a geometry-constrained
bi-objective optimization, balancing input-state
interaction preservation with robustness to reference-derived
geometric perturbations.
Adaptation requires neither source-model retraining nor
access to alternative natural target conformations.

\item We propose \textsc{FlexEvo}, which couples an outer
evolutionary search over residue-level constraint configurations
with inner topology-preserving candidate adaptation guided
by decoded \emph{FlexBoxes}.
Structural feedback updates the configurations, while Pareto
selection retains nondominated preservation--robustness
trade-offs.

\item We conduct extensive evaluations across nine representative
binder categories spanning diverse molecular sizes, structural
topologies, and interaction geometries, using multiple generation
baselines. \textsc{FlexEvo} reduces the category-balanced mean
relative performance degradation from 47.8\% to 4.4\%, while adding only 1.4--3.1 minutes per sample.

\end{itemize}

\section{Related Work}

\subsection{Single-State Biomolecular Complex Prediction and Design}

Existing methods predict or design complexes around a single target realization. For small molecules, Pocket2Mol uses autoregressive generation, whereas TargetDiff, DiffSBDD, and DecompDiff use pocket-conditioned diffusion \citep{pocket2mol,targetdiff,diffsbdd,decompdiff}. Complementary to these single-target generators, REUSE recovers dual-target molecules by evolving the input noise of a frozen single-target diffusion model without retraining or altering denoising dynamics \citep{zeng2026reuse}. PepGLAD extends full-atom generation to peptides, while RFdiffusion and BindCraft design target-specific protein binders \citep{pepglad,rfdiffusion,bindcraft}. General-purpose predictors, including AlphaFold~3, Chai-1, Protenix, and BoltzGen, predict complexes spanning these binder classes \citep{alphafold3,chai1,protenix,boltzgen}. Despite their different architectures, these methods optimize or assess candidates against one realization of the target geometry, effectively treating it as rigid. Because target conformational variation is absent from the design objective, a candidate that fits one state may lose favorable contacts or develop steric clashes in another.

\subsection{Flexibility-Aware Biomolecular Complex Modeling}

To relax this rigid-target assumption, classical methods explicitly search receptor and binder flexibility through induced-fit docking, peptide folding and docking, or protein backbone ensembles \citep{induced_fit_docking,flexpepdock,rosettadock4}. Ensemble docking and multistate design further consider multiple supplied structures during optimization \citep{huang2006ensemble,amaro2018ensemble,sauer2020multistate}. Learning-based methods such as NeuralPLexer and DynamicBind model coupled receptor--ligand changes, while FlexSBDD, DynamicFlow, and YuelDesign incorporate protein flexibility into ligand generation \citep{neuralplexer,dynamicbind,flexsbdd,dynamicflow,yueldesign}. These methods model target flexibility, but do not ensure compatibility when the target state encountered after design is unknown. State-specific methods construct a favorable complex for one recovered conformation, leaving compatibility with other states unresolved. Ensemble-based methods broaden coverage only over conformations available beforehand; however, available structures often represent only a small fraction of the natural ensemble, and the relevant post-design state may not be known in advance. FlexEvo instead adapts one candidate from one observed conformation for compatibility with unseen natural states, extending this capability across diverse binder categories. Outside biomolecular docking, bilevel evolutionary inference of interpretable dynamics from visual observations \citep{lin2026visionlaw} and bi-level contextualization of broad biomedical knowledge into scenario-grounded propositions \citep{zeng2026scene} similarly couple upper-level search with lower-level evidence feedback.

\subsection{Evolutionary Optimization, Graph Learning, and Robustness}
\label{sec:related_evo_robust}

Beyond biomolecular modeling, multi-objective evolutionary search has been used to generate diagnostic and actionable explanations for fair graph neural networks \citep{wang2024fairgnn} and to learn multiview subgraph representations under scarce labels \citep{wang2025muse}. Gradient--evolutionary multiform optimization improves cross-modality attack transferability \citep{gong2025crossmodality}, while related robustness studies examine attention-based cross-task attacks \citep{zeng2024crosstask}, decision-based black-box attacks on image-to-text models \citep{zeng2024aaa}, black-box removal of 3DGS and diffusion watermarks \citep{zeng2026digitalink,zeng2025watermark}, dual-saliency sample selection for backdoor evaluation \citep{zhang2025dualcam}, and structured-irrelevance training for robust neural PDE solvers \citep{gong2025piif}.

\section{Methodology}
\subsection{Problem Setup and Overview}
\label{sec:setup_overview}

Given a single observed target conformation $P^0$ and an initial
pool $\mathcal{X}_0$ of binder structures positioned relative to it,
\textsc{FlexEvo} adapts candidates from an existing predictor or
generator once at inference time.
The goal is to preserve favorable interactions with $P^0$ while
improving compatibility with unseen natural conformations of the
same target.
Adaptation uses only $P^0$ and $\mathcal{X}_0$, without retraining
the source model; alternative natural conformations are reserved
for evaluation.
Let $G\in\mathcal{G}(P^0)$ denote an admissible residue-level
constraint configuration (\emph{Box genome}), and let
$\mathcal{X}_G=\mathcal{A}(\mathcal{X}_0;P^0,G)$ denote the
candidate pool obtained through constrained editing and repair.
We formulate the search over $G$ as a bi-objective surrogate problem:
\begin{equation}
\label{eq:problem_setup}
\max_{G\in\mathcal{G}(P^0)}
\quad
\bigl(
F_{\mathrm{pres}}(\mathcal{X}_G;P^0,G),
F_{\mathrm{rob}}(\mathcal{X}_G;P^0,G)
\bigr).
\end{equation}
Here, $F_{\mathrm{pres}}$ and $F_{\mathrm{rob}}$ are genome-level
scores for interaction preservation and perturbation robustness,
respectively, with geometric validity included in the latter.
Robustness is assessed using local geometric perturbations derived
from $P^0$ under $G$, as a proxy for cross-state compatibility. As illustrated in Fig.~\ref{fig:method}, each genome encodes
region labels and box scales that decode into \emph{FlexBoxes},
specifying where and how candidates can be edited.
For each genome, the inner procedure adapts the same initial
pool $\mathcal{X}_0$; candidate scores are then aggregated for
genome evaluation.
The outer search uses genome variation and structural feedback
to update constraints, while Pareto-based selection retains
configurations with different objective trade-offs.

\subsection{Evolvable FlexBox Prior}
\label{sec:flexbox}

Static packing and radial position provide structural cues to protein fluctuations \citep{jamroz2012,lin2008,shih2007}.
Studies of protein--protein recognition further show that
preorganized anchors can coexist with local side-chain
adjustment \citep{rajamani2004}. These observations, together with the need to avoid steric
clashes \citep{buttenschoen2024posebusters}, motivate distinct editing roles for anchoring, local adaptation, and avoidance. The outer search in \textsc{FlexEvo} evolves the \emph{spatial prior governing
candidate adaptation}, separating the allocation of local editing freedom
from the structures produced under that prior.
We first screen frozen source-model outputs for basic geometric defects and
redundancy to obtain $\mathcal X_0$ (Fig.~\ref{fig:method}(1)).
This pool is constructed once, without using either evolutionary objective,
and is shared across genome evaluations.

From $P^0$, we compute fixed residue descriptors, centroids
$c_r\in\mathbb R^3$, and orthonormal frames $Q_r\in\mathbb R^{3\times3}$
for the $R$ represented target residues.
A Box genome specifies the residues covered by the prior, their editing
roles, and per-residue scales:
\begin{equation}
\label{eq:genome_representation}
\begin{aligned}
G&=\bigl(\mathcal R_G,\{\ell_r\}_{r\in\mathcal R_G},Z\bigr),
& Z&\in[z_{\min},z_{\max}]^R,\\
\mathcal K&=\{\mathrm{stable},\mathrm{flexible},\mathrm{forbidden}\},
& \ell_r&\in\mathcal K,\quad r\in\mathcal R_G,
\end{aligned}
\end{equation}
where $\mathcal R_G\subseteq\{1,\ldots,R\}$ and
$0<z_{\min}<z_{\max}$.
Only residues in $\mathcal R_G$ instantiate a \emph{FlexBox}:
\begin{equation}
\label{eq:flexbox_decode}
\mathcal B_r(G)=
\left\{y\in\mathbb R^3:
\left\|Q_r^\top(y-c_r)\right\|_\infty
\leq\frac{b_{\ell_r}z_r}{2}\right\},
\qquad r\in\mathcal R_G,
\end{equation}
with role-dependent base extent $b_{\ell_r}$. The three roles implement \emph{Bind--Adapt--Avoid} guidance
(Fig.~\ref{fig:method}(2)).
Stable FlexBoxes encourage proximity to anchor regions;
flexible FlexBoxes support broader local exploration;
forbidden FlexBoxes penalize occupancy and guide
clash-relieving corrections.
These labels specify candidate-editing roles rather than
measured residue dynamics; forbidden regions mark occupancy
to avoid, not target atoms assumed to be immobile.
A geometry-derived seed and geometry-biased variants initialize
the outer population.
Evolution changes the support, roles, and scales while the
residue descriptors, centers, and frames remain fixed.
Appendix~\ref{app:encoding} specifies their construction
and initialization.

\begin{figure}
    \centering
    \includegraphics[width=1\linewidth]{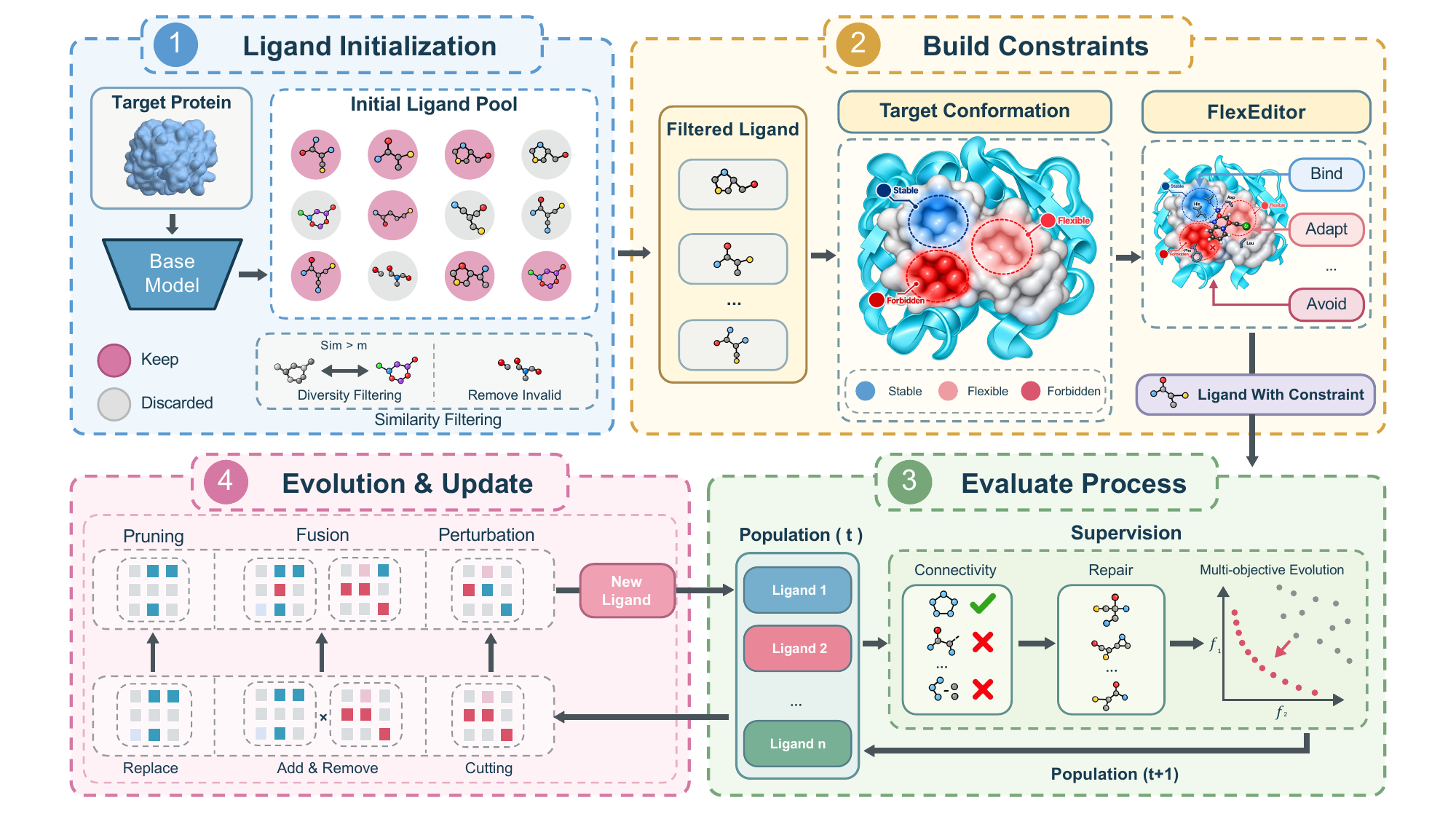}
    \caption{Overview of the FlexEvo framework. Starting from a single target conformation, FlexEvo filters an initial candidate pool and constructs geometry-derived FlexBoxes defining protected, adaptable, and forbidden regions. Iterative structural editing, repair, and Pareto-based selection balance input-state interaction preservation with robustness to conformational perturbations.}
    \label{fig:method}
\end{figure}

\subsection{Genome-Conditioned Adaptation and Single-State Robustness}
\label{sec:adapt_eval}

\paragraph{FlexBox-guided candidate adaptation.}
A genome is evaluated through the candidate population it produces.
For each $G$, \textsc{FlexEditor} starts from the shared $\mathcal X_0$ with
a common adaptation budget and reset random seed, keeping the decoded
FlexBoxes fixed.
The boxes guide the locations and types of candidate edits.
FlexEditor supports a broader coordinate-level operator family, whereas the reported molecular experiments use its graph-preserving mode, including rigid translations and rotations together with annotated bond-axis torsions. These candidate-side operations are summarized in the
lower row of Fig.~\ref{fig:method}(4). Let $\mathcal C_G^t$ be the inner population and
$\mathcal E_G^t\subseteq\mathcal C_G^t$ its reproduction subset.
For each parent $x$ and proposal index $v\in\{1,\ldots,n_v\}$, editing
$\mathcal T_G$ and geometric repair $\mathcal R$ produce
$\widehat x_{x,v}=\mathcal R(\mathcal T_G(x;\xi_{t,x,v}))$,
where $\xi$ denotes proposal randomness.
Writing $n(x)$ for the number of represented coordinates and $c(x)$ for
local neighbor support, an adaptation round is
\begin{equation}
\label{eq:inner_adaptation}
\begin{aligned}
\mathcal V_G^t
&=\left\{\widehat x_{x,v}:x\in\mathcal E_G^t,\ 1\leq v\leq n_v,
\ n(\widehat x_{x,v})>0,\ c(\widehat x_{x,v})\geq\tau_c\right\},\\
\mathcal C_G^{t+1}
&=\mathcal S_x\!\left(\mathcal C_G^t\cup\mathcal V_G^t\right),
\qquad \mathcal X_G=\mathcal C_G^T.
\end{aligned}
\end{equation}
Here $T$ is the number of inner rounds, and $\mathcal S_x$ combines
nondominated sorting, objective-space crowding, and candidate-space niching.
Repair and screening precede survivor selection, so proposals failing the
specified geometric checks cannot compensate through favorable scores.
The edit primitives and geometric screening criteria are specified in
Appendix~\ref{app:adaptation}.

\paragraph{Single-state perturbation probes.}
Good contact with $P^0$ alone does not establish tolerance to target motion.
We therefore construct a genome-conditioned probe collection
$\widetilde{\mathcal P}_G=\{\widetilde P_G^0,\ldots,\widetilde P_G^M\}$,
with $\widetilde P_G^0=P^0$, by applying local radial and tangential
perturbations to target regions selected by the prior.
Candidate coordinates remain fixed across probes: evaluation measures the
response of the \emph{same} candidate, not its performance after separate
re-optimization for each state.
The probes are synthetic geometric stress tests constructed solely from
$P^0$; alternative natural conformations are reserved for evaluation. For candidate $x$, let $q_m$ and $\chi_m$ denote contact quality and clash
fraction under $\widetilde P_G^m$. Over $m=0,\ldots,M$, define
\begin{equation}
\label{eq:perturbation_profile}
\begin{gathered}
q_{\mathrm{rob}}
=\rho\min_m q_m+(1-\rho)\bar q,
\qquad
\bar q
=\frac{1}{M+1}\sum_m q_m,
\\[2pt]
\Delta_q
=\max_m q_m-\min_m q_m,
\qquad
\chi_{\max}
=\max_m\chi_m,
\qquad
\kappa_q
=\frac{1}{M+1}\sum_m
\mathbf1_{\{q_m\geq\tau q_0\}} .
\end{gathered}
\end{equation}
Here $\rho\in[0,1]$ controls worst-case emphasis, $\tau\in(0,1)$ specifies
the retained-contact ratio, and $\kappa_q=0$ when
$q_0\leq\varepsilon_q$ for $\varepsilon_q>0$.
Together, $q_{\mathrm{rob}}$, $\Delta_q$, and $\kappa_q$ characterize
retained contact, perturbation sensitivity, and contact consistency. App.~\ref{app:flexevo-theory-v30} derives finite-probe score bounds
and conditional transfer bounds under explicit target-proximity
assumptions.

\paragraph{Bi-objective evaluation.}
We balance interaction preservation against geometric quality and
perturbation tolerance using two maximized candidate objectives:
\begin{equation}
\label{eq:candidate_objectives}
\begin{aligned}
f_{\mathrm{pres}}(x;G)
&=\alpha_a a+\alpha_0q_0+\alpha_rq_{\mathrm{rob}}
-\alpha_\chi\chi_{\max},\\
f_{\mathrm{rob}}(x;G)
&=\beta_g g+\beta_c c+\beta_k\kappa_q-\beta_d\Delta_q-\Pi(x;G).
\end{aligned}
\end{equation}
We suppress component arguments for readability.
The anchor-proximity score $a$ and reference contact $q_0$ encourage
compatibility with $P^0$, while $q_{\mathrm{rob}}$ rewards contact maintained
under perturbation.
The internal-geometry score $g$, neighbor support $c$, and consistency
$\kappa_q$ favor geometrically supported candidates with stable contact
quality. The penalty $\Pi$ combines flexible-region exposure, forbidden
occupancy, probe clashes, and accumulated edit cost.
All coefficients are fixed and nonnegative; Appendix~\ref{app:scoring}
defines the components and their values.
The search uses geometric scores rather than external docking or affinity
estimators. The cost schematic in Fig.~\ref{fig:method}(3) corresponds to
$f_1=-f_{\mathrm{pres}}$ and $f_2=-f_{\mathrm{rob}}$. To assess the prior rather than only its best candidate, we aggregate both
leading-candidate and pool-average quality. For nonempty $\mathcal X_G$,
the genome objectives in Eq.~\eqref{eq:problem_setup} are
\begin{equation}
\label{eq:genome_fitness}
\begin{aligned}
F_j(\mathcal X_G;P^0,G)
&=\lambda\max_{x\in\mathcal X_G}f_j(x;G)
+\frac{1-\lambda}{|\mathcal X_G|}\sum_{x\in\mathcal X_G}f_j(x;G)
-\eta_j\Omega(G),\\
\Omega(G)&=\frac{|\mathcal R_G|}{R},
\qquad j\in\{\mathrm{pres},\mathrm{rob}\},
\quad\eta_{\mathrm{pres}}=0,\quad\eta_{\mathrm{rob}}=\eta.
\end{aligned}
\end{equation}
Here $\lambda\in[0,1]$ balances the maximum and mean, and $\eta\geq0$
penalizes extensive constraint assignment.
The two objective-wise maxima need not correspond to the same candidate.

\begin{figure}
    \centering
    \includegraphics[width=1\linewidth]{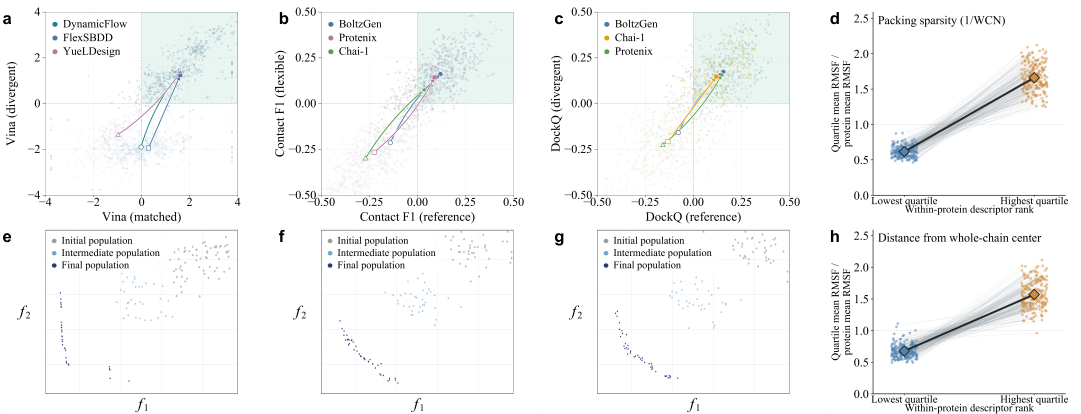}
    \caption{Cross-conformation evaluation and analysis of FlexEvo.
    (a--c) Distributions of scores derived from Vina, contact F1, and DockQ
    before and after adaptation for small molecules, peptides, and protein
    binders, respectively, under the paired target conditions shown.
    (e--g) Initial, intermediate, and final populations in the corresponding
    two-objective spaces.
    (d,h) Mean residue RMSF in the lowest and highest within-protein
    quartiles of packing sparsity ($1/\mathrm{WCN}$) and distance from the
    whole-chain $C_\alpha$ geometric center, respectively, for 200 ATLAS proteins.
    Quartile means are normalized by each protein's whole-chain mean RMSF.
    In (d,h), points denote proteins, faint lines pair quartiles, and diamonds
    with thick lines indicate equal-weight means across proteins.}
    \label{fig:main_analysis}
\end{figure}

\subsection{Feedback-Driven Genome Evolution}
\label{sec:genome_evolution}

\paragraph{Spatial genome variation.}
Each offspring applies one of the three operators in
Fig.~\ref{fig:method}(4): \emph{Pruning}, \emph{Fusion}, or
\emph{Perturbation}.
Pruning retires weakly used stable or flexible constraints, reducing the
support of the prior without removing target atoms.
Fusion inherits a spatial configuration from a donor $G_b$.
For a sampled residue $s$ and neighborhood
$\mathcal U=\{r:\|c_r-c_s\|_2\leq d_{\mathrm{cross}}\}$, it updates
\begin{equation}
\label{eq:spatial_fusion}
\begin{aligned}
\mathcal R_{\widetilde G}
&=(\mathcal R_{G_a}\setminus\mathcal U)
\cup(\mathcal R_{G_b}\cap\mathcal U),\\
(\widetilde\ell_r,\widetilde z_r)
&=\begin{cases}
(\ell_r^b,z_r^b),&r\in\mathcal R_{G_b}\cap\mathcal U,\\
(\ell_r^a,z_r^a),&r\in\mathcal R_{G_a}\setminus\mathcal U.
\end{cases}
\end{aligned}
\end{equation}
Transferring roles and scales together retains the donor's configuration
within the selected spatial block.
Perturbation changes support or roles within a local neighborhood and
resizes a separately sampled residue subset $\mathcal M_z$:
\begin{equation}
\label{eq:scale_perturbation}
\widetilde z_r=\operatorname{clip}
\bigl(z_re^{\epsilon_r},z_{\min},z_{\max}\bigr),
\qquad\epsilon_r\sim\mathcal N(0,\sigma_z^2),\quad r\in\mathcal M_z.
\end{equation}
The role changes are spatially local; scale updates need not be.
Sampling rules are given in Appendix~\ref{app:evolution}.

\paragraph{Structural feedback.}
Initialization alone does not reveal which constraints will be exercised
during adaptation.
We therefore update an offspring proposal using evidence from its primary
parent's adapted pool $\mathcal X_{G_a}$.
Let $u_r$ count uses of residue $r$'s FlexBox in surviving edit histories.
Usage and proximity define the feedback signal
\begin{equation}
\label{eq:feedback_signal}
D_r=\min_{x\in\mathcal X_{G_a},i}\|p_i(x)-c_r\|_2,
\qquad
s_r=\max\!\left\{
\frac{u_r}{\max(1,\sum_k u_k)},
\left[1-\frac{D_r}{d_c}\right]_+\right\},
\end{equation}
where $[z]_+=\max(z,0)$ and $d_c>0$ is the proximity radius.
Usage records constraints exercised during editing, while proximity can
identify relevant regions with no active box or use history.
High-signal uncovered residues are activated as flexible constraints;
low-signal stable or flexible assignments are retired under a bounded budget.
For the corresponding sets $\mathcal A_+$ and $\mathcal A_-$,
\begin{equation}
\label{eq:feedback_update}
\mathcal R_{G^+}
=(\mathcal R_{\widetilde G}\cup\mathcal A_+)\setminus\mathcal A_-,
\qquad \ell_r^+=\mathrm{flexible}\ \text{for }r\in\mathcal A_+.
\end{equation}
Pruning and feedback retirement exclude forbidden constraints.
After genome repair, the offspring is evaluated again from $\mathcal X_0$.
Thus parent candidates inform \emph{where to search}, without passing their
adapted coordinates to the offspring evaluation.

\paragraph{Diversity-aware Pareto survival.}
We apply nondominated sorting and objective-space
crowding~\citep{deb2002fast} to the parent--offspring union, following the same multi-objective survival principle used in evolutionary fairness diagnosis and robustness search~\citep{wang2024fairgnn,zeng2026digitalink}.
To discourage redundant spatial priors, genomes close to an already
selected configuration in residue-role space are deferred, then reconsidered
if slots remain. This is a survivor-diversity criterion.
For parent population $\mathcal H^g$, evaluated offspring $\mathcal Q^g$,
and survivor population size $N_G$,
\begin{equation}
\label{eq:outer_update}
\mathcal H^{g+1}=\mathcal S_G
\bigl(\mathcal H^g\cup\mathcal Q^g;N_G\bigr),
\qquad
\mathcal Y=\{(G,\mathcal X_G):G\in\mathcal H^{g_{\max}}\}.
\end{equation}
The output retains each surviving prior with its adapted candidate pool.
Algorithm~\ref{alg:flexevo} summarizes the search procedure;
Apps.~\ref{app:evolution} and~\ref{app:execution} detail
the diversity rule and execution settings.

\begin{figure}
    \centering
    \includegraphics[width=1\linewidth]{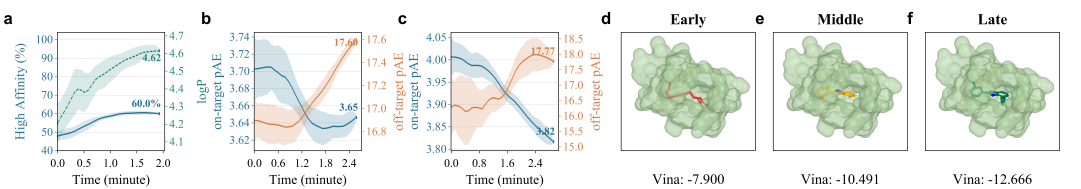}
    \caption{Metric trajectories and structural progression.
    (a) High-affinity fraction (blue, left axis) and $\log P$
    (green, right axis) over adaptation time.
    (b,c) On-target pAE (blue, left axis) and off-target pAE
    (orange, right axis).
    Time is shown in minutes.
    (d--f) Early, intermediate, and late binding poses for one ligand
    example, with the corresponding Vina scores shown below.}
    \label{fig:optimization_dynamics}
\end{figure}

\section{Experiments}
\label{sec:experiments}
\begin{table*}[!t]
\centering
\caption{Cross-conformation performance across nine binder categories.
Reference is the adaptation conformation; Matched and Divergent
are alternative natural conformations held out from adaptation.
Black entries report post-\textsc{FlexEvo} scores.
Superscripts give changes from the unadapted baseline
($\Delta=\mathrm{adapted}-\mathrm{unadapted}$);
green and red indicate improvement and deterioration according to each
metric's preferred direction.
Changes in percentage-valued metrics are expressed in percentage points.
Vina, DockQ, and QED use means; LRMSD uses medians.
Values and changes are rounded independently from the source values.
QED is reported once per method in the Reference columns and applies to all three
conformations; dashes denote repeated values omitted from display.}
\label{tab:main_results}
\begingroup
\fontsize{7.3}{8.6}\selectfont
\setlength{\tabcolsep}{0.32pt}
\renewcommand{\arraystretch}{1.03}
\setlength{\heavyrulewidth}{0.65pt}
\setlength{\lightrulewidth}{0.35pt}
\setlength{\cmidrulewidth}{0.25pt}
\setlength{\aboverulesep}{1.4pt}
\setlength{\belowrulesep}{1.4pt}
\noindent\makebox[\linewidth][l]{\textbf{A\quad De novo ligand generation}}\par\vspace{1pt}
\noindent\resizebox{\linewidth}{!}{%
\begin{tabular}{@{}w{l}{43.3777pt}@{\hspace{5pt}}w{l}{30.7244pt}w{l}{27.1744pt}w{l}{31.7399pt}@{\hspace{4pt}}w{l}{26.3031pt}w{l}{25.8958pt}w{l}{25.8958pt}@{\hspace{4pt}}w{l}{26.3031pt}w{l}{25.8958pt}w{l}{25.8958pt}@{\hspace{4pt}}w{l}{26.3031pt}w{l}{25.8958pt}w{l}{25.8958pt}@{\hspace{4pt}}w{l}{26.3031pt}w{l}{25.8958pt}w{l}{25.8958pt}@{\hspace{4pt}}w{l}{26.3031pt}w{l}{25.8958pt}w{l}{25.8958pt}@{}}
\toprule
Base model & \multicolumn{3}{c}{Vina$\downarrow$} & \multicolumn{3}{c}{High affinity (\%)$\uparrow$} & \multicolumn{3}{c}{Vinardo$\uparrow$} & \multicolumn{3}{c}{AutoDock4$\uparrow$} & \multicolumn{3}{c}{RTMScore$\uparrow$} & \multicolumn{3}{c}{QED$\uparrow$} \\[-1pt]
\cmidrule(lr){2-4}\cmidrule(lr){5-7}\cmidrule(lr){8-10}\cmidrule(lr){11-13}\cmidrule(lr){14-16}\cmidrule(lr){17-19}
 & \FlexEvoTestHeader{Reference} & \FlexEvoTestHeader{Matched} & \FlexEvoTestHeader{Divergent} & \FlexEvoTestHeader{Reference} & \FlexEvoTestHeader{Matched} & \FlexEvoTestHeader{Divergent} & \FlexEvoTestHeader{Reference} & \FlexEvoTestHeader{Matched} & \FlexEvoTestHeader{Divergent} & \FlexEvoTestHeader{Reference} & \FlexEvoTestHeader{Matched} & \FlexEvoTestHeader{Divergent} & \FlexEvoTestHeader{Reference} & \FlexEvoTestHeader{Matched} & \FlexEvoTestHeader{Divergent} & \FlexEvoTestHeader{Reference} & \FlexEvoTestHeader{Matched} & \FlexEvoTestHeader{Divergent} \\
\midrule
\FlexEvoScenario{Small molecule}
DynamicFlow & \FlexEvoTableGain{-9.18}{-2.57} & \FlexEvoTableGain{-8.36}{-1.89} & \FlexEvoTableGain{-7.80}{-2.95} & \FlexEvoTableGain{58.2}{+6.9} & \FlexEvoTableGain{56.7}{+16.8} & \FlexEvoTableGain{61.7}{+35.4} & \FlexEvoTableGain{.698}{+.282} & \FlexEvoTableGain{.537}{+.131} & \FlexEvoTableGain{.587}{+.232} & \FlexEvoTableGain{.597}{+.019} & \FlexEvoTableGain{.570}{+.141} & \FlexEvoTableGain{.672}{+.298} & \FlexEvoTableGain{.619}{+.248} & \FlexEvoTableGain{.594}{+.135} & \FlexEvoTableGain{.571}{+.129} & \FlexEvoTableGain{.594}{+.060} & -- & -- \\
FlexSBDD & \FlexEvoTableGain{-9.22}{-2.32} & \FlexEvoTableGain{-8.27}{-1.58} & \FlexEvoTableGain{-7.74}{-2.89} & \FlexEvoTableGain{55.4}{+8.3} & \FlexEvoTableGain{57.1}{+17.8} & \FlexEvoTableGain{56.5}{+17.6} & \FlexEvoTableGain{.552}{+.142} & \FlexEvoTableGain{.607}{+.218} & \FlexEvoTableGain{.603}{+.242} & \FlexEvoTableGain{.585}{+.244} & \FlexEvoTableGain{.643}{+.201} & \FlexEvoTableGain{.588}{+.182} & \FlexEvoTableGain{.532}{+.204} & \FlexEvoTableGain{.572}{+.110} & \FlexEvoTableGain{.576}{+.211} & \FlexEvoTableGain{.589}{+.215} & -- & -- \\
YuelDesign & \FlexEvoTableGain{-9.18}{-3.46} & \FlexEvoTableGain{-8.44}{-2.86} & \FlexEvoTableGain{-7.67}{-3.72} & \FlexEvoTableGain{62.5}{+13.0} & \FlexEvoTableGain{61.3}{+24.5} & \FlexEvoTableGain{59.5}{+28.3} & \FlexEvoTableGain{.584}{+.192} & \FlexEvoTableGain{.643}{+.269} & \FlexEvoTableGain{.578}{+.185} & \FlexEvoTableGain{.652}{+.220} & \FlexEvoTableGain{.616}{+.177} & \FlexEvoTableGain{.619}{+.268} & \FlexEvoTableGain{.656}{+.188} & \FlexEvoTableGain{.584}{+.171} & \FlexEvoTableGain{.587}{+.198} & \FlexEvoTableGain{.612}{+.191} & -- & -- \\
\addlinespace[1pt]\arrayrulecolor{FEline}\cmidrule[0.25pt](lr){1-19}\arrayrulecolor{black}
\FlexEvoScenario{Nonpeptidic macrocycle}
DynamicFlow & \FlexEvoTableGain{-8.32}{-2.40} & \FlexEvoTableGain{-7.37}{-1.63} & \FlexEvoTableGain{-7.05}{-2.70} & \FlexEvoTableGain{51.6}{+5.1} & \FlexEvoTableGain{49.9}{+14.6} & \FlexEvoTableGain{55.0}{+31.8} & \FlexEvoTableGain{.633}{+.264} & \FlexEvoTableGain{.486}{+.125} & \FlexEvoTableGain{.529}{+.207} & \FlexEvoTableGain{.539}{+.020} & \FlexEvoTableGain{.508}{+.121} & \FlexEvoTableGain{.601}{+.265} & \FlexEvoTableGain{.559}{+.230} & \FlexEvoTableGain{.538}{+.132} & \FlexEvoTableGain{.516}{+.117} & \FlexEvoTableGain{.616}{+.133} & -- & -- \\
FlexSBDD & \FlexEvoTableGain{-8.24}{-2.00} & \FlexEvoTableGain{-7.50}{-1.55} & \FlexEvoTableGain{-6.83}{-2.42} & \FlexEvoTableGain{50.0}{+8.1} & \FlexEvoTableGain{50.3}{+15.7} & \FlexEvoTableGain{51.1}{+16.1} & \FlexEvoTableGain{.487}{+.120} & \FlexEvoTableGain{.534}{+.182} & \FlexEvoTableGain{.540}{+.223} & \FlexEvoTableGain{.531}{+.229} & \FlexEvoTableGain{.584}{+.182} & \FlexEvoTableGain{.523}{+.164} & \FlexEvoTableGain{.475}{+.179} & \FlexEvoTableGain{.504}{+.090} & \FlexEvoTableGain{.513}{+.191} & \FlexEvoTableGain{.535}{+.204} & -- & -- \\
YuelDesign & \FlexEvoTableGain{-8.23}{-3.07} & \FlexEvoTableGain{-7.46}{-2.47} & \FlexEvoTableGain{-6.90}{-3.36} & \FlexEvoTableGain{56.3}{+12.2} & \FlexEvoTableGain{54.9}{+22.3} & \FlexEvoTableGain{53.6}{+25.7} & \FlexEvoTableGain{.520}{+.173} & \FlexEvoTableGain{.582}{+.248} & \FlexEvoTableGain{.524}{+.176} & \FlexEvoTableGain{.590}{+.199} & \FlexEvoTableGain{.549}{+.158} & \FlexEvoTableGain{.562}{+.252} & \FlexEvoTableGain{.581}{+.161} & \FlexEvoTableGain{.528}{+.153} & \FlexEvoTableGain{.517}{+.173} & \FlexEvoTableGain{.538}{+.167} & -- & -- \\
\bottomrule
\end{tabular}%
}\par
\par\vspace{3pt}
\noindent\makebox[\linewidth][l]{\textbf{B\quad Biomolecular complex design}}\par\vspace{1pt}
\noindent\resizebox{\linewidth}{!}{%
\begin{tabular}{@{}w{l}{43.3777pt}@{\hspace{5pt}}w{l}{26.2118pt}w{l}{25.8046pt}w{l}{25.8046pt}@{\hspace{4pt}}w{l}{26.2118pt}w{l}{25.8046pt}w{l}{25.8046pt}@{\hspace{4pt}}w{l}{26.2118pt}w{l}{24.7189pt}w{l}{25.7978pt}@{\hspace{4pt}}w{l}{26.2118pt}w{l}{27.0832pt}w{l}{27.0832pt}@{\hspace{4pt}}w{l}{29.3689pt}w{l}{30.6332pt}w{l}{30.6332pt}@{\hspace{4pt}}w{l}{26.2118pt}w{l}{24.7189pt}w{l}{25.7978pt}@{}}
\toprule
Base model & \multicolumn{3}{c}{DockQ$\uparrow$} & \multicolumn{3}{c}{$<2$\AA\ (\%)$\uparrow$} & \multicolumn{3}{c}{LRMSD (\AA)$\downarrow$} & \multicolumn{3}{c}{MM/GBSA$\downarrow$} & \multicolumn{3}{c}{HADDOCK$\downarrow$} & \multicolumn{3}{c}{On-target pAE$\downarrow$} \\[-1pt]
\cmidrule(lr){2-4}\cmidrule(lr){5-7}\cmidrule(lr){8-10}\cmidrule(lr){11-13}\cmidrule(lr){14-16}\cmidrule(lr){17-19}
 & \FlexEvoTestHeader{Reference} & \FlexEvoTestHeader{Matched} & \FlexEvoTestHeader{Divergent} & \FlexEvoTestHeader{Reference} & \FlexEvoTestHeader{Matched} & \FlexEvoTestHeader{Divergent} & \FlexEvoTestHeader{Reference} & \FlexEvoTestHeader{Matched} & \FlexEvoTestHeader{Divergent} & \FlexEvoTestHeader{Reference} & \FlexEvoTestHeader{Matched} & \FlexEvoTestHeader{Divergent} & \FlexEvoTestHeader{Reference} & \FlexEvoTestHeader{Matched} & \FlexEvoTestHeader{Divergent} & \FlexEvoTestHeader{Reference} & \FlexEvoTestHeader{Matched} & \FlexEvoTestHeader{Divergent} \\
\midrule
\FlexEvoScenario{Linear peptide}
BoltzGen & \FlexEvoTableGain{.874}{+.217} & \FlexEvoTableGain{.840}{+.451} & \FlexEvoTableGain{.850}{+.566} & \FlexEvoTableLoss{40.2}{-2.0} & \FlexEvoTableGain{40.2}{+13.2} & \FlexEvoTableGain{36.9}{+18.5} & \FlexEvoTableGain{5.32}{-.16} & \FlexEvoTableGain{4.82}{-2.77} & \FlexEvoTableGain{5.56}{-5.51} & \FlexEvoTableGain{-44.6}{-4.2} & \FlexEvoTableGain{-44.7}{-11.3} & \FlexEvoTableGain{-45.7}{-26.6} & \FlexEvoTableLoss{-126.6}{+.5} & \FlexEvoTableGain{-130.8}{-33.5} & \FlexEvoTableGain{-122.3}{-53.2} & \FlexEvoTableGain{3.56}{-.12} & \FlexEvoTableGain{3.49}{-2.25} & \FlexEvoTableGain{3.11}{-3.71} \\
Chai-1 & \FlexEvoTableGain{.772}{+.128} & \FlexEvoTableGain{.832}{+.429} & \FlexEvoTableGain{.822}{+.533} & \FlexEvoTableGain{39.7}{+.0} & \FlexEvoTableGain{38.6}{+11.0} & \FlexEvoTableGain{35.8}{+17.1} & \FlexEvoTableGain{4.99}{-.86} & \FlexEvoTableGain{4.60}{-4.09} & \FlexEvoTableGain{5.99}{-5.59} & \FlexEvoTableGain{-43.9}{-3.3} & \FlexEvoTableGain{-42.4}{-9.4} & \FlexEvoTableGain{-40.2}{-19.3} & \FlexEvoTableGain{-124.9}{-2.0} & \FlexEvoTableGain{-122.7}{-11.4} & \FlexEvoTableGain{-122.0}{-44.1} & \FlexEvoTableLoss{4.35}{+.22} & \FlexEvoTableGain{3.71}{-2.82} & \FlexEvoTableGain{4.03}{-2.26} \\
Protenix & \FlexEvoTableGain{.820}{+.104} & \FlexEvoTableGain{.863}{+.394} & \FlexEvoTableGain{.836}{+.484} & \FlexEvoTableLoss{39.7}{-.3} & \FlexEvoTableGain{38.8}{+11.0} & \FlexEvoTableGain{36.4}{+17.4} & \FlexEvoTableLoss{4.59}{+.31} & \FlexEvoTableGain{4.73}{-3.80} & \FlexEvoTableGain{4.25}{-8.03} & \FlexEvoTableGain{-46.3}{-1.3} & \FlexEvoTableGain{-43.2}{-8.7} & \FlexEvoTableGain{-40.8}{-22.3} & \FlexEvoTableGain{-123.3}{-.1} & \FlexEvoTableGain{-123.4}{-13.4} & \FlexEvoTableGain{-128.3}{-51.8} & \FlexEvoTableLoss{3.86}{+.64} & \FlexEvoTableGain{3.22}{-1.01} & \FlexEvoTableGain{3.08}{-2.15} \\
\addlinespace[1pt]\arrayrulecolor{FEline}\cmidrule[0.25pt](lr){1-19}\arrayrulecolor{black}
\FlexEvoScenario{Cyclic peptide}
BoltzGen & \FlexEvoTableGain{.902}{+.219} & \FlexEvoTableGain{.880}{+.488} & \FlexEvoTableGain{.873}{+.578} & \FlexEvoTableLoss{42.2}{-1.0} & \FlexEvoTableGain{41.2}{+13.6} & \FlexEvoTableGain{39.1}{+19.4} & \FlexEvoTableGain{5.64}{-.04} & \FlexEvoTableGain{4.94}{-2.82} & \FlexEvoTableGain{5.84}{-5.27} & \FlexEvoTableGain{-46.4}{-3.6} & \FlexEvoTableGain{-45.1}{-10.2} & \FlexEvoTableGain{-47.0}{-26.8} & \FlexEvoTableGain{-132.6}{+.0} & \FlexEvoTableGain{-133.3}{-32.5} & \FlexEvoTableGain{-128.2}{-55.4} & \FlexEvoTableGain{3.79}{-.06} & \FlexEvoTableGain{3.51}{-2.59} & \FlexEvoTableGain{3.29}{-3.68} \\
Chai-1 & \FlexEvoTableGain{.861}{+.005} & \FlexEvoTableGain{.867}{+.411} & \FlexEvoTableGain{.845}{+.588} & \FlexEvoTableLoss{41.0}{-.3} & \FlexEvoTableGain{40.0}{+5.1} & \FlexEvoTableGain{37.0}{+19.3} & \FlexEvoTableGain{4.74}{-.90} & \FlexEvoTableGain{5.05}{-3.82} & \FlexEvoTableGain{6.10}{-6.34} & \FlexEvoTableLoss{-44.7}{+1.6} & \FlexEvoTableGain{-44.9}{-7.2} & \FlexEvoTableGain{-41.0}{-18.2} & \FlexEvoTableLoss{-126.4}{+3.5} & \FlexEvoTableGain{-122.9}{-14.8} & \FlexEvoTableGain{-128.0}{-29.2} & \FlexEvoTableLoss{4.19}{+.34} & \FlexEvoTableGain{4.05}{-3.08} & \FlexEvoTableGain{3.98}{-2.71} \\
Protenix & \FlexEvoTableGain{.862}{+.105} & \FlexEvoTableGain{.877}{+.403} & \FlexEvoTableGain{.854}{+.500} & \FlexEvoTableGain{42.1}{+1.0} & \FlexEvoTableGain{40.1}{+11.5} & \FlexEvoTableGain{37.9}{+17.8} & \FlexEvoTableLoss{4.71}{+.26} & \FlexEvoTableGain{4.92}{-4.08} & \FlexEvoTableGain{4.40}{-8.03} & \FlexEvoTableGain{-48.5}{-1.1} & \FlexEvoTableGain{-46.0}{-9.3} & \FlexEvoTableGain{-41.3}{-21.5} & \FlexEvoTableGain{-131.2}{-2.1} & \FlexEvoTableGain{-130.5}{-16.4} & \FlexEvoTableGain{-130.1}{-49.6} & \FlexEvoTableLoss{4.06}{+.64} & \FlexEvoTableGain{3.41}{-1.06} & \FlexEvoTableGain{3.27}{-2.16} \\
\addlinespace[1pt]\arrayrulecolor{FEline}\cmidrule[0.25pt](lr){1-19}\arrayrulecolor{black}
\FlexEvoScenario{Non-antibody protein}
BoltzGen & \FlexEvoTableGain{.862}{+.219} & \FlexEvoTableGain{.850}{+.457} & \FlexEvoTableGain{.833}{+.536} & \FlexEvoTableGain{42.7}{+4.2} & \FlexEvoTableGain{39.7}{+13.9} & \FlexEvoTableGain{39.0}{+21.4} & \FlexEvoTableGain{4.22}{-1.20} & \FlexEvoTableGain{4.86}{-3.32} & \FlexEvoTableGain{5.90}{-5.53} & \FlexEvoTableGain{-40.7}{-.7} & \FlexEvoTableGain{-42.7}{-13.6} & \FlexEvoTableGain{-38.5}{-21.5} & \FlexEvoTableGain{-119.6}{-1.8} & \FlexEvoTableGain{-112.1}{-26.7} & \FlexEvoTableGain{-117.5}{-56.1} & \FlexEvoTableGain{3.87}{-.19} & \FlexEvoTableGain{3.17}{-2.15} & \FlexEvoTableGain{2.95}{-4.42} \\
Chai-1 & \FlexEvoTableGain{.833}{+.102} & \FlexEvoTableGain{.807}{+.335} & \FlexEvoTableGain{.815}{+.463} & \FlexEvoTableGain{44.8}{+3.5} & \FlexEvoTableGain{40.8}{+18.4} & \FlexEvoTableGain{39.8}{+24.9} & \FlexEvoTableGain{3.86}{-.82} & \FlexEvoTableGain{5.11}{-4.11} & \FlexEvoTableGain{4.97}{-7.87} & \FlexEvoTableGain{-39.4}{-1.0} & \FlexEvoTableGain{-43.5}{-12.7} & \FlexEvoTableGain{-40.3}{-21.7} & \FlexEvoTableLoss{-111.1}{+8.3} & \FlexEvoTableGain{-117.2}{-19.0} & \FlexEvoTableGain{-122.0}{-44.4} & \FlexEvoTableLoss{4.24}{+.68} & \FlexEvoTableGain{3.55}{-.38} & \FlexEvoTableGain{3.38}{-1.47} \\
Protenix & \FlexEvoTableGain{.818}{+.152} & \FlexEvoTableGain{.836}{+.455} & \FlexEvoTableGain{.817}{+.477} & \FlexEvoTableGain{45.3}{+1.1} & \FlexEvoTableGain{41.8}{+19.3} & \FlexEvoTableGain{40.2}{+23.3} & \FlexEvoTableGain{3.79}{-.03} & \FlexEvoTableGain{5.02}{-4.45} & \FlexEvoTableGain{4.59}{-7.92} & \FlexEvoTableGain{-41.3}{-.8} & \FlexEvoTableGain{-41.2}{-6.2} & \FlexEvoTableGain{-39.5}{-22.8} & \FlexEvoTableLoss{-113.9}{+5.3} & \FlexEvoTableGain{-111.7}{-16.7} & \FlexEvoTableGain{-114.8}{-45.3} & \FlexEvoTableGain{3.76}{-.43} & \FlexEvoTableGain{3.61}{-1.11} & \FlexEvoTableGain{3.83}{-3.05} \\
\addlinespace[1pt]\arrayrulecolor{FEline}\cmidrule[0.25pt](lr){1-19}\arrayrulecolor{black}
\FlexEvoScenario{Full-length antibody}
BoltzGen & \FlexEvoTableGain{.800}{+.190} & \FlexEvoTableGain{.800}{+.432} & \FlexEvoTableGain{.772}{+.496} & \FlexEvoTableGain{39.9}{+3.6} & \FlexEvoTableGain{37.7}{+13.5} & \FlexEvoTableGain{37.0}{+20.6} & \FlexEvoTableGain{3.98}{-1.17} & \FlexEvoTableGain{4.58}{-3.18} & \FlexEvoTableGain{5.54}{-5.07} & \FlexEvoTableGain{-37.7}{-.3} & \FlexEvoTableGain{-39.9}{-12.3} & \FlexEvoTableGain{-35.7}{-19.7} & \FlexEvoTableGain{-113.6}{-4.0} & \FlexEvoTableGain{-104.9}{-25.0} & \FlexEvoTableGain{-109.9}{-52.4} & \FlexEvoTableGain{3.58}{-.20} & \FlexEvoTableGain{3.00}{-2.02} & \FlexEvoTableGain{2.78}{-4.03} \\
Chai-1 & \FlexEvoTableGain{.791}{+.110} & \FlexEvoTableGain{.755}{+.311} & \FlexEvoTableGain{.764}{+.433} & \FlexEvoTableGain{42.1}{+2.9} & \FlexEvoTableGain{38.6}{+17.6} & \FlexEvoTableGain{37.5}{+23.5} & \FlexEvoTableGain{3.60}{-.73} & \FlexEvoTableGain{4.82}{-3.89} & \FlexEvoTableGain{4.71}{-7.44} & \FlexEvoTableGain{-37.0}{-1.6} & \FlexEvoTableGain{-40.5}{-11.5} & \FlexEvoTableGain{-38.1}{-20.7} & \FlexEvoTableLoss{-103.7}{+8.2} & \FlexEvoTableGain{-108.5}{-15.3} & \FlexEvoTableGain{-115.1}{-43.3} & \FlexEvoTableLoss{3.97}{+.62} & \FlexEvoTableGain{3.34}{-.34} & \FlexEvoTableGain{3.18}{-1.34} \\
Protenix & \FlexEvoTableGain{.762}{+.144} & \FlexEvoTableGain{.792}{+.435} & \FlexEvoTableGain{.770}{+.456} & \FlexEvoTableGain{42.9}{+2.0} & \FlexEvoTableGain{38.8}{+17.6} & \FlexEvoTableGain{37.6}{+21.7} & \FlexEvoTableGain{3.53}{-.01} & \FlexEvoTableGain{4.75}{-4.06} & \FlexEvoTableGain{4.35}{-7.49} & \FlexEvoTableGain{-39.1}{-1.5} & \FlexEvoTableGain{-38.3}{-5.3} & \FlexEvoTableGain{-37.3}{-21.8} & \FlexEvoTableLoss{-106.2}{+4.1} & \FlexEvoTableGain{-104.4}{-16.2} & \FlexEvoTableGain{-108.3}{-42.9} & \FlexEvoTableGain{3.52}{-.38} & \FlexEvoTableGain{3.37}{-1.11} & \FlexEvoTableGain{3.59}{-2.86} \\
\addlinespace[1pt]\arrayrulecolor{FEline}\cmidrule[0.25pt](lr){1-19}\arrayrulecolor{black}
\FlexEvoScenario{Antibody fragment and nanobody}
BoltzGen & \FlexEvoTableGain{.824}{+.207} & \FlexEvoTableGain{.822}{+.442} & \FlexEvoTableGain{.810}{+.525} & \FlexEvoTableGain{41.5}{+4.3} & \FlexEvoTableGain{38.0}{+13.1} & \FlexEvoTableGain{37.1}{+20.4} & \FlexEvoTableGain{4.05}{-1.19} & \FlexEvoTableGain{4.69}{-3.26} & \FlexEvoTableGain{5.65}{-5.42} & \FlexEvoTableGain{-39.2}{-1.0} & \FlexEvoTableGain{-41.0}{-12.8} & \FlexEvoTableGain{-37.4}{-21.1} & \FlexEvoTableGain{-114.8}{-2.8} & \FlexEvoTableGain{-107.0}{-25.5} & \FlexEvoTableGain{-111.8}{-52.9} & \FlexEvoTableGain{3.71}{-.22} & \FlexEvoTableGain{3.01}{-2.09} & \FlexEvoTableGain{2.85}{-4.24} \\
Chai-1 & \FlexEvoTableGain{.802}{+.105} & \FlexEvoTableGain{.770}{+.321} & \FlexEvoTableGain{.794}{+.460} & \FlexEvoTableGain{43.1}{+3.3} & \FlexEvoTableGain{39.3}{+17.6} & \FlexEvoTableGain{37.9}{+23.8} & \FlexEvoTableGain{3.72}{-.76} & \FlexEvoTableGain{4.93}{-3.98} & \FlexEvoTableGain{4.77}{-7.64} & \FlexEvoTableGain{-37.6}{-.7} & \FlexEvoTableGain{-41.8}{-12.1} & \FlexEvoTableGain{-38.9}{-21.2} & \FlexEvoTableLoss{-105.7}{+7.7} & \FlexEvoTableGain{-112.9}{-19.0} & \FlexEvoTableGain{-117.2}{-42.5} & \FlexEvoTableLoss{4.06}{+.64} & \FlexEvoTableGain{3.42}{-.32} & \FlexEvoTableGain{3.24}{-1.39} \\
Protenix & \FlexEvoTableGain{.783}{+.149} & \FlexEvoTableGain{.814}{+.450} & \FlexEvoTableGain{.796}{+.469} & \FlexEvoTableGain{43.6}{+1.3} & \FlexEvoTableGain{39.9}{+18.1} & \FlexEvoTableGain{38.5}{+22.0} & \FlexEvoTableGain{3.60}{-.07} & \FlexEvoTableGain{4.88}{-4.13} & \FlexEvoTableGain{4.46}{-7.51} & \FlexEvoTableGain{-40.2}{-1.3} & \FlexEvoTableGain{-39.4}{-5.5} & \FlexEvoTableGain{-38.4}{-22.4} & \FlexEvoTableLoss{-108.4}{+7.2} & \FlexEvoTableGain{-108.4}{-17.0} & \FlexEvoTableGain{-110.9}{-44.4} & \FlexEvoTableGain{3.63}{-.45} & \FlexEvoTableGain{3.46}{-1.12} & \FlexEvoTableGain{3.72}{-2.90} \\
\addlinespace[1pt]\arrayrulecolor{FEline}\cmidrule[0.25pt](lr){1-19}\arrayrulecolor{black}
\FlexEvoScenario{Oligonucleotide}
BoltzGen & \FlexEvoTableGain{.792}{+.211} & \FlexEvoTableGain{.772}{+.414} & \FlexEvoTableGain{.768}{+.497} & \FlexEvoTableGain{38.9}{+3.4} & \FlexEvoTableGain{36.0}{+12.4} & \FlexEvoTableGain{35.5}{+19.7} & \FlexEvoTableGain{3.87}{-1.02} & \FlexEvoTableGain{4.47}{-2.93} & \FlexEvoTableGain{5.34}{-4.95} & \FlexEvoTableGain{-37.2}{-.3} & \FlexEvoTableGain{-38.7}{-12.0} & \FlexEvoTableGain{-35.4}{-19.6} & \FlexEvoTableGain{-108.9}{-.7} & \FlexEvoTableGain{-100.9}{-22.3} & \FlexEvoTableGain{-108.3}{-51.9} & \FlexEvoTableGain{3.50}{-.16} & \FlexEvoTableGain{2.92}{-1.90} & \FlexEvoTableGain{2.68}{-4.03} \\
Chai-1 & \FlexEvoTableGain{.757}{+.083} & \FlexEvoTableGain{.745}{+.316} & \FlexEvoTableGain{.737}{+.413} & \FlexEvoTableGain{40.5}{+3.1} & \FlexEvoTableGain{37.7}{+17.4} & \FlexEvoTableGain{36.7}{+23.0} & \FlexEvoTableGain{3.51}{-.80} & \FlexEvoTableGain{4.61}{-3.85} & \FlexEvoTableGain{4.47}{-7.24} & \FlexEvoTableGain{-36.2}{-1.2} & \FlexEvoTableGain{-40.0}{-11.9} & \FlexEvoTableGain{-36.7}{-19.5} & \FlexEvoTableLoss{-100.5}{+7.4} & \FlexEvoTableGain{-106.5}{-17.9} & \FlexEvoTableGain{-110.0}{-38.8} & \FlexEvoTableLoss{3.88}{+.68} & \FlexEvoTableGain{3.26}{-.31} & \FlexEvoTableGain{3.07}{-1.32} \\
Protenix & \FlexEvoTableGain{.742}{+.128} & \FlexEvoTableGain{.759}{+.407} & \FlexEvoTableGain{.755}{+.449} & \FlexEvoTableGain{40.8}{+.9} & \FlexEvoTableGain{38.5}{+18.2} & \FlexEvoTableGain{36.3}{+20.9} & \FlexEvoTableGain{3.45}{-.02} & \FlexEvoTableGain{4.56}{-4.10} & \FlexEvoTableGain{4.15}{-7.20} & \FlexEvoTableGain{-38.0}{-.8} & \FlexEvoTableGain{-38.0}{-6.0} & \FlexEvoTableGain{-36.0}{-21.0} & \FlexEvoTableLoss{-105.3}{+3.1} & \FlexEvoTableGain{-101.6}{-14.3} & \FlexEvoTableGain{-103.9}{-41.2} & \FlexEvoTableGain{3.46}{-.32} & \FlexEvoTableGain{3.31}{-1.02} & \FlexEvoTableGain{3.48}{-2.86} \\
\addlinespace[1pt]\arrayrulecolor{FEline}\cmidrule[0.25pt](lr){1-19}\arrayrulecolor{black}
\FlexEvoScenario{mRNA / long nucleic acid}
BoltzGen & \FlexEvoTableGain{.862}{+.218} & \FlexEvoTableGain{.849}{+.457} & \FlexEvoTableGain{.833}{+.536} & \FlexEvoTableGain{45.3}{+4.0} & \FlexEvoTableGain{42.1}{+14.8} & \FlexEvoTableGain{41.3}{+22.5} & \FlexEvoTableGain{4.44}{-1.35} & \FlexEvoTableGain{5.19}{-3.52} & \FlexEvoTableGain{6.22}{-5.77} & \FlexEvoTableGain{-43.7}{-1.2} & \FlexEvoTableGain{-45.0}{-14.0} & \FlexEvoTableGain{-41.1}{-22.9} & \FlexEvoTableGain{-127.0}{-.9} & \FlexEvoTableGain{-118.4}{-27.5} & \FlexEvoTableGain{-125.9}{-60.3} & \FlexEvoTableGain{4.14}{-.16} & \FlexEvoTableGain{3.33}{-2.31} & \FlexEvoTableGain{3.14}{-4.61} \\
Chai-1 & \FlexEvoTableGain{.833}{+.102} & \FlexEvoTableGain{.808}{+.336} & \FlexEvoTableGain{.814}{+.463} & \FlexEvoTableGain{47.9}{+3.8} & \FlexEvoTableGain{43.3}{+19.7} & \FlexEvoTableGain{42.3}{+26.6} & \FlexEvoTableGain{4.09}{-.88} & \FlexEvoTableGain{5.38}{-4.38} & \FlexEvoTableGain{5.24}{-8.45} & \FlexEvoTableGain{-41.9}{-1.6} & \FlexEvoTableGain{-45.7}{-12.7} & \FlexEvoTableGain{-43.1}{-23.3} & \FlexEvoTableLoss{-117.2}{+8.8} & \FlexEvoTableGain{-125.8}{-20.3} & \FlexEvoTableGain{-129.9}{-48.4} & \FlexEvoTableLoss{4.54}{+.75} & \FlexEvoTableGain{3.74}{-.45} & \FlexEvoTableGain{3.61}{-1.48} \\
Protenix & \FlexEvoTableGain{.818}{+.153} & \FlexEvoTableGain{.836}{+.455} & \FlexEvoTableGain{.817}{+.477} & \FlexEvoTableGain{47.7}{+.3} & \FlexEvoTableGain{44.5}{+20.5} & \FlexEvoTableGain{42.9}{+25.0} & \FlexEvoTableGain{4.05}{-.02} & \FlexEvoTableGain{5.38}{-4.59} & \FlexEvoTableGain{4.83}{-8.33} & \FlexEvoTableGain{-44.0}{-.6} & \FlexEvoTableGain{-44.2}{-7.4} & \FlexEvoTableGain{-42.0}{-24.3} & \FlexEvoTableLoss{-119.7}{+7.6} & \FlexEvoTableGain{-117.5}{-17.7} & \FlexEvoTableGain{-123.0}{-49.4} & \FlexEvoTableGain{3.97}{-.50} & \FlexEvoTableGain{3.80}{-1.25} & \FlexEvoTableGain{4.08}{-3.23} \\
\bottomrule
\end{tabular}%
}\par
\endgroup
\end{table*}
\subsection{Experimental Setup}
\label{sec:experimental_setup}

We construct a cross-conformation benchmark covering nine binder
categories: small molecules, nonpeptidic macrocycles, linear peptides,
cyclic peptides, non-antibody proteins, full-length antibodies,
antibody fragments and nanobodies, oligonucleotides, and long
nucleic acids including mRNA.
Data sources, category sizes, and construction procedures are
described in App.~\ref{app:experimental-details}.
Initial candidates come from DynamicFlow, FlexSBDD, and YuelDesign
for small molecules and nonpeptidic macrocycles, and from
BoltzGen, Chai-1, and Protenix for the remaining seven binder categories.
We compare each source model's outputs before and after
\textsc{FlexEvo} adaptation. Evaluation uses three natural target conformations at the same
binding site: \textit{Reference}, \textit{Matched}, and
\textit{Divergent}.
In this benchmark, Matched preserves the Reference regulatory
state, whereas Divergent involves a propeptide-associated state
change with local structural rearrangements.
Grouping follows biological evidence rather than RMSD thresholds.
Adaptation uses only Reference and the initial candidate pool,
with source models frozen.
The same selected candidate is evaluated across all three states
without further adaptation. We report Vina scores and high-affinity fractions for ligand
generation, and DockQ, ligand RMSD (LRMSD), and the fraction
with LRMSD below $2$~\AA{} for complexes with native reference
structures.
Contact F1, interaction scores, molecular properties, and model
confidence provide complementary assessments
(App.~\ref{app:evaluation-metrics}).
These metrics are distinct from the geometric surrogate
objectives used during search.

\subsection{Main Results}
\label{sec:main_results}

\paragraph{Cross-conformation performance.}
Table~\ref{tab:main_results} shows improvements in Vina for ligand
generation and DockQ for complex design under both Matched and
Divergent conditions across the evaluated source--category combinations.
For DynamicFlow small molecules, mean Vina under Divergent improves
from $-4.85$ to $-7.80$~kcal/mol, while the high-affinity fraction
increases from $26.4\%$ to $61.7\%$.
For Protenix linear peptides, mean DockQ increases from $0.3519$ to
$0.8363$ under Divergent, compared with $0.7154$ to $0.8198$ under
Reference.
Across all 27 source--category combinations, the primary-metric gain is
larger under Divergent than under Reference, indicating that the benefit
is most pronounced on the biologically divergent held-out state.
Importantly, \textsc{FlexEvo} optimizes only Reference-derived geometric
surrogates rather than Vina, DockQ, or other external evaluators; these
held-out improvements therefore provide out-of-objective evidence for
cross-conformation transfer.
Figure~\ref{fig:main_analysis}(a--c) further shows the joint score
distributions under paired target conditions, complementing the
condition-wise summaries in Table~\ref{tab:main_results}.

\paragraph{Input-state quality and metric trade-offs.}
Reference-state Vina and DockQ also improve in
Table~\ref{tab:main_results}, so gains on alternative conformations
coexist with improved input-state scores.
The trend is less uniform across complementary metrics.
For example, BoltzGen linear peptides have higher Reference-state
DockQ, but the fraction with LRMSD below $2$~\AA{} decreases from
$42.2\%$ to $40.2\%$.
Thus, the dominant cross-state gains do not imply uniform improvement
on every secondary metric, highlighting the complementary information
captured by the reported structural measures.

\paragraph{Geometric basis of initialization.}
We examine the geometric cues used for FlexBox initialization using
200 ATLAS proteins~\citep{atlas2024}, averaging residue RMSF over
three MD simulations per protein.
Figure~\ref{fig:main_analysis}(d,h) compares the lowest and highest
within-protein quartiles of packing sparsity (inverse $C_\alpha$
weighted contact number) and distance from the whole-chain
$C_\alpha$ geometric center.
Each quartile's mean RMSF is normalized by the corresponding
protein's whole-chain mean.
For both descriptors, the highest quartile has greater normalized
RMSF on average.
These associations motivate the initial allocation of editing freedom;
the residue roles are subsequently refined by search.

\paragraph{Multi-objective search behavior.}
Figure~\ref{fig:main_analysis}(e--g) shows initial, intermediate, and
final populations in the plotted objective spaces.
Later populations shift toward lower plotted values of $f_1$ and
$f_2$, while retaining candidates with different objective trade-offs.
This spread is consistent with Pareto-based survival preserving
alternative preservation--robustness trade-offs rather than collapsing
the search to a single operating point.
These snapshots characterize internal search behavior under
genome-dependent perturbation probes; compatibility with held-out
natural conformations is assessed separately in
Table~\ref{tab:main_results}.

\paragraph{Adaptation dynamics and runtime.}
Figure~\ref{fig:optimization_dynamics}(a--c) plots evaluation metrics
against elapsed adaptation time, linking the attained scores to the
computation time of the displayed runs.
In panel (a), the high-affinity fraction reaches $60.0\%$ by
approximately two minutes and changes little near the end of the run,
while $\log P$ continues to increase.
In panels (b,c), on-target pAE decreases overall and off-target pAE
increases overall over the displayed intervals of approximately
$2$--$3$ minutes, with local reversals in some traces.

\paragraph{Structural progression.}
Figure~\ref{fig:optimization_dynamics}(d--f) provides a structural view
of adaptation in one ligand-design example. 
The early, intermediate, and late candidates have Vina scores of
$-7.900$, $-10.491$, and $-12.666$~kcal/mol, respectively.
The snapshots show changes in candidate geometry within the binding
pocket alongside progressively more favorable docking scores.

\subsection{Further Analyses}
\label{sec:further_analysis}

Among six optimization strategies, \textsc{FlexEvo} ranks first
on the reported HV-AUC and B-pocket top-10 scores
(App.~\ref{app:optimizer-ablation}).
Across nine categories, the full method achieves better
Divergent-state Vina or DockQ than each of the three
component-removal variants
(Table~\ref{tab:flexevo-ablation-multimetric}).
Runtime comparisons report an additional 1.4--3.1 minutes
of adaptation per sample
(App.~\ref{app:runtime-profiles}).
Detailed results cover all three target states across
27 source--category combinations
(App.~\ref{app:modality-results}).
Joint-state plots distinguish simultaneous threshold
attainment from attainment in only one target state
(App.~\ref{app:joint-comparisons}). Reference-library comparisons quantify candidate similarity
to sampled public molecules across nine categories
(App.~\ref{app:similarity-analysis}).
On/off-target diagnostics report the fraction meeting
joint pAE criteria within the evaluated off-target panel
(App.~\ref{app:interaction-diagnostics}).
Multimetric profiles compare docking, structural, and
molecular-property scores across methods and states
(App.~\ref{app:radar-profiles}).
Structure-pool analyses characterize structural composition
and biological annotation coverage
(App.~\ref{app:pool-context}).
Public reference structures and property profiles
illustrate binder architectures
(App.~\ref{app:reference-illustrations}).

\section{Conclusion}
\label{sec:conclusion}

We introduced \textsc{FlexEvo}, a model-agnostic framework that adapts binder candidates once at inference time using a single observed target conformation, without retraining the source model.
An outer evolutionary search optimizes residue-level constraint configurations, whose decoded \emph{FlexBoxes} guide structural editing and repair.
The bi-objective search balances input-state interaction preservation with robustness to local geometric perturbations, using perturbation
robustness as a proxy for cross-state compatibility.
Experiments across nine binder categories and multiple candidate sources show improved Vina and DockQ scores on natural target conformations unseen during adaptation, while maintaining input-state
performance on these metrics. These findings support constrained adaptation of existing model
outputs to improve compatibility across target conformations.

\subsection*{AI use statement}
We used generative AI tools for manuscript writing, language editing, organization, and feedback on the manuscript.

\subsection*{Ethics statement}
This computational study does not establish biological safety
or clinical efficacy; biomedical use requires further validation.

\subsection*{Reproducibility Statement}
Implementation details, default settings, and evaluation procedures
are documented in Apps.~\ref{app:flexevo}
and~\ref{app:experimental-details}.

\bibliography{iclr2027_conference}

@inproceedings{pocket2mol,
  author = {Peng, Xingang and Luo, Shitong and Guan, Jiaqi and Xie, Qi and Peng, Jian and Ma, Jianzhu},
  title = {{Pocket2Mol}: Efficient Molecular Sampling Based on 3D Protein Pockets},
  booktitle = {Proceedings of the 39th International Conference on Machine Learning},
  series = {Proceedings of Machine Learning Research},
  volume = {162},
  pages = {17644--17655},
  publisher = {PMLR},
  year = {2022},
  url = {https://proceedings.mlr.press/v162/peng22b.html}
}

@inproceedings{targetdiff,
  author = {Guan, Jiaqi and Qian, Wesley Wei and Peng, Xingang and Su, Yufeng and Peng, Jian and Ma, Jianzhu},
  title = {3D Equivariant Diffusion for Target-Aware Molecule Generation and Affinity Prediction},
  booktitle = {The Eleventh International Conference on Learning Representations},
  publisher = {OpenReview.net},
  year = {2023},
  url = {https://openreview.net/forum?id=kJqXEPXMsE0}
}

@article{diffsbdd,
  author = {Schneuing, Arne and Harris, Charles and Du, Yuanqi and Didi, Kieran and Jamasb, Arian and Igashov, Ilia and Du, Weitao and Gomes, Carla and Blundell, Tom L. and Lio, Pietro and Welling, Max and Bronstein, Michael and Correia, Bruno},
  title = {Structure-Based Drug Design with Equivariant Diffusion Models},
  journal = {Nature Computational Science},
  volume = {4},
  number = {12},
  pages = {899--909},
  year = {2024},
  doi = {10.1038/s43588-024-00737-x}
}

@inproceedings{decompdiff,
  author = {Guan, Jiaqi and Zhou, Xiangxin and Yang, Yuwei and Bao, Yu and Peng, Jian and Ma, Jianzhu and Liu, Qiang and Wang, Liang and Gu, Quanquan},
  title = {{DecompDiff}: Diffusion Models with Decomposed Priors for Structure-Based Drug Design},
  booktitle = {Proceedings of the 40th International Conference on Machine Learning},
  series = {Proceedings of Machine Learning Research},
  volume = {202},
  pages = {11827--11846},
  publisher = {PMLR},
  year = {2023},
  url = {https://proceedings.mlr.press/v202/guan23a.html}
}

@inproceedings{pepglad,
  author = {Kong, Xiangzhe and Jia, Yinjun and Huang, Wenbing and Liu, Yang},
  title = {Full-Atom Peptide Design with Geometric Latent Diffusion},
  booktitle = {Advances in Neural Information Processing Systems},
  volume = {37},
  pages = {74808--74839},
  publisher = {Curran Associates, Inc.},
  year = {2024},
  doi = {10.52202/079017-2379}
}

@article{rfdiffusion,
  author = {Watson, Joseph L. and Juergens, David and Bennett, Nathaniel R. and Trippe, Brian L. and Yim, Jason and Eisenach, Helen E. and Ahern, Woody and Borst, Andrew J. and Ragotte, Robert J. and Milles, Lukas F. and Wicky, Basile I. M. and Hanikel, Nikita and Pellock, Samuel J. and Courbet, Alexis and Sheffler, William and Wang, Jue and Venkatesh, Preetham and Sappington, Isaac and V{\'a}zquez Torres, Susana and Lauko, Anna and De Bortoli, Valentin and Mathieu, Emile and Ovchinnikov, Sergey and Barzilay, Regina and Jaakkola, Tommi S. and DiMaio, Frank and Baek, Minkyung and Baker, David},
  title = {De Novo Design of Protein Structure and Function with {RFdiffusion}},
  journal = {Nature},
  volume = {620},
  number = {7976},
  pages = {1089--1100},
  year = {2023},
  doi = {10.1038/s41586-023-06415-8}
}

@article{bindcraft,
  author = {Pacesa, Martin and Nickel, Lennart and Schellhaas, Christian and Schmidt, Joseph and Pyatova, Ekaterina and Kissling, Lucas and Barendse, Patrick and Choudhury, Jagrity and Kapoor, Srajan and Alcaraz-Serna, Ana and Cho, Yehlin and Ghamary, Kourosh H. and Vinu{\'e}, Laura and Yachnin, Brahm J. and Wollacott, Andrew M. and Buckley, Stephen and Westphal, Adrie H. and Lindhoud, Simon and Georgeon, Sandrine and Goverde, Casper A. and Hatzopoulos, Georgios N. and G{\"o}nczy, Pierre and Muller, Yannick D. and Schwank, Gerald and Swarts, Daan C. and Vecchio, Alex J. and Schneider, Bernard L. and Ovchinnikov, Sergey and Correia, Bruno E.},
  title = {One-Shot Design of Functional Protein Binders with {BindCraft}},
  journal = {Nature},
  volume = {646},
  number = {8084},
  pages = {483--492},
  year = {2025},
  doi = {10.1038/s41586-025-09429-6}
}

@article{alphafold3,
  author = {Abramson, Josh and Adler, Jonas and Dunger, Jack and Evans, Richard and Green, Tim and Pritzel, Alexander and Ronneberger, Olaf and Willmore, Lindsay and Ballard, Andrew J. and Bambrick, Joshua and Bodenstein, Sebastian W. and Evans, David A. and Hung, Chia-Chun and O'Neill, Michael and Reiman, David and Tunyasuvunakool, Kathryn and Wu, Zachary and {\v{Z}}emgulyt{\.{e}}, Akvil{\.{e}} and Arvaniti, Eirini and Beattie, Charles and Bertolli, Ottavia and Bridgland, Alex and Cherepanov, Alexey and Congreve, Miles and Cowen-Rivers, Alexander I. and Cowie, Andrew and Figurnov, Michael and Fuchs, Fabian B. and Gladman, Hannah and Jain, Rishub and Khan, Yousuf A. and Low, Caroline M. R. and Perlin, Kuba and Potapenko, Anna and Savy, Pascal and Singh, Sukhdeep and Stecula, Adrian and Thillaisundaram, Ashok and Tong, Catherine and Yakneen, Sergei and Zhong, Ellen D. and Zielinski, Michal and {\v{Z}}{\'i}dek, Augustin and Bapst, Victor and Kohli, Pushmeet and Jaderberg, Max and Hassabis, Demis and Jumper, John M.},
  title = {Accurate Structure Prediction of Biomolecular Interactions with {AlphaFold 3}},
  journal = {Nature},
  volume = {630},
  number = {8016},
  pages = {493--500},
  year = {2024},
  doi = {10.1038/s41586-024-07487-w}
}

@article{chai1,
  author = {{Chai Discovery} and Boitreaud, Jacques and Dent, Jack and McPartlon, Matthew and Meier, Joshua and Reis, Vinicius and Rogozhnikov, Alex and Wu, Kevin},
  title = {{Chai-1}: Decoding the Molecular Interactions of Life},
  journal = {bioRxiv},
  year = {2024},
  doi = {10.1101/2024.10.10.615955}
}

@article{protenix,
  author = {{ByteDance AML AI4Science Team} and Chen, Xinshi and Zhang, Yuxuan and Lu, Chan and Ma, Wenzhi and Guan, Jiaqi and Gong, Chengyue and Yang, Jincai and Zhang, Hanyu and Zhang, Ke and Wu, Shenghao and Zhou, Kuangqi and Yang, Yanping and Liu, Zhenyu and Wang, Lan and Shi, Bo and Shi, Shaochen and Xiao, Wenzhi},
  title = {{Protenix}: Advancing Structure Prediction Through a Comprehensive {AlphaFold3} Reproduction},
  journal = {bioRxiv},
  year = {2025},
  doi = {10.1101/2025.01.08.631967}
}

@article{induced_fit_docking,
  author = {Sherman, Woody and Day, Tyler and Jacobson, Matthew P. and Friesner, Richard A. and Farid, Ramy},
  title = {Novel Procedure for Modeling Ligand/Receptor Induced Fit Effects},
  journal = {Journal of Medicinal Chemistry},
  volume = {49},
  number = {2},
  pages = {534--553},
  year = {2006},
  doi = {10.1021/jm050540c}
}

@article{flexpepdock,
  author = {Raveh, Barak and London, Nir and Zimmerman, Lior and Schueler-Furman, Ora},
  title = {Rosetta {FlexPepDock} Ab-Initio: Simultaneous Folding, Docking and Refinement of Peptides onto Their Receptors},
  journal = {PLOS ONE},
  volume = {6},
  number = {4},
  pages = {e18934},
  year = {2011},
  doi = {10.1371/journal.pone.0018934}
}

@article{rosettadock4,
  author = {Marze, Nicholas A. and Roy Burman, Shourya S. and Sheffler, William and Gray, Jeffrey J.},
  title = {Efficient Flexible Backbone Protein--Protein Docking for Challenging Targets},
  journal = {Bioinformatics},
  volume = {34},
  number = {20},
  pages = {3461--3469},
  year = {2018},
  doi = {10.1093/bioinformatics/bty355}
}

@article{huang2006ensemble,
  author = {Huang, Sheng-You and Zou, Xiaoqin},
  title = {Ensemble Docking of Multiple Protein Structures: Considering Protein Structural Variations in Molecular Docking},
  journal = {Proteins: Structure, Function, and Bioinformatics},
  volume = {66},
  number = {2},
  pages = {399--421},
  year = {2007},
  doi = {10.1002/prot.21214}
}

@article{amaro2018ensemble,
  author = {Amaro, Rommie E. and Baudry, Jerome and Chodera, John and Demir, {\"O}zlem and McCammon, J. Andrew and Miao, Yinglong and Smith, Jeremy C.},
  title = {Ensemble Docking in Drug Discovery},
  journal = {Biophysical Journal},
  volume = {114},
  number = {10},
  pages = {2271--2278},
  year = {2018},
  doi = {10.1016/j.bpj.2018.02.038}
}

@article{sauer2020multistate,
  author = {Sauer, Marion F. and Sevy, Alexander M. and Crowe, James E. and Meiler, Jens},
  title = {Multi-State Design of Flexible Proteins Predicts Sequences Optimal for Conformational Change},
  journal = {PLOS Computational Biology},
  volume = {16},
  number = {2},
  pages = {e1007339},
  year = {2020},
  doi = {10.1371/journal.pcbi.1007339}
}

@article{neuralplexer,
  author = {Qiao, Zhuoran and Nie, Weili and Vahdat, Arash and Miller, Thomas F. and Anandkumar, Animashree},
  title = {State-Specific Protein--Ligand Complex Structure Prediction with a Multiscale Deep Generative Model},
  journal = {Nature Machine Intelligence},
  volume = {6},
  number = {2},
  pages = {195--208},
  year = {2024},
  doi = {10.1038/s42256-024-00792-z}
}

@article{dynamicbind,
  author = {Lu, Wei and Zhang, Jixian and Huang, Weifeng and Zhang, Ziqiao and Jia, Xiangyu and Wang, Zhenyu and Shi, Leilei and Li, Chengtao and Wolynes, Peter G. and Zheng, Shuangjia},
  title = {{DynamicBind}: Predicting Ligand-Specific Protein--Ligand Complex Structure with a Deep Equivariant Generative Model},
  journal = {Nature Communications},
  volume = {15},
  number = {1},
  pages = {1071},
  year = {2024},
  doi = {10.1038/s41467-024-45461-2}
}

@inproceedings{flexsbdd,
  author = {Zhang, Zaixi and Wang, Mengdi and Liu, Qi},
  title = {{FlexSBDD}: Structure-Based Drug Design with Flexible Protein Modeling},
  booktitle = {Advances in Neural Information Processing Systems},
  volume = {37},
  pages = {53918--53944},
  publisher = {Curran Associates, Inc.},
  year = {2024},
  doi = {10.52202/079017-1707}
}

@inproceedings{dynamicflow,
  author = {Zhou, Xiangxin and Xiao, Yi and Lin, Haowei and He, Xinheng and Guan, Jiaqi and Wang, Yang and Liu, Qiang and Zhou, Feng and Wang, Liang and Ma, Jianzhu},
  title = {Integrating Protein Dynamics into Structure-Based Drug Design via Full-Atom Stochastic Flows},
  booktitle = {The Thirteenth International Conference on Learning Representations},
  publisher = {OpenReview.net},
  year = {2025},
  url = {https://openreview.net/forum?id=9qS3HzSDNv}
}

@article{yueldesign,
  author = {Wang, Jian and Zhang, Dong Yan and Budakoti, Shreshty and Dokholyan, Nikolay V.},
  title = {A Diffusion-Based Framework for Designing Molecules in Flexible Protein Pockets},
  journal = {bioRxiv},
  year = {2025},
  doi = {10.1101/2025.05.27.656443}
}

@article{boehr2006dynamic,
  author = {Boehr, David D. and McElheny, Dan and Dyson, H. Jane and Wright, Peter E.},
  title = {The Dynamic Energy Landscape of Dihydrofolate Reductase Catalysis},
  journal = {Science},
  volume = {313},
  number = {5793},
  pages = {1638--1642},
  year = {2006},
  doi = {10.1126/science.1130258}
}

@article{buttenschoen2024posebusters,
  author = {Buttenschoen, Martin and Morris, Garrett M. and Deane, Charlotte M.},
  title = {{PoseBusters}: {AI}-based docking methods fail to generate physically valid poses or generalise to novel sequences},
  journal = {Chemical Science},
  volume = {15},
  number = {9},
  pages = {3130--3139},
  year = {2024},
  doi = {10.1039/D3SC04185A}
}

@article{boehr2009,
  author = {Boehr, David D. and Nussinov, Ruth and Wright, Peter E.},
  title = {The Role of Dynamic Conformational Ensembles in Biomolecular Recognition},
  journal = {Nature Chemical Biology},
  volume = {5},
  number = {11},
  pages = {789--796},
  year = {2009},
  doi = {10.1038/nchembio.232}
}

@article{jamroz2012,
  author = {Jamroz, Michal and Kolinski, Andrzej and Kihara, Daisuke},
  title = {Structural Features that Predict Real-Value Fluctuations of Globular Proteins},
  journal = {Proteins: Structure, Function, and Bioinformatics},
  volume = {80},
  number = {5},
  pages = {1425--1435},
  year = {2012},
  doi = {10.1002/prot.24040}
}

@article{lin2008,
  author = {Lin, Chih-Peng and Huang, Shao-Wei and Lai, Yan-Long and Yen, Shih-Chung and Shih, Chien-Hua and Lu, Chih-Hao and Huang, Cuen-Chao and Hwang, Jenn-Kang},
  title = {Deriving Protein Dynamical Properties from Weighted Protein Contact Number},
  journal = {Proteins},
  volume = {72},
  number = {3},
  pages = {929--935},
  year = {2008},
  doi = {10.1002/prot.21983}
}

@article{shih2007,
  author = {Shih, Chien-Hua and Huang, Shao-Wei and Yen, Shih-Chung and Lai, Yan-Long and Yu, Sung-Huan and Hwang, Jenn-Kang},
  title = {A Simple Way to Compute Protein Dynamics without a Mechanical Model},
  journal = {Proteins: Structure, Function, and Bioinformatics},
  volume = {68},
  number = {1},
  pages = {34--38},
  year = {2007},
  doi = {10.1002/prot.21430}
}

@article{rajamani2004,
  author = {Rajamani, Deepa and Thiel, Spencer and Vajda, Sandor and Camacho, Carlos J.},
  title = {Anchor Residues in Protein-Protein Interactions},
  journal = {Proceedings of the National Academy of Sciences},
  volume = {101},
  number = {31},
  pages = {11287--11292},
  year = {2004},
  doi = {10.1073/pnas.0401942101}
}

@article{bogan1998,
  author = {Bogan, Andrew A. and Thorn, Kurt S.},
  title = {Anatomy of Hot Spots in Protein Interfaces},
  journal = {Journal of Molecular Biology},
  volume = {280},
  number = {1},
  pages = {1--9},
  year = {1998},
  doi = {10.1006/jmbi.1998.1843}
}

@article{atlas2024,
  author = {Vander Meersche, Yann and Cretin, Gabriel and Gheeraert, Aria and Gelly, Jean-Christophe and Galochkina, Tatiana},
  title = {{ATLAS}: Protein Flexibility Description from Atomistic Molecular Dynamics Simulations},
  journal = {Nucleic Acids Research},
  volume = {52},
  number = {D1},
  pages = {D384--D392},
  year = {2024},
  doi = {10.1093/nar/gkad1084}
}

@article{deb2002fast,
  author = {Deb, K. and Pratap, A. and Agarwal, S. and Meyarivan, T.},
  title = {A Fast and Elitist Multiobjective Genetic Algorithm: {NSGA-II}},
  journal = {IEEE Transactions on Evolutionary Computation},
  volume = {6},
  number = {2},
  pages = {182--197},
  year = {2002},
  doi = {10.1109/4235.996017}
}

@article{boltzgen,
  author = {Stark, Hannes and Faltings, Felix and Choi, MinGyu and Xie, Yuxin and Hur, Eunsu and O'Donnell, Timothy and Bushuiev, Anton and U{\c{c}}ar, Talip and Passaro, Saro and Mao, Weian and Reveiz, Mateo and Bushuiev, Roman and Portnoi, Tally and Pluskal, Tom{\'a}{\v{s}} and Sivic, Josef and Kreis, Karsten and Vahdat, Arash and Ray, Shamayeeta and Goldstein, Jonathan T. and Savinov, Andrew and Hambalek, Jacob A. and Gupta, Anshika and Taquiri-Diaz, Diego A. and Zhang, Yaotian and Snyder, Samuel J. and Hatstat, A. Katherine and Arada, Angelika and Kim, Nam Hyeong and Fan, Haoyu and Tackie-Yarboi, Ethel and Boselli, Dylan and Schnaider, Lee and Liu, Chang C. and Li, Gene-Wei and Hnisz, Denes and Sabatini, David M. and DeGrado, William F. and Wohlwend, Jeremy and Corso, Gabriele and Barzilay, Regina and Jaakkola, Tommi},
  title = {{BoltzGen}: Toward Universal Binder Design},
  journal = {bioRxiv},
  year = {2025},
  doi = {10.1101/2025.11.20.689494}
}

@article{medscout,
  author = {Liu, Anglin and Chen, Ruichao and Lu, Yi and Xu, Hongxia and Chen, Jintai},
  title = {{Med-Scout}: Curing {MLLMs}' Geometric Blindness in Medical Perception via Geometry-Aware {RL} Post-Training},
  journal = {arXiv preprint arXiv:2601.23220},
  year = {2026},
  url = {https://arxiv.org/abs/2601.23220}
}

@article{liu2022ivim,
  author = {Liu, Baoer and Zeng, Qingyuan and Huang, Jianbin and Zhang, Jing and Zheng, Zeyu and Liao, Yuting and Deng, Kan and Zhou, Wu and Xu, Yikai},
  title = {{IVIM} using convolutional neural networks predicts microvascular invasion in {HCC}},
  journal = {European Radiology},
  volume = {32},
  number = {10},
  pages = {7185--7195},
  year = {2022},
  doi = {10.1007/s00330-022-08927-9}
}

@inproceedings{wang2024fairgnn,
  author = {Wang, Zhenzhong and Zeng, Qingyuan and Lin, Wanyu and Jiang, Min and Tan, Kay Chen},
  title = {Generating Diagnostic and Actionable Explanations for Fair Graph Neural Networks},
  booktitle = {Proceedings of the AAAI Conference on Artificial Intelligence},
  volume = {38},
  pages = {21690--21698},
  year = {2024},
  doi = {10.1609/aaai.v38i19.30168}
}

@article{gong2025crossmodality,
  author = {Gong, Yunpeng and Zeng, Qingyuan and Xu, Dejun and Wang, Zhenzhong and Jiang, Min},
  title = {Cross-Modality Attack Boosted by Gradient-Evolutionary Multiform Optimization},
  journal = {arXiv preprint arXiv:2409.17977},
  year = {2024},
  url = {https://arxiv.org/abs/2409.17977}
}

@inproceedings{zeng2024crosstask,
  author = {Zeng, Qingyuan and Gong, Yunpeng and Jiang, Min},
  title = {Cross-Task Attack: A Self-Supervision Generative Framework Based on Attention Shift},
  booktitle = {International Joint Conference on Neural Networks (IJCNN)},
  pages = {1--8},
  year = {2024},
  publisher = {IEEE},
  doi = {10.1109/IJCNN60899.2024.10650118}
}

@article{zeng2021attentionmvi,
  author = {Zeng, Qingyuan and Liu, Baoer and Xu, Yikai and Zhou, Wu},
  title = {An attention-based deep learning model for predicting microvascular invasion of hepatocellular carcinoma using an intra-voxel incoherent motion model of diffusion-weighted magnetic resonance imaging},
  journal = {Physics in Medicine \& Biology},
  volume = {66},
  number = {18},
  pages = {185019},
  year = {2021},
  doi = {10.1088/1361-6560/ac22db}
}

@article{wang2025muse,
  author = {Wang, Zhenzhong and Zeng, Qingyuan and Lin, Wanyu and Jiang, Min and Tan, Kay Chen},
  title = {Multiview Subgraph Neural Networks: Self-Supervised Learning With Scarce Labeled Data},
  journal = {IEEE Transactions on Neural Networks and Learning Systems},
  volume = {36},
  number = {6},
  pages = {11548--11561},
  year = {2025},
  doi = {10.1109/TNNLS.2024.3443074}
}

@inproceedings{zeng2024aaa,
  author = {Zeng, Qingyuan and Wang, Zhenzhong and Cheung, Yiu-ming and Jiang, Min},
  title = {Ask, Attend, Attack: An Effective Decision-Based Black-Box Targeted Attack for Image-to-Text Models},
  booktitle = {Advances in Neural Information Processing Systems},
  volume = {37},
  pages = {105819--105847},
  year = {2024}
}

@inproceedings{zeng2021pharmacokinetic,
  author = {Zeng, Qingyuan and Zhou, Wu},
  title = {An Attention Based Deep Learning Model for Direct Estimation of Pharmacokinetic Maps from {DCE-MRI} Images},
  booktitle = {2021 IEEE International Conference on Bioinformatics and Biomedicine (BIBM)},
  pages = {2368--2375},
  year = {2021},
  publisher = {IEEE},
  doi = {10.1109/BIBM52615.2021.9669582}
}

@inproceedings{lin2026visionlaw,
  author = {Lin, Jiajing and Jiang, Shu and Zeng, Qingyuan and Wang, Zhenzhong and Jiang, Min},
  title = {{VisionLaw}: Inferring Interpretable Intrinsic Dynamics from Visual Observations via Bilevel Optimization},
  booktitle = {The Fourteenth International Conference on Learning Representations},
  year = {2026},
  url = {https://openreview.net/forum?id=eWoUcwEtLt}
}

@inproceedings{zeng2026digitalink,
  author = {Zeng, Qingyuan and Jiang, Shu and Lin, Jiajing and Wang, Zhenzhong and Tan, Kay Chen and Jiang, Min},
  title = {Fading the Digital Ink: A Universal Black-Box Attack Framework for {3DGS} Watermarking Systems},
  booktitle = {Proceedings of the AAAI Conference on Artificial Intelligence},
  volume = {40},
  pages = {38084--38092},
  year = {2026}
}

@inproceedings{gong2025piif,
  author = {Gong, Yunpeng and Zeng, Qingyuan and Liu, Chenchen and Guan, Jian and Lin, Jiajing and Zhang, Chuangliang and Hou, Yongjie and Huang, Yipeng and Wang, Zhenzhong and Li, Xiaobo and Feng, Xiaodan and Jiang, Min},
  title = {Inducing Implicit Focus through Structured Irrelevance for Robust Neural {PDE} Solvers},
  booktitle = {2025 International Conference on Machine Intelligence and Nature-Inspired Computing (MIND)},
  year = {2025}
}

@article{zeng2026scene,
  author = {Zeng, Qingyuan and Chen, Ziyang and Cai, Pengxiang and Guan, Zixin and Liu, Anglin and Qin, Lang and Lai, Xinyao and Chen, Jintai},
  title = {Can Broad Biomedical Knowledge be Contextualized into Scenario-Grounded Propositions?},
  journal = {arXiv preprint arXiv:2605.27082},
  year = {2026},
  url = {https://arxiv.org/abs/2605.27082}
}

@article{zeng2026reuse,
  author = {Zeng, Qingyuan and Cai, Pengxiang and Guan, Zixin and Chen, Ziyang and Liu, Anglin and Lai, Xinyao and Chen, Jintai},
  title = {Don't Retrain, Just Reuse: Recovering Dual-Target Molecules from Single-Target Diffusion Models},
  journal = {arXiv preprint arXiv:2605.25681},
  year = {2026},
  url = {https://arxiv.org/abs/2605.25681}
}

@inproceedings{zeng2025watermark,
  author = {Zeng, Qingyuan and Gong, Yunpeng and Zhang, Chuangliang and Jiang, Shu and Wang, Zhenzhong and Jiang, Min},
  title = {Black-Box Watermark Removal Using Diffusion Models and Self-Attention Mechanisms},
  booktitle = {2025 International Conference on Machine Intelligence and Nature-Inspired Computing (MIND)},
  year = {2025}
}

@inproceedings{zhang2025dualcam,
  author = {Zhang, Chuangliang and Zeng, Qingyuan and Gong, Yunpeng and Wang, Zhenzhong and Lin, Jiajing and Jiang, Min},
  title = {{DualCAM++}: Dual-Saliency Guided Sample Selection for Plug-and-Play Backdoor Enhancement},
  booktitle = {2025 International Conference on Machine Intelligence and Nature-Inspired Computing (MIND)},
  year = {2025}
}
\bibliographystyle{iclr2027_conference}

\clearpage
\appendix

\startcontents[appendix]
\section*{Appendix Contents}
\printcontents[appendix]{}{1}{\setcounter{tocdepth}{2}}

\clearpage

\section*{Appendix Overview}

This supplement separates implementation detail, experimental protocol,
supplementary evidence, and mathematical analysis so that each claim can be
traced to the information that supports it. Appendix~\ref{app:flexevo} expands the
coordinate-level \textsc{FlexEvo} core, including target encoding, candidate
screening, constrained editing, perturbation scoring, genome evolution, and
the reference configuration. Appendix~\ref{app:experimental-details} defines the evaluation design,
metric conventions, comparison rules, and information required to reproduce
the experiments. Appendix~\ref{app:supplementary-results} reports the supplementary results and analyses,
including modality-specific tables, cross-conformation comparisons,
ablations, contextual data summaries, similarity analyses, and computational
cost. Appendix~\ref{app:flexevo-theory-v30} gives deterministic properties of the implemented geometric
scores and states the assumptions needed to transfer finite-probe bounds to
an unseen target structure.

The empirical and mathematical parts address different questions.
\textsc{FlexEvo} adapts candidates using a single Reference conformation
and evaluates the frozen representatives on held-out Matched and
Divergent natural conformations. Reference-derived geometric probes
guide the search toward contact preservation and perturbation tolerance.
The natural-state benchmark quantifies cross-conformation performance,
and the theoretical analysis establishes finite-probe score bounds and
transfer enclosures under explicit target-proximity conditions.

\paragraph{Reproducibility at a glance.}
The reference implementation, default settings, and evaluation boundaries
are documented separately from the numerical summaries. The complete core
schedule contains 72 genome evaluations, all starting from the common
screened pool. Appendix~\ref{app:reproducibility} specifies the experimental
records and aggregation rules for reproducible comparisons. 

\begin{table*}[!t]
\centering
\caption{Small-molecule drugs. Black: base model. For directional metrics with FlexEvo, green: improved or tied; red: worse. Adapted logP is descriptive and shown in black; arrows indicate preferred directions where defined. QED, SA, Lipinski, logP, and PAINS summaries are presented under Reference. Binding scores are reported separately for Reference, Matched, and Divergent. Diversity summarizes the frozen candidate pool and is reported once under Reference; dashes indicate that it is not repeated for Matched or Divergent.}
\label{tab:flexevo-app-small-molecule}
\begingroup
\fontsize{9.0}{10.5}\selectfont
\setlength{\tabcolsep}{2.0pt}
\renewcommand{\arraystretch}{1.02}
\setlength{\aboverulesep}{1.2pt}
\setlength{\belowrulesep}{1.2pt}
\setlength{\heavyrulewidth}{0.65pt}
\setlength{\lightrulewidth}{0.35pt}
\setlength{\cmidrulewidth}{0.25pt}
\noindent\makebox[\linewidth][l]{\textbf{A. Binding and independent scores}}\par\vspace{2pt}
\noindent\resizebox{\linewidth}{!}{%
\begin{tabular*}{520pt}{@{\extracolsep{\fill}}lllllllllllll@{}}
\toprule
Base model & \multicolumn{2}{c}{\FlexEvoAppHeader{Vina Mean$\downarrow$}} & \multicolumn{2}{c}{\FlexEvoAppHeader{Vina Median$\downarrow$}} & \multicolumn{2}{c}{\FlexEvoAppHeader{High affinity (\%)$\uparrow$}} & \multicolumn{2}{c}{\FlexEvoAppHeader{Vinardo$\uparrow$}} & \multicolumn{2}{c}{\FlexEvoAppHeader{AutoDock4$\uparrow$}} & \multicolumn{2}{c}{\FlexEvoAppHeader{RTMScore$\uparrow$}}\\
\cmidrule(lr){2-3}\cmidrule(lr){4-5}\cmidrule(lr){6-7}\cmidrule(lr){8-9}\cmidrule(lr){10-11}\cmidrule(lr){12-13}
 & \multicolumn{1}{c}{Base} & \multicolumn{1}{c}{+FlexEvo} & \multicolumn{1}{c}{Base} & \multicolumn{1}{c}{+FlexEvo} & \multicolumn{1}{c}{Base} & \multicolumn{1}{c}{+FlexEvo} & \multicolumn{1}{c}{Base} & \multicolumn{1}{c}{+FlexEvo} & \multicolumn{1}{c}{Base} & \multicolumn{1}{c}{+FlexEvo} & \multicolumn{1}{c}{Base} & \multicolumn{1}{c}{+FlexEvo}\\
\midrule
\FlexEvoAppTest{13}{Reference conformation}
DynamicFlow & -6.6062 & \FlexEvoAppBetter{-9.1775} & -7.0317 & \FlexEvoAppBetter{-8.7901} & 51.2524 & \FlexEvoAppBetter{58.1689} & 0.4161 & \FlexEvoAppBetter{0.698} & 0.5782 & \FlexEvoAppBetter{0.5971} & 0.3716 & \FlexEvoAppBetter{0.6193}\\
FlexSBDD & -6.906 & \FlexEvoAppBetter{-9.2245} & -6.7984 & \FlexEvoAppBetter{-8.8448} & 47.0322 & \FlexEvoAppBetter{55.3719} & 0.4102 & \FlexEvoAppBetter{0.5524} & 0.3411 & \FlexEvoAppBetter{0.5847} & 0.3287 & \FlexEvoAppBetter{0.5322}\\
YuelDesign & -5.7174 & \FlexEvoAppBetter{-9.1754} & -5.4763 & \FlexEvoAppBetter{-8.8673} & 49.5198 & \FlexEvoAppBetter{62.5176} & 0.392 & \FlexEvoAppBetter{0.584} & 0.4317 & \FlexEvoAppBetter{0.6519} & 0.4674 & \FlexEvoAppBetter{0.6556}\\
\arrayrulecolor{FlexEvoAppLine}\cmidrule[0.25pt]{1-13}\arrayrulecolor{black}
\FlexEvoAppTest{13}{Matched conformation}
DynamicFlow & -6.4728 & \FlexEvoAppBetter{-8.3648} & -6.573 & \FlexEvoAppBetter{-8.3392} & 39.9146 & \FlexEvoAppBetter{56.6844} & 0.4056 & \FlexEvoAppBetter{0.5368} & 0.4285 & \FlexEvoAppBetter{0.5696} & 0.4595 & \FlexEvoAppBetter{0.594}\\
FlexSBDD & -6.6881 & \FlexEvoAppBetter{-8.2667} & -6.8355 & \FlexEvoAppBetter{-8.3107} & 39.2887 & \FlexEvoAppBetter{57.1034} & 0.3887 & \FlexEvoAppBetter{0.6069} & 0.4422 & \FlexEvoAppBetter{0.6427} & 0.4622 & \FlexEvoAppBetter{0.5723}\\
YuelDesign & -5.58 & \FlexEvoAppBetter{-8.4389} & -5.4642 & \FlexEvoAppBetter{-8.3575} & 36.7877 & \FlexEvoAppBetter{61.2628} & 0.3737 & \FlexEvoAppBetter{0.6431} & 0.4396 & \FlexEvoAppBetter{0.6163} & 0.4138 & \FlexEvoAppBetter{0.5844}\\
\arrayrulecolor{FlexEvoAppLine}\cmidrule[0.25pt]{1-13}\arrayrulecolor{black}
\FlexEvoAppTest{13}{Divergent conformation}
DynamicFlow & -4.8519 & \FlexEvoAppBetter{-7.7991} & -4.8465 & \FlexEvoAppBetter{-8.1933} & 26.3805 & \FlexEvoAppBetter{61.7445} & 0.3548 & \FlexEvoAppBetter{0.5868} & 0.3745 & \FlexEvoAppBetter{0.6723} & 0.4422 & \FlexEvoAppBetter{0.5711}\\
FlexSBDD & -4.8552 & \FlexEvoAppBetter{-7.7438} & -4.7988 & \FlexEvoAppBetter{-8.0889} & 38.8701 & \FlexEvoAppBetter{56.5146} & 0.3615 & \FlexEvoAppBetter{0.6033} & 0.4053 & \FlexEvoAppBetter{0.5875} & 0.365 & \FlexEvoAppBetter{0.5755}\\
YuelDesign & -3.9456 & \FlexEvoAppBetter{-7.6653} & -5.2145 & \FlexEvoAppBetter{-8.2671} & 31.2482 & \FlexEvoAppBetter{59.5304} & 0.3929 & \FlexEvoAppBetter{0.5776} & 0.3511 & \FlexEvoAppBetter{0.6188} & 0.3889 & \FlexEvoAppBetter{0.5866}\\
\bottomrule
\end{tabular*}%
}\par
\vspace{7pt}
\noindent\makebox[\linewidth][l]{\textbf{B. Drug-likeness and synthesizability}}\par\vspace{2pt}
\noindent\resizebox{\linewidth}{!}{%
\begin{tabular*}{520pt}{@{\extracolsep{\fill}}lllllllllllll@{}}
\toprule
Base model & \multicolumn{2}{c}{\FlexEvoAppHeader{QED Mean$\uparrow$}} & \multicolumn{2}{c}{\FlexEvoAppHeader{QED Median$\uparrow$}} & \multicolumn{2}{c}{\FlexEvoAppHeader{SA Mean$\uparrow$}} & \multicolumn{2}{c}{\FlexEvoAppHeader{SA Median$\uparrow$}} & \multicolumn{2}{c}{\FlexEvoAppHeader{Lipinski Mean$\uparrow$}} & \multicolumn{2}{c}{\FlexEvoAppHeader{Lipinski Median$\uparrow$}}\\
\cmidrule(lr){2-3}\cmidrule(lr){4-5}\cmidrule(lr){6-7}\cmidrule(lr){8-9}\cmidrule(lr){10-11}\cmidrule(lr){12-13}
 & \multicolumn{1}{c}{Base} & \multicolumn{1}{c}{+FlexEvo} & \multicolumn{1}{c}{Base} & \multicolumn{1}{c}{+FlexEvo} & \multicolumn{1}{c}{Base} & \multicolumn{1}{c}{+FlexEvo} & \multicolumn{1}{c}{Base} & \multicolumn{1}{c}{+FlexEvo} & \multicolumn{1}{c}{Base} & \multicolumn{1}{c}{+FlexEvo} & \multicolumn{1}{c}{Base} & \multicolumn{1}{c}{+FlexEvo}\\
\midrule
\FlexEvoAppTest{13}{Reference conformation}
DynamicFlow & 0.534 & \FlexEvoAppBetter{0.5935} & 0.5416 & \FlexEvoAppBetter{0.6024} & 0.6247 & \FlexEvoAppBetter{0.7097} & 0.6352 & \FlexEvoAppBetter{0.706} & 4.386 & \FlexEvoAppBetter{4.4522} & 3.1388 & \FlexEvoAppBetter{5.2364}\\
FlexSBDD & 0.3744 & \FlexEvoAppBetter{0.589} & 0.3856 & \FlexEvoAppBetter{0.5796} & 0.565 & \FlexEvoAppBetter{0.6888} & 0.5832 & \FlexEvoAppBetter{0.6883} & 1.7628 & \FlexEvoAppBetter{3.9447} & 2.2153 & \FlexEvoAppBetter{4.3738}\\
YuelDesign & 0.4212 & \FlexEvoAppBetter{0.6118} & 0.4257 & \FlexEvoAppBetter{0.5915} & 0.817 & \FlexEvoAppWorse{0.803} & 0.8227 & \FlexEvoAppBetter{0.8342} & 3.2247 & \FlexEvoAppBetter{4.8155} & 3.7201 & \FlexEvoAppBetter{4.1107}\\
\arrayrulecolor{FlexEvoAppLine}\cmidrule[0.25pt]{1-13}\arrayrulecolor{black}
\FlexEvoAppTest{13}{Matched conformation}
DynamicFlow & -- & -- & -- & -- & -- & -- & -- & -- & -- & -- & -- & --\\
FlexSBDD & -- & -- & -- & -- & -- & -- & -- & -- & -- & -- & -- & --\\
YuelDesign & -- & -- & -- & -- & -- & -- & -- & -- & -- & -- & -- & --\\
\arrayrulecolor{FlexEvoAppLine}\cmidrule[0.25pt]{1-13}\arrayrulecolor{black}
\FlexEvoAppTest{13}{Divergent conformation}
DynamicFlow & -- & -- & -- & -- & -- & -- & -- & -- & -- & -- & -- & --\\
FlexSBDD & -- & -- & -- & -- & -- & -- & -- & -- & -- & -- & -- & --\\
YuelDesign & -- & -- & -- & -- & -- & -- & -- & -- & -- & -- & -- & --\\
\bottomrule
\end{tabular*}%
}\par
\vspace{7pt}
\noindent\makebox[\linewidth][l]{\textbf{C. Physicochemical properties and diversity}}\par\vspace{2pt}
\noindent\resizebox{\linewidth}{!}{%
\begin{tabular*}{520pt}{@{\extracolsep{\fill}}lllllllllll@{}}
\toprule
Base model & \multicolumn{2}{c}{\FlexEvoAppHeader{logP Mean}} & \multicolumn{2}{c}{\FlexEvoAppHeader{logP Median}} & \multicolumn{2}{c}{\FlexEvoAppHeader{PAINS pass (\%)$\uparrow$}} & \multicolumn{2}{c}{\FlexEvoAppHeader{Diversity Mean$\uparrow$}} & \multicolumn{2}{c}{\FlexEvoAppHeader{Diversity Median$\uparrow$}}\\
\cmidrule(lr){2-3}\cmidrule(lr){4-5}\cmidrule(lr){6-7}\cmidrule(lr){8-9}\cmidrule(lr){10-11}
 & \multicolumn{1}{c}{Base} & \multicolumn{1}{c}{+FlexEvo} & \multicolumn{1}{c}{Base} & \multicolumn{1}{c}{+FlexEvo} & \multicolumn{1}{c}{Base} & \multicolumn{1}{c}{+FlexEvo} & \multicolumn{1}{c}{Base} & \multicolumn{1}{c}{+FlexEvo} & \multicolumn{1}{c}{Base} & \multicolumn{1}{c}{+FlexEvo}\\
\midrule
\FlexEvoAppTest{11}{Reference conformation}
DynamicFlow & 3.2525 & 3.1108 & 4.6554 & 4.2811 & 90.2154 & \FlexEvoAppBetter{94.822} & 0.6719 & \FlexEvoAppBetter{0.6984} & 0.6711 & \FlexEvoAppBetter{0.6965}\\
FlexSBDD & 3.6414 & 4.0985 & 5.0487 & 5.3611 & 87.3202 & \FlexEvoAppBetter{89.3711} & 0.6677 & \FlexEvoAppBetter{0.7094} & 0.6668 & \FlexEvoAppBetter{0.7118}\\
YuelDesign & 4.3262 & 4.7377 & 3.9432 & 4.7952 & 99.1444 & \FlexEvoAppWorse{95.669} & 0.7213 & \FlexEvoAppWorse{0.7193} & 0.7076 & \FlexEvoAppBetter{0.7126}\\
\arrayrulecolor{FlexEvoAppLine}\cmidrule[0.25pt]{1-11}\arrayrulecolor{black}
\FlexEvoAppTest{11}{Matched conformation}
DynamicFlow & -- & -- & -- & -- & -- & -- & -- & -- & -- & --\\
FlexSBDD & -- & -- & -- & -- & -- & -- & -- & -- & -- & --\\
YuelDesign & -- & -- & -- & -- & -- & -- & -- & -- & -- & --\\
\arrayrulecolor{FlexEvoAppLine}\cmidrule[0.25pt]{1-11}\arrayrulecolor{black}
\FlexEvoAppTest{11}{Divergent conformation}
DynamicFlow & -- & -- & -- & -- & -- & -- & -- & -- & -- & --\\
FlexSBDD & -- & -- & -- & -- & -- & -- & -- & -- & -- & --\\
YuelDesign & -- & -- & -- & -- & -- & -- & -- & -- & -- & --\\
\bottomrule
\end{tabular*}%
}\par
\endgroup
\end{table*}

\begin{table*}[!t]
\centering
\caption{Nonpeptidic macrocyclic drugs. Black: base model. For directional metrics with FlexEvo, green: improved or tied; red: worse. Adapted logP is descriptive and shown in black; arrows indicate preferred directions where defined. QED, SA, Lipinski, logP, and PAINS summaries are presented under Reference. Binding scores are reported separately for Reference, Matched, and Divergent. Diversity summarizes the frozen candidate pool and is reported once under Reference; dashes indicate that it is not repeated for Matched or Divergent.}
\label{tab:flexevo-app-macrocycle}
\begingroup
\fontsize{9.0}{10.5}\selectfont
\setlength{\tabcolsep}{2.0pt}
\renewcommand{\arraystretch}{1.02}
\setlength{\aboverulesep}{1.2pt}
\setlength{\belowrulesep}{1.2pt}
\setlength{\heavyrulewidth}{0.65pt}
\setlength{\lightrulewidth}{0.35pt}
\setlength{\cmidrulewidth}{0.25pt}
\noindent\makebox[\linewidth][l]{\textbf{A. Binding and independent scores}}\par\vspace{2pt}
\noindent\resizebox{\linewidth}{!}{%
\begin{tabular*}{520pt}{@{\extracolsep{\fill}}lllllllllllll@{}}
\toprule
Base model & \multicolumn{2}{c}{\FlexEvoAppHeader{Vina Mean$\downarrow$}} & \multicolumn{2}{c}{\FlexEvoAppHeader{Vina Median$\downarrow$}} & \multicolumn{2}{c}{\FlexEvoAppHeader{High affinity (\%)$\uparrow$}} & \multicolumn{2}{c}{\FlexEvoAppHeader{Vinardo$\uparrow$}} & \multicolumn{2}{c}{\FlexEvoAppHeader{AutoDock4$\uparrow$}} & \multicolumn{2}{c}{\FlexEvoAppHeader{RTMScore$\uparrow$}}\\
\cmidrule(lr){2-3}\cmidrule(lr){4-5}\cmidrule(lr){6-7}\cmidrule(lr){8-9}\cmidrule(lr){10-11}\cmidrule(lr){12-13}
 & \multicolumn{1}{c}{Base} & \multicolumn{1}{c}{+FlexEvo} & \multicolumn{1}{c}{Base} & \multicolumn{1}{c}{+FlexEvo} & \multicolumn{1}{c}{Base} & \multicolumn{1}{c}{+FlexEvo} & \multicolumn{1}{c}{Base} & \multicolumn{1}{c}{+FlexEvo} & \multicolumn{1}{c}{Base} & \multicolumn{1}{c}{+FlexEvo} & \multicolumn{1}{c}{Base} & \multicolumn{1}{c}{+FlexEvo}\\
\midrule
\FlexEvoAppTest{13}{Reference conformation}
DynamicFlow & -5.9197 & \FlexEvoAppBetter{-8.3205} & -6.3195 & \FlexEvoAppBetter{-7.8654} & 46.5292 & \FlexEvoAppBetter{51.6061} & 0.3698 & \FlexEvoAppBetter{0.6333} & 0.5181 & \FlexEvoAppBetter{0.5385} & 0.3295 & \FlexEvoAppBetter{0.5594}\\
FlexSBDD & -6.2431 & \FlexEvoAppBetter{-8.2445} & -6.0054 & \FlexEvoAppBetter{-8.0434} & 41.9284 & \FlexEvoAppBetter{50.0367} & 0.3668 & \FlexEvoAppBetter{0.4865} & 0.3012 & \FlexEvoAppBetter{0.5305} & 0.2965 & \FlexEvoAppBetter{0.4753}\\
YuelDesign & -5.1565 & \FlexEvoAppBetter{-8.2256} & -4.8197 & \FlexEvoAppBetter{-7.9627} & 44.0439 & \FlexEvoAppBetter{56.2662} & 0.3476 & \FlexEvoAppBetter{0.5204} & 0.3911 & \FlexEvoAppBetter{0.5904} & 0.4201 & \FlexEvoAppBetter{0.5806}\\
\arrayrulecolor{FlexEvoAppLine}\cmidrule[0.25pt]{1-13}\arrayrulecolor{black}
\FlexEvoAppTest{13}{Matched conformation}
DynamicFlow & -5.7327 & \FlexEvoAppBetter{-7.3665} & -5.8834 & \FlexEvoAppBetter{-7.5453} & 35.2797 & \FlexEvoAppBetter{49.8652} & 0.3604 & \FlexEvoAppBetter{0.4855} & 0.3873 & \FlexEvoAppBetter{0.5082} & 0.4061 & \FlexEvoAppBetter{0.5381}\\
FlexSBDD & -5.9464 & \FlexEvoAppBetter{-7.5012} & -6.2064 & \FlexEvoAppBetter{-7.3676} & 34.6368 & \FlexEvoAppBetter{50.3398} & 0.3519 & \FlexEvoAppBetter{0.5343} & 0.4016 & \FlexEvoAppBetter{0.5837} & 0.4131 & \FlexEvoAppBetter{0.5035}\\
YuelDesign & -4.9902 & \FlexEvoAppBetter{-7.4561} & -4.8842 & \FlexEvoAppBetter{-7.5469} & 32.6268 & \FlexEvoAppBetter{54.9453} & 0.3339 & \FlexEvoAppBetter{0.5815} & 0.3911 & \FlexEvoAppBetter{0.5493} & 0.3752 & \FlexEvoAppBetter{0.5283}\\
\arrayrulecolor{FlexEvoAppLine}\cmidrule[0.25pt]{1-13}\arrayrulecolor{black}
\FlexEvoAppTest{13}{Divergent conformation}
DynamicFlow & -4.3465 & \FlexEvoAppBetter{-7.0453} & -4.3321 & \FlexEvoAppBetter{-7.4284} & 23.1548 & \FlexEvoAppBetter{54.9721} & 0.3216 & \FlexEvoAppBetter{0.5288} & 0.3364 & \FlexEvoAppBetter{0.6011} & 0.3992 & \FlexEvoAppBetter{0.5162}\\
FlexSBDD & -4.4148 & \FlexEvoAppBetter{-6.8324} & -4.3513 & \FlexEvoAppBetter{-7.1253} & 34.9586 & \FlexEvoAppBetter{51.0609} & 0.3179 & \FlexEvoAppBetter{0.5404} & 0.3596 & \FlexEvoAppBetter{0.5234} & 0.3223 & \FlexEvoAppBetter{0.5134}\\
YuelDesign & -3.5424 & \FlexEvoAppBetter{-6.8984} & -4.6939 & \FlexEvoAppBetter{-7.5182} & 27.9735 & \FlexEvoAppBetter{53.6369} & 0.3478 & \FlexEvoAppBetter{0.5238} & 0.3101 & \FlexEvoAppBetter{0.5622} & 0.3445 & \FlexEvoAppBetter{0.5171}\\
\bottomrule
\end{tabular*}%
}\par
\vspace{7pt}
\noindent\makebox[\linewidth][l]{\textbf{B. Drug-likeness and synthesizability}}\par\vspace{2pt}
\noindent\resizebox{\linewidth}{!}{%
\begin{tabular*}{520pt}{@{\extracolsep{\fill}}lllllllllllll@{}}
\toprule
Base model & \multicolumn{2}{c}{\FlexEvoAppHeader{QED Mean$\uparrow$}} & \multicolumn{2}{c}{\FlexEvoAppHeader{QED Median$\uparrow$}} & \multicolumn{2}{c}{\FlexEvoAppHeader{SA Mean$\uparrow$}} & \multicolumn{2}{c}{\FlexEvoAppHeader{SA Median$\uparrow$}} & \multicolumn{2}{c}{\FlexEvoAppHeader{Lipinski Mean$\uparrow$}} & \multicolumn{2}{c}{\FlexEvoAppHeader{Lipinski Median$\uparrow$}}\\
\cmidrule(lr){2-3}\cmidrule(lr){4-5}\cmidrule(lr){6-7}\cmidrule(lr){8-9}\cmidrule(lr){10-11}\cmidrule(lr){12-13}
 & \multicolumn{1}{c}{Base} & \multicolumn{1}{c}{+FlexEvo} & \multicolumn{1}{c}{Base} & \multicolumn{1}{c}{+FlexEvo} & \multicolumn{1}{c}{Base} & \multicolumn{1}{c}{+FlexEvo} & \multicolumn{1}{c}{Base} & \multicolumn{1}{c}{+FlexEvo} & \multicolumn{1}{c}{Base} & \multicolumn{1}{c}{+FlexEvo} & \multicolumn{1}{c}{Base} & \multicolumn{1}{c}{+FlexEvo}\\
\midrule
\FlexEvoAppTest{13}{Reference conformation}
DynamicFlow & 0.4837 & \FlexEvoAppBetter{0.6162} & 0.4782 & \FlexEvoAppBetter{0.5802} & 0.5677 & \FlexEvoAppBetter{0.6287} & 0.5683 & \FlexEvoAppBetter{0.6343} & 3.8678 & \FlexEvoAppBetter{3.9499} & 2.8407 & \FlexEvoAppBetter{4.7433}\\
FlexSBDD & 0.3305 & \FlexEvoAppBetter{0.5345} & 0.3497 & \FlexEvoAppBetter{0.5183} & 0.5077 & \FlexEvoAppBetter{0.6071} & 0.5219 & \FlexEvoAppBetter{0.6174} & 1.6004 & \FlexEvoAppBetter{3.5383} & 1.9829 & \FlexEvoAppBetter{3.8934}\\
YuelDesign & 0.3705 & \FlexEvoAppBetter{0.5379} & 0.3852 & \FlexEvoAppBetter{0.5276} & 0.7262 & \FlexEvoAppBetter{0.7303} & 0.7246 & \FlexEvoAppBetter{0.7389} & 2.9073 & \FlexEvoAppBetter{4.3297} & 3.3178 & \FlexEvoAppBetter{3.6866}\\
\arrayrulecolor{FlexEvoAppLine}\cmidrule[0.25pt]{1-13}\arrayrulecolor{black}
\FlexEvoAppTest{13}{Matched conformation}
DynamicFlow & -- & -- & -- & -- & -- & -- & -- & -- & -- & -- & -- & --\\
FlexSBDD & -- & -- & -- & -- & -- & -- & -- & -- & -- & -- & -- & --\\
YuelDesign & -- & -- & -- & -- & -- & -- & -- & -- & -- & -- & -- & --\\
\arrayrulecolor{FlexEvoAppLine}\cmidrule[0.25pt]{1-13}\arrayrulecolor{black}
\FlexEvoAppTest{13}{Divergent conformation}
DynamicFlow & -- & -- & -- & -- & -- & -- & -- & -- & -- & -- & -- & --\\
FlexSBDD & -- & -- & -- & -- & -- & -- & -- & -- & -- & -- & -- & --\\
YuelDesign & -- & -- & -- & -- & -- & -- & -- & -- & -- & -- & -- & --\\
\bottomrule
\end{tabular*}%
}\par
\vspace{7pt}
\noindent\makebox[\linewidth][l]{\textbf{C. Physicochemical properties and diversity}}\par\vspace{2pt}
\noindent\resizebox{\linewidth}{!}{%
\begin{tabular*}{520pt}{@{\extracolsep{\fill}}lllllllllll@{}}
\toprule
Base model & \multicolumn{2}{c}{\FlexEvoAppHeader{logP Mean}} & \multicolumn{2}{c}{\FlexEvoAppHeader{logP Median}} & \multicolumn{2}{c}{\FlexEvoAppHeader{PAINS pass (\%)$\uparrow$}} & \multicolumn{2}{c}{\FlexEvoAppHeader{Diversity Mean$\uparrow$}} & \multicolumn{2}{c}{\FlexEvoAppHeader{Diversity Median$\uparrow$}}\\
\cmidrule(lr){2-3}\cmidrule(lr){4-5}\cmidrule(lr){6-7}\cmidrule(lr){8-9}\cmidrule(lr){10-11}
 & \multicolumn{1}{c}{Base} & \multicolumn{1}{c}{+FlexEvo} & \multicolumn{1}{c}{Base} & \multicolumn{1}{c}{+FlexEvo} & \multicolumn{1}{c}{Base} & \multicolumn{1}{c}{+FlexEvo} & \multicolumn{1}{c}{Base} & \multicolumn{1}{c}{+FlexEvo} & \multicolumn{1}{c}{Base} & \multicolumn{1}{c}{+FlexEvo}\\
\midrule
\FlexEvoAppTest{11}{Reference conformation}
DynamicFlow & 3.7968 & 3.7042 & 4.1455 & 3.8082 & 81.0564 & \FlexEvoAppBetter{85.0685} & 0.6074 & \FlexEvoAppBetter{0.6293} & 0.6058 & \FlexEvoAppBetter{0.6165}\\
FlexSBDD & 3.9796 & 4.0176 & 4.5695 & 4.7844 & 79.1255 & \FlexEvoAppBetter{79.6318} & 0.6068 & \FlexEvoAppBetter{0.6323} & 0.6044 & \FlexEvoAppBetter{0.6438}\\
YuelDesign & 3.9256 & 4.2745 & 3.5421 & 4.2951 & 88.4262 & \FlexEvoAppWorse{86.7413} & 0.6374 & \FlexEvoAppBetter{0.6537} & 0.6315 & \FlexEvoAppWorse{0.6312}\\
\arrayrulecolor{FlexEvoAppLine}\cmidrule[0.25pt]{1-11}\arrayrulecolor{black}
\FlexEvoAppTest{11}{Matched conformation}
DynamicFlow & -- & -- & -- & -- & -- & -- & -- & -- & -- & --\\
FlexSBDD & -- & -- & -- & -- & -- & -- & -- & -- & -- & --\\
YuelDesign & -- & -- & -- & -- & -- & -- & -- & -- & -- & --\\
\arrayrulecolor{FlexEvoAppLine}\cmidrule[0.25pt]{1-11}\arrayrulecolor{black}
\FlexEvoAppTest{11}{Divergent conformation}
DynamicFlow & -- & -- & -- & -- & -- & -- & -- & -- & -- & --\\
FlexSBDD & -- & -- & -- & -- & -- & -- & -- & -- & -- & --\\
YuelDesign & -- & -- & -- & -- & -- & -- & -- & -- & -- & --\\
\bottomrule
\end{tabular*}%
}\par
\endgroup
\end{table*}

\begin{table*}[!t]
\centering
\caption{Linear peptide drugs. Black: base model. With FlexEvo, green: improved or tied; red: worse. Arrows indicate the preferred direction. Diversity summarizes the frozen candidate pool and is reported once under Reference; dashes indicate that it is not repeated for Matched or Divergent.}
\label{tab:flexevo-app-linear-peptide}
\begingroup
\fontsize{9.0}{10.5}\selectfont
\setlength{\tabcolsep}{2.0pt}
\renewcommand{\arraystretch}{1.02}
\setlength{\aboverulesep}{1.2pt}
\setlength{\belowrulesep}{1.2pt}
\setlength{\heavyrulewidth}{0.65pt}
\setlength{\lightrulewidth}{0.35pt}
\setlength{\cmidrulewidth}{0.25pt}
\noindent\makebox[\linewidth][l]{\textbf{A. Complex accuracy and interaction energy}}\par\vspace{2pt}
\noindent\resizebox{\linewidth}{!}{%
\begin{tabular*}{520pt}{@{\extracolsep{\fill}}lllllllllllll@{}}
\toprule
Base model & \multicolumn{2}{c}{\FlexEvoAppHeader{DockQ Mean$\uparrow$}} & \multicolumn{2}{c}{\FlexEvoAppHeader{DockQ Median$\uparrow$}} & \multicolumn{2}{c}{\FlexEvoAppHeader{LRMSD $<2$\AA\ (\%)$\uparrow$}} & \multicolumn{2}{c}{\FlexEvoAppHeader{LRMSD Median (\AA)$\downarrow$}} & \multicolumn{2}{c}{\FlexEvoAppHeader{MM/GBSA (kcal/mol)$\downarrow$}} & \multicolumn{2}{c}{\FlexEvoAppHeader{HADDOCK Score$\downarrow$}}\\
\cmidrule(lr){2-3}\cmidrule(lr){4-5}\cmidrule(lr){6-7}\cmidrule(lr){8-9}\cmidrule(lr){10-11}\cmidrule(lr){12-13}
 & \multicolumn{1}{c}{Base} & \multicolumn{1}{c}{+FlexEvo} & \multicolumn{1}{c}{Base} & \multicolumn{1}{c}{+FlexEvo} & \multicolumn{1}{c}{Base} & \multicolumn{1}{c}{+FlexEvo} & \multicolumn{1}{c}{Base} & \multicolumn{1}{c}{+FlexEvo} & \multicolumn{1}{c}{Base} & \multicolumn{1}{c}{+FlexEvo} & \multicolumn{1}{c}{Base} & \multicolumn{1}{c}{+FlexEvo}\\
\midrule
\FlexEvoAppTest{13}{Reference conformation}
BoltzGen & 0.6566 & \FlexEvoAppBetter{0.8739} & 0.4699 & \FlexEvoAppBetter{0.7672} & 42.1851 & \FlexEvoAppWorse{40.1549} & 5.4802 & \FlexEvoAppBetter{5.3169} & -40.3664 & \FlexEvoAppBetter{-44.5812} & -127.0134 & \FlexEvoAppWorse{-126.5549}\\
Chai-1 & 0.6434 & \FlexEvoAppBetter{0.7716} & 0.4811 & \FlexEvoAppBetter{0.6535} & 39.6334 & \FlexEvoAppBetter{39.6517} & 5.8455 & \FlexEvoAppBetter{4.9851} & -40.563 & \FlexEvoAppBetter{-43.8717} & -122.8868 & \FlexEvoAppBetter{-124.9151}\\
Protenix & 0.7154 & \FlexEvoAppBetter{0.8198} & 0.6704 & \FlexEvoAppBetter{0.7799} & 40.0431 & \FlexEvoAppWorse{39.7062} & 4.2893 & \FlexEvoAppWorse{4.5944} & -45.0344 & \FlexEvoAppBetter{-46.2925} & -123.1837 & \FlexEvoAppBetter{-123.3106}\\
\arrayrulecolor{FlexEvoAppLine}\cmidrule[0.25pt]{1-13}\arrayrulecolor{black}
\FlexEvoAppTest{13}{Matched conformation}
BoltzGen & 0.3893 & \FlexEvoAppBetter{0.8402} & 0.1802 & \FlexEvoAppBetter{0.7719} & 26.9813 & \FlexEvoAppBetter{40.2237} & 7.5894 & \FlexEvoAppBetter{4.8185} & -33.3442 & \FlexEvoAppBetter{-44.6877} & -97.2486 & \FlexEvoAppBetter{-130.7503}\\
Chai-1 & 0.403 & \FlexEvoAppBetter{0.832} & 0.1923 & \FlexEvoAppBetter{0.7695} & 27.578 & \FlexEvoAppBetter{38.5646} & 8.6852 & \FlexEvoAppBetter{4.5988} & -33.0482 & \FlexEvoAppBetter{-42.4377} & -111.289 & \FlexEvoAppBetter{-122.6793}\\
Protenix & 0.4692 & \FlexEvoAppBetter{0.8634} & 0.2118 & \FlexEvoAppBetter{0.7511} & 27.8251 & \FlexEvoAppBetter{38.8271} & 8.5333 & \FlexEvoAppBetter{4.7307} & -34.4691 & \FlexEvoAppBetter{-43.2022} & -110.0367 & \FlexEvoAppBetter{-123.4058}\\
\arrayrulecolor{FlexEvoAppLine}\cmidrule[0.25pt]{1-13}\arrayrulecolor{black}
\FlexEvoAppTest{13}{Divergent conformation}
BoltzGen & 0.2838 & \FlexEvoAppBetter{0.8497} & 0.1074 & \FlexEvoAppBetter{0.7326} & 18.4272 & \FlexEvoAppBetter{36.8895} & 11.0691 & \FlexEvoAppBetter{5.5613} & -19.1938 & \FlexEvoAppBetter{-45.7449} & -69.1086 & \FlexEvoAppBetter{-122.2693}\\
Chai-1 & 0.2895 & \FlexEvoAppBetter{0.8221} & 0.1522 & \FlexEvoAppBetter{0.7759} & 18.6759 & \FlexEvoAppBetter{35.8199} & 11.5766 & \FlexEvoAppBetter{5.9858} & -20.8639 & \FlexEvoAppBetter{-40.1755} & -77.8859 & \FlexEvoAppBetter{-122.0139}\\
Protenix & 0.3519 & \FlexEvoAppBetter{0.8363} & 0.1428 & \FlexEvoAppBetter{0.7943} & 18.9351 & \FlexEvoAppBetter{36.3792} & 12.2729 & \FlexEvoAppBetter{4.2467} & -18.5328 & \FlexEvoAppBetter{-40.8286} & -76.5662 & \FlexEvoAppBetter{-128.3253}\\
\bottomrule
\end{tabular*}%
}\par
\vspace{7pt}
\noindent\makebox[\linewidth][l]{\textbf{B. Specificity, structural quality and diversity}}\par\vspace{2pt}
\noindent\resizebox{\linewidth}{!}{%
\begin{tabular*}{520pt}{@{\extracolsep{\fill}}lllllllllll@{}}
\toprule
Base model & \multicolumn{2}{c}{\FlexEvoAppHeader{On-target pAE$\downarrow$}} & \multicolumn{2}{c}{\FlexEvoAppHeader{Off-target pAE$\uparrow$}} & \multicolumn{2}{c}{\FlexEvoAppHeader{IDDT$\uparrow$}} & \multicolumn{2}{c}{\FlexEvoAppHeader{Diversity Mean$\uparrow$}} & \multicolumn{2}{c}{\FlexEvoAppHeader{Diversity Median$\uparrow$}}\\
\cmidrule(lr){2-3}\cmidrule(lr){4-5}\cmidrule(lr){6-7}\cmidrule(lr){8-9}\cmidrule(lr){10-11}
 & \multicolumn{1}{c}{Base} & \multicolumn{1}{c}{+FlexEvo} & \multicolumn{1}{c}{Base} & \multicolumn{1}{c}{+FlexEvo} & \multicolumn{1}{c}{Base} & \multicolumn{1}{c}{+FlexEvo} & \multicolumn{1}{c}{Base} & \multicolumn{1}{c}{+FlexEvo} & \multicolumn{1}{c}{Base} & \multicolumn{1}{c}{+FlexEvo}\\
\midrule
\FlexEvoAppTest{11}{Reference conformation}
BoltzGen & 3.6716 & \FlexEvoAppBetter{3.5557} & 16.6227 & \FlexEvoAppBetter{17.5337} & 0.8546 & \FlexEvoAppWorse{0.8452} & 0.6249 & \FlexEvoAppBetter{0.7658} & 0.7199 & \FlexEvoAppWorse{0.7042}\\
Chai-1 & 4.1231 & \FlexEvoAppWorse{4.3466} & 16.2316 & \FlexEvoAppBetter{16.7828} & 0.8655 & \FlexEvoAppBetter{0.8676} & 0.604 & \FlexEvoAppWorse{0.6031} & 0.6571 & \FlexEvoAppWorse{0.657}\\
Protenix & 3.2225 & \FlexEvoAppWorse{3.8616} & 17.0266 & \FlexEvoAppWorse{16.1164} & 0.8893 & \FlexEvoAppWorse{0.8587} & 0.6289 & \FlexEvoAppBetter{0.7613} & 0.6187 & \FlexEvoAppBetter{0.6729}\\
\arrayrulecolor{FlexEvoAppLine}\cmidrule[0.25pt]{1-11}\arrayrulecolor{black}
\FlexEvoAppTest{11}{Matched conformation}
BoltzGen & 5.7389 & \FlexEvoAppBetter{3.4851} & 12.6004 & \FlexEvoAppBetter{17.4258} & 0.8746 & \FlexEvoAppWorse{0.8531} & -- & -- & -- & --\\
Chai-1 & 6.5374 & \FlexEvoAppBetter{3.7136} & 11.3432 & \FlexEvoAppBetter{16.4369} & 0.8529 & \FlexEvoAppBetter{0.8557} & -- & -- & -- & --\\
Protenix & 4.2246 & \FlexEvoAppBetter{3.2168} & 11.9165 & \FlexEvoAppBetter{18.1324} & 0.8697 & \FlexEvoAppBetter{0.8699} & -- & -- & -- & --\\
\arrayrulecolor{FlexEvoAppLine}\cmidrule[0.25pt]{1-11}\arrayrulecolor{black}
\FlexEvoAppTest{11}{Divergent conformation}
BoltzGen & 6.8166 & \FlexEvoAppBetter{3.1114} & 10.8486 & \FlexEvoAppBetter{17.1859} & 0.8471 & \FlexEvoAppBetter{0.8538} & -- & -- & -- & --\\
Chai-1 & 6.2864 & \FlexEvoAppBetter{4.0256} & 11.8326 & \FlexEvoAppBetter{16.7861} & 0.8563 & \FlexEvoAppBetter{0.8665} & -- & -- & -- & --\\
Protenix & 5.2334 & \FlexEvoAppBetter{3.0835} & 9.9361 & \FlexEvoAppBetter{17.8956} & 0.8731 & \FlexEvoAppBetter{0.8792} & -- & -- & -- & --\\
\bottomrule
\end{tabular*}%
}\par
\endgroup
\end{table*}

\begin{table*}[!t]
\centering
\caption{Cyclic peptide drugs. Black: base model. With FlexEvo, green: improved or tied; red: worse. Arrows indicate the preferred direction. Diversity summarizes the frozen candidate pool and is reported once under Reference; dashes indicate that it is not repeated for Matched or Divergent.}
\label{tab:flexevo-app-cyclic-peptide}
\begingroup
\fontsize{9.0}{10.5}\selectfont
\setlength{\tabcolsep}{2.0pt}
\renewcommand{\arraystretch}{1.02}
\setlength{\aboverulesep}{1.2pt}
\setlength{\belowrulesep}{1.2pt}
\setlength{\heavyrulewidth}{0.65pt}
\setlength{\lightrulewidth}{0.35pt}
\setlength{\cmidrulewidth}{0.25pt}
\noindent\makebox[\linewidth][l]{\textbf{A. Complex accuracy and interaction energy}}\par\vspace{2pt}
\noindent\resizebox{\linewidth}{!}{%
\begin{tabular*}{520pt}{@{\extracolsep{\fill}}lllllllllllll@{}}
\toprule
Base model & \multicolumn{2}{c}{\FlexEvoAppHeader{DockQ Mean$\uparrow$}} & \multicolumn{2}{c}{\FlexEvoAppHeader{DockQ Median$\uparrow$}} & \multicolumn{2}{c}{\FlexEvoAppHeader{LRMSD $<2$\AA\ (\%)$\uparrow$}} & \multicolumn{2}{c}{\FlexEvoAppHeader{LRMSD Median (\AA)$\downarrow$}} & \multicolumn{2}{c}{\FlexEvoAppHeader{MM/GBSA (kcal/mol)$\downarrow$}} & \multicolumn{2}{c}{\FlexEvoAppHeader{HADDOCK Score$\downarrow$}}\\
\cmidrule(lr){2-3}\cmidrule(lr){4-5}\cmidrule(lr){6-7}\cmidrule(lr){8-9}\cmidrule(lr){10-11}\cmidrule(lr){12-13}
 & \multicolumn{1}{c}{Base} & \multicolumn{1}{c}{+FlexEvo} & \multicolumn{1}{c}{Base} & \multicolumn{1}{c}{+FlexEvo} & \multicolumn{1}{c}{Base} & \multicolumn{1}{c}{+FlexEvo} & \multicolumn{1}{c}{Base} & \multicolumn{1}{c}{+FlexEvo} & \multicolumn{1}{c}{Base} & \multicolumn{1}{c}{+FlexEvo} & \multicolumn{1}{c}{Base} & \multicolumn{1}{c}{+FlexEvo}\\
\midrule
\FlexEvoAppTest{13}{Reference conformation}
BoltzGen & 0.6831 & \FlexEvoAppBetter{0.9018} & 0.3784 & \FlexEvoAppBetter{0.8291} & 43.2014 & \FlexEvoAppWorse{42.1781} & 5.6779 & \FlexEvoAppBetter{5.6422} & -42.8373 & \FlexEvoAppBetter{-46.3974} & -132.5755 & \FlexEvoAppWorse{-132.5663}\\
Chai-1 & 0.856 & \FlexEvoAppBetter{0.8613} & 0.7376 & \FlexEvoAppBetter{0.7577} & 41.2663 & \FlexEvoAppWorse{40.9744} & 5.6419 & \FlexEvoAppBetter{4.7438} & -46.2865 & \FlexEvoAppWorse{-44.6798} & -129.835 & \FlexEvoAppWorse{-126.3605}\\
Protenix & 0.7567 & \FlexEvoAppBetter{0.8618} & 0.7788 & \FlexEvoAppWorse{0.7398} & 41.0919 & \FlexEvoAppBetter{42.0591} & 4.4517 & \FlexEvoAppWorse{4.7119} & -47.4363 & \FlexEvoAppBetter{-48.4915} & -129.0268 & \FlexEvoAppBetter{-131.1541}\\
\arrayrulecolor{FlexEvoAppLine}\cmidrule[0.25pt]{1-13}\arrayrulecolor{black}
\FlexEvoAppTest{13}{Matched conformation}
BoltzGen & 0.3922 & \FlexEvoAppBetter{0.8802} & 0.1828 & \FlexEvoAppBetter{0.7748} & 27.6297 & \FlexEvoAppBetter{41.2035} & 7.7598 & \FlexEvoAppBetter{4.9429} & -34.9468 & \FlexEvoAppBetter{-45.1455} & -100.7753 & \FlexEvoAppBetter{-133.2734}\\
Chai-1 & 0.4563 & \FlexEvoAppBetter{0.8672} & 0.2715 & \FlexEvoAppBetter{0.7666} & 34.8941 & \FlexEvoAppBetter{40.0107} & 8.8626 & \FlexEvoAppBetter{5.0465} & -37.7359 & \FlexEvoAppBetter{-44.9197} & -108.0846 & \FlexEvoAppBetter{-122.8666}\\
Protenix & 0.4746 & \FlexEvoAppBetter{0.8772} & 0.2183 & \FlexEvoAppBetter{0.7846} & 28.6124 & \FlexEvoAppBetter{40.1201} & 8.9939 & \FlexEvoAppBetter{4.9152} & -36.6175 & \FlexEvoAppBetter{-45.9569} & -114.0698 & \FlexEvoAppBetter{-130.4574}\\
\arrayrulecolor{FlexEvoAppLine}\cmidrule[0.25pt]{1-13}\arrayrulecolor{black}
\FlexEvoAppTest{13}{Divergent conformation}
BoltzGen & 0.2951 & \FlexEvoAppBetter{0.8734} & 0.1157 & \FlexEvoAppBetter{0.7591} & 19.6438 & \FlexEvoAppBetter{39.0533} & 11.1111 & \FlexEvoAppBetter{5.8433} & -20.1633 & \FlexEvoAppBetter{-46.9776} & -72.7891 & \FlexEvoAppBetter{-128.2151}\\
Chai-1 & 0.2574 & \FlexEvoAppBetter{0.8454} & 0.1628 & \FlexEvoAppBetter{0.7546} & 17.6747 & \FlexEvoAppBetter{36.9939} & 12.4423 & \FlexEvoAppBetter{6.0989} & -22.781 & \FlexEvoAppBetter{-40.9813} & -98.7699 & \FlexEvoAppBetter{-127.9845}\\
Protenix & 0.3544 & \FlexEvoAppBetter{0.8544} & 0.1477 & \FlexEvoAppBetter{0.7337} & 20.0467 & \FlexEvoAppBetter{37.8926} & 12.4269 & \FlexEvoAppBetter{4.3972} & -19.8152 & \FlexEvoAppBetter{-41.3023} & -80.4316 & \FlexEvoAppBetter{-130.0567}\\
\bottomrule
\end{tabular*}%
}\par
\vspace{7pt}
\noindent\makebox[\linewidth][l]{\textbf{B. Specificity, structural quality and diversity}}\par\vspace{2pt}
\noindent\resizebox{\linewidth}{!}{%
\begin{tabular*}{520pt}{@{\extracolsep{\fill}}lllllllll@{}}
\toprule
Base model & \multicolumn{2}{c}{\FlexEvoAppHeader{On-target pAE$\downarrow$}} & \multicolumn{2}{c}{\FlexEvoAppHeader{Off-target pAE$\uparrow$}} & \multicolumn{2}{c}{\FlexEvoAppHeader{IDDT$\uparrow$}} & \multicolumn{2}{c}{\FlexEvoAppHeader{Diversity Median$\uparrow$}}\\
\cmidrule(lr){2-3}\cmidrule(lr){4-5}\cmidrule(lr){6-7}\cmidrule(lr){8-9}
 & \multicolumn{1}{c}{Base} & \multicolumn{1}{c}{+FlexEvo} & \multicolumn{1}{c}{Base} & \multicolumn{1}{c}{+FlexEvo} & \multicolumn{1}{c}{Base} & \multicolumn{1}{c}{+FlexEvo} & \multicolumn{1}{c}{Base} & \multicolumn{1}{c}{+FlexEvo}\\
\midrule
\FlexEvoAppTest{9}{Reference conformation}
BoltzGen & 3.8508 & \FlexEvoAppBetter{3.7891} & 17.1871 & \FlexEvoAppBetter{18.2578} & 0.8879 & \FlexEvoAppWorse{0.8843} & 0.7328 & \FlexEvoAppWorse{0.7264}\\
Chai-1 & 3.8432 & \FlexEvoAppWorse{4.1876} & 17.6561 & \FlexEvoAppWorse{17.2329} & 0.8659 & \FlexEvoAppBetter{0.8757} & 0.709 & \FlexEvoAppBetter{0.7147}\\
Protenix & 3.4114 & \FlexEvoAppWorse{4.0563} & 17.7846 & \FlexEvoAppWorse{16.9186} & 0.9152 & \FlexEvoAppWorse{0.9014} & 0.6428 & \FlexEvoAppBetter{0.6915}\\
\arrayrulecolor{FlexEvoAppLine}\cmidrule[0.25pt]{1-9}\arrayrulecolor{black}
\FlexEvoAppTest{9}{Matched conformation}
BoltzGen & 6.1064 & \FlexEvoAppBetter{3.5126} & 13.1615 & \FlexEvoAppBetter{17.802} & 0.9026 & \FlexEvoAppBetter{0.9074} & -- & --\\
Chai-1 & 7.1278 & \FlexEvoAppBetter{4.0493} & 12.8907 & \FlexEvoAppBetter{17.5697} & 0.8875 & \FlexEvoAppBetter{0.8898} & -- & --\\
Protenix & 4.4704 & \FlexEvoAppBetter{3.4145} & 12.4549 & \FlexEvoAppBetter{18.6172} & 0.9139 & \FlexEvoAppWorse{0.8991} & -- & --\\
\arrayrulecolor{FlexEvoAppLine}\cmidrule[0.25pt]{1-9}\arrayrulecolor{black}
\FlexEvoAppTest{9}{Divergent conformation}
BoltzGen & 6.9649 & \FlexEvoAppBetter{3.2892} & 11.3712 & \FlexEvoAppBetter{17.6497} & 0.8739 & \FlexEvoAppBetter{0.8854} & -- & --\\
Chai-1 & 6.6946 & \FlexEvoAppBetter{3.9826} & 11.8993 & \FlexEvoAppBetter{17.1375} & 0.8649 & \FlexEvoAppBetter{0.8735} & -- & --\\
Protenix & 5.4252 & \FlexEvoAppBetter{3.2702} & 10.3118 & \FlexEvoAppBetter{17.9548} & 0.8775 & \FlexEvoAppBetter{0.9164} & -- & --\\
\bottomrule
\end{tabular*}%
}\par
\endgroup
\end{table*}

\begin{table*}[!t]
\centering
\caption{Non-antibody protein drugs. Black: base model. With FlexEvo, green: improved or tied; red: worse. Arrows indicate the preferred direction. Diversity summarizes the frozen candidate pool and is reported once under Reference; dashes indicate that it is not repeated for Matched or Divergent.}
\label{tab:flexevo-app-non-antibody-protein}
\begingroup
\fontsize{9.0}{10.5}\selectfont
\setlength{\tabcolsep}{2.0pt}
\renewcommand{\arraystretch}{1.02}
\setlength{\aboverulesep}{1.2pt}
\setlength{\belowrulesep}{1.2pt}
\setlength{\heavyrulewidth}{0.65pt}
\setlength{\lightrulewidth}{0.35pt}
\setlength{\cmidrulewidth}{0.25pt}
\noindent\makebox[\linewidth][l]{\textbf{A. Complex accuracy and interaction energy}}\par\vspace{2pt}
\noindent\resizebox{\linewidth}{!}{%
\begin{tabular*}{520pt}{@{\extracolsep{\fill}}lllllllllllll@{}}
\toprule
Base model & \multicolumn{2}{c}{\FlexEvoAppHeader{DockQ Mean$\uparrow$}} & \multicolumn{2}{c}{\FlexEvoAppHeader{DockQ Median$\uparrow$}} & \multicolumn{2}{c}{\FlexEvoAppHeader{LRMSD $<2$\AA\ (\%)$\uparrow$}} & \multicolumn{2}{c}{\FlexEvoAppHeader{LRMSD Median (\AA)$\downarrow$}} & \multicolumn{2}{c}{\FlexEvoAppHeader{MM/GBSA (kcal/mol)$\downarrow$}} & \multicolumn{2}{c}{\FlexEvoAppHeader{HADDOCK Score$\downarrow$}}\\
\cmidrule(lr){2-3}\cmidrule(lr){4-5}\cmidrule(lr){6-7}\cmidrule(lr){8-9}\cmidrule(lr){10-11}\cmidrule(lr){12-13}
 & \multicolumn{1}{c}{Base} & \multicolumn{1}{c}{+FlexEvo} & \multicolumn{1}{c}{Base} & \multicolumn{1}{c}{+FlexEvo} & \multicolumn{1}{c}{Base} & \multicolumn{1}{c}{+FlexEvo} & \multicolumn{1}{c}{Base} & \multicolumn{1}{c}{+FlexEvo} & \multicolumn{1}{c}{Base} & \multicolumn{1}{c}{+FlexEvo} & \multicolumn{1}{c}{Base} & \multicolumn{1}{c}{+FlexEvo}\\
\midrule
\FlexEvoAppTest{13}{Reference conformation}
BoltzGen & 0.6432 & \FlexEvoAppBetter{0.8617} & 0.4529 & \FlexEvoAppBetter{0.7936} & 38.5198 & \FlexEvoAppBetter{42.7348} & 5.4221 & \FlexEvoAppBetter{4.2243} & -39.9824 & \FlexEvoAppBetter{-40.6781} & -117.7648 & \FlexEvoAppBetter{-119.6118}\\
Chai-1 & 0.731 & \FlexEvoAppBetter{0.8331} & 0.5799 & \FlexEvoAppBetter{0.7596} & 41.3627 & \FlexEvoAppBetter{44.8159} & 4.6758 & \FlexEvoAppBetter{3.8565} & -38.3406 & \FlexEvoAppBetter{-39.3865} & -119.3965 & \FlexEvoAppWorse{-111.0687}\\
Protenix & 0.6655 & \FlexEvoAppBetter{0.8178} & 0.5644 & \FlexEvoAppBetter{0.7984} & 44.2448 & \FlexEvoAppBetter{45.3202} & 3.814 & \FlexEvoAppBetter{3.7879} & -40.5226 & \FlexEvoAppBetter{-41.3208} & -119.1796 & \FlexEvoAppWorse{-113.8899}\\
\arrayrulecolor{FlexEvoAppLine}\cmidrule[0.25pt]{1-13}\arrayrulecolor{black}
\FlexEvoAppTest{13}{Matched conformation}
BoltzGen & 0.3927 & \FlexEvoAppBetter{0.8498} & 0.1842 & \FlexEvoAppBetter{0.7733} & 25.807 & \FlexEvoAppBetter{39.6924} & 8.1786 & \FlexEvoAppBetter{4.8552} & -29.0773 & \FlexEvoAppBetter{-42.6594} & -85.4292 & \FlexEvoAppBetter{-112.1446}\\
Chai-1 & 0.4725 & \FlexEvoAppBetter{0.8072} & 0.218 & \FlexEvoAppBetter{0.7115} & 22.4389 & \FlexEvoAppBetter{40.8204} & 9.2127 & \FlexEvoAppBetter{5.1052} & -30.814 & \FlexEvoAppBetter{-43.4895} & -98.2589 & \FlexEvoAppBetter{-117.2394}\\
Protenix & 0.3815 & \FlexEvoAppBetter{0.836} & 0.1979 & \FlexEvoAppBetter{0.7479} & 22.5321 & \FlexEvoAppBetter{41.7923} & 9.4713 & \FlexEvoAppBetter{5.0241} & -34.9435 & \FlexEvoAppBetter{-41.1787} & -95.0499 & \FlexEvoAppBetter{-111.7215}\\
\arrayrulecolor{FlexEvoAppLine}\cmidrule[0.25pt]{1-13}\arrayrulecolor{black}
\FlexEvoAppTest{13}{Divergent conformation}
BoltzGen & 0.2969 & \FlexEvoAppBetter{0.8328} & 0.1162 & \FlexEvoAppBetter{0.7782} & 17.5496 & \FlexEvoAppBetter{38.9822} & 11.4243 & \FlexEvoAppBetter{5.8963} & -17.016 & \FlexEvoAppBetter{-38.4681} & -61.4787 & \FlexEvoAppBetter{-117.5396}\\
Chai-1 & 0.3518 & \FlexEvoAppBetter{0.8147} & 0.149 & \FlexEvoAppBetter{0.7893} & 14.814 & \FlexEvoAppBetter{39.7527} & 12.8352 & \FlexEvoAppBetter{4.9678} & -18.6552 & \FlexEvoAppBetter{-40.3316} & -77.5558 & \FlexEvoAppBetter{-121.9582}\\
Protenix & 0.3397 & \FlexEvoAppBetter{0.817} & 0.1349 & \FlexEvoAppBetter{0.6964} & 16.9771 & \FlexEvoAppBetter{40.239} & 12.5131 & \FlexEvoAppBetter{4.594} & -16.6165 & \FlexEvoAppBetter{-39.4594} & -69.4591 & \FlexEvoAppBetter{-114.7956}\\
\bottomrule
\end{tabular*}%
}\par
\vspace{7pt}
\noindent\makebox[\linewidth][l]{\textbf{B. Specificity, structural quality and diversity}}\par\vspace{2pt}
\noindent\resizebox{\linewidth}{!}{%
\begin{tabular*}{520pt}{@{\extracolsep{\fill}}lllllllllll@{}}
\toprule
Base model & \multicolumn{2}{c}{\FlexEvoAppHeader{On-target pAE$\downarrow$}} & \multicolumn{2}{c}{\FlexEvoAppHeader{Off-target pAE$\uparrow$}} & \multicolumn{2}{c}{\FlexEvoAppHeader{IDDT$\uparrow$}} & \multicolumn{2}{c}{\FlexEvoAppHeader{Diversity Mean$\uparrow$}} & \multicolumn{2}{c}{\FlexEvoAppHeader{Diversity Median$\uparrow$}}\\
\cmidrule(lr){2-3}\cmidrule(lr){4-5}\cmidrule(lr){6-7}\cmidrule(lr){8-9}\cmidrule(lr){10-11}
 & \multicolumn{1}{c}{Base} & \multicolumn{1}{c}{+FlexEvo} & \multicolumn{1}{c}{Base} & \multicolumn{1}{c}{+FlexEvo} & \multicolumn{1}{c}{Base} & \multicolumn{1}{c}{+FlexEvo} & \multicolumn{1}{c}{Base} & \multicolumn{1}{c}{+FlexEvo} & \multicolumn{1}{c}{Base} & \multicolumn{1}{c}{+FlexEvo}\\
\midrule
\FlexEvoAppTest{11}{Reference conformation}
BoltzGen & 4.0651 & \FlexEvoAppBetter{3.8743} & 15.526 & \FlexEvoAppBetter{18.6578} & 0.7992 & \FlexEvoAppBetter{0.8328} & 0.706 & \FlexEvoAppWorse{0.6679} & 0.7156 & \FlexEvoAppWorse{0.6684}\\
Chai-1 & 3.5522 & \FlexEvoAppWorse{4.2353} & 18.3426 & \FlexEvoAppWorse{15.0985} & 0.807 & \FlexEvoAppBetter{0.8686} & 0.6624 & \FlexEvoAppBetter{0.7584} & 0.7447 & \FlexEvoAppBetter{0.7517}\\
Protenix & 4.1921 & \FlexEvoAppBetter{3.7637} & 18.7125 & \FlexEvoAppBetter{18.9839} & 0.8369 & \FlexEvoAppBetter{0.8861} & 0.6266 & \FlexEvoAppBetter{0.6759} & 0.7084 & \FlexEvoAppWorse{0.6731}\\
\arrayrulecolor{FlexEvoAppLine}\cmidrule[0.25pt]{1-11}\arrayrulecolor{black}
\FlexEvoAppTest{11}{Matched conformation}
BoltzGen & 5.3217 & \FlexEvoAppBetter{3.1726} & 11.7821 & \FlexEvoAppBetter{18.6465} & 0.7986 & \FlexEvoAppBetter{0.8452} & -- & -- & -- & --\\
Chai-1 & 3.9359 & \FlexEvoAppBetter{3.5522} & 13.1621 & \FlexEvoAppBetter{19.0971} & 0.8196 & \FlexEvoAppBetter{0.8216} & -- & -- & -- & --\\
Protenix & 4.7208 & \FlexEvoAppBetter{3.6133} & 14.2961 & \FlexEvoAppBetter{19.2154} & 0.8254 & \FlexEvoAppBetter{0.8677} & -- & -- & -- & --\\
\arrayrulecolor{FlexEvoAppLine}\cmidrule[0.25pt]{1-11}\arrayrulecolor{black}
\FlexEvoAppTest{11}{Divergent conformation}
BoltzGen & 7.3659 & \FlexEvoAppBetter{2.9474} & 11.8123 & \FlexEvoAppBetter{15.5922} & 0.7645 & \FlexEvoAppBetter{0.8338} & -- & -- & -- & --\\
Chai-1 & 4.8468 & \FlexEvoAppBetter{3.3809} & 10.6624 & \FlexEvoAppBetter{18.9681} & 0.7846 & \FlexEvoAppBetter{0.8949} & -- & -- & -- & --\\
Protenix & 6.8862 & \FlexEvoAppBetter{3.8313} & 11.9484 & \FlexEvoAppBetter{18.7633} & 0.792 & \FlexEvoAppBetter{0.8877} & -- & -- & -- & --\\
\bottomrule
\end{tabular*}%
}\par
\endgroup
\end{table*}

\begin{table*}[!t]
\centering
\caption{Full-length antibody drugs. Black: base model. With FlexEvo, green: improved or tied; red: worse. Arrows indicate the preferred direction. Diversity summarizes the frozen candidate pool and is reported once under Reference; dashes indicate that it is not repeated for Matched or Divergent.}
\label{tab:flexevo-app-full-length-antibody}
\begingroup
\fontsize{9.0}{10.5}\selectfont
\setlength{\tabcolsep}{2.0pt}
\renewcommand{\arraystretch}{1.02}
\setlength{\aboverulesep}{1.2pt}
\setlength{\belowrulesep}{1.2pt}
\setlength{\heavyrulewidth}{0.65pt}
\setlength{\lightrulewidth}{0.35pt}
\setlength{\cmidrulewidth}{0.25pt}
\noindent\makebox[\linewidth][l]{\textbf{A. Complex accuracy and interaction energy}}\par\vspace{2pt}
\noindent\resizebox{\linewidth}{!}{%
\begin{tabular*}{520pt}{@{\extracolsep{\fill}}lllllllllllll@{}}
\toprule
Base model & \multicolumn{2}{c}{\FlexEvoAppHeader{DockQ Mean$\uparrow$}} & \multicolumn{2}{c}{\FlexEvoAppHeader{DockQ Median$\uparrow$}} & \multicolumn{2}{c}{\FlexEvoAppHeader{LRMSD $<2$\AA\ (\%)$\uparrow$}} & \multicolumn{2}{c}{\FlexEvoAppHeader{LRMSD Median (\AA)$\downarrow$}} & \multicolumn{2}{c}{\FlexEvoAppHeader{MM/GBSA (kcal/mol)$\downarrow$}} & \multicolumn{2}{c}{\FlexEvoAppHeader{HADDOCK Score$\downarrow$}}\\
\cmidrule(lr){2-3}\cmidrule(lr){4-5}\cmidrule(lr){6-7}\cmidrule(lr){8-9}\cmidrule(lr){10-11}\cmidrule(lr){12-13}
 & \multicolumn{1}{c}{Base} & \multicolumn{1}{c}{+FlexEvo} & \multicolumn{1}{c}{Base} & \multicolumn{1}{c}{+FlexEvo} & \multicolumn{1}{c}{Base} & \multicolumn{1}{c}{+FlexEvo} & \multicolumn{1}{c}{Base} & \multicolumn{1}{c}{+FlexEvo} & \multicolumn{1}{c}{Base} & \multicolumn{1}{c}{+FlexEvo} & \multicolumn{1}{c}{Base} & \multicolumn{1}{c}{+FlexEvo}\\
\midrule
\FlexEvoAppTest{13}{Reference conformation}
BoltzGen & 0.6095 & \FlexEvoAppBetter{0.7995} & 0.3297 & \FlexEvoAppBetter{0.6427} & 36.2452 & \FlexEvoAppBetter{39.8647} & 5.1441 & \FlexEvoAppBetter{3.9774} & -37.4543 & \FlexEvoAppBetter{-37.7412} & -109.6532 & \FlexEvoAppBetter{-113.6292}\\
Chai-1 & 0.6808 & \FlexEvoAppBetter{0.7909} & 0.4454 & \FlexEvoAppBetter{0.6112} & 39.1404 & \FlexEvoAppBetter{42.0743} & 4.3318 & \FlexEvoAppBetter{3.6037} & -35.4734 & \FlexEvoAppBetter{-37.0303} & -111.8543 & \FlexEvoAppWorse{-103.6798}\\
Protenix & 0.6178 & \FlexEvoAppBetter{0.7616} & 0.3397 & \FlexEvoAppBetter{0.5605} & 40.9499 & \FlexEvoAppBetter{42.9017} & 3.5419 & \FlexEvoAppBetter{3.5284} & -37.6326 & \FlexEvoAppBetter{-39.1405} & -110.3069 & \FlexEvoAppWorse{-106.1588}\\
\arrayrulecolor{FlexEvoAppLine}\cmidrule[0.25pt]{1-13}\arrayrulecolor{black}
\FlexEvoAppTest{13}{Matched conformation}
BoltzGen & 0.3681 & \FlexEvoAppBetter{0.8001} & 0.1736 & \FlexEvoAppBetter{0.6377} & 24.2284 & \FlexEvoAppBetter{37.6849} & 7.7549 & \FlexEvoAppBetter{4.5763} & -27.6118 & \FlexEvoAppBetter{-39.8732} & -79.8202 & \FlexEvoAppBetter{-104.8587}\\
Chai-1 & 0.4446 & \FlexEvoAppBetter{0.7553} & 0.2056 & \FlexEvoAppBetter{0.5835} & 21.0076 & \FlexEvoAppBetter{38.5872} & 8.7089 & \FlexEvoAppBetter{4.8166} & -29.0343 & \FlexEvoAppBetter{-40.5187} & -93.2411 & \FlexEvoAppBetter{-108.4916}\\
Protenix & 0.3579 & \FlexEvoAppBetter{0.7924} & 0.1837 & \FlexEvoAppBetter{0.5997} & 21.1998 & \FlexEvoAppBetter{38.8209} & 8.8142 & \FlexEvoAppBetter{4.7497} & -32.9373 & \FlexEvoAppBetter{-38.2532} & -88.2523 & \FlexEvoAppBetter{-104.4242}\\
\arrayrulecolor{FlexEvoAppLine}\cmidrule[0.25pt]{1-13}\arrayrulecolor{black}
\FlexEvoAppTest{13}{Divergent conformation}
BoltzGen & 0.2761 & \FlexEvoAppBetter{0.7716} & 0.1101 & \FlexEvoAppBetter{0.5899} & 16.3655 & \FlexEvoAppBetter{37.0072} & 10.6048 & \FlexEvoAppBetter{5.5371} & -15.9469 & \FlexEvoAppBetter{-35.6724} & -57.4783 & \FlexEvoAppBetter{-109.9148}\\
Chai-1 & 0.3305 & \FlexEvoAppBetter{0.7636} & 0.1387 & \FlexEvoAppBetter{0.5516} & 13.9525 & \FlexEvoAppBetter{37.4948} & 12.1503 & \FlexEvoAppBetter{4.7071} & -17.4054 & \FlexEvoAppBetter{-38.0685} & -71.8189 & \FlexEvoAppBetter{-115.1271}\\
Protenix & 0.3143 & \FlexEvoAppBetter{0.7702} & 0.1251 & \FlexEvoAppBetter{0.5559} & 15.9464 & \FlexEvoAppBetter{37.6242} & 11.8363 & \FlexEvoAppBetter{4.3461} & -15.5366 & \FlexEvoAppBetter{-37.3434} & -65.3604 & \FlexEvoAppBetter{-108.2527}\\
\bottomrule
\end{tabular*}%
}\par
\vspace{7pt}
\noindent\makebox[\linewidth][l]{\textbf{B. Specificity, structural quality and diversity}}\par\vspace{2pt}
\noindent\resizebox{\linewidth}{!}{%
\begin{tabular*}{520pt}{@{\extracolsep{\fill}}lllllllllll@{}}
\toprule
Base model & \multicolumn{2}{c}{\FlexEvoAppHeader{On-target pAE$\downarrow$}} & \multicolumn{2}{c}{\FlexEvoAppHeader{Off-target pAE$\uparrow$}} & \multicolumn{2}{c}{\FlexEvoAppHeader{IDDT$\uparrow$}} & \multicolumn{2}{c}{\FlexEvoAppHeader{Diversity Mean$\uparrow$}} & \multicolumn{2}{c}{\FlexEvoAppHeader{Diversity Median$\uparrow$}}\\
\cmidrule(lr){2-3}\cmidrule(lr){4-5}\cmidrule(lr){6-7}\cmidrule(lr){8-9}\cmidrule(lr){10-11}
 & \multicolumn{1}{c}{Base} & \multicolumn{1}{c}{+FlexEvo} & \multicolumn{1}{c}{Base} & \multicolumn{1}{c}{+FlexEvo} & \multicolumn{1}{c}{Base} & \multicolumn{1}{c}{+FlexEvo} & \multicolumn{1}{c}{Base} & \multicolumn{1}{c}{+FlexEvo} & \multicolumn{1}{c}{Base} & \multicolumn{1}{c}{+FlexEvo}\\
\midrule
\FlexEvoAppTest{11}{Reference conformation}
BoltzGen & 3.7784 & \FlexEvoAppBetter{3.5821} & 14.7108 & \FlexEvoAppBetter{17.4877} & 0.7422 & \FlexEvoAppBetter{0.7883} & 0.6694 & \FlexEvoAppWorse{0.6205} & 0.6772 & \FlexEvoAppWorse{0.6339}\\
Chai-1 & 3.3538 & \FlexEvoAppWorse{3.9717} & 17.2236 & \FlexEvoAppWorse{14.1967} & 0.7577 & \FlexEvoAppBetter{0.8186} & 0.6273 & \FlexEvoAppBetter{0.7038} & 0.6962 & \FlexEvoAppBetter{0.7085}\\
Protenix & 3.8994 & \FlexEvoAppBetter{3.5152} & 17.5223 & \FlexEvoAppBetter{17.9927} & 0.7751 & \FlexEvoAppBetter{0.8295} & 0.5801 & \FlexEvoAppBetter{0.6293} & 0.6718 & \FlexEvoAppWorse{0.6342}\\
\arrayrulecolor{FlexEvoAppLine}\cmidrule[0.25pt]{1-11}\arrayrulecolor{black}
\FlexEvoAppTest{11}{Matched conformation}
BoltzGen & 5.0256 & \FlexEvoAppBetter{3.0042} & 11.1672 & \FlexEvoAppBetter{17.4201} & 0.7503 & \FlexEvoAppBetter{0.7898} & -- & -- & -- & --\\
Chai-1 & 3.6862 & \FlexEvoAppBetter{3.3446} & 12.1739 & \FlexEvoAppBetter{17.6798} & 0.7719 & \FlexEvoAppWorse{0.7682} & -- & -- & -- & --\\
Protenix & 4.4763 & \FlexEvoAppBetter{3.3697} & 13.5659 & \FlexEvoAppBetter{17.9164} & 0.7682 & \FlexEvoAppBetter{0.8113} & -- & -- & -- & --\\
\arrayrulecolor{FlexEvoAppLine}\cmidrule[0.25pt]{1-11}\arrayrulecolor{black}
\FlexEvoAppTest{11}{Divergent conformation}
BoltzGen & 6.8085 & \FlexEvoAppBetter{2.7801} & 11.1069 & \FlexEvoAppBetter{14.7692} & 0.7245 & \FlexEvoAppBetter{0.7794} & -- & -- & -- & --\\
Chai-1 & 4.5198 & \FlexEvoAppBetter{3.1794} & 9.8693 & \FlexEvoAppBetter{17.9056} & 0.7353 & \FlexEvoAppBetter{0.8479} & -- & -- & -- & --\\
Protenix & 6.4495 & \FlexEvoAppBetter{3.5896} & 11.1696 & \FlexEvoAppBetter{17.4772} & 0.7407 & \FlexEvoAppBetter{0.8209} & -- & -- & -- & --\\
\bottomrule
\end{tabular*}%
}\par
\endgroup
\end{table*}

\begin{table*}[!t]
\centering
\caption{Antibody fragments and nanobodies. Black: base model. With FlexEvo, green: improved or tied; red: worse. Arrows indicate the preferred direction. Diversity summarizes the frozen candidate pool and is reported once under Reference; dashes indicate that it is not repeated for Matched or Divergent.}
\label{tab:flexevo-app-antibody-fragment}
\begingroup
\fontsize{9.0}{10.5}\selectfont
\setlength{\tabcolsep}{2.0pt}
\renewcommand{\arraystretch}{1.02}
\setlength{\aboverulesep}{1.2pt}
\setlength{\belowrulesep}{1.2pt}
\setlength{\heavyrulewidth}{0.65pt}
\setlength{\lightrulewidth}{0.35pt}
\setlength{\cmidrulewidth}{0.25pt}
\noindent\makebox[\linewidth][l]{\textbf{A. Complex accuracy and interaction energy}}\par\vspace{2pt}
\noindent\resizebox{\linewidth}{!}{%
\begin{tabular*}{520pt}{@{\extracolsep{\fill}}lllllllllllll@{}}
\toprule
Base model & \multicolumn{2}{c}{\FlexEvoAppHeader{DockQ Mean$\uparrow$}} & \multicolumn{2}{c}{\FlexEvoAppHeader{DockQ Median$\uparrow$}} & \multicolumn{2}{c}{\FlexEvoAppHeader{LRMSD $<2$\AA\ (\%)$\uparrow$}} & \multicolumn{2}{c}{\FlexEvoAppHeader{LRMSD Median (\AA)$\downarrow$}} & \multicolumn{2}{c}{\FlexEvoAppHeader{MM/GBSA (kcal/mol)$\downarrow$}} & \multicolumn{2}{c}{\FlexEvoAppHeader{HADDOCK Score$\downarrow$}}\\
\cmidrule(lr){2-3}\cmidrule(lr){4-5}\cmidrule(lr){6-7}\cmidrule(lr){8-9}\cmidrule(lr){10-11}\cmidrule(lr){12-13}
 & \multicolumn{1}{c}{Base} & \multicolumn{1}{c}{+FlexEvo} & \multicolumn{1}{c}{Base} & \multicolumn{1}{c}{+FlexEvo} & \multicolumn{1}{c}{Base} & \multicolumn{1}{c}{+FlexEvo} & \multicolumn{1}{c}{Base} & \multicolumn{1}{c}{+FlexEvo} & \multicolumn{1}{c}{Base} & \multicolumn{1}{c}{+FlexEvo} & \multicolumn{1}{c}{Base} & \multicolumn{1}{c}{+FlexEvo}\\
\midrule
\FlexEvoAppTest{13}{Reference conformation}
BoltzGen & 0.6169 & \FlexEvoAppBetter{0.8239} & 0.3372 & \FlexEvoAppBetter{0.6666} & 37.1832 & \FlexEvoAppBetter{41.5067} & 5.2336 & \FlexEvoAppBetter{4.0466} & -38.1485 & \FlexEvoAppBetter{-39.1582} & -112.0832 & \FlexEvoAppBetter{-114.8399}\\
Chai-1 & 0.6969 & \FlexEvoAppBetter{0.8022} & 0.4589 & \FlexEvoAppBetter{0.6322} & 39.8487 & \FlexEvoAppBetter{43.1363} & 4.4744 & \FlexEvoAppBetter{3.7165} & -36.8421 & \FlexEvoAppBetter{-37.5865} & -113.4421 & \FlexEvoAppWorse{-105.6989}\\
Protenix & 0.6342 & \FlexEvoAppBetter{0.7832} & 0.3542 & \FlexEvoAppBetter{0.5768} & 42.3102 & \FlexEvoAppBetter{43.6115} & 3.6699 & \FlexEvoAppBetter{3.5971} & -38.8703 & \FlexEvoAppBetter{-40.1934} & -115.6324 & \FlexEvoAppWorse{-108.4364}\\
\arrayrulecolor{FlexEvoAppLine}\cmidrule[0.25pt]{1-13}\arrayrulecolor{black}
\FlexEvoAppTest{13}{Matched conformation}
BoltzGen & 0.3797 & \FlexEvoAppBetter{0.8216} & 0.1778 & \FlexEvoAppBetter{0.6494} & 24.9698 & \FlexEvoAppBetter{38.0403} & 7.9436 & \FlexEvoAppBetter{4.6858} & -28.2069 & \FlexEvoAppBetter{-41.0014} & -81.4757 & \FlexEvoAppBetter{-106.9819}\\
Chai-1 & 0.4487 & \FlexEvoAppBetter{0.7698} & 0.2102 & \FlexEvoAppBetter{0.5987} & 21.7351 & \FlexEvoAppBetter{39.3352} & 8.9106 & \FlexEvoAppBetter{4.9298} & -29.7421 & \FlexEvoAppBetter{-41.8244} & -93.9058 & \FlexEvoAppBetter{-112.8884}\\
Protenix & 0.3643 & \FlexEvoAppBetter{0.814} & 0.1877 & \FlexEvoAppBetter{0.652} & 21.7921 & \FlexEvoAppBetter{39.8784} & 9.0096 & \FlexEvoAppBetter{4.8803} & -33.9544 & \FlexEvoAppBetter{-39.4471} & -91.432 & \FlexEvoAppBetter{-108.4257}\\
\arrayrulecolor{FlexEvoAppLine}\cmidrule[0.25pt]{1-13}\arrayrulecolor{black}
\FlexEvoAppTest{13}{Divergent conformation}
BoltzGen & 0.2855 & \FlexEvoAppBetter{0.81} & 0.1111 & \FlexEvoAppBetter{0.648} & 16.6732 & \FlexEvoAppBetter{37.1021} & 11.0622 & \FlexEvoAppBetter{5.6455} & -16.2307 & \FlexEvoAppBetter{-37.3532} & -58.8742 & \FlexEvoAppBetter{-111.7947}\\
Chai-1 & 0.3337 & \FlexEvoAppBetter{0.7935} & 0.1419 & \FlexEvoAppBetter{0.5987} & 14.1025 & \FlexEvoAppBetter{37.9284} & 12.4077 & \FlexEvoAppBetter{4.7659} & -17.7388 & \FlexEvoAppBetter{-38.9375} & -74.7328 & \FlexEvoAppBetter{-117.2265}\\
Protenix & 0.327 & \FlexEvoAppBetter{0.796} & 0.1273 & \FlexEvoAppBetter{0.6219} & 16.4976 & \FlexEvoAppBetter{38.4876} & 11.973 & \FlexEvoAppBetter{4.4619} & -16.0826 & \FlexEvoAppBetter{-38.4459} & -66.5255 & \FlexEvoAppBetter{-110.9338}\\
\bottomrule
\end{tabular*}%
}\par
\vspace{7pt}
\noindent\makebox[\linewidth][l]{\textbf{B. Specificity, structural quality and diversity}}\par\vspace{2pt}
\noindent\resizebox{\linewidth}{!}{%
\begin{tabular*}{520pt}{@{\extracolsep{\fill}}lllllllllll@{}}
\toprule
Base model & \multicolumn{2}{c}{\FlexEvoAppHeader{On-target pAE$\downarrow$}} & \multicolumn{2}{c}{\FlexEvoAppHeader{Off-target pAE$\uparrow$}} & \multicolumn{2}{c}{\FlexEvoAppHeader{IDDT$\uparrow$}} & \multicolumn{2}{c}{\FlexEvoAppHeader{Diversity Mean$\uparrow$}} & \multicolumn{2}{c}{\FlexEvoAppHeader{Diversity Median$\uparrow$}}\\
\cmidrule(lr){2-3}\cmidrule(lr){4-5}\cmidrule(lr){6-7}\cmidrule(lr){8-9}\cmidrule(lr){10-11}
 & \multicolumn{1}{c}{Base} & \multicolumn{1}{c}{+FlexEvo} & \multicolumn{1}{c}{Base} & \multicolumn{1}{c}{+FlexEvo} & \multicolumn{1}{c}{Base} & \multicolumn{1}{c}{+FlexEvo} & \multicolumn{1}{c}{Base} & \multicolumn{1}{c}{+FlexEvo} & \multicolumn{1}{c}{Base} & \multicolumn{1}{c}{+FlexEvo}\\
\midrule
\FlexEvoAppTest{11}{Reference conformation}
BoltzGen & 3.9393 & \FlexEvoAppBetter{3.7144} & 14.9833 & \FlexEvoAppBetter{17.9637} & 0.7597 & \FlexEvoAppBetter{0.8076} & 0.6838 & \FlexEvoAppWorse{0.6469} & 0.6817 & \FlexEvoAppWorse{0.6396}\\
Chai-1 & 3.4169 & \FlexEvoAppWorse{4.0604} & 17.5468 & \FlexEvoAppWorse{14.5896} & 0.7727 & \FlexEvoAppBetter{0.8251} & 0.6333 & \FlexEvoAppBetter{0.7226} & 0.7202 & \FlexEvoAppWorse{0.7158}\\
Protenix & 4.0851 & \FlexEvoAppBetter{3.633} & 18.2213 & \FlexEvoAppBetter{18.3084} & 0.8082 & \FlexEvoAppBetter{0.8495} & 0.6028 & \FlexEvoAppBetter{0.6494} & 0.6795 & \FlexEvoAppWorse{0.6545}\\
\arrayrulecolor{FlexEvoAppLine}\cmidrule[0.25pt]{1-11}\arrayrulecolor{black}
\FlexEvoAppTest{11}{Matched conformation}
BoltzGen & 5.1033 & \FlexEvoAppBetter{3.0136} & 11.2482 & \FlexEvoAppBetter{17.7218} & 0.7736 & \FlexEvoAppBetter{0.8093} & -- & -- & -- & --\\
Chai-1 & 3.7385 & \FlexEvoAppBetter{3.4188} & 12.5764 & \FlexEvoAppBetter{18.1712} & 0.7959 & \FlexEvoAppBetter{0.7994} & -- & -- & -- & --\\
Protenix & 4.5795 & \FlexEvoAppBetter{3.4569} & 13.8595 & \FlexEvoAppBetter{18.6504} & 0.7866 & \FlexEvoAppBetter{0.8303} & -- & -- & -- & --\\
\arrayrulecolor{FlexEvoAppLine}\cmidrule[0.25pt]{1-11}\arrayrulecolor{black}
\FlexEvoAppTest{11}{Divergent conformation}
BoltzGen & 7.0902 & \FlexEvoAppBetter{2.8493} & 11.3796 & \FlexEvoAppBetter{14.9092} & 0.7446 & \FlexEvoAppBetter{0.7929} & -- & -- & -- & --\\
Chai-1 & 4.6378 & \FlexEvoAppBetter{3.2444} & 10.2362 & \FlexEvoAppBetter{18.0941} & 0.7453 & \FlexEvoAppBetter{0.8518} & -- & -- & -- & --\\
Protenix & 6.6159 & \FlexEvoAppBetter{3.7161} & 11.525 & \FlexEvoAppBetter{18.0938} & 0.7646 & \FlexEvoAppBetter{0.8427} & -- & -- & -- & --\\
\bottomrule
\end{tabular*}%
}\par
\endgroup
\end{table*}

\begin{table*}[!t]
\centering
\caption{Oligonucleotide drugs. Black: base model. With FlexEvo, green: improved or tied; red: worse. Arrows indicate the preferred direction. Diversity summarizes the frozen candidate pool and is reported once under Reference; dashes indicate that it is not repeated for Matched or Divergent.}
\label{tab:flexevo-app-oligonucleotide}
\begingroup
\fontsize{9.0}{10.5}\selectfont
\setlength{\tabcolsep}{2.0pt}
\renewcommand{\arraystretch}{1.02}
\setlength{\aboverulesep}{1.2pt}
\setlength{\belowrulesep}{1.2pt}
\setlength{\heavyrulewidth}{0.65pt}
\setlength{\lightrulewidth}{0.35pt}
\setlength{\cmidrulewidth}{0.25pt}
\noindent\makebox[\linewidth][l]{\textbf{A. Complex accuracy and interaction energy}}\par\vspace{2pt}
\noindent\resizebox{\linewidth}{!}{%
\begin{tabular*}{520pt}{@{\extracolsep{\fill}}lllllllllllll@{}}
\toprule
Base model & \multicolumn{2}{c}{\FlexEvoAppHeader{DockQ Mean$\uparrow$}} & \multicolumn{2}{c}{\FlexEvoAppHeader{DockQ Median$\uparrow$}} & \multicolumn{2}{c}{\FlexEvoAppHeader{LRMSD $<2$\AA\ (\%)$\uparrow$}} & \multicolumn{2}{c}{\FlexEvoAppHeader{LRMSD Median (\AA)$\downarrow$}} & \multicolumn{2}{c}{\FlexEvoAppHeader{MM/GBSA (kcal/mol)$\downarrow$}} & \multicolumn{2}{c}{\FlexEvoAppHeader{HADDOCK Score$\downarrow$}}\\
\cmidrule(lr){2-3}\cmidrule(lr){4-5}\cmidrule(lr){6-7}\cmidrule(lr){8-9}\cmidrule(lr){10-11}\cmidrule(lr){12-13}
 & \multicolumn{1}{c}{Base} & \multicolumn{1}{c}{+FlexEvo} & \multicolumn{1}{c}{Base} & \multicolumn{1}{c}{+FlexEvo} & \multicolumn{1}{c}{Base} & \multicolumn{1}{c}{+FlexEvo} & \multicolumn{1}{c}{Base} & \multicolumn{1}{c}{+FlexEvo} & \multicolumn{1}{c}{Base} & \multicolumn{1}{c}{+FlexEvo} & \multicolumn{1}{c}{Base} & \multicolumn{1}{c}{+FlexEvo}\\
\midrule
\FlexEvoAppTest{13}{Reference conformation}
BoltzGen & 0.5812 & \FlexEvoAppBetter{0.7918} & 0.3225 & \FlexEvoAppBetter{0.6375} & 35.4166 & \FlexEvoAppBetter{38.8602} & 4.8907 & \FlexEvoAppBetter{3.8668} & -36.9099 & \FlexEvoAppBetter{-37.2423} & -108.2698 & \FlexEvoAppBetter{-108.9199}\\
Chai-1 & 0.6744 & \FlexEvoAppBetter{0.7572} & 0.4361 & \FlexEvoAppBetter{0.5988} & 37.3752 & \FlexEvoAppBetter{40.4985} & 4.3123 & \FlexEvoAppBetter{3.5139} & -35.0184 & \FlexEvoAppBetter{-36.1978} & -107.8819 & \FlexEvoAppWorse{-100.5058}\\
Protenix & 0.6139 & \FlexEvoAppBetter{0.7419} & 0.3305 & \FlexEvoAppBetter{0.5449} & 39.9387 & \FlexEvoAppBetter{40.8061} & 3.4706 & \FlexEvoAppBetter{3.4549} & -37.2316 & \FlexEvoAppBetter{-38.0442} & -108.3421 & \FlexEvoAppWorse{-105.2553}\\
\arrayrulecolor{FlexEvoAppLine}\cmidrule[0.25pt]{1-13}\arrayrulecolor{black}
\FlexEvoAppTest{13}{Matched conformation}
BoltzGen & 0.3579 & \FlexEvoAppBetter{0.7723} & 0.1701 & \FlexEvoAppBetter{0.6208} & 23.6166 & \FlexEvoAppBetter{36.0407} & 7.3954 & \FlexEvoAppBetter{4.4653} & -26.7155 & \FlexEvoAppBetter{-38.7042} & -78.6222 & \FlexEvoAppBetter{-100.9389}\\
Chai-1 & 0.4294 & \FlexEvoAppBetter{0.7453} & 0.1997 & \FlexEvoAppBetter{0.5583} & 20.3407 & \FlexEvoAppBetter{37.7066} & 8.4591 & \FlexEvoAppBetter{4.6118} & -28.1806 & \FlexEvoAppBetter{-40.0486} & -88.5841 & \FlexEvoAppBetter{-106.4813}\\
Protenix & 0.3524 & \FlexEvoAppBetter{0.7594} & 0.1782 & \FlexEvoAppBetter{0.5953} & 20.3365 & \FlexEvoAppBetter{38.4897} & 8.6653 & \FlexEvoAppBetter{4.5631} & -32.0128 & \FlexEvoAppBetter{-38.0237} & -87.2657 & \FlexEvoAppBetter{-101.5998}\\
\arrayrulecolor{FlexEvoAppLine}\cmidrule[0.25pt]{1-13}\arrayrulecolor{black}
\FlexEvoAppTest{13}{Divergent conformation}
BoltzGen & 0.2703 & \FlexEvoAppBetter{0.7675} & 0.1044 & \FlexEvoAppBetter{0.5704} & 15.8563 & \FlexEvoAppBetter{35.5326} & 10.2976 & \FlexEvoAppBetter{5.3444} & -15.7288 & \FlexEvoAppBetter{-35.3631} & -56.4482 & \FlexEvoAppBetter{-108.3156}\\
Chai-1 & 0.3235 & \FlexEvoAppBetter{0.7369} & 0.1365 & \FlexEvoAppBetter{0.5358} & 13.6982 & \FlexEvoAppBetter{36.7154} & 11.7157 & \FlexEvoAppBetter{4.4727} & -17.2152 & \FlexEvoAppBetter{-36.6873} & -71.2834 & \FlexEvoAppBetter{-110.0478}\\
Protenix & 0.3061 & \FlexEvoAppBetter{0.7554} & 0.1216 & \FlexEvoAppBetter{0.5373} & 15.4408 & \FlexEvoAppBetter{36.2917} & 11.3569 & \FlexEvoAppBetter{4.1536} & -15.0455 & \FlexEvoAppBetter{-36.0455} & -62.6656 & \FlexEvoAppBetter{-103.9151}\\
\bottomrule
\end{tabular*}%
}\par
\vspace{7pt}
\noindent\makebox[\linewidth][l]{\textbf{B. Specificity, structural quality and diversity}}\par\vspace{2pt}
\noindent\resizebox{\linewidth}{!}{%
\begin{tabular*}{520pt}{@{\extracolsep{\fill}}lllllllllll@{}}
\toprule
Base model & \multicolumn{2}{c}{\FlexEvoAppHeader{On-target pAE$\downarrow$}} & \multicolumn{2}{c}{\FlexEvoAppHeader{Off-target pAE$\uparrow$}} & \multicolumn{2}{c}{\FlexEvoAppHeader{IDDT$\uparrow$}} & \multicolumn{2}{c}{\FlexEvoAppHeader{Diversity Mean$\uparrow$}} & \multicolumn{2}{c}{\FlexEvoAppHeader{Diversity Median$\uparrow$}}\\
\cmidrule(lr){2-3}\cmidrule(lr){4-5}\cmidrule(lr){6-7}\cmidrule(lr){8-9}\cmidrule(lr){10-11}
 & \multicolumn{1}{c}{Base} & \multicolumn{1}{c}{+FlexEvo} & \multicolumn{1}{c}{Base} & \multicolumn{1}{c}{+FlexEvo} & \multicolumn{1}{c}{Base} & \multicolumn{1}{c}{+FlexEvo} & \multicolumn{1}{c}{Base} & \multicolumn{1}{c}{+FlexEvo} & \multicolumn{1}{c}{Base} & \multicolumn{1}{c}{+FlexEvo}\\
\midrule
\FlexEvoAppTest{11}{Reference conformation}
BoltzGen & 3.6597 & \FlexEvoAppBetter{3.4954} & 14.3358 & \FlexEvoAppBetter{17.0931} & 0.7219 & \FlexEvoAppBetter{0.7683} & 0.6439 & \FlexEvoAppWorse{0.6115} & 0.6561 & \FlexEvoAppWorse{0.6024}\\
Chai-1 & 3.2037 & \FlexEvoAppWorse{3.8848} & 16.6055 & \FlexEvoAppWorse{13.6782} & 0.7365 & \FlexEvoAppBetter{0.7961} & 0.6047 & \FlexEvoAppBetter{0.6959} & 0.6827 & \FlexEvoAppBetter{0.6917}\\
Protenix & 3.7768 & \FlexEvoAppBetter{3.4593} & 17.1694 & \FlexEvoAppWorse{17.0991} & 0.7654 & \FlexEvoAppBetter{0.8125} & 0.5747 & \FlexEvoAppBetter{0.6079} & 0.6453 & \FlexEvoAppWorse{0.6162}\\
\arrayrulecolor{FlexEvoAppLine}\cmidrule[0.25pt]{1-11}\arrayrulecolor{black}
\FlexEvoAppTest{11}{Matched conformation}
BoltzGen & 4.8252 & \FlexEvoAppBetter{2.9219} & 10.6104 & \FlexEvoAppBetter{16.7826} & 0.7184 & \FlexEvoAppBetter{0.7681} & -- & -- & -- & --\\
Chai-1 & 3.5689 & \FlexEvoAppBetter{3.2574} & 11.9516 & \FlexEvoAppBetter{17.4595} & 0.7563 & \FlexEvoAppWorse{0.7527} & -- & -- & -- & --\\
Protenix & 4.3303 & \FlexEvoAppBetter{3.3073} & 13.1951 & \FlexEvoAppBetter{17.4349} & 0.7532 & \FlexEvoAppBetter{0.7836} & -- & -- & -- & --\\
\arrayrulecolor{FlexEvoAppLine}\cmidrule[0.25pt]{1-11}\arrayrulecolor{black}
\FlexEvoAppTest{11}{Divergent conformation}
BoltzGen & 6.7059 & \FlexEvoAppBetter{2.6769} & 10.7163 & \FlexEvoAppBetter{14.1959} & 0.7047 & \FlexEvoAppBetter{0.7679} & -- & -- & -- & --\\
Chai-1 & 4.3856 & \FlexEvoAppBetter{3.0669} & 9.8296 & \FlexEvoAppBetter{17.1495} & 0.7066 & \FlexEvoAppBetter{0.8257} & -- & -- & -- & --\\
Protenix & 6.3374 & \FlexEvoAppBetter{3.4789} & 11.0271 & \FlexEvoAppBetter{17.2584} & 0.7149 & \FlexEvoAppBetter{0.8029} & -- & -- & -- & --\\
\bottomrule
\end{tabular*}%
}\par
\endgroup
\end{table*}

\begin{table*}[!t]
\centering
\caption{mRNA and long nucleic acid drugs. Black: base model. With FlexEvo, green: improved or tied; red: worse. Arrows indicate the preferred direction. Diversity summarizes the frozen candidate pool and is reported once under Reference; dashes indicate that it is not repeated for Matched or Divergent.}
\label{tab:flexevo-app-mrna}
\begingroup
\fontsize{9.0}{10.5}\selectfont
\setlength{\tabcolsep}{2.0pt}
\renewcommand{\arraystretch}{1.02}
\setlength{\aboverulesep}{1.2pt}
\setlength{\belowrulesep}{1.2pt}
\setlength{\heavyrulewidth}{0.65pt}
\setlength{\lightrulewidth}{0.35pt}
\setlength{\cmidrulewidth}{0.25pt}
\noindent\makebox[\linewidth][l]{\textbf{A. Complex accuracy and interaction energy}}\par\vspace{2pt}
\noindent\resizebox{\linewidth}{!}{%
\begin{tabular*}{520pt}{@{\extracolsep{\fill}}lllllllllllll@{}}
\toprule
Base model & \multicolumn{2}{c}{\FlexEvoAppHeader{DockQ Mean$\uparrow$}} & \multicolumn{2}{c}{\FlexEvoAppHeader{DockQ Median$\uparrow$}} & \multicolumn{2}{c}{\FlexEvoAppHeader{LRMSD $<2$\AA\ (\%)$\uparrow$}} & \multicolumn{2}{c}{\FlexEvoAppHeader{LRMSD Median (\AA)$\downarrow$}} & \multicolumn{2}{c}{\FlexEvoAppHeader{MM/GBSA (kcal/mol)$\downarrow$}} & \multicolumn{2}{c}{\FlexEvoAppHeader{HADDOCK Score$\downarrow$}}\\
\cmidrule(lr){2-3}\cmidrule(lr){4-5}\cmidrule(lr){6-7}\cmidrule(lr){8-9}\cmidrule(lr){10-11}\cmidrule(lr){12-13}
 & \multicolumn{1}{c}{Base} & \multicolumn{1}{c}{+FlexEvo} & \multicolumn{1}{c}{Base} & \multicolumn{1}{c}{+FlexEvo} & \multicolumn{1}{c}{Base} & \multicolumn{1}{c}{+FlexEvo} & \multicolumn{1}{c}{Base} & \multicolumn{1}{c}{+FlexEvo} & \multicolumn{1}{c}{Base} & \multicolumn{1}{c}{+FlexEvo} & \multicolumn{1}{c}{Base} & \multicolumn{1}{c}{+FlexEvo}\\
\midrule
\FlexEvoAppTest{13}{Reference conformation}
BoltzGen & 0.6435 & \FlexEvoAppBetter{0.8616} & 0.3526 & \FlexEvoAppBetter{0.7935} & 41.2659 & \FlexEvoAppBetter{45.2835} & 5.7891 & \FlexEvoAppBetter{4.4419} & -42.5524 & \FlexEvoAppBetter{-43.7174} & -126.0592 & \FlexEvoAppBetter{-126.9724}\\
Chai-1 & 0.7316 & \FlexEvoAppBetter{0.8332} & 0.4798 & \FlexEvoAppBetter{0.7594} & 44.0388 & \FlexEvoAppBetter{47.8695} & 4.9724 & \FlexEvoAppBetter{4.0948} & -40.3513 & \FlexEvoAppBetter{-41.9244} & -125.9534 & \FlexEvoAppWorse{-117.1707}\\
Protenix & 0.6651 & \FlexEvoAppBetter{0.8176} & 0.3642 & \FlexEvoAppBetter{0.6987} & 47.3985 & \FlexEvoAppBetter{47.6928} & 4.0711 & \FlexEvoAppBetter{4.0487} & -43.3554 & \FlexEvoAppBetter{-43.9896} & -127.3749 & \FlexEvoAppWorse{-119.7496}\\
\arrayrulecolor{FlexEvoAppLine}\cmidrule[0.25pt]{1-13}\arrayrulecolor{black}
\FlexEvoAppTest{13}{Matched conformation}
BoltzGen & 0.3928 & \FlexEvoAppBetter{0.8493} & 0.1849 & \FlexEvoAppBetter{0.7736} & 27.2126 & \FlexEvoAppBetter{42.0513} & 8.7107 & \FlexEvoAppBetter{5.1881} & -30.9788 & \FlexEvoAppBetter{-44.9613} & -90.9291 & \FlexEvoAppBetter{-118.4181}\\
Chai-1 & 0.472 & \FlexEvoAppBetter{0.8078} & 0.2188 & \FlexEvoAppBetter{0.6159} & 23.6169 & \FlexEvoAppBetter{43.3125} & 9.7664 & \FlexEvoAppBetter{5.3827} & -33.0369 & \FlexEvoAppBetter{-45.6925} & -105.4453 & \FlexEvoAppBetter{-125.7659}\\
Protenix & 0.3811 & \FlexEvoAppBetter{0.836} & 0.1977 & \FlexEvoAppBetter{0.7471} & 23.9538 & \FlexEvoAppBetter{44.4931} & 9.9701 & \FlexEvoAppBetter{5.3765} & -36.7379 & \FlexEvoAppBetter{-44.1818} & -99.8345 & \FlexEvoAppBetter{-117.4869}\\
\arrayrulecolor{FlexEvoAppLine}\cmidrule[0.25pt]{1-13}\arrayrulecolor{black}
\FlexEvoAppTest{13}{Divergent conformation}
BoltzGen & 0.2962 & \FlexEvoAppBetter{0.8325} & 0.1163 & \FlexEvoAppBetter{0.7286} & 18.8527 & \FlexEvoAppBetter{41.3252} & 11.9946 & \FlexEvoAppBetter{6.2245} & -18.2159 & \FlexEvoAppBetter{-41.0712} & -65.6305 & \FlexEvoAppBetter{-125.9276}\\
Chai-1 & 0.3515 & \FlexEvoAppBetter{0.8144} & 0.1496 & \FlexEvoAppBetter{0.6898} & 15.6241 & \FlexEvoAppBetter{42.2652} & 13.6901 & \FlexEvoAppBetter{5.2412} & -19.8488 & \FlexEvoAppBetter{-43.1475} & -81.5273 & \FlexEvoAppBetter{-129.8867}\\
Protenix & 0.3398 & \FlexEvoAppBetter{0.8172} & 0.1342 & \FlexEvoAppBetter{0.6866} & 17.8574 & \FlexEvoAppBetter{42.8658} & 13.1507 & \FlexEvoAppBetter{4.8256} & -17.6284 & \FlexEvoAppBetter{-41.9619} & -73.6176 & \FlexEvoAppBetter{-123.0036}\\
\bottomrule
\end{tabular*}%
}\par
\vspace{7pt}
\noindent\makebox[\linewidth][l]{\textbf{B. Specificity, structural quality and diversity}}\par\vspace{2pt}
\noindent\resizebox{\linewidth}{!}{%
\begin{tabular*}{520pt}{@{\extracolsep{\fill}}lllllllllll@{}}
\toprule
Base model & \multicolumn{2}{c}{\FlexEvoAppHeader{On-target pAE$\downarrow$}} & \multicolumn{2}{c}{\FlexEvoAppHeader{Off-target pAE$\uparrow$}} & \multicolumn{2}{c}{\FlexEvoAppHeader{IDDT$\uparrow$}} & \multicolumn{2}{c}{\FlexEvoAppHeader{Diversity Mean$\uparrow$}} & \multicolumn{2}{c}{\FlexEvoAppHeader{Diversity Median$\uparrow$}}\\
\cmidrule(lr){2-3}\cmidrule(lr){4-5}\cmidrule(lr){6-7}\cmidrule(lr){8-9}\cmidrule(lr){10-11}
 & \multicolumn{1}{c}{Base} & \multicolumn{1}{c}{+FlexEvo} & \multicolumn{1}{c}{Base} & \multicolumn{1}{c}{+FlexEvo} & \multicolumn{1}{c}{Base} & \multicolumn{1}{c}{+FlexEvo} & \multicolumn{1}{c}{Base} & \multicolumn{1}{c}{+FlexEvo} & \multicolumn{1}{c}{Base} & \multicolumn{1}{c}{+FlexEvo}\\
\midrule
\FlexEvoAppTest{11}{Reference conformation}
BoltzGen & 4.2939 & \FlexEvoAppBetter{4.1355} & 16.4981 & \FlexEvoAppBetter{19.7682} & 0.8398 & \FlexEvoAppBetter{0.8902} & 0.7457 & \FlexEvoAppWorse{0.7047} & 0.7517 & \FlexEvoAppWorse{0.7162}\\
Chai-1 & 3.7891 & \FlexEvoAppWorse{4.5393} & 19.4026 & \FlexEvoAppWorse{16.1871} & 0.8576 & \FlexEvoAppBetter{0.9162} & 0.6956 & \FlexEvoAppBetter{0.8026} & 0.7985 & \FlexEvoAppBetter{0.8059}\\
Protenix & 4.4709 & \FlexEvoAppBetter{3.9701} & 20.0858 & \FlexEvoAppBetter{20.3136} & 0.8913 & \FlexEvoAppBetter{0.9303} & 0.6676 & \FlexEvoAppBetter{0.7105} & 0.7538 & \FlexEvoAppWorse{0.7118}\\
\arrayrulecolor{FlexEvoAppLine}\cmidrule[0.25pt]{1-11}\arrayrulecolor{black}
\FlexEvoAppTest{11}{Matched conformation}
BoltzGen & 5.6365 & \FlexEvoAppBetter{3.3301} & 12.4661 & \FlexEvoAppBetter{19.8712} & 0.8514 & \FlexEvoAppBetter{0.8984} & -- & -- & -- & --\\
Chai-1 & 4.1981 & \FlexEvoAppBetter{3.7439} & 13.8841 & \FlexEvoAppBetter{20.3667} & 0.8666 & \FlexEvoAppWorse{0.8646} & -- & -- & -- & --\\
Protenix & 5.0484 & \FlexEvoAppBetter{3.7991} & 15.3096 & \FlexEvoAppBetter{20.4576} & 0.8822 & \FlexEvoAppBetter{0.9301} & -- & -- & -- & --\\
\arrayrulecolor{FlexEvoAppLine}\cmidrule[0.25pt]{1-11}\arrayrulecolor{black}
\FlexEvoAppTest{11}{Divergent conformation}
BoltzGen & 7.7522 & \FlexEvoAppBetter{3.1443} & 12.6249 & \FlexEvoAppBetter{16.4044} & 0.8052 & \FlexEvoAppBetter{0.8855} & -- & -- & -- & --\\
Chai-1 & 5.0844 & \FlexEvoAppBetter{3.6057} & 11.4036 & \FlexEvoAppBetter{20.0854} & 0.8252 & \FlexEvoAppBetter{0.9585} & -- & -- & -- & --\\
Protenix & 7.3069 & \FlexEvoAppBetter{4.0782} & 12.6093 & \FlexEvoAppBetter{19.7727} & 0.8434 & \FlexEvoAppBetter{0.9398} & -- & -- & -- & --\\
\bottomrule
\end{tabular*}%
}\par
\endgroup
\end{table*}
\begin{table*}[!t]
\centering
\caption{Ablations across nine binder categories. Prior denotes the flexibility prior, and \textsc{FlexEvo} denotes the full model with all components enabled. Green/bold and light green indicate the best and second-best results, respectively, within each category, metric, and target conformation; displayed ties share the same rank. We evaluate three one-component-removal variants by removing the prior, FlexBox, or evolution module from the full model. QED is reported only for the Reference conformation because the molecular graph is unchanged across target conformations; repeated Matched and Divergent values are omitted.}
\label{tab:flexevo-ablation-multimetric}
\begingroup
\newsavebox{\FEAmeasure}
\fontsize{7.3}{8.3}\selectfont
\setlength{\tabcolsep}{1.3pt}
\newlength{\FEAdatawidth}
\setlength{\FEAdatawidth}{\dimexpr(530pt-52pt-36\tabcolsep)/18\relax}
\renewcommand{\arraystretch}{0.78}
\setlength{\aboverulesep}{0.5pt}
\setlength{\belowrulesep}{0.5pt}
\setlength{\heavyrulewidth}{0.6pt}
\setlength{\lightrulewidth}{0.3pt}
\setlength{\cmidrulewidth}{0.2pt}
\par\noindent\makebox[\linewidth][l]{\textbf{A. De novo ligand generation}}\par\vspace{1pt}
\sbox{\FEAmeasure}{%
\begin{tabular}{@{}w{l}{52pt}*{18}{w{l}{\FEAdatawidth}}@{}}
\toprule
Variant & \multicolumn{3}{c}{Vina$\downarrow$} & \multicolumn{3}{c}{High affinity (\%)$\uparrow$} & \multicolumn{3}{c}{Vinardo$\uparrow$} & \multicolumn{3}{c}{AutoDock4$\uparrow$} & \multicolumn{3}{c}{RTMScore$\uparrow$} & \multicolumn{3}{c}{QED$\uparrow$}\\
\cmidrule(lr){2-4}\cmidrule(lr){5-7}\cmidrule(lr){8-10}\cmidrule(lr){11-13}\cmidrule(lr){14-16}\cmidrule(lr){17-19}
 & \multicolumn{1}{c}{\fontsize{6}{6.5}\selectfont Reference} & \multicolumn{1}{c}{\fontsize{6}{6.5}\selectfont Matched} & \multicolumn{1}{c}{\fontsize{6}{6.5}\selectfont Divergent} & \multicolumn{1}{c}{\fontsize{6}{6.5}\selectfont Reference} & \multicolumn{1}{c}{\fontsize{6}{6.5}\selectfont Matched} & \multicolumn{1}{c}{\fontsize{6}{6.5}\selectfont Divergent} & \multicolumn{1}{c}{\fontsize{6}{6.5}\selectfont Reference} & \multicolumn{1}{c}{\fontsize{6}{6.5}\selectfont Matched} & \multicolumn{1}{c}{\fontsize{6}{6.5}\selectfont Divergent} & \multicolumn{1}{c}{\fontsize{6}{6.5}\selectfont Reference} & \multicolumn{1}{c}{\fontsize{6}{6.5}\selectfont Matched} & \multicolumn{1}{c}{\fontsize{6}{6.5}\selectfont Divergent} & \multicolumn{1}{c}{\fontsize{6}{6.5}\selectfont Reference} & \multicolumn{1}{c}{\fontsize{6}{6.5}\selectfont Matched} & \multicolumn{1}{c}{\fontsize{6}{6.5}\selectfont Divergent} & \multicolumn{1}{c}{\fontsize{6}{6.5}\selectfont Reference} & \multicolumn{1}{c}{\fontsize{6}{6.5}\selectfont Matched} & \multicolumn{1}{c}{\fontsize{6}{6.5}\selectfont Divergent}\\
\midrule
\multicolumn{19}{@{}l}{\textbf{Small molecule}}\\[0.5pt]
w/o prior & -7.50 & -7.31 & \cellcolor{FEASecond}-5.98 & \cellcolor{FEASecond}50.2 & \cellcolor{FEASecond}45.6 & \cellcolor{FEASecond}46.1 & .522 & \cellcolor{FEASecond}.549 & .503 & \cellcolor{FEASecond}.618 & \cellcolor{FEASecond}.593 & .575 & .487 & .496 & .476 & \cellcolor{FEASecond}.534 & -- & --\\
w/o FlexBox & \cellcolor{FEABest}\textbf{-8.03} & \cellcolor{FEASecond}-7.45 & -5.88 & \cellcolor{FEABest}\textbf{52.7} & 38.0 & 41.6 & \cellcolor{FEASecond}.536 & .547 & .502 & .537 & .567 & \cellcolor{FEASecond}.590 & \cellcolor{FEASecond}.494 & \cellcolor{FEASecond}.505 & \cellcolor{FEASecond}.481 & .533 & -- & --\\
w/o evolution & -7.92 & -7.00 & -5.92 & 47.8 & 39.6 & 43.9 & .482 & .492 & \cellcolor{FEASecond}.527 & .594 & .531 & .532 & .493 & .439 & .441 & .527 & -- & --\\
\textbf{FlexEvo} & \cellcolor{FEASecond}-8.01 & \cellcolor{FEABest}\textbf{-7.88} & \cellcolor{FEABest}\textbf{-6.66} & 48.0 & \cellcolor{FEABest}\textbf{48.1} & \cellcolor{FEABest}\textbf{49.8} & \cellcolor{FEABest}\textbf{.621} & \cellcolor{FEABest}\textbf{.627} & \cellcolor{FEABest}\textbf{.534} & \cellcolor{FEABest}\textbf{.668} & \cellcolor{FEABest}\textbf{.655} & \cellcolor{FEABest}\textbf{.675} & \cellcolor{FEABest}\textbf{.551} & \cellcolor{FEABest}\textbf{.525} & \cellcolor{FEABest}\textbf{.516} & \cellcolor{FEABest}\textbf{.558} & -- & --\\
\addlinespace[1pt]\arrayrulecolor{FEAline}\cmidrule[0.2pt](lr){1-19}\arrayrulecolor{black}
\multicolumn{19}{@{}l}{\textbf{Nonpeptidic macrocycle}}\\[0.5pt]
w/o prior & -8.80 & -8.82 & -7.02 & \cellcolor{FEASecond}58.7 & \cellcolor{FEASecond}54.5 & \cellcolor{FEASecond}55.8 & .617 & \cellcolor{FEASecond}.665 & .594 & .623 & \cellcolor{FEASecond}.679 & \cellcolor{FEASecond}.672 & .583 & \cellcolor{FEASecond}.614 & .566 & \cellcolor{FEASecond}.651 & -- & --\\
w/o FlexBox & \cellcolor{FEASecond}-9.55 & \cellcolor{FEASecond}-8.93 & \cellcolor{FEASecond}-7.04 & \cellcolor{FEABest}\textbf{62.8} & 45.5 & 48.9 & \cellcolor{FEASecond}.620 & .646 & .590 & \cellcolor{FEASecond}.641 & .668 & .627 & .589 & .593 & \cellcolor{FEABest}\textbf{.682} & .629 & -- & --\\
w/o evolution & -9.33 & -8.33 & -6.94 & 56.2 & 46.8 & 50.9 & .567 & .581 & \cellcolor{FEASecond}.619 & .583 & .640 & .629 & \cellcolor{FEASecond}.593 & .527 & .523 & .627 & -- & --\\
\textbf{FlexEvo} & \cellcolor{FEABest}\textbf{-9.63} & \cellcolor{FEABest}\textbf{-9.36} & \cellcolor{FEABest}\textbf{-8.02} & 56.4 & \cellcolor{FEABest}\textbf{56.5} & \cellcolor{FEABest}\textbf{58.5} & \cellcolor{FEABest}\textbf{.731} & \cellcolor{FEABest}\textbf{.749} & \cellcolor{FEABest}\textbf{.639} & \cellcolor{FEABest}\textbf{.721} & \cellcolor{FEABest}\textbf{.712} & \cellcolor{FEABest}\textbf{.704} & \cellcolor{FEABest}\textbf{.664} & \cellcolor{FEABest}\textbf{.724} & \cellcolor{FEASecond}.608 & \cellcolor{FEABest}\textbf{.686} & -- & --\\
\bottomrule
\end{tabular}%
}
\typeout{NATURAL-PANEL-WIDTH=\the\wd\FEAmeasure}
\noindent\resizebox{\linewidth}{!}{\usebox{\FEAmeasure}}\par
\vspace{4pt}

\par\noindent\makebox[\linewidth][l]{\textbf{B. Biomolecular complex design}}\par\vspace{1pt}
\sbox{\FEAmeasure}{%
\begin{tabular}{@{}w{l}{52pt}*{18}{w{l}{\FEAdatawidth}}@{}}
\toprule
Variant & \multicolumn{3}{c}{DockQ$\uparrow$} & \multicolumn{3}{c}{$<2$\AA\ (\%)$\uparrow$} & \multicolumn{3}{c}{LRMSD (\AA)$\downarrow$} & \multicolumn{3}{c}{MM/GBSA$\downarrow$} & \multicolumn{3}{c}{HADDOCK$\downarrow$} & \multicolumn{3}{c}{On-target pAE$\downarrow$}\\
\cmidrule(lr){2-4}\cmidrule(lr){5-7}\cmidrule(lr){8-10}\cmidrule(lr){11-13}\cmidrule(lr){14-16}\cmidrule(lr){17-19}
 & \multicolumn{1}{c}{\fontsize{6}{6.5}\selectfont Reference} & \multicolumn{1}{c}{\fontsize{6}{6.5}\selectfont Matched} & \multicolumn{1}{c}{\fontsize{6}{6.5}\selectfont Divergent} & \multicolumn{1}{c}{\fontsize{6}{6.5}\selectfont Reference} & \multicolumn{1}{c}{\fontsize{6}{6.5}\selectfont Matched} & \multicolumn{1}{c}{\fontsize{6}{6.5}\selectfont Divergent} & \multicolumn{1}{c}{\fontsize{6}{6.5}\selectfont Reference} & \multicolumn{1}{c}{\fontsize{6}{6.5}\selectfont Matched} & \multicolumn{1}{c}{\fontsize{6}{6.5}\selectfont Divergent} & \multicolumn{1}{c}{\fontsize{6}{6.5}\selectfont Reference} & \multicolumn{1}{c}{\fontsize{6}{6.5}\selectfont Matched} & \multicolumn{1}{c}{\fontsize{6}{6.5}\selectfont Divergent} & \multicolumn{1}{c}{\fontsize{6}{6.5}\selectfont Reference} & \multicolumn{1}{c}{\fontsize{6}{6.5}\selectfont Matched} & \multicolumn{1}{c}{\fontsize{6}{6.5}\selectfont Divergent} & \multicolumn{1}{c}{\fontsize{6}{6.5}\selectfont Reference} & \multicolumn{1}{c}{\fontsize{6}{6.5}\selectfont Matched} & \multicolumn{1}{c}{\fontsize{6}{6.5}\selectfont Divergent}\\
\midrule
\multicolumn{19}{@{}l}{\textbf{Linear peptide}}\\[0.5pt]
w/o prior & .734 & \cellcolor{FEASecond}.568 & .348 & 40.3 & \cellcolor{FEASecond}28.5 & 17.3 & \cellcolor{FEABest}\textbf{4.18} & 7.88 & 12.41 & \cellcolor{FEASecond}-44.3 & \cellcolor{FEABest}\textbf{-45.4} & \cellcolor{FEASecond}-42.1 & -115.1 & \cellcolor{FEASecond}-118.3 & \cellcolor{FEASecond}-114.8 & \cellcolor{FEASecond}3.59 & 4.38 & 4.84\\
w/o FlexBox & \cellcolor{FEASecond}.752 & .516 & \cellcolor{FEASecond}.356 & \cellcolor{FEASecond}40.4 & 25.0 & \cellcolor{FEASecond}19.6 & \cellcolor{FEASecond}4.38 & 8.77 & \cellcolor{FEASecond}11.06 & -42.0 & -36.0 & -34.7 & \cellcolor{FEASecond}-115.2 & -117.3 & -86.4 & 3.95 & 4.37 & 4.82\\
w/o evolution & .660 & .418 & .308 & 37.4 & \cellcolor{FEASecond}28.5 & 17.2 & 5.02 & \cellcolor{FEASecond}7.81 & 12.28 & -40.7 & -37.0 & -31.2 & -115.1 & -114.6 & -99.8 & 3.76 & \cellcolor{FEASecond}3.87 & \cellcolor{FEASecond}4.81\\
\textbf{FlexEvo} & \cellcolor{FEABest}\textbf{.792} & \cellcolor{FEABest}\textbf{.780} & \cellcolor{FEABest}\textbf{.760} & \cellcolor{FEABest}\textbf{40.9} & \cellcolor{FEABest}\textbf{40.4} & \cellcolor{FEABest}\textbf{37.8} & 4.87 & \cellcolor{FEABest}\textbf{4.28} & \cellcolor{FEABest}\textbf{5.01} & \cellcolor{FEABest}\textbf{-44.8} & \cellcolor{FEASecond}-44.5 & \cellcolor{FEABest}\textbf{-42.4} & \cellcolor{FEABest}\textbf{-117.4} & \cellcolor{FEABest}\textbf{-121.4} & \cellcolor{FEABest}\textbf{-124.9} & \cellcolor{FEABest}\textbf{3.29} & \cellcolor{FEABest}\textbf{3.11} & \cellcolor{FEABest}\textbf{2.89}\\
\addlinespace[1pt]\arrayrulecolor{FEAline}\cmidrule[0.2pt](lr){1-19}\arrayrulecolor{black}
\multicolumn{19}{@{}l}{\textbf{Cyclic peptide}}\\[0.5pt]
w/o prior & \cellcolor{FEASecond}.765 & \cellcolor{FEASecond}.571 & .357 & \cellcolor{FEASecond}41.5 & \cellcolor{FEASecond}29.3 & 17.9 & \cellcolor{FEABest}\textbf{4.22} & 8.02 & 12.49 & \cellcolor{FEASecond}-45.0 & \cellcolor{FEASecond}-45.1 & \cellcolor{FEASecond}-42.6 & \cellcolor{FEASecond}-117.1 & \cellcolor{FEASecond}-119.2 & \cellcolor{FEASecond}-115.4 & \cellcolor{FEASecond}3.63 & 4.38 & \cellcolor{FEASecond}4.85\\
w/o FlexBox & \cellcolor{FEASecond}.765 & .534 & \cellcolor{FEASecond}.362 & \cellcolor{FEASecond}41.5 & 25.7 & \cellcolor{FEASecond}20.2 & \cellcolor{FEASecond}4.49 & 8.92 & \cellcolor{FEASecond}11.04 & -42.6 & -36.6 & -35.7 & -116.6 & -117.8 & -87.0 & 4.00 & 4.40 & \cellcolor{FEASecond}4.85\\
w/o evolution & .684 & .417 & .311 & 38.4 & 28.6 & 17.6 & 5.12 & \cellcolor{FEASecond}7.94 & 12.58 & -41.8 & -38.4 & -31.9 & -116.8 & -115.3 & -100.7 & 3.89 & \cellcolor{FEASecond}3.98 & 4.86\\
\textbf{FlexEvo} & \cellcolor{FEABest}\textbf{.802} & \cellcolor{FEABest}\textbf{.799} & \cellcolor{FEABest}\textbf{.772} & \cellcolor{FEABest}\textbf{42.2} & \cellcolor{FEABest}\textbf{41.4} & \cellcolor{FEABest}\textbf{38.2} & 5.05 & \cellcolor{FEABest}\textbf{4.35} & \cellcolor{FEABest}\textbf{5.20} & \cellcolor{FEABest}\textbf{-46.0} & \cellcolor{FEABest}\textbf{-45.4} & \cellcolor{FEABest}\textbf{-43.5} & \cellcolor{FEABest}\textbf{-121.1} & \cellcolor{FEABest}\textbf{-123.8} & \cellcolor{FEABest}\textbf{-127.6} & \cellcolor{FEABest}\textbf{3.33} & \cellcolor{FEABest}\textbf{3.10} & \cellcolor{FEABest}\textbf{2.92}\\
\addlinespace[1pt]\arrayrulecolor{FEAline}\cmidrule[0.2pt](lr){1-19}\arrayrulecolor{black}
\multicolumn{19}{@{}l}{\textbf{Non-antibody protein}}\\[0.5pt]
w/o prior & .807 & \cellcolor{FEASecond}.606 & .384 & 43.5 & 30.6 & 18.9 & \cellcolor{FEABest}\textbf{4.53} & 8.73 & 13.29 & \cellcolor{FEABest}\textbf{-49.3} & \cellcolor{FEASecond}-48.3 & \cellcolor{FEABest}\textbf{-46.2} & -127.3 & \cellcolor{FEABest}\textbf{-130.6} & \cellcolor{FEASecond}-122.7 & 3.89 & 4.83 & 5.14\\
w/o FlexBox & \cellcolor{FEASecond}.827 & .571 & \cellcolor{FEASecond}.386 & \cellcolor{FEASecond}43.8 & 27.9 & \cellcolor{FEASecond}21.3 & \cellcolor{FEASecond}4.84 & 9.88 & \cellcolor{FEASecond}11.73 & -46.4 & -40.3 & -37.9 & -127.9 & -125.7 & -94.0 & \cellcolor{FEASecond}3.41 & 4.79 & 5.30\\
w/o evolution & .736 & .444 & .340 & 41.1 & \cellcolor{FEASecond}30.9 & 19.2 & 5.35 & \cellcolor{FEASecond}8.51 & 13.41 & -44.0 & -41.6 & -35.0 & \cellcolor{FEABest}\textbf{-128.2} & -121.7 & -107.8 & 4.14 & \cellcolor{FEASecond}4.18 & \cellcolor{FEASecond}5.13\\
\textbf{FlexEvo} & \cellcolor{FEABest}\textbf{.860} & \cellcolor{FEABest}\textbf{.862} & \cellcolor{FEABest}\textbf{.842} & \cellcolor{FEABest}\textbf{44.6} & \cellcolor{FEABest}\textbf{45.6} & \cellcolor{FEABest}\textbf{41.6} & 5.40 & \cellcolor{FEABest}\textbf{4.75} & \cellcolor{FEABest}\textbf{5.55} & \cellcolor{FEASecond}-49.1 & \cellcolor{FEABest}\textbf{-50.2} & \cellcolor{FEASecond}-45.8 & \cellcolor{FEASecond}-128.0 & \cellcolor{FEASecond}-129.6 & \cellcolor{FEABest}\textbf{-136.5} & \cellcolor{FEABest}\textbf{3.26} & \cellcolor{FEABest}\textbf{3.32} & \cellcolor{FEABest}\textbf{3.20}\\
\addlinespace[1pt]\arrayrulecolor{FEAline}\cmidrule[0.2pt](lr){1-19}\arrayrulecolor{black}
\multicolumn{19}{@{}l}{\textbf{Full-length antibody}}\\[0.5pt]
w/o prior & \cellcolor{FEASecond}.723 & \cellcolor{FEASecond}.547 & .337 & \cellcolor{FEASecond}39.8 & \cellcolor{FEASecond}27.4 & 17.1 & \cellcolor{FEABest}\textbf{4.02} & \cellcolor{FEASecond}7.57 & 12.08 & \cellcolor{FEASecond}-42.5 & \cellcolor{FEASecond}-43.7 & \cellcolor{FEASecond}-41.1 & -112.5 & \cellcolor{FEASecond}-114.6 & \cellcolor{FEASecond}-112.1 & \cellcolor{FEASecond}3.44 & 4.29 & \cellcolor{FEASecond}4.63\\
w/o FlexBox & .715 & .506 & \cellcolor{FEASecond}.340 & 39.1 & 24.1 & \cellcolor{FEASecond}19.2 & \cellcolor{FEASecond}4.32 & 8.59 & \cellcolor{FEASecond}10.66 & -41.1 & -35.1 & -33.8 & -112.2 & -113.3 & -85.2 & 3.86 & 4.16 & \cellcolor{FEASecond}4.63\\
w/o evolution & .648 & .404 & .303 & 36.6 & \cellcolor{FEASecond}27.4 & 16.9 & 4.79 & 7.68 & 11.92 & -39.8 & -36.5 & -31.0 & \cellcolor{FEASecond}-112.6 & -110.3 & -98.0 & 3.67 & \cellcolor{FEASecond}3.75 & 4.71\\
\textbf{FlexEvo} & \cellcolor{FEABest}\textbf{.763} & \cellcolor{FEABest}\textbf{.762} & \cellcolor{FEABest}\textbf{.746} & \cellcolor{FEABest}\textbf{40.7} & \cellcolor{FEABest}\textbf{40.1} & \cellcolor{FEABest}\textbf{37.0} & 4.84 & \cellcolor{FEABest}\textbf{4.19} & \cellcolor{FEABest}\textbf{4.95} & \cellcolor{FEABest}\textbf{-43.0} & \cellcolor{FEABest}\textbf{-44.4} & \cellcolor{FEABest}\textbf{-41.5} & \cellcolor{FEABest}\textbf{-116.7} & \cellcolor{FEABest}\textbf{-117.4} & \cellcolor{FEABest}\textbf{-122.6} & \cellcolor{FEABest}\textbf{3.17} & \cellcolor{FEABest}\textbf{3.03} & \cellcolor{FEABest}\textbf{2.85}\\
\addlinespace[1pt]\arrayrulecolor{FEAline}\cmidrule[0.2pt](lr){1-19}\arrayrulecolor{black}
\multicolumn{19}{@{}l}{\textbf{Antibody fragment and nanobody}}\\[0.5pt]
w/o prior & \cellcolor{FEASecond}.742 & \cellcolor{FEASecond}.569 & .347 & \cellcolor{FEASecond}40.8 & 28.2 & 17.5 & \cellcolor{FEASecond}4.15 & 7.92 & 12.38 & \cellcolor{FEASecond}-44.5 & \cellcolor{FEASecond}-44.3 & \cellcolor{FEABest}\textbf{-42.3} & -114.7 & \cellcolor{FEASecond}-119.0 & \cellcolor{FEASecond}-115.6 & \cellcolor{FEASecond}3.56 & 4.40 & 4.76\\
w/o FlexBox & .741 & .524 & \cellcolor{FEASecond}.350 & 40.1 & 25.0 & \cellcolor{FEASecond}19.7 & 4.37 & 8.89 & \cellcolor{FEASecond}10.94 & -42.0 & -36.5 & -34.5 & \cellcolor{FEASecond}-117.1 & -116.7 & -87.6 & 3.93 & 4.29 & 4.77\\
w/o evolution & .674 & .410 & .305 & 37.7 & \cellcolor{FEASecond}28.3 & 17.3 & 4.98 & \cellcolor{FEASecond}7.81 & 12.47 & -40.5 & -37.9 & -31.4 & -115.9 & -114.1 & -100.8 & 3.79 & \cellcolor{FEASecond}3.89 & \cellcolor{FEASecond}4.74\\
\textbf{FlexEvo} & \cellcolor{FEABest}\textbf{.784} & \cellcolor{FEABest}\textbf{.778} & \cellcolor{FEABest}\textbf{.766} & \cellcolor{FEABest}\textbf{41.7} & \cellcolor{FEABest}\textbf{40.8} & \cellcolor{FEABest}\textbf{38.0} & \cellcolor{FEABest}\textbf{3.93} & \cellcolor{FEABest}\textbf{4.25} & \cellcolor{FEABest}\textbf{5.06} & \cellcolor{FEABest}\textbf{-44.9} & \cellcolor{FEABest}\textbf{-44.9} & \cellcolor{FEASecond}-41.9 & \cellcolor{FEABest}\textbf{-118.6} & \cellcolor{FEABest}\textbf{-122.4} & \cellcolor{FEABest}\textbf{-124.4} & \cellcolor{FEABest}\textbf{3.29} & \cellcolor{FEABest}\textbf{3.13} & \cellcolor{FEABest}\textbf{2.94}\\
\addlinespace[1pt]\arrayrulecolor{FEAline}\cmidrule[0.2pt](lr){1-19}\arrayrulecolor{black}
\multicolumn{19}{@{}l}{\textbf{Oligonucleotide}}\\[0.5pt]
w/o prior & .671 & \cellcolor{FEASecond}.527 & .322 & \cellcolor{FEASecond}38.7 & 25.6 & 16.5 & 4.84 & \cellcolor{FEASecond}7.23 & 11.30 & \cellcolor{FEASecond}-40.0 & \cellcolor{FEASecond}-40.5 & \cellcolor{FEASecond}-39.7 & \cellcolor{FEASecond}-105.4 & \cellcolor{FEASecond}-108.1 & \cellcolor{FEASecond}-106.7 & \cellcolor{FEASecond}3.26 & 4.17 & 4.39\\
w/o FlexBox & \cellcolor{FEASecond}.685 & .470 & \cellcolor{FEASecond}.328 & 37.6 & 23.0 & \cellcolor{FEASecond}18.0 & \cellcolor{FEASecond}4.15 & 8.29 & \cellcolor{FEASecond}10.35 & -38.9 & -33.5 & -32.1 & -104.3 & -105.8 & -82.9 & 3.67 & 3.88 & 4.42\\
w/o evolution & .619 & .387 & .284 & 34.7 & \cellcolor{FEASecond}26.4 & 15.9 & 4.49 & 7.44 & 11.39 & -37.9 & -35.3 & -29.7 & -104.6 & -106.2 & -92.6 & 3.48 & \cellcolor{FEASecond}3.59 & \cellcolor{FEASecond}4.38\\
\textbf{FlexEvo} & \cellcolor{FEABest}\textbf{.708} & \cellcolor{FEABest}\textbf{.714} & \cellcolor{FEABest}\textbf{.714} & \cellcolor{FEABest}\textbf{39.6} & \cellcolor{FEABest}\textbf{38.2} & \cellcolor{FEABest}\textbf{35.0} & \cellcolor{FEABest}\textbf{3.76} & \cellcolor{FEABest}\textbf{4.01} & \cellcolor{FEABest}\textbf{4.82} & \cellcolor{FEABest}\textbf{-41.2} & \cellcolor{FEABest}\textbf{-42.0} & \cellcolor{FEABest}\textbf{-40.4} & \cellcolor{FEABest}\textbf{-111.7} & \cellcolor{FEABest}\textbf{-113.1} & \cellcolor{FEABest}\textbf{-118.8} & \cellcolor{FEABest}\textbf{2.94} & \cellcolor{FEABest}\textbf{2.82} & \cellcolor{FEABest}\textbf{2.65}\\
\addlinespace[1pt]\arrayrulecolor{FEAline}\cmidrule[0.2pt](lr){1-19}\arrayrulecolor{black}
\multicolumn{19}{@{}l}{\textbf{mRNA / long nucleic acid}}\\[0.5pt]
w/o prior & \cellcolor{FEASecond}.785 & \cellcolor{FEASecond}.600 & .371 & \cellcolor{FEASecond}43.7 & 29.9 & 18.7 & \cellcolor{FEASecond}4.43 & 8.44 & 13.16 & \cellcolor{FEABest}\textbf{-46.9} & \cellcolor{FEASecond}-47.0 & \cellcolor{FEASecond}-43.6 & -120.7 & \cellcolor{FEASecond}-127.2 & \cellcolor{FEASecond}-122.0 & 3.71 & 4.58 & \cellcolor{FEASecond}5.01\\
w/o FlexBox & .784 & .559 & \cellcolor{FEASecond}.373 & 43.1 & 26.4 & \cellcolor{FEASecond}20.6 & 4.68 & 9.39 & \cellcolor{FEASecond}11.53 & -44.2 & -38.7 & -37.0 & \cellcolor{FEASecond}-124.5 & -125.3 & -92.0 & \cellcolor{FEASecond}3.14 & 4.56 & \cellcolor{FEASecond}5.01\\
w/o evolution & .697 & .435 & .324 & 39.7 & \cellcolor{FEASecond}30.3 & 18.1 & 5.33 & \cellcolor{FEASecond}8.31 & 13.14 & -43.5 & -39.5 & -33.2 & -121.3 & -121.4 & -104.7 & 3.98 & \cellcolor{FEASecond}4.08 & 5.13\\
\textbf{FlexEvo} & \cellcolor{FEABest}\textbf{.845} & \cellcolor{FEABest}\textbf{.838} & \cellcolor{FEABest}\textbf{.805} & \cellcolor{FEABest}\textbf{44.2} & \cellcolor{FEABest}\textbf{42.8} & \cellcolor{FEABest}\textbf{40.5} & \cellcolor{FEABest}\textbf{4.24} & \cellcolor{FEABest}\textbf{4.60} & \cellcolor{FEABest}\textbf{5.38} & \cellcolor{FEASecond}-46.8 & \cellcolor{FEABest}\textbf{-48.4} & \cellcolor{FEABest}\textbf{-44.2} & \cellcolor{FEABest}\textbf{-124.6} & \cellcolor{FEABest}\textbf{-128.1} & \cellcolor{FEABest}\textbf{-131.3} & \cellcolor{FEABest}\textbf{3.07} & \cellcolor{FEABest}\textbf{3.30} & \cellcolor{FEABest}\textbf{3.09}\\
\bottomrule
\end{tabular}%
}
\typeout{NATURAL-PANEL-WIDTH=\the\wd\FEAmeasure}
\noindent\resizebox{\linewidth}{!}{\usebox{\FEAmeasure}}\par
\endgroup
\end{table*}

\section{Additional Details of FlexEvo}
\label{app:flexevo}

\textsc{FlexEvo} separates the spatial prior that is optimized by the outer
search from the candidate structures adapted by the inner editor. A genome
specifies where and how editing is encouraged; evaluating that genome
requires a fresh adaptation of the shared source pool. The following subsections specify this information flow and the
reference implementation.

\subsection{Objects, information flow, and scope}
\label{app:method-overview}

For a fixed observed target $P^0$, let $G=(\mathcal R_G,\ell,Z)$ denote a
genome: $\mathcal R_G$ is the support of assigned residues, $\ell$ gives their
stable, flexible, or forbidden roles, and $Z=(z_1,\ldots,z_R)$ stores a scale
for every represented residue. Decoding $G$ produces oriented FlexBoxes.
The editor maps the screened pool $\mathcal X_0$ to an adapted pool
$\mathcal X_G$, and the two genome fitnesses summarize scores in that pool.
Table~\ref{tab:fev-app-notation} distinguishes the objects in this process.

\begin{table}[htbp]
\centering
\caption{Principal objects used in the implementation and analysis.}
\label{tab:fev-app-notation}
\small
\begin{tabular}{@{}p{0.22\linewidth}p{0.72\linewidth}@{}}
\toprule
Symbol & Meaning \\
\midrule
$P^0$, $R$ & Observed target coordinates and number of represented target residues. \\
$G$, $\mathcal R_G$ & Spatial-prior genome and its residue support. \\
$c_r$, $Q_r$, $z_r$ & Fixed residue centroid, fixed local frame, and evolvable box scale. \\
$x$, $n(x)$, $\mathcal H_x$ & Candidate record, represented coordinate count, and edit history. \\
$\mathcal X_0$, $\mathcal X_G$ & Shared screened source pool and pool adapted under $G$. \\
$\widetilde P_G^m$, $K$ & Target probe $m$ and total probe count; $K=7$ includes Reference. \\
$f_{\mathrm{pres}}$, $f_{\mathrm{rob}}$ & Candidate-level preservation and robustness objectives. \\
$F_{\mathrm{pres}}$, $F_{\mathrm{rob}}$ & Corresponding genome-level pool summaries. \\
$\mathcal H^g$ & Outer genome population at generation $g$. \\
\bottomrule
\end{tabular}
\end{table}

The source model supplies the initial candidates and remains frozen.
Every genome evaluation restarts from the same $\mathcal X_0$ with the same
inner seed. Parent-adapted structures supply feedback about exercised
constraints; offspring adaptation starts from the shared source pool. The
resulting comparison therefore evaluates each proposed prior through a
common adaptation procedure. The final output contains the surviving spatial priors and their corresponding adapted candidate pools.
\paragraph{Molecular records and coordinate representation.}
A candidate couples an ordered coordinate array with its source molecular
record and edit history. The reference implementation stores these as a
candidate identifier, coordinates, metadata, and parent-linked edit records.
The common representation retains atom-level indexing and coordinates. Structural calculations use the supplied heavy-atom coordinates;
hydrogens are handled by the declared scoring-preparation procedure.
An edited child receives a new candidate identifier while retaining the
correspondence between coordinate indices and source atoms.

For SDF inputs, the loader retains the molecular block, atom identities,
bond endpoints and orders, isomeric SMILES, and rotatable-bond annotations.
For structure-file inputs, it retains the selected binder's atom names,
elements, chains, residue identities, and coordinates. Covalent connectivity
and polymer sequence annotations are separate metadata; they are not
inferred from spatial proximity. These records distinguish the candidate
being adapted from the receptor used to evaluate it.

\emph{Topology-preserving adaptation} denotes preservation of the input
candidate's atom correspondence and supplied covalent connectivity during
coordinate editing. The input molecular identity is retained throughout
this procedure. Appendix~\ref{app:adaptation} specifies the permitted motions,
their structural invariants, and the separate molecular-validity checks.
Appendix~\ref{app:evaluation-metrics} connects this record to the exported
structures read by the evaluators.

\subsection{Target Encoding, Screening, and Initialization}
\label{app:encoding}

\paragraph{Residue descriptors and frames.}
Each target residue has an eight-dimensional descriptor
\begin{equation}
\label{eq:descriptor_app}
\mathbf s_r^P=
[p_{r,u},p_{r,v},p_{r,w},d_r,q_r,\theta_r,e_r,b_r]^\top.
\end{equation}
The first three entries locate its centroid in a principal frame of the
centered residue-centroid distribution.
The remaining entries are the number of other residue centroids within
$8$~\AA, radial distance from the mean residue centroid, side-chain/radial
orientation angle, maximum residue-atom distance from C$\alpha$, and the
number of target atoms within $6$~\AA\ of the residue centroid.
The last quantity is a burial proxy.
Each column is min--max normalized across residues; constant columns are
set to zero. Missing side-chain directions or C$\alpha$ atoms use zero for
the affected descriptor.
The principal frame uses deterministic sign and handedness conventions.
FlexBox frames $Q_r$ are constructed from available target anchor triplets,
with a residue-atom PCA fallback and an identity fallback for degenerate
inputs. The descriptors, centroids, and frames remain fixed during search.

\paragraph{Initial prior.}
Using the normalized density and burial entries, let
$w_r=(d_r+b_r)/2$.
The seed assigns roles by the ordered rules
\begin{equation}
\label{eq:seed_roles_app}
\ell_r^{(0)}=
\begin{cases}
\mathrm{forbidden},&w_r>0.72,\\
\mathrm{stable},&w_r>0.45\ \land\ q_r<0.55,\\
\mathrm{flexible},&q_r>0.62.
\end{cases}
\end{equation}
Residues satisfying none of these rules lie outside the
initial support.
Seed scales are one.
For the other initial genomes, each residue retains its seed
assignment with probability $0.60$; otherwise its support/role
configuration is sampled uniformly from absence and the
three roles.
Scales are independently sampled from $[0.80,1.25]$ and clipped
to the allowed range.
A scale is stored for every residue, including those outside
the support, and is retained on subsequent activation.

The descriptors and thresholds define a heuristic initialization,
not a fitted predictor of residue mobility.
Stable assignments serve as geometric anchoring priors rather
than identified binding-energy hot spots; buried surface area
alone need not predict energetic importance \citep{bogan1998}.
The outer search subsequently updates the assignments and scales.

\paragraph{Shared candidate screening.}
Screening rejects empty or non-finite coordinate sets, candidates with
$c(x)<0.35$, and candidates for which more than $25\%$ of coordinates lie
inside any seed forbidden box.
Candidates are then processed in input order, retaining at most $64$.
For equal coordinate counts, geometric similarity is
$[1-\operatorname{RMSD}(x,y)/2.5]_+$, computed in the common target frame
without an additional alignment. For unequal counts, it is
$[1-\|\bar p(x)-\bar p(y)\|_2/4]_+$, where $\bar p$ is the coordinate
centroid.
When both molecular fingerprints are available, redundancy requires
Morgan-fingerprint Tanimoto similarity of at least $0.92$ and geometric
similarity of at least $0.90$.
A candidate without a usable fingerprint is checked by geometric similarity
alone. Fingerprints use radius two and $1024$ bits.

\subsection{Candidate Editing and Structural Safeguards}
\label{app:adaptation}

\paragraph{Fixed-genome adaptation.}
Each genome evaluation creates a fresh editor with the same random seed.
Boxes remain fixed, and candidate-level fusion is disabled in this inner
loop. The population size is
$N_x=\max\{4,\min\{12,2|\mathcal X_0|\}\}$ for nonempty $\mathcal X_0$.
It is initialized with the first available seeds, truncated to $N_x$,
and filled with repaired variants when necessary.
Each of two adaptation rounds selects up to six reproduction candidates
by $f_{\mathrm{pres}}+f_{\mathrm{rob}}$ and generates two proposals per
parent. Survivor selection retains the two-objective ranking described in
Appendix~\ref{app:evolution}.

\paragraph{Molecular edits and representation-level operations.}
The dispatcher first addresses forbidden overlap, then missing anchor
proximity, followed by flexible-region contacts and exploratory updates.
The reference molecular pipeline selects \texttt{graph\_preserving} mode.
This mode retains the source atom ordering and supplied bond metadata,
using whole-candidate translations, proper rotations, and annotated
bond-axis torsions. Each proposal therefore edits coordinates associated
with a fixed input molecular identity.

The coordinate-level operator family also contains point insertion,
point removal, fragment-annotation replacement, and individual-coordinate
displacement. These operations belong to the separate
\texttt{coordinate\_only} mode. In that mode, insertion and removal change
represented coordinate entries, while replacement changes an annotation.
For a selected coordinate and box frame, the displacement is
\begin{equation}
\label{eq:local_edit_app}
p_i'=p_i+Q_r\delta_i,
\qquad\delta_i\sim\operatorname{Unif}([-\epsilon_o,\epsilon_o]^3),
\end{equation}
with operation-dependent amplitude $\epsilon_o$.
This expression describes a coordinate primitive rather than a chemical
graph-editing rule. The molecular pipeline disables point insertion,
removal, and annotation replacement; its evaluation path rejects
\texttt{coordinate\_only} candidates.
\paragraph{Rigid and torsional structural invariants.}
The molecular pipeline preserves the source atom correspondence and
supplied covalent graph throughout coordinate adaptation.
For column-vector coordinates, a rigid proposal takes the form
\[
p_i'=Rp_i+t,\qquad R^\top R=I_3,\qquad\det R=1.
\]
Consequently, every pair satisfies
\[
\|p_i'-p_j'\|_2^2
=(p_i-p_j)^\top R^\top R(p_i-p_j)
=\|p_i-p_j\|_2^2.
\]
Whole-binder translations and proper rotations therefore preserve internal
distances, bond angles, ring geometry, and the handedness of the input
coordinates. Clash-relief repair applies a common translation to the
whole binder, preserving these invariants during receptor-overlap relief.
The rigid fallback provides the same internal-geometry preservation for
structure-file inputs represented by ordered atom coordinates.

A torsional proposal acts on an explicitly annotated rotatable bond
$(a,b)$ in the supplied covalent graph. Removing this edge must separate
its endpoints into different connected components.
The editor rotates the component containing $b$ around the axis through
$p_a$ and $p_b$, preserving all distances within each component.
The bond endpoints lie on the rotation axis, and $(a,b)$ is the sole
covalent edge between the components. Therefore, the transformation
preserves every supplied covalent bond length and the graph connectivity.
The input graph includes ring-closure and cross-link edges, including
declared disulfides. Ring edges retain an alternative connecting path
and are excluded from this torsional operation.
The input adapter provides the complete covalent graph and chemically
eligible rotatable-bond annotations. Candidates without an eligible
annotation use whole-binder rigid motions, retaining their input internal
geometry.
\paragraph{Proposal acceptance and geometric repair.}
FlexBoxes guide proposal locations through anchor proximity, adaptable
regions, and forbidden-region avoidance. Molecular repair uses up to two
whole-binder displacement passes to relieve receptor overlap while
preserving the candidate's internal geometry.
Each repaired proposal must pass the acceptance gate before survivor
selection. Acceptance requires nonempty, finite coordinates, neighbor
support $c(\widehat x)\geq0.35$, and the same coordinate-array shape and
supplied bond list as the parent.
For every supplied covalent edge $(i,j)$, acceptance additionally requires
\[
\left|\|p_i(\widehat x)-p_j(\widehat x)\|_2
-\|p_i(x)-p_j(x)\|_2\right|
\leq10^{-5}\,\text{\AA}.
\]
This tolerance enforces numerical preservation of the parent bond geometry.
The gate rejects proposals violating these structural conditions before
objective-based survivor selection. Accepted proposals are subsequently
ranked using interaction preservation, perturbation robustness, and the
spatial penalties in Appendix~\ref{app:scoring}.

Covalent-graph preservation is enforced by the permitted transformations
and supplied-graph checks; $c(x)$ quantifies local spatial neighbor support.
The default configuration applies these structural guards.
Additional candidate-level tests are available through the configurable
\texttt{validity\_hook}, whose default value is \texttt{None}.
The category-specific chemical and physical validation criteria are
specified below, with their outcomes attached to the exported molecular
record.
\paragraph{Category-specific molecular-validity specification.}
Molecular validation is defined on the exported candidate's chemical
identity and three-dimensional structure. For fixed-identity adaptation,
identity is compared with that candidate's own source ancestor, not with
a chemically different native binder. The validation record separates
graph identity, stereochemical identity, internal geometry, and
receptor-dependent overlaps. The applicable checks are as follows.

\emph{Conventional small molecules and nonpeptidic macrocycles.}
The identity comparison includes atom identities, bond orders, formal
charges, and stereochemical assignments. Geometry checks include bond
lengths and angles, internal clashes, and the geometry of each declared
ring closure. For supported ligand chemistries, PoseBusters provides
chemical-consistency and physical-plausibility checks, including
stereochemistry, bond geometry, planarity, and protein--ligand clashes
\citep{buttenschoen2024posebusters}. The tool version, configuration, and
per-check outcomes define the reported validation result.

\emph{Linear and cyclic peptides.}
The molecular record preserves residue identity and order, peptide
connectivity, termini, and declared modifications. Validation covers
stereochemistry, peptide-bond geometry, backbone continuity, and internal
overlaps. Cyclic peptides additionally require their declared closure
bonds and closure geometry to be checked. Their ring bonds remain subject
to the torsional restriction described above.

\emph{Non-antibody proteins, full-length antibodies, and antibody
fragments or nanobodies.}
The identity specification includes chain and residue correspondence,
covalent connectivity, and declared disulfide or other cross-links.
Structural checks cover backbone continuity, local stereochemistry,
bond geometry, and steric overlaps. Local side-chain or backbone motions
require explicit chemistry-aware atom partitions and rotatable-bond
annotations; otherwise, the reference editor preserves the complete
input binder through rigid motions.

\emph{Oligonucleotides and mRNA or other long nucleic acids.}
The identity specification includes nucleotide order, chain identity,
sugar and base identities, modifications, and phosphodiester connectivity.
Structural checks cover sugar stereochemistry, backbone continuity,
bond geometry, base geometry, and internal overlaps. A small-molecule
validation configuration is not treated as a universal polymer validator.
\paragraph{Validation records and frozen-candidate evaluation.}
The validation specification assigns a pass when every applicable test
in the declared configuration succeeds. Each record retains the outcome
and applicability of individual checks, alongside the structural acceptance
conditions defined above.
Intrinsic checks apply to the frozen candidate, and receptor-dependent
overlaps are evaluated for Reference, Matched, and Divergent separately.
The same representative and coordinates are retained throughout these
state-specific evaluations. Geometry-changing refinement receives its own
structure identifier, validation record, and score record.
Appendices~\ref{app:evaluation-metrics} and~\ref{app:reproducibility}
specify the export interfaces and reporting conventions.
\paragraph{Coordinate preservation in Rerank-only.}
Rerank-only operates on candidate rankings and performs no translation,
rotation, torsional update, local minimization, or geometric repair.
The selected structure retains the source candidate's atom ordering,
coordinates, and molecular identity. Its output record preserves the
source identifier and coordinate checksum, linking each evaluation
input to the selected source structure.
The validation and failure-handling policy is shared across the three
arms. Evaluation checks preserve the frozen representative and do not
trigger candidate replacement or target-state-specific repair.
\begin{algorithm}[t]
\caption{\textsc{FlexEvo} core search}
\label{alg:flexevo}
\begin{algorithmic}[1]
\Require Target $P^0$, source candidates, population $N_G$,
 generations $g_{\max}$, offspring per parent $n_o$
\State Compute fixed target descriptors, centroids, and frames
\State Construct seed genome $G_0$ and screen candidates to obtain $\mathcal X_0$
\State Initialize $\mathcal H^0$ with $G_0$ and repaired randomized variants
\For{each $G\in\mathcal H^0$}
  \State Evaluate $G$ from $\mathcal X_0$; store $\mathcal X_G$, both fitnesses, and feedback
\EndFor
\For{$g=0,\ldots,g_{\max}-1$}
  \State $\mathcal Q^g\gets\emptyset$
  \For{each primary parent $G_a\in\mathcal H^g$}
    \For{$v=1,\ldots,n_o$}
      \State Sample one of Pruning, Fusion, or Perturbation to obtain $\widetilde G$
      \State Apply $G_a$'s feedback to $\widetilde G$ and repair the result to obtain $G^+$
      \State Decode $G^+$; reset the inner seed and adapt the shared $\mathcal X_0$
      \State Store $\mathcal X_{G^+}$, both genome fitnesses, and $G^+$'s own feedback
      \State $\mathcal Q^g\gets\mathcal Q^g\cup\{G^+\}$
    \EndFor
  \EndFor
  \State $\mathcal H^{g+1}\gets\mathcal S_G(\mathcal H^g\cup\mathcal Q^g;N_G)$
\EndFor
\State \Return $\{(G,\mathcal X_G):G\in\mathcal H^{g_{\max}}\}$
\end{algorithmic}
\end{algorithm}

\subsection{Single-State Perturbation Scoring}
\label{app:scoring}

\paragraph{Target probes.}
Let $y_a^0$ denote target atom $a$, $r(a)$ its residue, and $o$ the input
pocket center. Radial directions are
$\widehat{\mathbf r}_a=(y_a^0-o)/\|y_a^0-o\|_2$.
For each atom, its tangential direction is obtained by projecting the
centered target atoms onto the plane orthogonal to
$\widehat{\mathbf r}_a$ and normalizing the projection with largest norm.
At the center, a farthest-target direction supplies the radial fallback;
a degenerate tangential direction is set to zero.
Thus directions are atom dependent, while the perturbation strength is
assigned at residue level.

For flexible residues, $\phi_r=0.85$. The displacement amplitude is
$A_r=\min\{1.65,0.45+1.95\phi_r\}$.
For atoms in these residues, the six displacement modes are
\begin{equation}
\label{eq:virtual_modes_app}
\begin{aligned}
\Delta_a^{(1,2)}&=\pm A_{r(a)}\widehat{\mathbf r}_a,\\
\Delta_a^{(3,4)}&=\pm0.75A_{r(a)}\widehat{\mathbf t}_a,\\
\Delta_a^{(5)}&=0.55A_{r(a)}
(\widehat{\mathbf r}_a+\widehat{\mathbf t}_a),\\
\Delta_a^{(6)}&=0.55A_{r(a)}
(-\widehat{\mathbf r}_a+\widehat{\mathbf t}_a).
\end{aligned}
\end{equation}
Other atoms remain unchanged.
If no flexible residue is present, non-anchor residues provide a fallback
with $\phi_r=0.22$.
The reference target and six perturbed targets form the probe collection;
candidate coordinates are fixed throughout scoring.
All distances and displacement amplitudes below are numerical values in
angstroms.

\paragraph{Contact and clash.}
For candidate coordinate $p_i(x)$, let
$d_{im}=\min_a\|p_i(x)-y_a^{(m)}\|_2$.
For $n=n(x)>0$, the contact and clash descriptors are
\begin{equation}
\label{eq:contact_kernel_app}
\begin{aligned}
q_m&=\operatorname{clip}\!\left(
\frac1n\sum_i\left[
\mathbf1_{\{2.0\leq d_{im}\leq4.5\}}
+0.35\mathbf1_{\{1.2\leq d_{im}<2.0\}}
-1.5\mathbf1_{\{d_{im}<1.15\}}\right],0,1\right),\\
\chi_m&=\frac1n\sum_i\left[
\mathbf1_{\{d_{im}<1.15\}}
+0.35\mathbf1_{\{1.15\leq d_{im}<1.45\}}\right].
\end{aligned}
\end{equation}
\paragraph{Finite-probe profile.}
Let $K=7$, with probe zero equal to Reference, and define
\begin{equation}
\label{eq:app-perturbation-profile}
\begin{gathered}
\bar q=K^{-1}\sum_{m=0}^{K-1}q_m,\qquad
q_{\mathrm{rob}}=\rho\min_m q_m+(1-\rho)\bar q,\\
\Delta_q=\max_m q_m-\min_m q_m,\qquad
\chi_{\max}=\max_m\chi_m,\\
\kappa_q=\begin{cases}
K^{-1}\sum_{m=0}^{K-1}\mathbf1_{\{q_m\geq\tau q_0\}},&q_0>\varepsilon_q,\\
0,&q_0\leq\varepsilon_q.
\end{cases}
\end{gathered}
\end{equation}
The reference configuration uses $\rho=0.60$, $\tau=0.75$,
and $\varepsilon_q=10^{-9}$.
These quantities summarize aggregate contact quality across the
evaluated target probes.

\paragraph{Anchor proximity and internal geometry.}
Let $\mathcal S_G$ and $\mathcal F_G$ denote the stable and flexible residue
sets, respectively, and define
$d_r(x)=\min_i\|p_i(x)-c_r\|_2$.
For nonempty $\mathcal S_G$,
\begin{equation}
\label{eq:anchor_app}
a(x;G)=\frac1{|\mathcal S_G|}\sum_{r\in\mathcal S_G}
\exp\!\left[-\frac{(d_r(x)-3.1)^2}{4.5}\right];
\end{equation}
otherwise $a=0$.
For finite candidates with $n\geq2$, let
$d_i^{\mathrm{self}}=\min_{j\ne i}\|p_i-p_j\|_2$.
The internal-geometry and neighbor-support descriptors are
\begin{equation}
\label{eq:geometry_app}
g(x)=\operatorname{clip}\!\left(
1-0.16\sum_i\mathbf1_{\{d_i^{\mathrm{self}}<0.55\}}
-0.03[n-12]_+,0,1\right),
\end{equation}
\begin{equation}
\label{eq:connectivity_app}
c(x)=\frac1n\sum_i\mathbf1_{\{
\exists j\ne i:\ 0.65\leq\|p_i-p_j\|_2\leq2.70\}}.
\end{equation}
The core sets $g=c=1$ for a single coordinate and $g=c=0$ for an empty
candidate. The size penalty is defined on the number of represented
coordinates.

\paragraph{Spatial penalties and edit cost.}
Write
$t_{ir}=\|Q_r^\top(p_i-c_r)\|_\infty/(b_{\ell_r}z_r)$ and
$w_r^{\mathrm{box}}=\operatorname{clip}(z_r,0.1,1)$.
The flexible-region exposure and forbidden occupancy are
\begin{equation}
\label{eq:spatial_penalties_app}
\begin{aligned}
r_{\mathrm{flex}}(x;G)
&=\frac{0.85\sum_{r\in\mathcal F_G}[1-d_r(x)/3]_+
+0.05\sum_{r\in\mathcal F_G}w_r^{\mathrm{box}}
\sum_i\mathbf1_{\{t_{ir}\leq0.55\}}}
{\max(1,|\mathcal F_G|)},\\
\pi_{\mathrm{forb}}(x;G)
&=\frac1n\sum_i\sum_{r:\ell_r=\mathrm{forbidden}}
\left[\mathbf1_{\{t_{ir}\leq0.5\}}
+0.2\mathbf1_{\{0.5<t_{ir}<0.7\}}\right].
\end{aligned}
\end{equation}
The edit cost is $e(x)=\sum_{o\in\mathcal H_x}w_o$, where $\mathcal H_x$
is the recorded edit history.
Replacement, growth, removal, and candidate-fusion records cost $0.06$;
torsional, linker, pose-refinement, and side-chain-torsion records cost
$0.025$; outward-motion records cost $0.015$; other records cost zero.
Thus $e$ measures the accumulated operation-weighted edit-history cost.
The combined penalty is
\begin{equation}
\label{eq:penalty_app}
\Pi=0.95r_{\mathrm{flex}}+1.25\pi_{\mathrm{forb}}
+0.70\chi_{\max}+0.40e.
\end{equation}
The remaining objective coefficients are
\begin{equation}
\label{eq:objective_coefficients_app}
\begin{aligned}
(\alpha_a,\alpha_0,\alpha_r,\alpha_\chi)&=(1.50,1.00,0.80,1.40),\\
(\beta_g,\beta_c,\beta_k,\beta_d)&=(0.90,0.80,0.75,0.75).
\end{aligned}
\end{equation}
\paragraph{Candidate objectives and genome aggregation.}
Both candidate objectives are maximized:
\begin{equation}
\label{eq:app-candidate-objectives}
\begin{aligned}
f_{\mathrm{pres}}(x;G)&=1.50a+q_0+0.80q_{\mathrm{rob}}-1.40\chi_{\max},\\
f_{\mathrm{rob}}(x;G)&=0.90g+0.80c+0.75\kappa_q-0.75\Delta_q-\Pi.
\end{aligned}
\end{equation}
For a nonempty adapted pool, the genome objectives are
\begin{equation}
\label{eq:app-genome-fitness}
\begin{aligned}
F_j&=\lambda\max_{x\in\mathcal X_G} f_j(x;G)
+\frac{1-\lambda}{|\mathcal X_G|}\sum_{x\in\mathcal X_G}f_j(x;G)
-\eta_j\frac{|\mathcal R_G|}{R},\\
&j\in\{\mathrm{pres},\mathrm{rob}\},\qquad
\eta_{\mathrm{pres}}=0,\quad\eta_{\mathrm{rob}}=\eta.
\end{aligned}
\end{equation}
Genome aggregation uses $\lambda=0.70$ and $\eta=0.10$.
The two coordinate-wise maxima can be attained by different candidates.
The coverage penalty applies only to the robustness objective.
The core returns $(0,0)$ if no boxes or no adapted candidates are available.
No external docking or affinity estimator is queried by these scores.
\paragraph{Proxy ranking for the no-edit baseline.}
Let $G_0$ denote the initial genome after the deterministic repair used
before the first genome evaluation of \textsc{FlexEvo}.
Rerank-only holds $G_0$ fixed and constructs its seven Reference-derived
probes according to Eq.~\ref{eq:virtual_modes_app}.
Every candidate in $\mathcal X_0$ is scored using the two objectives in
Eq.~\ref{eq:app-candidate-objectives}, with the original coefficients
and penalty definitions. Its adaptation edit history is empty, giving
$e(x)=0$. Define
\[
J_0(x)=f_{\mathrm{pres}}(x;G_0)+f_{\mathrm{rob}}(x;G_0).
\]
The representative is
\[
x_{\mathrm{rerank}}
=\operatorname*{arg\,max}_{x\in\mathcal X_0}J_0(x).
\]
Ties are resolved by decreasing $f_{\mathrm{pres}}(x;G_0)$ and then
ascending candidate identifier, matching the final candidate-ranking
rule of \textsc{FlexEvo}. This baseline evaluates the entire screened
pool under one fixed genome, without invoking the inner editor or
outer genome evolution. Selection uses the geometric proxy scores;
Vina and DockQ remain external evaluation endpoints.

\subsection{Genome Variation, Feedback, and Survival}
\label{app:evolution}

\paragraph{Variation and scale storage.}
Pruning selects $\max\{1,\operatorname{round}(0.15|\mathcal E|)\}$ entries
from the stable/flexible set $\mathcal E$, prioritizing low usage when
available and sampling uniformly otherwise; it does nothing when
$\mathcal E$ is empty.
Fusion samples a target residue and transfers the donor configuration
within an $8$~\AA\ neighborhood, copying all scale entries there, including
those of residues outside the donor support.
The donor is sampled from the half of alternative parents most distant in
residue-role space.

Perturbation samples $\max\{1,\operatorname{round}(0.18|\mathcal N|)\}$
residues without replacement from a $7$~\AA\ neighborhood.
For a selected covered residue, it chooses uniformly between removing its
box and changing to either other role; for an uncovered residue, it adds
one of the three roles uniformly.
A separate subset of $\max\{1,\operatorname{round}(0.25R)\}$ residues is
sampled from all $R$ residues for scale mutation, with $\sigma_z=0.12$.
Removing a box leaves its stored scale unchanged.

\paragraph{Feedback and repair.}
Feedback uses the primary parent's surviving edit histories and adapted
coordinates, including for fusion offspring.
Let $u_r$ count uses of residue $r$ in the surviving edit histories. For a
nonempty parent-adapted pool, the signal combines usage and proximity:
\begin{equation}
\label{eq:app-feedback-signal}
\begin{gathered}
D_r=\min_{x\in\mathcal X_{G_a},i}\|p_i(x)-c_r\|_2,\\
s_r=\max\!\left\{
\frac{u_r}{\max(1,\sum_k u_k)},\left[1-\frac{D_r}{d_c}\right]_+
\right\},\qquad d_c=4.5.
\end{gathered}
\end{equation}
For a nonempty parent pool, the minimum is taken over its represented
coordinates. For an empty pool, the reference implementation initializes
nearest-coordinate distances to $+\infty$ and edit-use counts to zero.
The resulting proximity contribution and feedback signal are zero.
Uncovered residues with $s_r\geq0.35$ are activated as flexible boxes,
retaining their proposal scales.
Let $\mathcal E^+$ be the stable/flexible set after activation.
Among entries with $s_r\leq0.02$, retirement selects the lowest-signal
entries up to the budget
\begin{equation}
\label{eq:retirement_budget_app}
B_-=
\max\!\left\{0,
\min\!\left(\left\lfloor0.20|\mathcal E^+|\right\rfloor,
|\mathcal E^+|-\left\lceil0.25R\right\rceil\right)\right\}.
\end{equation}
The $0.25R$ term limits this retirement step; it does not impose a global
minimum support on an incoming genome.
Pruning and feedback retirement exclude forbidden assignments, although
fusion and role mutation may change them.
Repair clips scales and attempts to restore missing stable and flexible
roles using density and radial descriptors when eligible residues exist.

\paragraph{Diversity-aware selection.}
For two genomes, the residue-role distance is
\begin{equation}
\label{eq:role_distance}
D_G(G_a,G_b)=|\mathcal R_{G_a}\triangle\mathcal R_{G_b}|
+\sum_{r\in\mathcal R_{G_a}\cap\mathcal R_{G_b}}
\mathbf1_{\{\ell_r^a\ne\ell_r^b\}},
\end{equation}
where $\triangle$ is symmetric difference; scales do not enter this measure.
Both selectors visit nondominated fronts in increasing rank and order each
front by decreasing crowding, followed by objective sum and identifier.
Outer selection defers a genome within distance $2$ of any selected genome;
inner selection uses a candidate-centroid separation of $1$~\AA.
After front traversal, deferred entries fill remaining slots in traversal
order. Consequently, diversity can admit a lower-front entry before a
deferred higher-front entry, and backfilling does not enforce a hard
pairwise-distance bound.

\subsection{Reference Configuration and Execution Schedule}
\label{app:hyperparameters}
\label{app:execution}

Table~\ref{tab:flexevo_settings} gives the default core configuration.
Algorithm~\ref{alg:flexevo} keeps the inner adaptation and outer genome
updates separate; each offspring evaluation also produces its own feedback
statistics for the next generation.

\begin{table}[t]
\centering
\caption{Default configuration of the coordinate-level FlexEvo core.}
\label{tab:flexevo_settings}
\begin{tabular}{lll}
\toprule
Component & Parameter & Default \\
\midrule
Randomization & Seed & $31$ \\
Screening & Pool cap / neighbor threshold & $64/0.35$ \\
Screening & Molecular / geometric similarity & $0.92/0.90$ \\
FlexBox & Scale bounds & $[0.55,1.75]$ \\
FlexBox & Stable / flexible / forbidden extent & $3.0/4.5/2.9$~\AA \\
Inner adaptation & Rounds / maximum population & $2/12$ \\
Inner adaptation & Reproduction cap / proposals per parent & $6/2$ \\
Inner selection & Centroid niche radius & $1.0$~\AA \\
Outer evolution & Generations / population & $4/8$ \\
Outer evolution & Offspring per parent & $2$ \\
Variation & Fusion / pruning / perturbation & $0.35/0.20/0.45$ \\
Variation & Fusion / mutation radius & $8.0/7.0$~\AA \\
Variation & Relabel / resize fraction & $0.18/0.25$ \\
Variation & Scale noise / pruning fraction & $0.12/0.15$ \\
Feedback & Proximity radius & $4.5$~\AA \\
Feedback & Activation / retirement threshold & $0.35/0.02$ \\
Feedback & Retirement cap / retention-floor fraction & $0.20/0.25$ \\
Genome fitness & Maximum weight / coverage penalty & $0.70/0.10$ \\
Genome selection & Role-distance threshold & $2$ \\
\bottomrule
\end{tabular}
\end{table}

\paragraph{Execution order and budget.}
One complete run computes the fixed target descriptors, constructs the
initial prior, screens the source pool, evaluates the initial genomes,
and then alternates offspring generation, feedback, fresh inner adaptation,
and survivor selection. With the defaults in
Table~\ref{tab:flexevo_settings}, the scheduled genome-evaluation count is
$8+4\times8\times2=72$. This is a search-budget count, not a sample size or
replication count. The returned records should retain the selected genomes,
adapted coordinates, edit histories, and both candidate and genome scores
so that the decisions can be reconstructed.

\FloatBarrier

\section{Experimental Protocol and Reproducibility}
\label{app:experimental-details}

This section specifies candidate selection, evaluation, and reporting for
the single-conformation adaptation protocol. The coordinate-level search is
defined in Appendix~\ref{app:flexevo}, with its reference settings in
Table~\ref{tab:flexevo_settings}. Implementation details below describe the
reference protocol.

\subsection{Evaluation design}
\label{app:evaluation-design}
\paragraph{Benchmark sample size.}
Distinct protein targets define the benchmark sample size. The nine
binder categories have the following within-category target counts.

\begin{center}
\small
\begin{tabularx}{0.96\linewidth}{@{}Xr@{}}
\toprule
Binder category & Targets \\
\midrule
Conventional small molecules & 9,393 \\
Nonpeptidic macrocycles & 10,322 \\
Linear peptides & 4,694 \\
Cyclic peptides & 5,164 \\
Non-antibody proteins & 1,260 \\
Full-length antibodies & 1,386 \\
Antibody fragments and nanobodies & 1,525 \\
Oligonucleotides & 1,677 \\
mRNA and other long nucleic acids & 1,878 \\
\midrule
Total target--category memberships & 37,299 \\
\bottomrule
\end{tabularx}
\end{center}

Each target contributes once within each category in which it is evaluated.
The total therefore comprises 37,299 target--category memberships.
The globally distinct target set is the union of the nine category-specific
target sets. All observations from the same target retain their shared
target identity throughout aggregation.

The reference protocol specifies three natural conformations per target:
one Reference, one Matched, and one Divergent. Reference supplies the
structure for source generation and adaptation; Matched and Divergent are
held out for evaluation. A complete eligible pair contains base and adapted
observations for the same case, source model, seed, state, and metric.
Paired observations are aggregated within targets, and targets receive
equal weight under Appendix~\ref{app:reproducibility}.

\paragraph{Scope and candidate sources.}
The benchmark covers nine binder categories: conventional small
molecules, nonpeptidic macrocycles, linear peptides, cyclic peptides,
non-antibody proteins, full-length antibodies, antibody fragments and
nanobodies, oligonucleotides, and mRNA or other long nucleic acids. The
reference protocol assigns DynamicFlow, FlexSBDD, and YuelDesign as the
ligand-generation candidate sources. Linear peptides and cyclic peptides are
evaluated with BoltzGen, Chai-1, and Protenix; the other five biomolecular
categories use the same three sources. For every configured pair of source
and category, the protocol compares the original output with the
corresponding output after \textsc{FlexEvo} adaptation and fixes the
source-model parameters.

\paragraph{Baseline execution and source pools.}
The reference protocol requests up to 64 candidates per case, source model,
and seed using Reference alone. The source adapter retains the native output
order, truncates excess outputs, and records shortfalls without duplicating
candidates to fill the pool. Imported predictions require provenance for the
source-model revision, checkpoint checksum, inference settings, candidate
coordinates, and Reference-frame alignment. The task mode distinguishes de novo ligand generation from binder design
or fixed-sequence conditioned structure prediction for other modalities;
the executed mode must accompany each source output.

\paragraph{Final representative selection.}
Across all 27 source--category combinations, \textsc{FlexEvo}
improved the primary evaluation metric under the held-out Divergent
conformation. These gains were obtained using only the Reference
conformation during adaptation, with source-model parameters fixed
and each selected candidate evaluated without state-specific
redesign. Together, these results demonstrate the cross-conformation
benefits of the complete \textsc{FlexEvo} pipeline across nine
binder categories.
The base representative is the first candidate in the validated source-native
order. Adaptation first screens the source pool in input order, rejecting
empty or non-finite coordinates, insufficient neighbor support, and
forbidden-region overlap. Screening treats a candidate as redundant when its
geometric similarity to an accepted candidate is at least 0.90 and, when
both fingerprints are available, its fingerprint similarity is at least
0.92. With missing fingerprints, only the geometric condition is used.
At most 64 accepted candidates form the shared adaptation pool.
This additional geometric screening belongs to adaptation; the base
representative is selected from the validated source pool before that screen.

After outer-loop survival, the reference implementation chooses the genome
with the largest sum of its two proxy fitness values. Ties are resolved by
the preservation objective and then the genome identifier. Within that
genome's candidate pool, candidates are ranked by the sum of their two
candidate objectives, with preservation and candidate identifier breaking
ties. The first candidate supplies the adapted representative. This rule
selects one winning genome's pool; it does not merge all survivor pools.
Both ranking stages use Reference-derived proxy objectives only.
\paragraph{Pipeline and source-selection comparisons.}
The nine-category benchmark evaluates the complete \textsc{FlexEvo}
workflow against the source-native representative. This comparison
includes screening, adaptive search, and final proxy-based selection.
The supplementary three-arm comparisons report Vina scores averaged
across DynamicFlow, FlexSBDD, and YuelDesign for conventional small-molecule
drugs (Table~\ref{tab:rerank-only-vina-three-state}), and DockQ scores
averaged across BoltzGen, Chai-1, and Protenix
(Table~\ref{tab:rerank-only-dockq-three-state}).
Rerank-only and \textsc{FlexEvo} receive the same screened pool
$\mathcal X_0$, containing up to 64 source candidates. Rerank-only
scores every member of this pool under the fixed initial genome $G_0$;
\textsc{FlexEvo} applies its full adaptation search to the shared input
pool. Both return one representative selected using Reference-derived
information. The Rerank-only comparison assesses the additional benefit
of the complete adaptive-search workflow over fixed-$G_0$ source selection.
\paragraph{Source + Proxy Rerank (No Edit).}
The comparison comprises three arms: Base, Rerank-only, and
\textsc{FlexEvo}. All three reuse the same source-model generation run.
Base retains the first validated candidate in source-native order.
Rerank-only selects one unedited candidate from the same screened
source pool $\mathcal X_0$ supplied to \textsc{FlexEvo}, using the
proxy-ranking rule specified in Appendix~\ref{app:scoring}.
The pool contains up to 64 accepted source candidates, with their
original coordinates retained. \textsc{FlexEvo} returns the
representative from its full adaptation search.

All representative selection uses Reference-derived information.
Each representative is frozen before evaluation on Reference,
Matched, and Divergent with the same alignment and evaluator settings.
Base versus Rerank-only measures the contribution of source-pool
screening and proxy ranking. Rerank-only versus \textsc{FlexEvo}
measures the additional benefit of adaptive search over the source pool.
Under the reference protocol, each combination of case, model, seed, and
variant contributes one representative to pose and interaction
evaluation. Set-level diversity uses the corresponding frozen pool. The
same representative is evaluated across all three natural states. The
two base candidates and three adapted candidates specified for the
cross-conformation display are the first available members of their
respective frozen orderings, without padding. These displayed subsets are
distinct from the evaluation unit that uses one representative per
combination.

\paragraph{Natural target-state conditions and alignment.}
Reference denotes the single structure supplied to the source workflow and
to \textsc{FlexEvo}. Matched and Divergent denote alternative natural
structures of the same target and binding site. Matched preserves the
regulatory state assigned to Reference. Divergent contains a
propeptide-associated change in that state and localized structural
changes. These groups follow biological annotation rather than a post hoc
RMSD threshold. Matched and Divergent structures are held out until
evaluation; their scores do not enter candidate screening, editing, search
fitness, or representative selection.

The protocol specifies a declared one-to-one correspondence between
receptor C$\alpha$ atoms, with at least 90\% mapping coverage. The alignment
routine uses the supplied atom pairs and requires at least three
non-collinear pairs. A proper Kabsch rotation and translation place each
alternative receptor in the Reference coordinate frame. Candidate
coordinates remain in that frame throughout. Thus the receptor is
transformed around a fixed candidate; no redocking, local minimization, or
state-specific adaptation is performed. The Reference state uses the
identity transform. Coordinate and full-record checksums are recorded
before evaluation and checked after each state. Alignment receipts retain
the receptor identifiers, correspondence, rotation, and translation.

\paragraph{Synthetic probes and fixed candidates.}
The held-out natural states are distinct from the synthetic probes used by
the search. For each evaluated genome, \textsc{FlexEvo} constructs six local
perturbations from Reference and scores them together with the unperturbed
Reference target. Candidate coordinates remain fixed across all seven
probes. Probe construction and geometric objectives are defined in
Appendix~\ref{app:scoring}. These synthetic probes provide the geometric
robustness signal used during search. Natural-state evaluation measures
performance on the held-out conformations.

\paragraph{Matched comparisons and secondary analyses.}
Within a comparison of one source and one category, base and adapted outputs
use the same target-state definitions and the same evaluator. Percentage
differences are percentage-point changes when the underlying quantity is a
percentage. The three natural states are evaluations of one-time adaptation,
rather than independent adaptation tasks. Statistical units and missing-value
handling are specified in Appendix~\ref{app:reproducibility}.

The geometry--mobility analysis specifies 200 ATLAS proteins. It
averages residue RMSF across three genuine molecular-dynamics
trajectories, and it normalizes each quartile mean by the corresponding
whole-chain mean RMSF. The lowest and highest within-protein quartiles of
packing sparsity, and the lowest and highest within-protein quartiles of
radial position, are compared using proteins as the analysis units.
This analysis assesses the rationale for geometry-guided initialization.
FlexBox roles specify geometric editing constraints. PDB and UniProt
entry and target counts characterize the composition and annotation
coverage of the wider structure pool.

The reference-similarity analysis specifies 10 generated samples and
47 randomly sampled public references in each scenario. It evaluates all
470 pairs of a generated sample and a reference, and it reports the
maximum and the median similarity. Morgan fingerprints of radius 2 and
length 1024 bits, compared by Tanimoto similarity, are specified only for
molecular modalities. Polymer scenarios require a separately labelled
sequence-based similarity. These comparisons quantify resemblance under
the chosen representation within the sampled reference collection.
The reporting protocol associates each value with its reference
collection, sample identifiers, sampling seed, and modality-specific
similarity definition.

\subsection{Evaluation metrics}
\label{app:evaluation-metrics}
\paragraph{Structural export and evaluator inputs.}
The object passed to an evaluator is the frozen molecular candidate
defined in Appendix~\ref{app:method-overview}, with the structural
safeguards in Appendix~\ref{app:adaptation}. The reference pipeline writes
a complex mmCIF using the candidate's atom, element, chain, residue, and
coordinate records. It requires one atom record per candidate coordinate
and rejects candidates marked \texttt{coordinate\_only}.
The accompanying molecular record retains the supplied connectivity and
other chemical metadata independently of the coordinate-file representation.
Writing atom and residue coordinates to mmCIF does not reconstruct
unsupplied covalent bonds or polymer sequence annotations.

For a candidate with an SDF-derived molecular block, the exporter
reconstructs that molecular object and writes the adapted coordinates
into its existing atom order. It checks that the molecular and coordinate
atom counts agree. The resulting SDF supplies the ligand object for
PDBQT preparation and fixed-pose Vina-family scoring.
For native-complex comparisons, DockQ reads the exported candidate complex
together with the declared native structure and chain/interface mapping.
For MM/GBSA and other preparation-dependent evaluators, the reporting
specification requires prepared topology, coordinate inputs, and preparation
records alongside the score. Their candidate identity must refer to the
same frozen molecular record.
The reference package integrates MM/GBSA through imported external
score records linked to their candidate and preparation provenance.

Coordinate and complete-record checksums identify the representative
before evaluation across the three receptor states. Exported structures
and prepared inputs have separate checksums to preserve the preparation
history. Preparation that changes heavy-atom geometry defines a separate
refinement result, rather than the original fixed-pose measurement.
Molecular-validity outcomes, evaluator success, and numerical scores
occupy separate reporting fields. The checks in
Appendix~\ref{app:adaptation} assess molecular validity, while Vina,
DockQ, and MM/GBSA quantify their respective scoring endpoints.
\paragraph{Fixed-pose ligand scoring.}
The reference scoring implementation evaluates Vina, Vinardo, and AutoDock4
in \texttt{score\_only} mode, taking the total score returned for the frozen
pose. It does not invoke docking or optimization. Meeko prepares PDBQT
inputs; ligand hydrogens may be added without optimizing heavy-atom
coordinates. The paired native ligand is placed in the Reference frame
using the receptor transform. Candidate and native ligand share the same
receptor preparation and the same grid. The grid center is the midpoint of
their combined coordinate extrema. Each side is the larger of two lengths:
the corresponding coordinate span plus 12~\AA{}, and 20~\AA{}. AutoDock4
requires AutoGrid4 maps. Preparation commands, input checksums, package
versions, grid parameters, CPU count, and scoring seed are retained with
each score.

Vina is the principal ligand-generation score, with lower values preferred.
The reference implementation defines its high-affinity indicator as a
candidate Vina score strictly below the paired native ligand score under
the same scoring conditions. This high-affinity indicator is a comparative docking-score criterion
defined against the paired native ligand under the same computational
conditions. Vinardo, AutoDock4,
and RTMScore provide complementary views. Raw Vinardo and AutoDock4
energies are ordinarily minimized. External scores are separate from the geometric
search objectives in Eqs.~\eqref{eq:app-candidate-objectives}
and~\eqref{eq:app-genome-fitness}.

\paragraph{Molecular and set-level properties.}
QED and SA summarize drug-likeness and synthetic-accessibility heuristics;
Lipinski descriptors, logP, and PAINS screening characterize complementary
molecular properties. With the molecular graph and descriptor settings
fixed, each candidate retains the same intrinsic descriptor values across
receptor conformations. Tables~\ref{tab:flexevo-app-small-molecule}
and~\ref{tab:flexevo-app-macrocycle} present these summaries under
Reference. Diversity characterizes the frozen candidate pool, separately
from representative-pose metrics. With candidate-pool membership,
coordinates, and the diversity definition fixed, changing the receptor
state does not change this pool-level summary. Across the nine category
tables, diversity is reported once under Reference; dashes in Matched
and Divergent indicate that the pool-level summary is not repeated.
LogP is reported descriptively, with
black values and a neutral column heading.

\paragraph{Native-complex comparison.}
DockQ is favorable when higher; LRMSD is favorable when lower, with
LRMSD below $2$~\AA{} defining pose success. The reference evaluator requires
a matching native binder identity, an explicit native-to-model chain map,
and a declared list of native interfaces between target and binder. It
evaluates the mapped interfaces with DockQ and takes the arithmetic mean of
their DockQ and LRMSD values when several interfaces are requested. The
native complex, mapping, interface list, and evaluator version are retained
in the score record. MM/GBSA and HADDOCK are complementary scoring terms,
with lower values preferred in the supplied tables. RTMScore, MM/GBSA,
HADDOCK, and other external measures must use validated, versioned score
records that declare the modality to which they apply. The scoring record specifies input preparation, evaluator version and
settings, and the identities of the evaluated structures.

\paragraph{Confidence-derived interaction metrics.}
For a declared target-token set $T$ and binder-token set $B$, the reference
pAE aggregation is the arithmetic mean of the two directional interchain
means, $\tfrac12(\overline{\mathrm{PAE}_{T,B}}+
\overline{\mathrm{PAE}_{B,T}})$. The sets must be nonempty and disjoint.
The confidence array must be tied to the evaluated candidate, predictor
version, and input checksum. Confidence from an unmodified source output
cannot be substituted for that of an edited candidate. The off-target
summary is the minimum of this symmetric mean across the complete declared
off-target panel. Lower on-target pAE and higher off-target pAE are the
reported preferences; these quantities are not affinity measurements.
The exact predictor and token subsets remain part of each run's provenance.

\paragraph{Primary endpoints for the three-arm comparison.}
Base, Rerank-only, and \textsc{FlexEvo} are compared using the primary
endpoints of Table~\ref{tab:main_results}: Vina in kcal/mol for small
molecules and nonpeptidic macrocycles, and DockQ for the remaining
seven binder categories. Comparisons are resolved by source model,
binder category, and natural target state.
Raw scores retain their original units and directions: lower Vina
and higher DockQ indicate better performance. For effect calculations,
$Y=-\mathrm{Vina}$ for the two ligand categories and
$Y=\mathrm{DockQ}$ for the other categories, so positive differences
consistently indicate improvement.

\paragraph{Supplementary comparison and display conventions.}
Tables~\ref{tab:flexevo-app-small-molecule}, \ref{tab:flexevo-app-macrocycle}, \ref{tab:flexevo-app-linear-peptide}, \ref{tab:flexevo-app-cyclic-peptide}, \ref{tab:flexevo-app-non-antibody-protein}, \ref{tab:flexevo-app-full-length-antibody}, \ref{tab:flexevo-app-antibody-fragment}, \ref{tab:flexevo-app-oligonucleotide}, and~\ref{tab:flexevo-app-mrna}
retain modality-specific metrics. Under the reference reporting protocol,
main-table values and changes are rounded independently, with changes
calculated before rounding. The reference cross-conformation plotting
protocol defines the molecular margin as native Vina minus candidate Vina
and the biomolecular margin as $\mathrm{DockQ}-0.49$. Centroids summarize
the candidate subsets displayed in each configuration. The reference radar protocol shares bounds across displayed
methods and states within a binder category. Metrics shown with a downward
arrow are reverse-scaled. Each category has its own normalization bounds, and individual
radial axes provide the metric-specific comparisons. On-target and off-target diagnostic regions are
defined by an on-target score below 5 and a minimum off-target score above
10. These thresholds are specific to the plot.

Reference-structure illustrations describe molecular architectures.
The optimization trajectories in
Appendix~\ref{app:synthetic-trajectories} report recorded metric evolution
across binder categories during FlexEvo adaptation.
Figure~\ref{fig:optimization_dynamics} reports measured adaptation dynamics,
and Appendix~\ref{app:runtime-profiles} reports computational costs.

\subsection{Reproducibility}
\label{app:reproducibility}

\paragraph{Core search budget.}
The manuscript reference configuration uses seed 31, a screened-pool cap of 64,
two inner adaptation rounds, an outer population of eight, four generations,
and two offspring per parent. Other parameters are listed in
Table~\ref{tab:flexevo_settings}. Each genome evaluation creates a fresh
editor, resets the inner random seed, and starts from the same screened
pool $\mathcal X_0$. Parent-adapted coordinates contribute feedback about
where to search but are not inherited as the offspring's initial candidates
in the full FlexBox-based method.

Eight initial evaluations plus $4\times8\times2$ offspring evaluations give
72 scheduled genome evaluations under the complete schedule. This counts
genome evaluations, rather than distinct genomes, candidate-objective
calls, or independent experimental replicates. Resetting the inner seed
controls comparisons among genomes; it does not estimate run-to-run
variability. Source-generation, adaptation, and evaluator randomness are
recorded separately.

\paragraph{Independent repeats and statistical units.}
Each experiment is repeated three times using random seeds
31, 47, and 71. Within a category, model, state, and seed, the reporting procedure first
averages observations within each case and then averages cases within each
target. Targets receive equal weight; means and medians summarize these
target-level values. Candidate pools, conformational states, and repeated
score calls are not counted as independent targets.

Paired contrasts match the same case, model, seed, and state across base and
adapted variants. Eligible complete pairs are averaged within targets
before computing the equal-target-weight contrast. The family-bootstrap
interval procedure resamples target families with replacement 10,000
times and uses the 2.5th and 97.5th percentiles for a 95\% interval. Each
draw retains all targets and paired cases within a sampled family, with
equal target weights. Fewer than two eligible families yields no interval.
\paragraph{Model-averaged three-arm comparisons.}
Tables~\ref{tab:rerank-only-vina-three-state}
and~\ref{tab:rerank-only-dockq-three-state} report model-averaged
scores for each arm and natural target state. The Vina comparison
averages DynamicFlow, FlexSBDD, and YuelDesign; the DockQ comparison
averages BoltzGen, Chai-1, and Protenix. Score differences are computed
within each table and state. Let $\bar E_s^R$ and $\bar E_s^F$ denote
the reported Vina means for Rerank-only and \textsc{FlexEvo}, and let
$\bar Q_s^R$ and $\bar Q_s^F$ denote their DockQ means. Improvements
over Rerank-only are $\bar E_s^R-\bar E_s^F$ for Vina and
$\bar Q_s^F-\bar Q_s^R$ for DockQ, so positive differences favor
\textsc{FlexEvo} in both comparisons.
\paragraph{Paired effects of reranking and adaptive search.}
Three-arm contrasts match the same case, source model, seed, and
natural target state. All three-arm summaries are restricted to the
common set of complete observations across Base, Rerank-only, and
\textsc{FlexEvo}. Repeated observations
are averaged within cases and cases within targets, with equal target
weighting. For a fixed source--category combination and state $s$,
let $Y_{t,s}^{B}$, $Y_{t,s}^{R}$, and $Y_{t,s}^{F}$ denote target-level
scores for Base, Rerank-only, and \textsc{FlexEvo}, respectively.
On their common target set $\mathcal T_s$, define
\[
\begin{aligned}
\Delta_{\mathrm{select},s}
&=\frac{1}{|\mathcal T_s|}\sum_{t\in\mathcal T_s}
  \left(Y_{t,s}^{R}-Y_{t,s}^{B}\right),\\
\Delta_{\mathrm{adapt},s}
&=\frac{1}{|\mathcal T_s|}\sum_{t\in\mathcal T_s}
  \left(Y_{t,s}^{F}-Y_{t,s}^{R}\right),\\
\Delta_{\mathrm{total},s}
&=\Delta_{\mathrm{select},s}+\Delta_{\mathrm{adapt},s}.
\end{aligned}
\]
Here $\Delta_{\mathrm{adapt},s}$ is the incremental effect of the
full adaptive search relative to source-pool reranking.
All three contrasts use the same 10,000 target-family bootstrap draws,
retaining the paired records of every sampled family. The 2.5th and
97.5th percentiles define pointwise 95\% intervals for each contrast.

\paragraph{Failure handling, coverage, and denominators.}
Coverage records retain every requested case. The reporting implementation
distinguishes measured values, executed failures, unsupported metrics, and
unrun requests. Finite measured continuous scores enter their estimates;
failed continuous scores remain missing. An executed failure contributes
zero to applicable success indicators, including high-affinity fraction,
LRMSD success, validity, and PAINS pass. Unsupported and unrun requests
remain missing rather than becoming failure zeros. Consequently, requested
coverage and eligible metric denominators are reported separately.

For each comparison defined by category, model, state, seed, and metric, the
coverage record contains the number of requested cases, the number of
eligible base cases, the number of eligible adapted cases, the number of
complete pairs, the number of unpaired cases, the number of targets, the
number of families, and the status counts. Continuous paired contrasts use
only complete eligible pairs. Success-rate denominators retain executed
failures, and the case-to-target aggregation above determines the weighting.
Empty source pools, screening rejection, empty adapted pools, unavailable
native references, and evaluator failures retain distinct reasons.

\paragraph{Optimizer and ablation budgets.}
The reference optimizer comparison shares the screened pool, initial prior,
proxy objectives, and seed, resetting the random stream for each method.
Its matched budget is 72 genome evaluations. Candidate-objective evaluation
counts and wall-clock time are recorded separately because equal genome
counts need not imply equal candidate-level work or runtime. Shared setup
is timed separately from each optimizer's search. 
\paragraph{Reranking budget and provenance.}
For a completed Rerank-only run, each of the $|\mathcal X_0|$ source
candidates receives one candidate-objective evaluation under $G_0$,
comprising seven probe evaluations. The run therefore uses
$|\mathcal X_0|$ candidate-objective calls and
$7|\mathcal X_0|$ probe evaluations.
The record retains the screened-pool size, source order and identifiers,
$G_0$, score components, selected-candidate checksum, elapsed time,
and execution status. Source-pool shortfalls retain their actual
cardinality, and an empty screened pool is recorded as a failure.

\paragraph{Hyperparameter specification.}
Table~\ref{tab:flexevo_settings} gives the shared reference defaults, while
the run configuration records any override. The core defaults do not define
separate values for individual binder categories. Source-model weights are
fixed, and adaptation does not retrain them. The search itself accesses
Reference only. The hyperparameter reporting protocol distinguishes
fixed defaults from validation-selected overrides. For each tuned
configuration, its record specifies the parameters varied, search ranges,
selection criterion, validation targets, and category-specific choices.
Dataset and split identifiers document the separation between parameter
selection and held-out evaluation.

\paragraph{Run reconstruction.}
A complete run manifest must retain Reference coordinates, chain assignments,
pocket center, ordered source records, fingerprints where available,
source-model settings, evaluator settings, and the identifiers of the
Matched and Divergent structures. Candidate order must be preserved because
screening precedes pool truncation. Separate run artifacts must identify the
original source pool, the screened pool $\mathcal X_0$, the genome pools,
the winning genome, and the representative frozen before held-out
evaluation. Checksums must connect candidates, alignment receipts, native
references, score records, and table exports. The structural-provenance specification additionally requires each
representative's source ancestor, executed edit mode, ordered atom identities,
supplied covalent graph, polymer annotations, and edit history.
Each validation record must identify the exported structure checksum,
validator version, configuration, applicable tests, outcomes, and failure
reasons. The records must link the candidate to its SDF or complex mmCIF
export and to each evaluator's prepared inputs.
The protocol requires the same declared validation policy for base and
adapted candidates. It reports validation coverage and observed pass counts
separately from successful score counts, using the denominators defined
above. A pass is assigned only to a candidate with a recorded successful
validation result.

\paragraph{Runtime reporting.}
The reference timing unit is one execution of a case, a model, and a seed,
processing its source pool through adaptation to a representative, rather
than one timing observation per candidate in that pool. The pipeline measures
stage elapsed wall-clock time, including the overhead inside each stage, and
separates source generation, adaptation, and external evaluation. The runtime
summarizer reports source-generation plus adaptation time as the total;
external evaluation is reported separately. Importing existing source
predictions is not a measurement of native source-model inference time.

The auxiliary timing protocol configures one warm-up and ten measured
executions with GPU synchronization, while the summarizer excludes warm-up
and unsuccessful records and averages the successful observations.
The timing record specifies hardware, software revision, parallelism,
batch size, numerical precision, preprocessing boundaries, and the number
of successful measured executions. Downloads and setup are recorded
separately under the reference timing protocol.

\FloatBarrier

\FloatBarrier
\section{Supplementary Results and Analyses}
\label{app:supplementary-results}

This section presents the experimental results and accompanying analyses.
The detailed tables report base and adapted results for each evaluated
model, binder category, and target conformation. Main-text values are rounded
from these same results, and changes are calculated before rounding.

\subsection{Structure-pool coverage and biological context}
\label{app:pool-context}
Figures~\ref{fig:flexevo_structure_pool_features} and
\ref{fig:flexevo_uniprot_translation_atlas} characterize the structural
and biological coverage of the source pool. The small-molecule subset
contains 19,715 PDB entries, and the UniProt analysis covers 1,774 mapped
targets. These counts describe the source pool at the structure-entry
and annotated-target levels, respectively. The category-specific benchmark
target counts appear in Appendix~\ref{app:evaluation-design};
Appendix~\ref{app:reproducibility} defines the metric-specific pairing,
coverage denominators, and family-level resampling procedure.

\begin{figure}[!htb]
    \centering
    \includegraphics[width=1\linewidth,height=0.70\textheight,keepaspectratio]{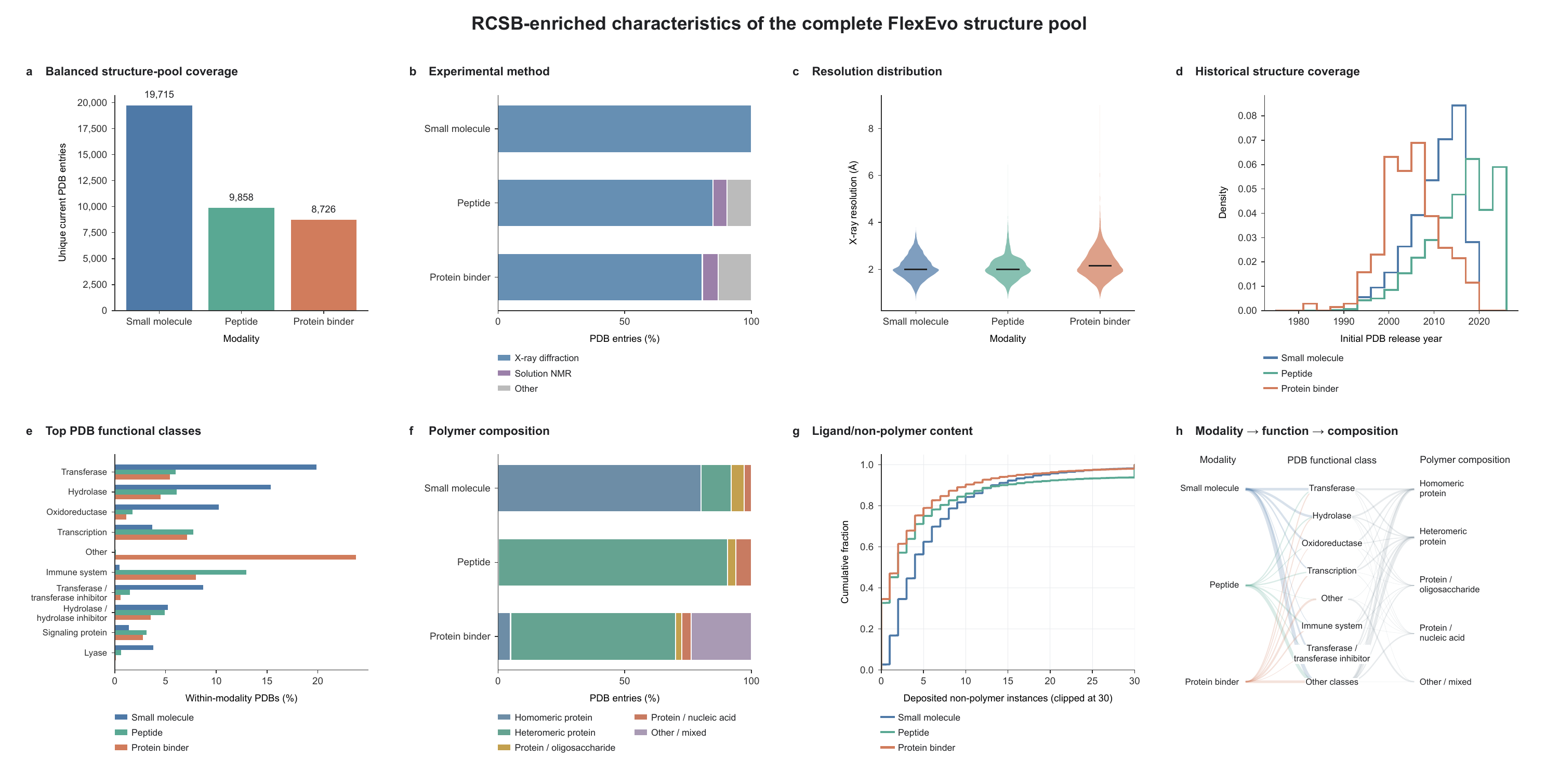}
   \caption{
\textbf{Structural and functional characteristics of the FlexEvo structure pool.}
RCSB Protein Data Bank (PDB) annotations characterize the small-molecule, peptide and protein-binder subsets.
\textbf{a,} Numbers of unique current PDB entries in each subset: 19,715 for small molecules, 9,858 for peptides and 8,726 for protein binders.
\textbf{b,} Proportions of structures determined by X-ray diffraction, solution nuclear magnetic resonance (NMR) and other experimental methods.
\textbf{c,} Distributions of reported resolution for X-ray structures.
\textbf{d,} Density distributions of initial PDB release years.
\textbf{e,} Prevalence of the displayed PDB functional classes, expressed as percentages of entries within each modality.
\textbf{f,} Polymer composition, categorized as homomeric protein, heteromeric protein, protein--oligosaccharide, protein--nucleic acid and other or mixed assemblies.
\textbf{g,} Cumulative distributions of deposited non-polymer instance counts per entry, displayed up to 30 instances.
\textbf{h,} Alluvial representation linking modality, PDB functional class and polymer composition.
Blue, teal and orange distinguish the small-molecule, peptide and protein-binder subsets, respectively, in panels a, c--e and g.
}
\label{fig:flexevo_structure_pool_features}
\end{figure}

\begin{figure}[!htb]
    \centering
    \includegraphics[width=1\linewidth,height=0.70\textheight,keepaspectratio]{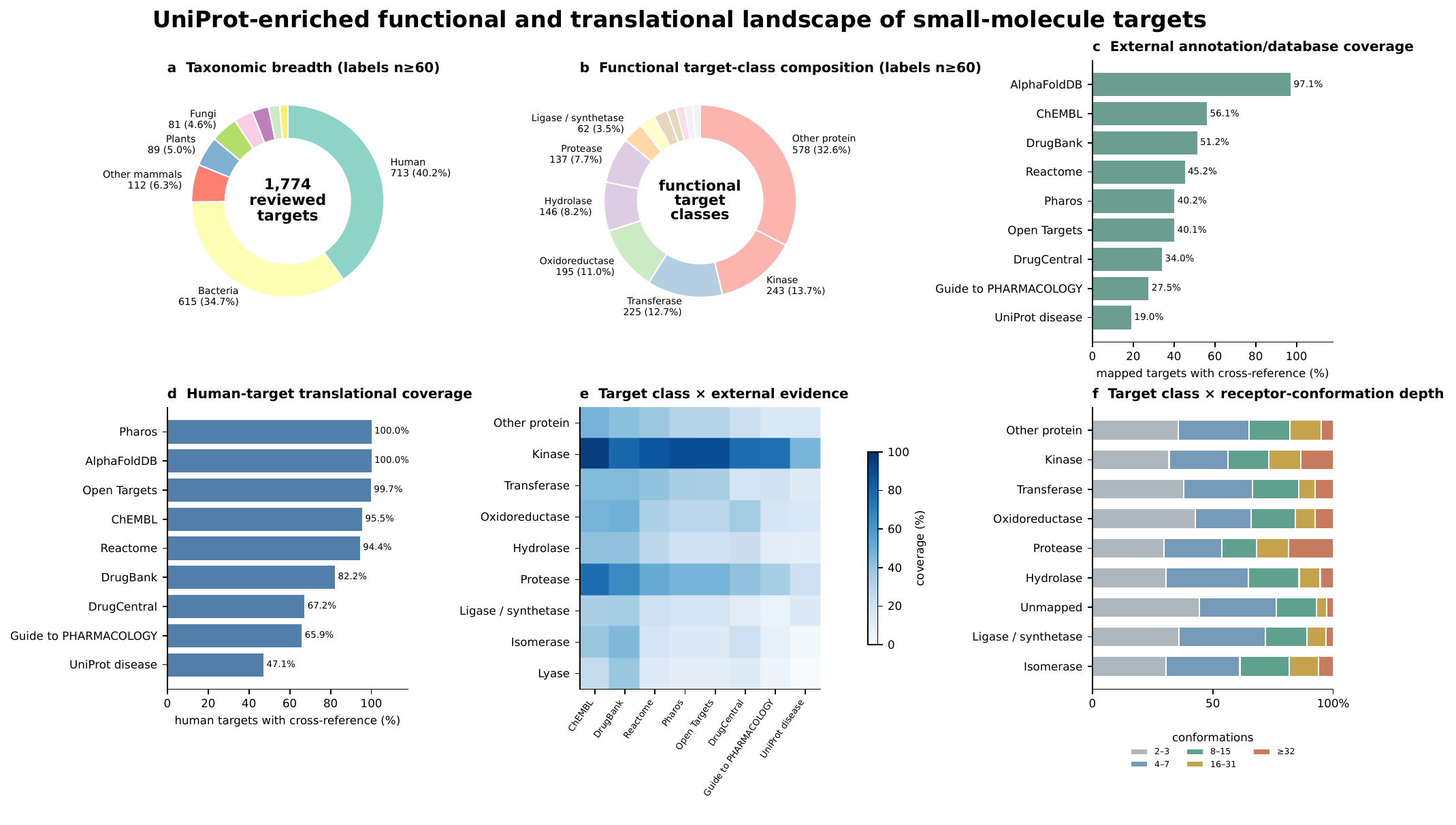}
    \caption{
\textbf{Taxonomic, functional and translational annotation landscape of small-molecule targets in FlexEvo.}
\textbf{a,} Taxonomic composition of 1,774 targets mapped to reviewed UniProt entries, including 713 human targets (40.2\%) and 615 bacterial targets (34.7\%).
\textbf{b,} Functional target-class composition. In panels a and b, categories containing at least 60 targets are labelled with their counts and percentages.
\textbf{c,} Coverage of external database cross-references and UniProt disease annotations among mapped targets, including AlphaFoldDB, ChEMBL, DrugBank, Reactome, Pharos, Open Targets, DrugCentral and Guide to PHARMACOLOGY.
\textbf{d,} Corresponding annotation coverage within the human-target subset.
\textbf{e,} Annotation coverage stratified by functional target class; darker blue indicates a higher percentage of targets with the corresponding cross-reference or annotation.
\textbf{f,} Within-class distributions of receptor-conformation counts per target, grouped into 2--3, 4--7, 8--15, 16--31 and $\geq 32$ conformations, with an additional unmapped category.
Database coverage quantifies the availability of target annotations and external cross-references.
}
\label{fig:flexevo_uniprot_translation_atlas}
\end{figure}

\subsection{Detailed results across nine binder categories}
\label{app:modality-results}
\textsc{FlexEvo} improves the Divergent-state primary endpoint across
all 27 source--category combinations in
Tables~\ref{tab:flexevo-app-small-molecule}--\ref{tab:flexevo-app-mrna}.
The primary endpoint is Vina for conventional small molecules and
nonpeptidic macrocycles, and DockQ for the remaining seven categories.
Each table reports Base and \textsc{FlexEvo} under Reference, Matched,
and Divergent conditions. Additional metrics characterize pose agreement,
molecular properties, model confidence, and diversity.
Colors encode the direction of change under each metric's stated
preference.
\paragraph{Model-averaged rerank-only controls.}
Tables~\ref{tab:rerank-only-vina-three-state}
and~\ref{tab:rerank-only-dockq-three-state} compare Base, Rerank-only,
and \textsc{FlexEvo} across Reference, Matched, and Divergent.
The conventional small-molecule Vina results are averaged across
DynamicFlow, FlexSBDD, and YuelDesign. The DockQ results are averaged
across BoltzGen, Chai-1, and Protenix. Rerank-only follows the
fixed-$G_0$, unchanged-source-pool protocol in
Appendices~\ref{app:evaluation-design} and~\ref{app:scoring}.
\textsc{FlexEvo} achieves the best model-averaged score in all three
states for both metric groups.
\begin{table}[!htbp]
\centering
\caption{Three-state rerank-only comparison for conventional
small-molecule drugs. Vina scores are reported on the common
three-arm evaluation set and aggregated across DynamicFlow,
FlexSBDD, and YuelDesign. Scores are in kcal/mol; lower values are
better. Rerank-only selects an original source candidate without
coordinate editing. Bold values indicate the best model-averaged
score in each state.}
\label{tab:rerank-only-vina-three-state}
\small
\begin{tabular}{@{}lccc@{}}
\toprule
Method & Reference $\downarrow$ & Matched $\downarrow$
& Divergent $\downarrow$ \\
\midrule
Base & $-8.260$ & $-7.440$ & $-6.540$ \\
Rerank-only & $-8.560$ & $-7.370$ & $-7.750$ \\
\textsc{FlexEvo} & $\mathbf{-9.740}$ & $\mathbf{-8.678}$
& $\mathbf{-8.698}$ \\
\bottomrule
\end{tabular}
\end{table}
\begin{table}[!htbp]
\centering
\caption{Three-state rerank-only comparison using DockQ.
DockQ scores are reported on the common three-arm evaluation set
and aggregated across BoltzGen, Chai-1, and Protenix.
Higher values are better. Rerank-only selects an original source
candidate without coordinate editing. Bold values indicate the
best model-averaged score in each state.}
\label{tab:rerank-only-dockq-three-state}
\small
\begin{tabular}{@{}lccc@{}}
\toprule
Method & Reference $\uparrow$ & Matched $\uparrow$
& Divergent $\uparrow$ \\
\midrule
Base & $0.6270$ & $0.3960$ & $0.2760$ \\
Rerank-only & $0.6630$ & $0.4213$ & $0.3372$ \\
\textsc{FlexEvo} & $\mathbf{0.8539}$ & $\mathbf{0.8614}$
& $\mathbf{0.8173}$ \\
\bottomrule
\end{tabular}
\end{table}
For conventional small molecules, \textsc{FlexEvo} improves the
model-averaged Vina score over Rerank-only by $1.180$, $1.308$,
and $0.948$~kcal/mol on Reference, Matched, and Divergent,
respectively. In the DockQ comparison, the corresponding gains
are $0.1909$, $0.4401$, and $0.4801$, with the largest gain on
Divergent. These comparisons demonstrate improved model-averaged
performance beyond source-pool reranking in both metric groups
across all three natural target states.
\paragraph{Category-balanced cross-conformation degradation.}
We quantified Reference-to-Divergent degradation using the primary metrics
in Tables~\ref{tab:flexevo-app-small-molecule}--\ref{tab:flexevo-app-mrna}.
For category $c$, source model $m$, and configuration
$v\in\{\mathrm{base},\mathrm{FlexEvo}\}$, let
$S_{cm,R}^{(v)}$ and $S_{cm,D}^{(v)}$ denote the Reference and Divergent scores.
We used negative mean Vina scores for small molecules and nonpeptidic
macrocycles, and mean DockQ for the remaining seven categories.
Thus, higher $S$ indicates better performance within each category.
We computed the relative degradation and its category-balanced mean as
\begin{align}
\label{eq:relative-state-degradation}
d_{cm}^{(v)}
&=100\%\times
\frac{S_{cm,R}^{(v)}-S_{cm,D}^{(v)}}{|S_{cm,R}^{(v)}|},\\
\label{eq:category-balanced-degradation}
\bar d^{(v)}
&=\frac{1}{9}\sum_{c=1}^{9}
\frac{1}{|\mathcal M_c|}\sum_{m\in\mathcal M_c}d_{cm}^{(v)},
\end{align}
where $\mathcal M_c$ contains all three source models reported for category $c$.
This includes Chai-1 for both linear and cyclic peptides, giving
27 source--category combinations in total.
Each configuration was normalized by its own Reference score, and all
Reference denominators were nonzero.
Negative degradation values indicate improved Divergent performance and
were retained without clipping to zero.
Ratios were calculated from the reported state-wise means before averaging
within categories and then equally across the nine categories.
This statistic summarizes category-balanced relative degradation
using the reported state-wise means.

For example, DynamicFlow small molecules had mean Vina scores of
$-6.6062$ and $-4.8519$~kcal/mol under Reference and Divergent, respectively,
giving $26.5554\%$ degradation.
After \textsc{FlexEvo}, the corresponding scores were $-9.1775$ and
$-7.7991$~kcal/mol, giving $15.0193\%$ degradation.
Across all nine categories and 27 source--category combinations,
the complete \textsc{FlexEvo} selection-and-adaptation workflow reduced
category-balanced mean relative degradation from $47.8376\%$ to
$4.3854\%$, a reduction of $43.4522$ percentage points.
Rounded to one decimal place, these values are $47.8\%$ and $4.4\%$.
This benchmark summary quantifies cross-conformation score retention
for the complete workflow relative to the source-native representative.
Tables~\ref{tab:rerank-only-vina-three-state}
and~\ref{tab:rerank-only-dockq-three-state} complement this benchmark
summary with model-averaged three-arm comparisons, quantifying
the additional Vina and DockQ improvements over source-pool reranking.

\FloatBarrier
\subsection{Joint cross-conformation comparisons}
\label{app:joint-comparisons}
Figures~\ref{fig:cross_conformation_small_molecules_peptides}--\ref{fig:cross_conformation_fragments_nucleic_acids}
place two target conditions on separate axes. Their joint margins distinguish
candidates meeting both plotted thresholds from those meeting only one.
Centroid shifts describe the displayed subsets of two Base and three
\textsc{FlexEvo} candidates, respectively.

\begin{figure}[!htbp]
    \centering
    \includegraphics[width=1\linewidth,height=0.60\textheight,keepaspectratio]{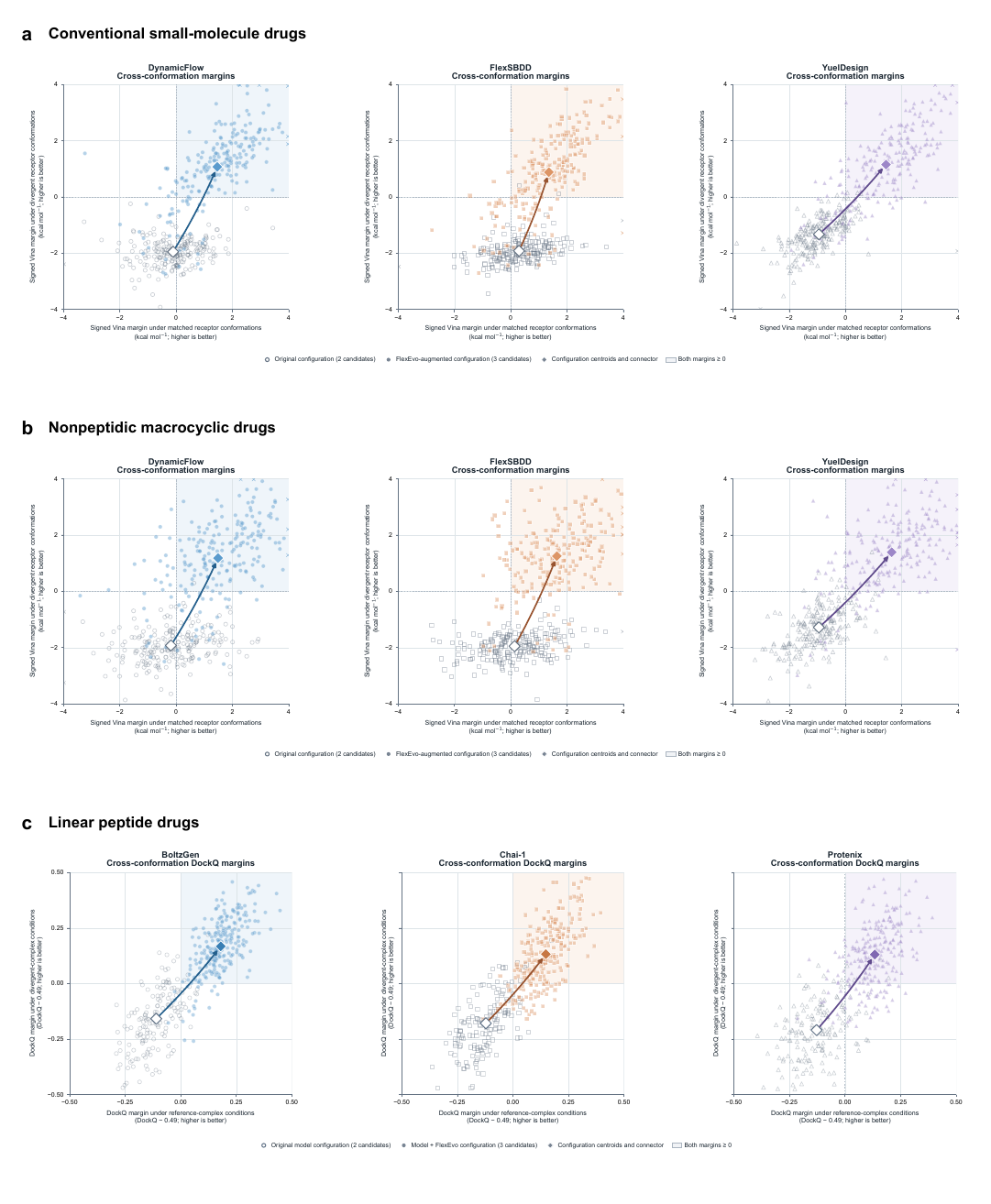}
    \caption{\textbf{Cross-conformation evaluation of conventional small molecules, nonpeptidic macrocycles and linear peptides.}
    \textbf{a}, Conventional small-molecule drugs.
    \textbf{b}, Nonpeptidic macrocyclic drugs.
    \textbf{c}, Linear peptide drugs.
    In \textbf{a} and \textbf{b}, columns show DynamicFlow, FlexSBDD and YuelDesign from left to right. The horizontal and vertical axes show signed Vina margins under matched and divergent receptor conformations, respectively (kcal~mol$^{-1}$; higher is better).
    In \textbf{c}, columns show BoltzGen, Chai-1 and Protenix from left to right. The horizontal and vertical axes show DockQ margins under reference-complex and divergent-complex conditions, respectively, defined as $\mathrm{DockQ}-0.49$.
    Grey open symbols and coloured points denote the original configuration (two candidates) and the FlexEvo-augmented configuration (three candidates), respectively.
    Diamond markers indicate configuration centroids, with connecting lines showing the displacement between configurations.
    Dashed lines mark zero margins, and the shaded upper-right quadrant indicates non-negative margins under both conditions.}
    \label{fig:cross_conformation_small_molecules_peptides}
\end{figure}

\begin{figure}[!htbp]
    \centering
    \includegraphics[width=1\linewidth,height=0.70\textheight,keepaspectratio]{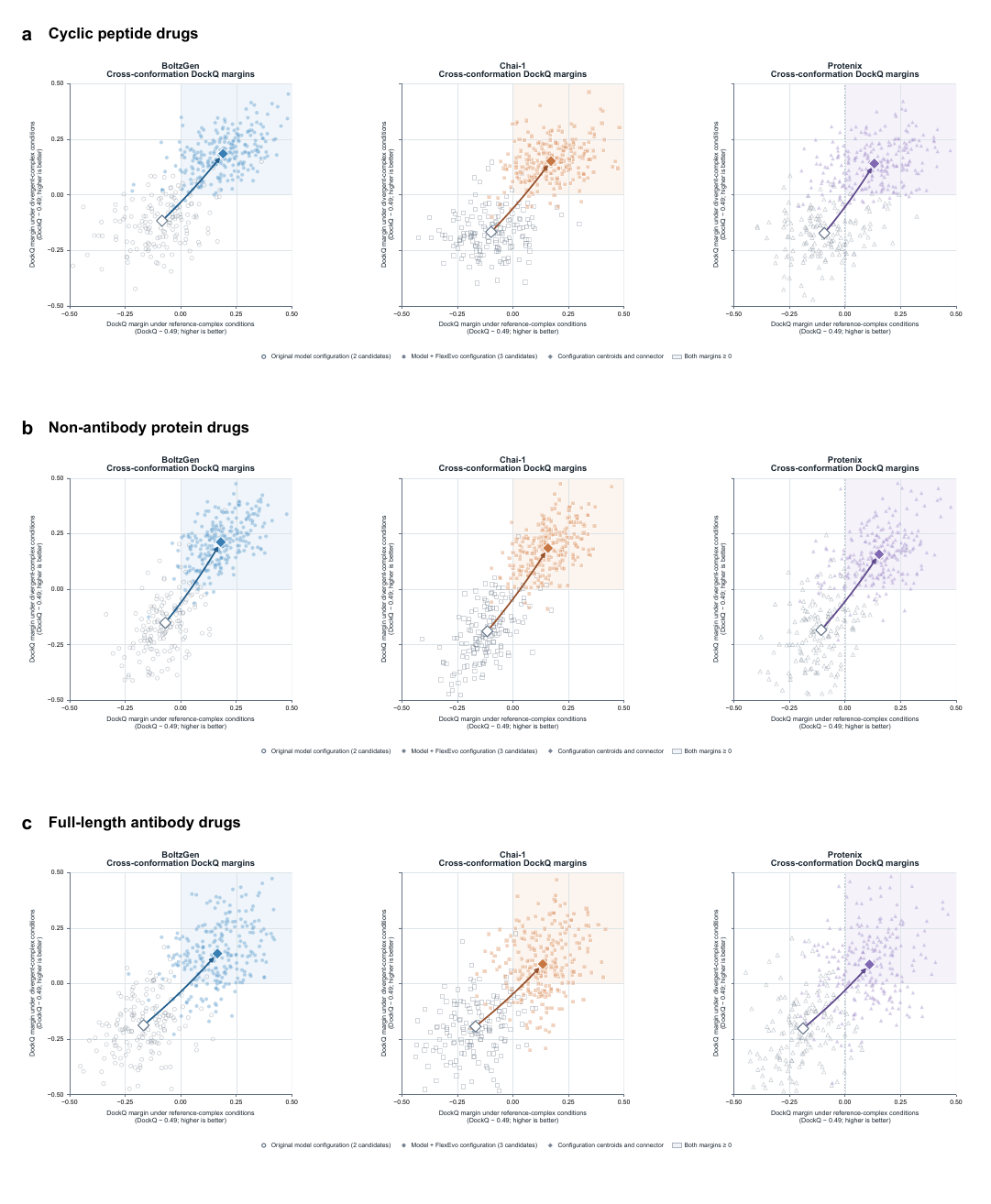}
    \caption{\textbf{Cross-conformation evaluation of cyclic peptides, non-antibody proteins and full-length antibodies.}
    \textbf{a}, Cyclic peptide drugs.
    \textbf{b}, Non-antibody protein drugs.
    \textbf{c}, Full-length antibody drugs.
    Columns show BoltzGen, Chai-1 and Protenix from left to right in each panel.
    The horizontal and vertical axes show DockQ margins under reference-complex and divergent-complex conditions, respectively.
    Margins are defined as $\mathrm{DockQ}-0.49$, with higher values indicating better structural agreement with the reference complex under the corresponding condition.
    Grey open symbols and coloured points denote the original model configuration (two candidates) and the model supplemented with FlexEvo (three candidates), respectively.
    Diamond markers indicate configuration centroids, with connecting lines showing the displacement between configurations.
    Dashed lines mark zero margins, corresponding to $\mathrm{DockQ}=0.49$.
    The shaded upper-right quadrant indicates $\mathrm{DockQ}\geq0.49$ under both conditions.}
    \label{fig:cross_conformation_peptides_proteins_antibodies}
\end{figure}

\begin{figure}[!htbp]
    \centering
    \includegraphics[width=1\linewidth,height=0.70\textheight,keepaspectratio]{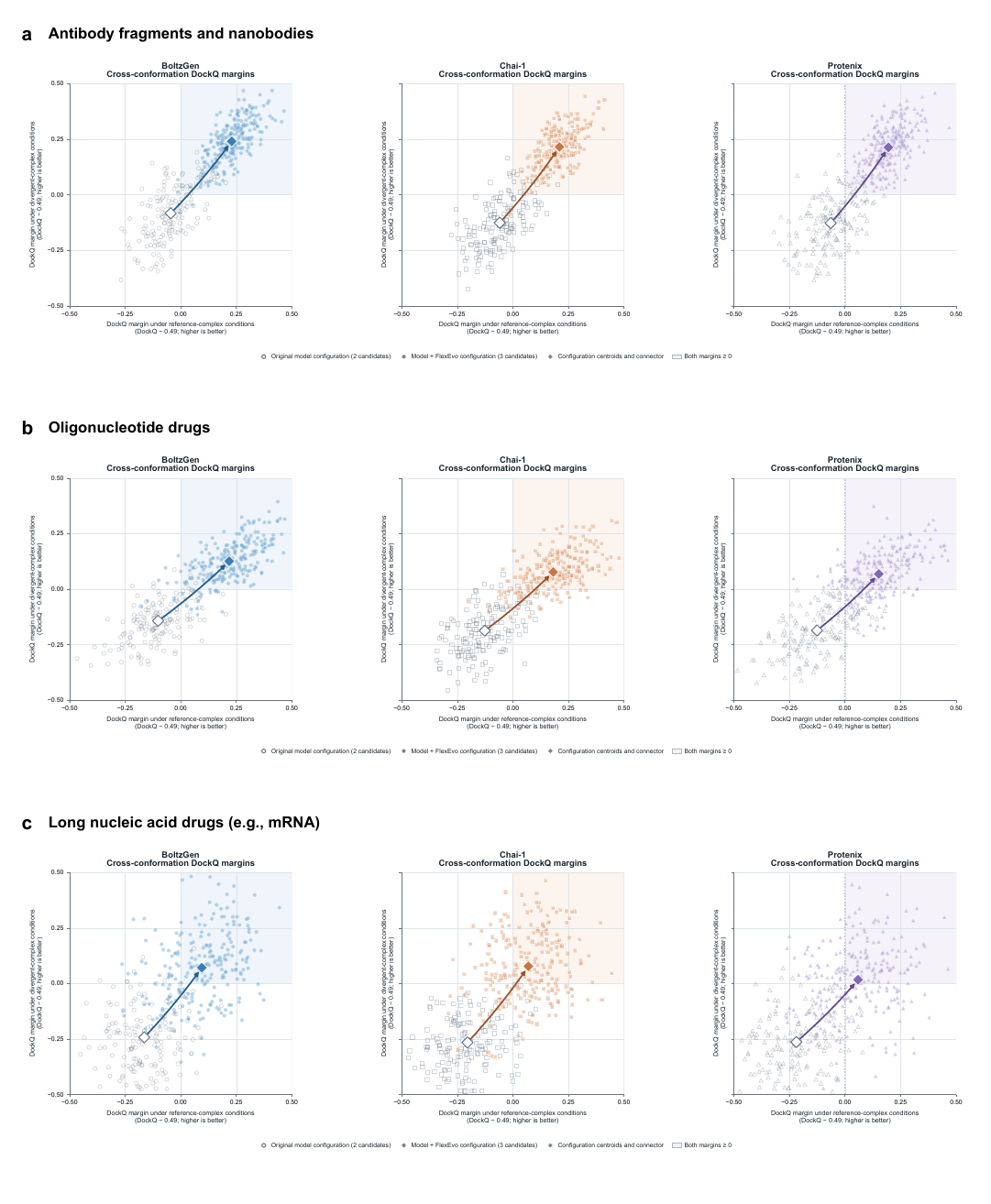}
    \caption{\textbf{Cross-conformation evaluation of antibody fragments, nanobodies and nucleic acid drugs.}
    \textbf{a}, Antibody fragments and nanobodies.
    \textbf{b}, Oligonucleotide drugs.
    \textbf{c}, Long nucleic acid drugs, including mRNA.
    Columns show BoltzGen, Chai-1 and Protenix from left to right in each panel.
    The horizontal and vertical axes show DockQ margins under reference-complex and divergent-complex conditions, respectively.
    Margins are defined as $\mathrm{DockQ}-0.49$, with higher values indicating better structural agreement with the reference complex under the corresponding condition.
    Grey open symbols and coloured points denote the original model configuration (two candidates) and the model supplemented with FlexEvo (three candidates), respectively.
    Diamond markers indicate configuration centroids, with connecting lines showing the displacement between configurations.
    Dashed lines mark zero margins, corresponding to $\mathrm{DockQ}=0.49$.
    The shaded upper-right quadrant indicates $\mathrm{DockQ}\geq0.49$ under both conditions.}
    \label{fig:cross_conformation_fragments_nucleic_acids}
\end{figure}
\FloatBarrier
\subsection{Alternative optimizers and component ablations}
\label{app:optimizer-ablation}
Figure~\ref{fig:optimization_method_comparison} compares six optimization
strategies under the displayed HV-AUC and B-pocket summaries. The two
criteria should be read separately because their scales differ. The ranking summarizes the reported scores. The budget-matching protocol
specifies initialization, objective-evaluation count, stopping rule,
hypervolume reference point, and repetition procedure for each optimizer.
\paragraph{Comparison with Best-of-$N$.}
Table~\ref{tab:bestofn-proxy-comparison} summarizes the Best-of-$N$
and \textsc{FlexEvo} values displayed in
Figure~\ref{fig:optimization_method_comparison}.
The reported HV-AUC is $8.482$ for \textsc{FlexEvo} and $0.959$
for Best-of-$N$, corresponding to an approximately $8.84$-fold ratio.
The corresponding B-pocket top-10 task scores are $-31.409$ and
$-32.497$, giving a $1.088$-point advantage for \textsc{FlexEvo}.

\begin{table}[!htbp]
\centering
\caption{Best-of-$N$ comparison using the values displayed in
Figure~\ref{fig:optimization_method_comparison}.
Higher values are preferred for both metrics.}
\label{tab:bestofn-proxy-comparison}
\small
\begin{tabular}{lrr}
\toprule
Method & HV-AUC $\uparrow$ & B-pocket top-10 $\uparrow$ \\
\midrule
Best-of-$N$ & 0.959 & $-32.497$ \\
\textsc{FlexEvo} & 8.482 & $-31.409$ \\
\bottomrule
\end{tabular}
\end{table}
Table~\ref{tab:flexevo-ablation-multimetric} displays the full method and
three component-removal variants. Component-removal comparisons show that the full method achieves the best Divergent-state primary metric across all nine binder categories. The table includes trade-offs: for example, the
small-molecule Reference Vina is slightly more favorable without FlexBox
($-8.03$ versus $-8.01$), whereas the full method has the more favorable
Divergent Vina ($-6.66$ versus $-5.88$). 

\begin{figure}[!htbp]
    \centering
    \includegraphics[width=1\linewidth,height=0.70\textheight,keepaspectratio]{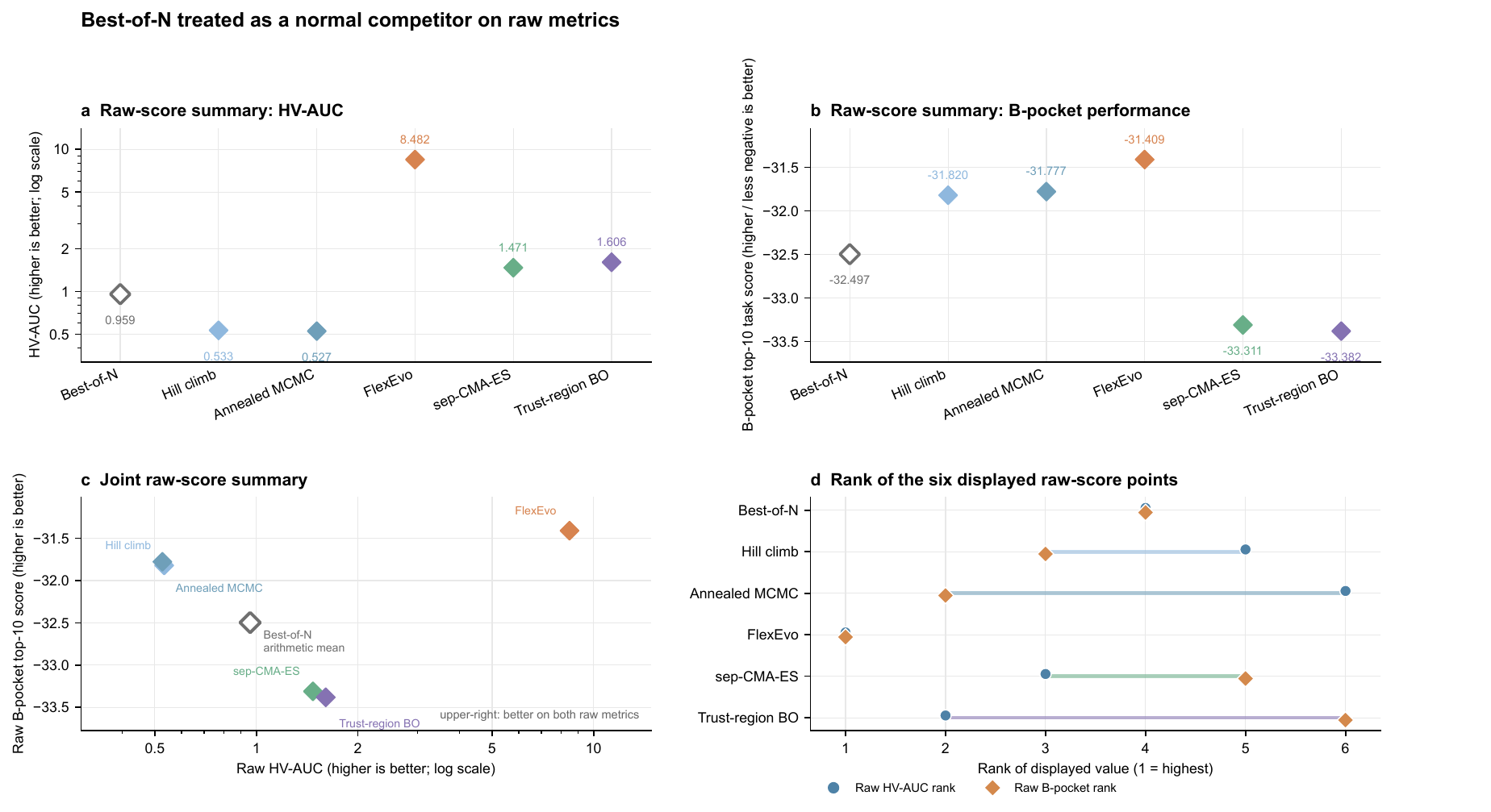}
\caption{
\textbf{Comparison of FlexEvo and alternative optimization strategies using raw performance metrics.}
Six methods are compared: Best-of-$N$, hill climbing, annealed Markov chain Monte Carlo (MCMC), FlexEvo, separable covariance matrix adaptation evolution strategy (sep-CMA-ES) and trust-region Bayesian optimization (BO).
\textbf{a,} Hypervolume area under the curve (HV-AUC), displayed on a logarithmic scale; higher values indicate better performance.
\textbf{b,} B-pocket top-10 task scores, for which higher (less negative) values indicate better performance.
\textbf{c,} Joint comparison of the two metrics, with the upper-right region indicating better performance on both. The open grey diamond denotes the Best-of-$N$ arithmetic-mean summary.
\textbf{d,} Rankings of the six displayed values for HV-AUC (blue circles) and B-pocket score (orange diamonds), with rank 1 indicating the highest value. Connecting lines link the two metric-specific ranks for each method.
FlexEvo achieves the highest displayed value on both metrics, with an HV-AUC of 8.482 and a B-pocket top-10 score of $-31.409$.
}
\label{fig:optimization_method_comparison}
\end{figure}

\subsection{Runtime and state-wise summary profiles}
\label{app:runtime-profiles}
Figure~\ref{fig:flexevo-compute-cost} separates source generation from
the additional adaptation cost under a common timing boundary.
Across the nine workflows, \textsc{FlexEvo} adds 1.393--3.141~min per
sample, yielding total generation-plus-adaptation times of
6.861--12.966~min per sample.
The reference search schedule contains 72 genome evaluations, with
execution and timing definitions given in Appendix~\ref{app:reproducibility}.

\paragraph{Time-resolved adaptation and structural progression.}
Figure~\ref{fig:optimization_dynamics} links the displayed adaptation
trajectories to elapsed computation time.
In panel~(a), the Vina-defined high-affinity fraction reaches $60.0\%$
at approximately two minutes and remains near that level late in the run.
Panels~(b,c) show an overall decrease in on-target pAE and an increase in
off-target pAE over the displayed $2$--$3$~min intervals.
In the ligand example in panels~(d--f), Vina improves from $-7.900$
through $-10.491$ to $-12.666$~kcal/mol across the early, intermediate,
and late structures. These observations demonstrate progressive docking-score
improvement in the illustrated structural sequence and favorable
interaction-confidence trends in the displayed runs.

\paragraph{State-wise summary profiles.}

Figures~\ref{fig:small_molecule_receptor_performance}--\ref{fig:protein_binder_receptor_performance}
summarize supplied percentage profiles across Reference, Matched, and
Divergent states. The displayed ``performance'' percentage is a
figure-level summary, distinct from DockQ, high-affinity fraction, and
LRMSD success. Metric-specific comparisons use their own definitions,
selection rules, and denominators.

\begin{figure}[!htbp]
    \centering
    \includegraphics[width=1\linewidth,height=0.70\textheight,keepaspectratio]{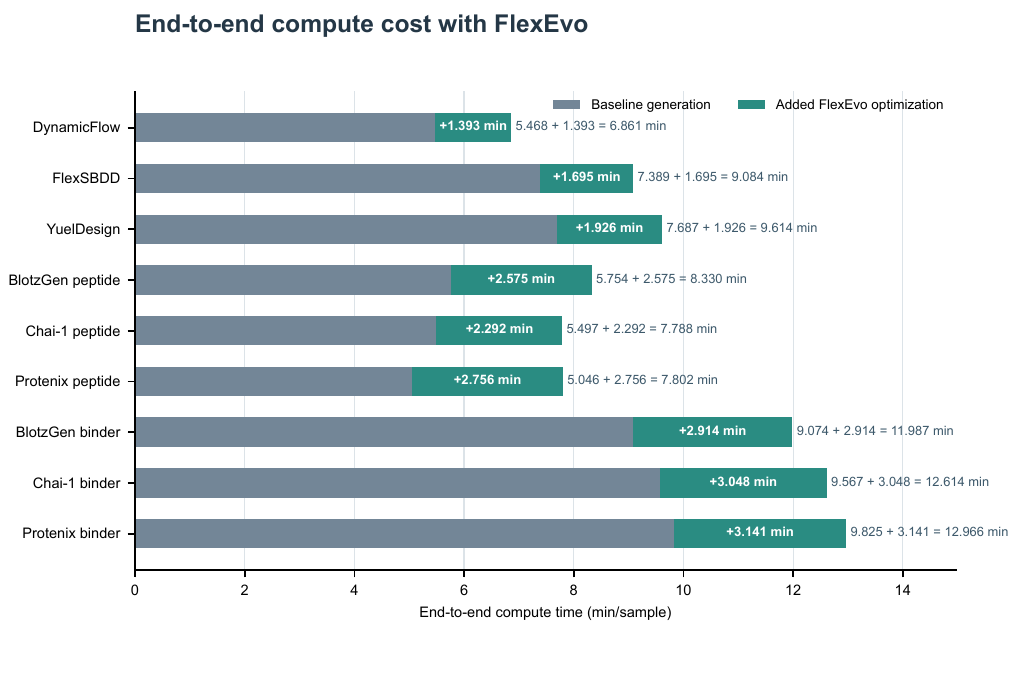}
    \caption{End-to-end computational cost of incorporating FlexEvo into generation workflows. Per-sample compute time is shown for DynamicFlow, FlexSBDD, YuelDesign, and the peptide and binder workflows of BoltzGen, Chai-1 and Protenix. Stacked bars separate baseline generation time (grey) from the additional time required for FlexEvo optimization (teal). Labels within the teal segments indicate the added optimization time, and annotations beside each bar report baseline, optimization and total times. Across the nine workflows, FlexEvo adds 1.393--3.141 min per sample, resulting in total compute times of 6.861--12.966 min per sample.}
    \label{fig:flexevo-compute-cost}
\end{figure}

\begin{figure}[!htbp]
    \centering
    \includegraphics[width=\linewidth,height=0.70\textheight,keepaspectratio]{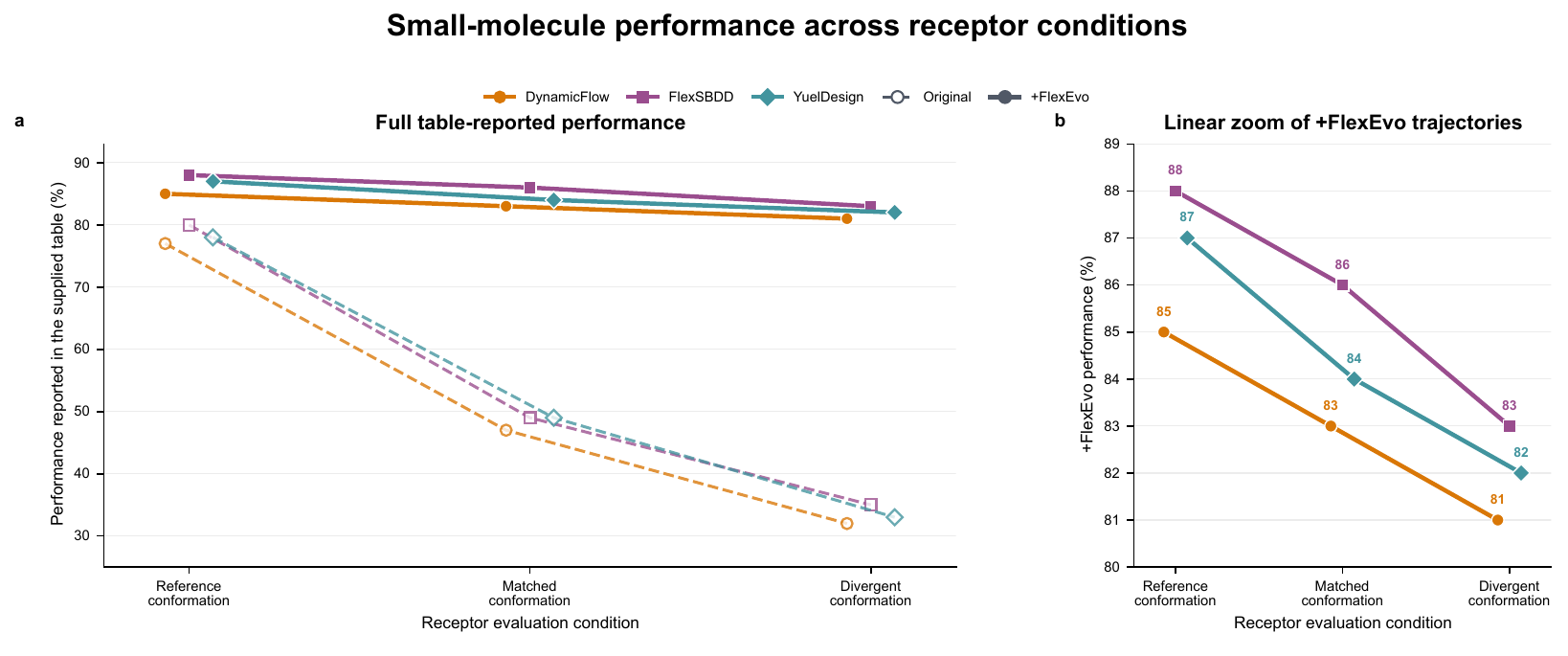}
    \caption{
        \textbf{Small-molecule performance across receptor conformations with and without FlexEvo.}
        \textbf{a,} Comparison of DynamicFlow (orange circles), FlexSBDD (purple squares) and YuelDesign (teal diamonds) under reference, matched and divergent receptor conformations. Dashed lines with open markers indicate the original methods; solid lines with filled markers indicate their FlexEvo-optimized counterparts.
        \textbf{b,} Expanded view of the FlexEvo trajectories on a linear scale, with labels indicating performance values (\%). FlexEvo-optimized DynamicFlow, FlexSBDD and YuelDesign retain performance of 81\%, 83\% and 82\%, respectively, under the divergent condition, with decreases of 4--5 percentage points relative to the reference condition.
    }
    \label{fig:small_molecule_receptor_performance}
\end{figure}

\begin{figure}[!htbp]
    \centering
    \includegraphics[width=\linewidth,height=0.70\textheight,keepaspectratio]{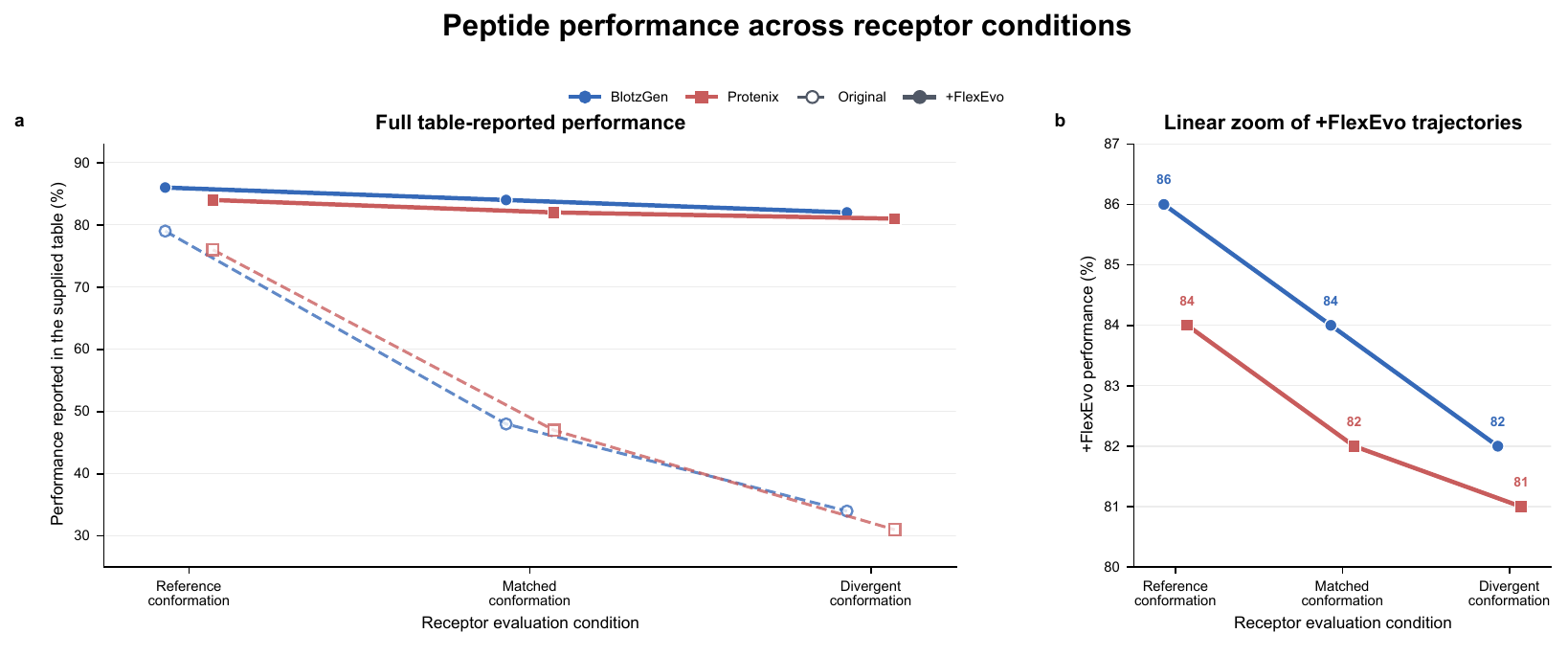}
    \caption{
        \textbf{Peptide performance across receptor conformations with and without FlexEvo.}
        \textbf{a,} Comparison of BoltzGen (blue circles) and Protenix (red squares) under reference, matched and divergent receptor conformations. Dashed lines with open markers indicate the original methods; solid lines with filled markers indicate their FlexEvo-optimized counterparts.
        \textbf{b,} Expanded view of the FlexEvo trajectories on a linear scale, with labels indicating performance values (\%). BoltzGen achieves 86\%, 84\% and 82\%, and Protenix achieves 84\%, 82\% and 81\%, respectively, across the three conditions. FlexEvo reduces the decline in performance associated with changes in receptor conformation.
    }
    \label{fig:peptide_receptor_performance}
\end{figure}

\begin{figure}[!htbp]
    \centering
    \includegraphics[width=\linewidth,height=0.70\textheight,keepaspectratio]{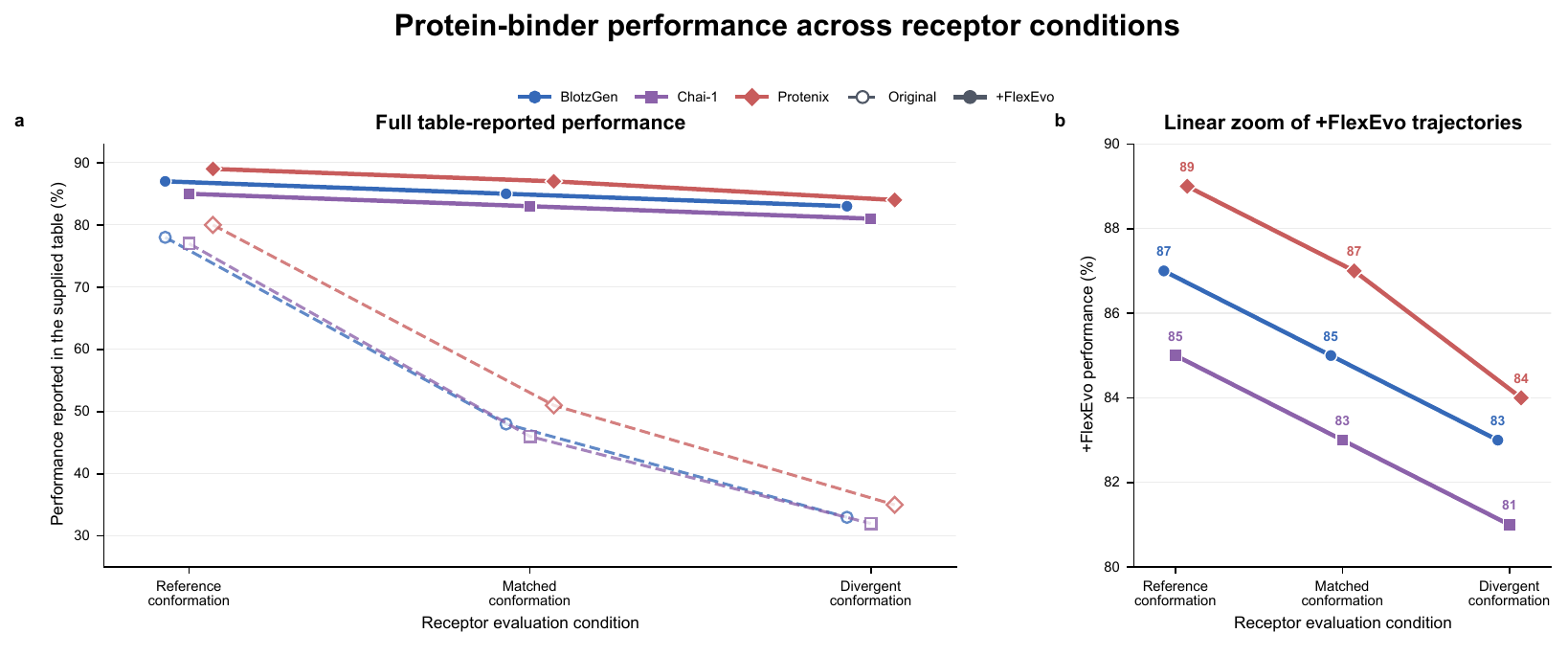}
    \caption{
        \textbf{Protein-binder performance across receptor conformations with and without FlexEvo.}
        \textbf{a,} Comparison of BoltzGen (blue circles), Chai-1 (purple squares) and Protenix (red diamonds) under reference, matched and divergent receptor conformations. Dashed lines with open markers indicate the original methods; solid lines with filled markers indicate their FlexEvo-optimized counterparts.
        \textbf{b,} Expanded view of the FlexEvo trajectories on a linear scale, with labels indicating performance values (\%). Performance decreases from 87\% to 83\% for BoltzGen, from 85\% to 81\% for Chai-1 and from 89\% to 84\% for Protenix between the reference and divergent conditions, showing smaller declines than those observed for the original methods.
    }
    \label{fig:protein_binder_receptor_performance}
\end{figure}
\FloatBarrier
\subsection{Similarity to a sampled reference library}
\label{app:similarity-analysis}
Table~\ref{tab:flexevo_reference_similarity} and
Figures~\ref{fig:similarity_group_1}--\ref{fig:similarity_group_3}
characterize generated molecules relative to a sampled reference library.
Each scenario comprises 470 pairwise comparisons between 10 generated
and 47 reference entries, with each entry participating in multiple
comparisons. Low similarity indicates limited resemblance within the
sampled library under the molecular representation used.

\begin{table}[htbp]
    \centering
    \caption{
    Reference-library similarity across nine binder categories.
    For each category, 10 evaluated candidates are compared with 47 randomly
    sampled public reference entries. Maximum and median similarity are reported
    over the resulting 470 pairwise comparisons using the modality-specific
    similarity definition.
    }
    \label{tab:flexevo_reference_similarity}
    \small
    \begin{tabular}{p{0.60\linewidth}cc}
        \hline
        Evaluation scenario & Maximum $T_c$ & Median $T_c$ \\
        \hline
        Antibody fragments and nanobodies & 0.222 & 0.052 \\
        Conventional small-molecule drugs & 0.159 & 0.052 \\
        Cyclic peptide drugs & 0.262 & 0.055 \\
        Full-length antibody drugs & 0.245 & 0.059 \\
        Linear peptide drugs & 0.226 & 0.064 \\
        Long nucleic acid drugs, including mRNA & 0.254 & 0.071 \\
        Non-antibody protein drugs & 0.270 & 0.052 \\
        Nonpeptidic macrocyclic drugs & 0.145 & 0.044 \\
        Oligonucleotide drugs & 0.314 & 0.039 \\
        \hline
    \end{tabular}
    \par\smallskip
    \begin{minipage}{\linewidth}
        \footnotesize
        $T_c$, Tanimoto coefficient. Values summarize fingerprint
        similarity to the sampled reference molecules under the
        specified molecular representation.
    \end{minipage}
\end{table}

\begin{figure}[!htbp]
    \centering
    \includegraphics[width=0.75\linewidth,height=0.70\textheight,keepaspectratio]{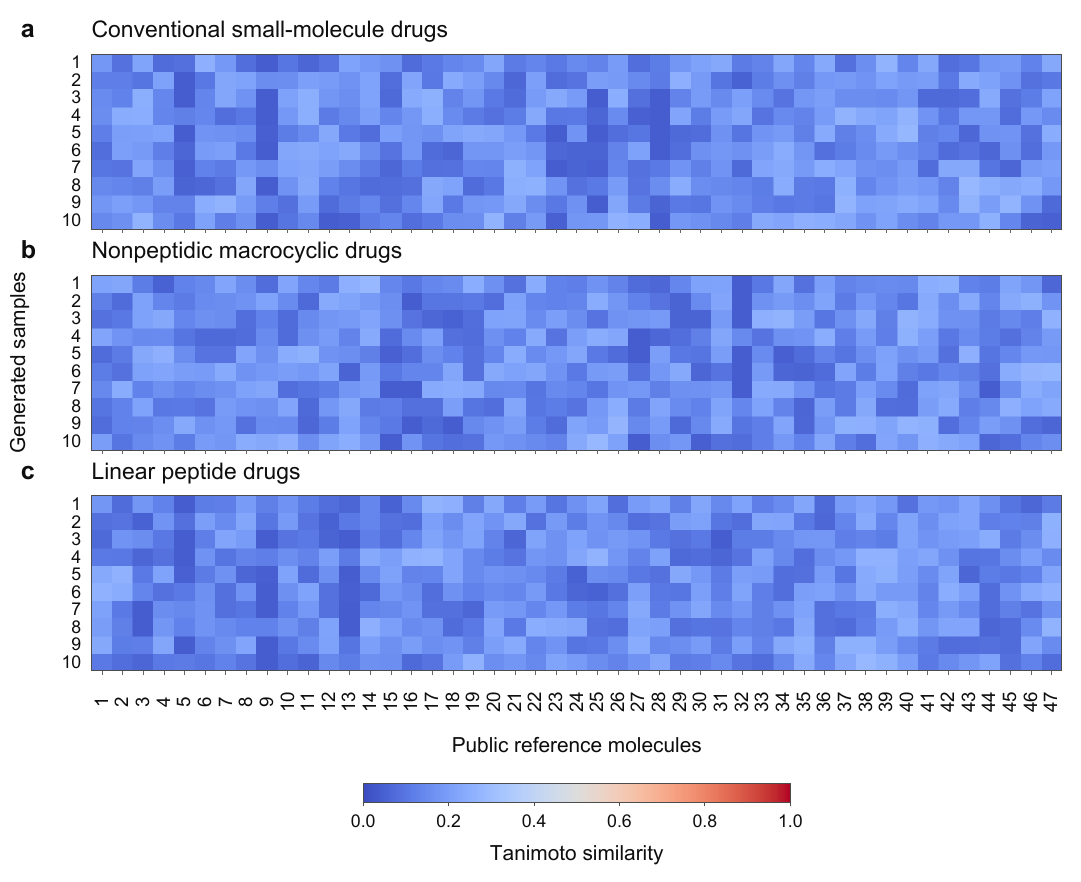}
    \caption{Tanimoto similarity between generated samples and public reference molecules for three drug modalities. \textbf{a}, Conventional small-molecule drugs. \textbf{b}, Nonpeptidic macrocyclic drugs. \textbf{c}, Linear peptide drugs. Each heatmap compares 10 generated samples (rows) with 47 public reference molecules (columns). Each cell represents the Tanimoto similarity of the corresponding pair, with colours indicating values from 0 to 1. Higher values indicate greater similarity, and all panels share the same colour scale.}
    \label{fig:similarity_group_1}
\end{figure}

\begin{figure}[!htbp]
    \centering
    \includegraphics[width=0.75\linewidth,height=0.70\textheight,keepaspectratio]{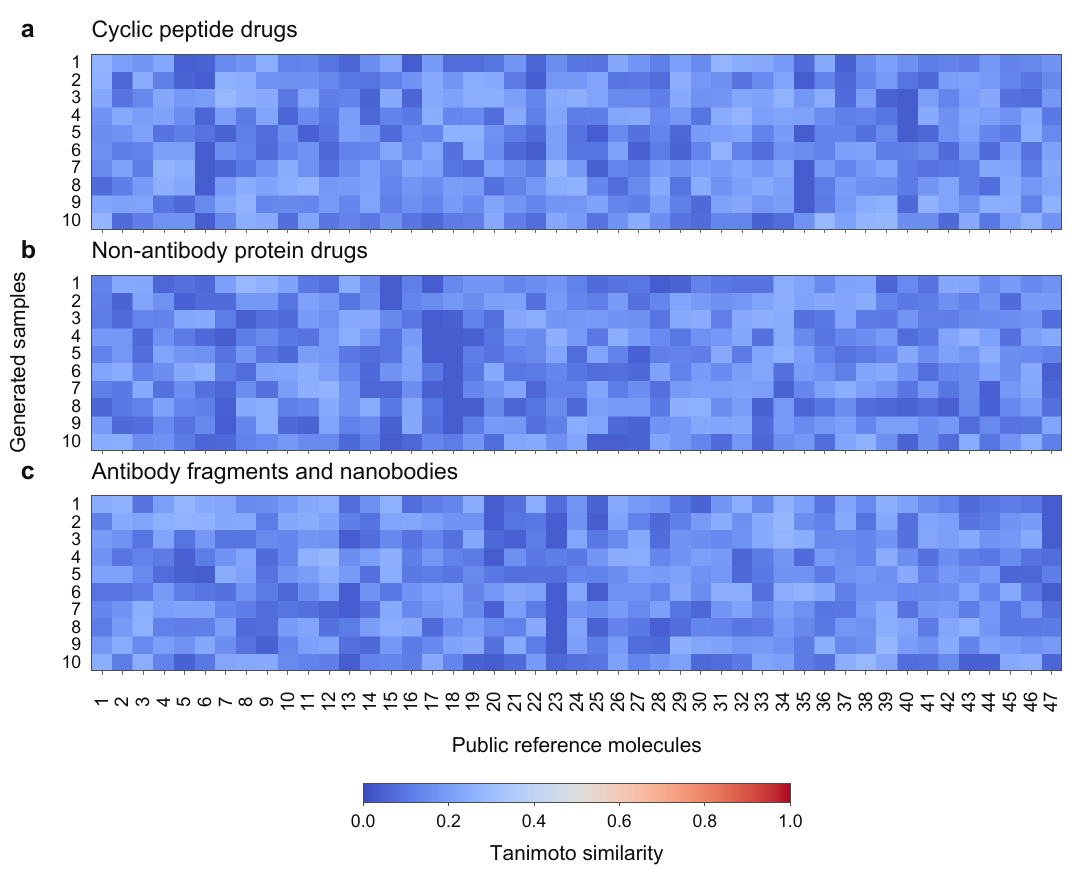}
    \caption{Tanimoto similarity between generated samples and public reference molecules for cyclic peptides and protein-based drug modalities. \textbf{a}, Cyclic peptide drugs. \textbf{b}, Non-antibody protein drugs. \textbf{c}, Antibody fragments and nanobodies. Each heatmap compares 10 generated samples (rows) with 47 public reference molecules (columns). Each cell represents the Tanimoto similarity of the corresponding pair, with colours indicating values from 0 to 1. Higher values indicate greater similarity, and all panels share the same colour scale.}
    \label{fig:similarity_group_2}
\end{figure}

\begin{figure}[!htbp]
    \centering
    \includegraphics[width=0.75\linewidth,height=0.70\textheight,keepaspectratio]{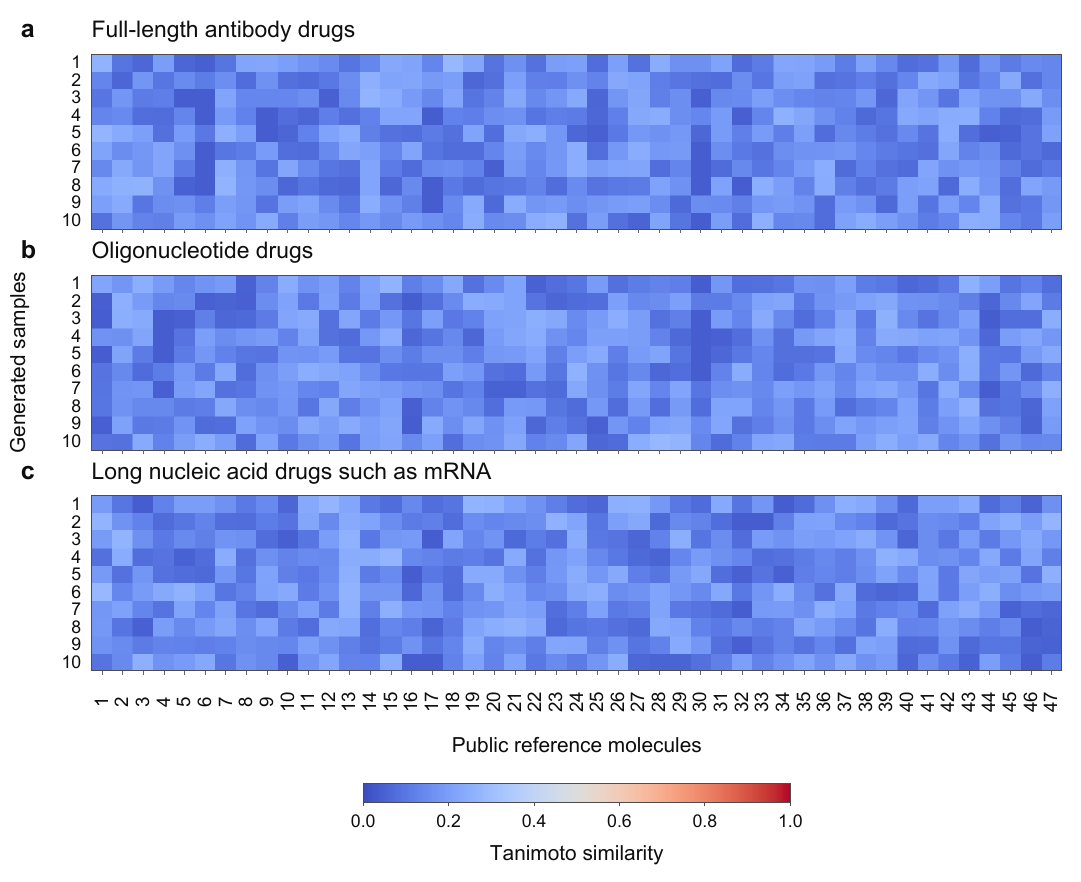}
    \caption{Tanimoto similarity between generated samples and public reference molecules for antibody and nucleic acid drug modalities. \textbf{a}, Full-length antibody drugs. \textbf{b}, Oligonucleotide drugs. \textbf{c}, Long nucleic acid drugs, such as mRNA. Each heatmap compares 10 generated samples (rows) with 47 public reference molecules (columns). Each cell represents the Tanimoto similarity of the corresponding pair, with colours indicating values from 0 to 1. Higher values indicate greater similarity, and all panels share the same colour scale.}
    \label{fig:similarity_group_3}
\end{figure}
\FloatBarrier
\subsection{Structural examples and physicochemical context}
\label{app:reference-illustrations}
Figures~\ref{fig:small_molecule_macrocycle_linear_peptide}--\ref{fig:antibody_fragments_oligonucleotides_long_rna}
illustrate the architectures of the nine binder categories using publicly
deposited reference structures and explicitly identified RNA schematics.
Figure~\ref{fig:physicochemical_profiles} summarizes physicochemical
descriptors for 116 reference structures arranged in two illustrative
sets of 58. Guideline lines provide context for interpreting the
descriptor profiles.

\begin{figure}[!htbp]
    \centering
    \includegraphics[width=1\linewidth,height=0.70\textheight,keepaspectratio]{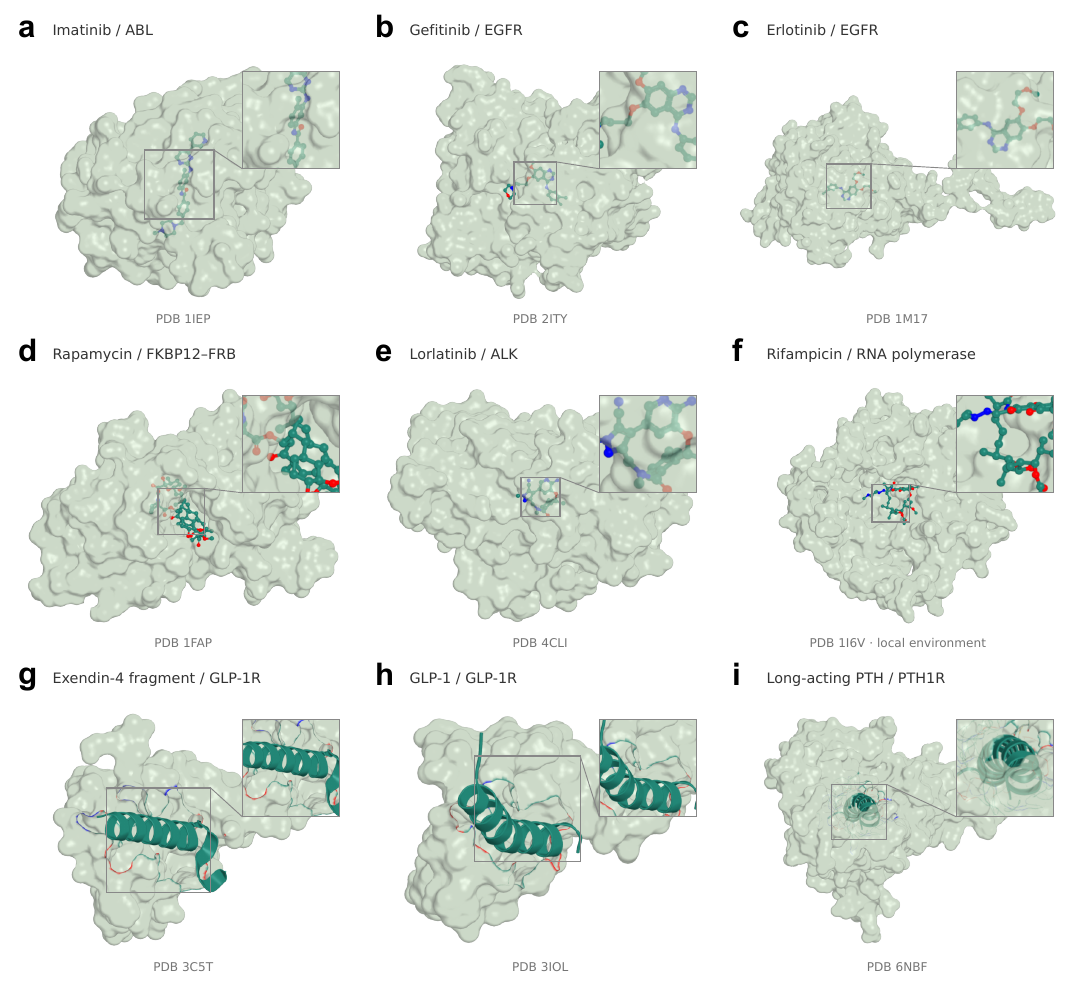}
    \caption{
    \textbf{Representative structures of conventional small molecules, nonpeptidic macrocycles and linear peptides.}
    \textbf{a--c}, Conventional small-molecule drugs: imatinib bound to ABL (PDB: 1IEP), gefitinib bound to EGFR (2ITY), and erlotinib bound to EGFR (1M17).
    \textbf{d--f}, Nonpeptidic macrocyclic drugs: rapamycin in the FKBP12--FRB complex (1FAP), lorlatinib bound to ALK (4CLI), and rifampicin bound to RNA polymerase (1I6V).
    Panel \textbf{f} shows the local protein environment surrounding rifampicin.
    \textbf{g--i}, Linear peptide examples: an exendin-4 fragment bound to the GLP-1 receptor extracellular domain (3C5T), GLP-1 bound to the same domain (3IOL), and a long-acting parathyroid hormone analogue bound to PTH1R (6NBF).
    Target proteins are shown as pale-green surfaces, with bound molecules represented as sticks or cartoons.
    Grey boxes identify regions reproduced in the enlarged insets.
    Structures are rendered from deposited coordinates and independently scaled.
    }
    \label{fig:small_molecule_macrocycle_linear_peptide}
\end{figure}

\begin{figure}[!htbp]
    \centering
    \includegraphics[width=1\linewidth,height=0.70\textheight,keepaspectratio]{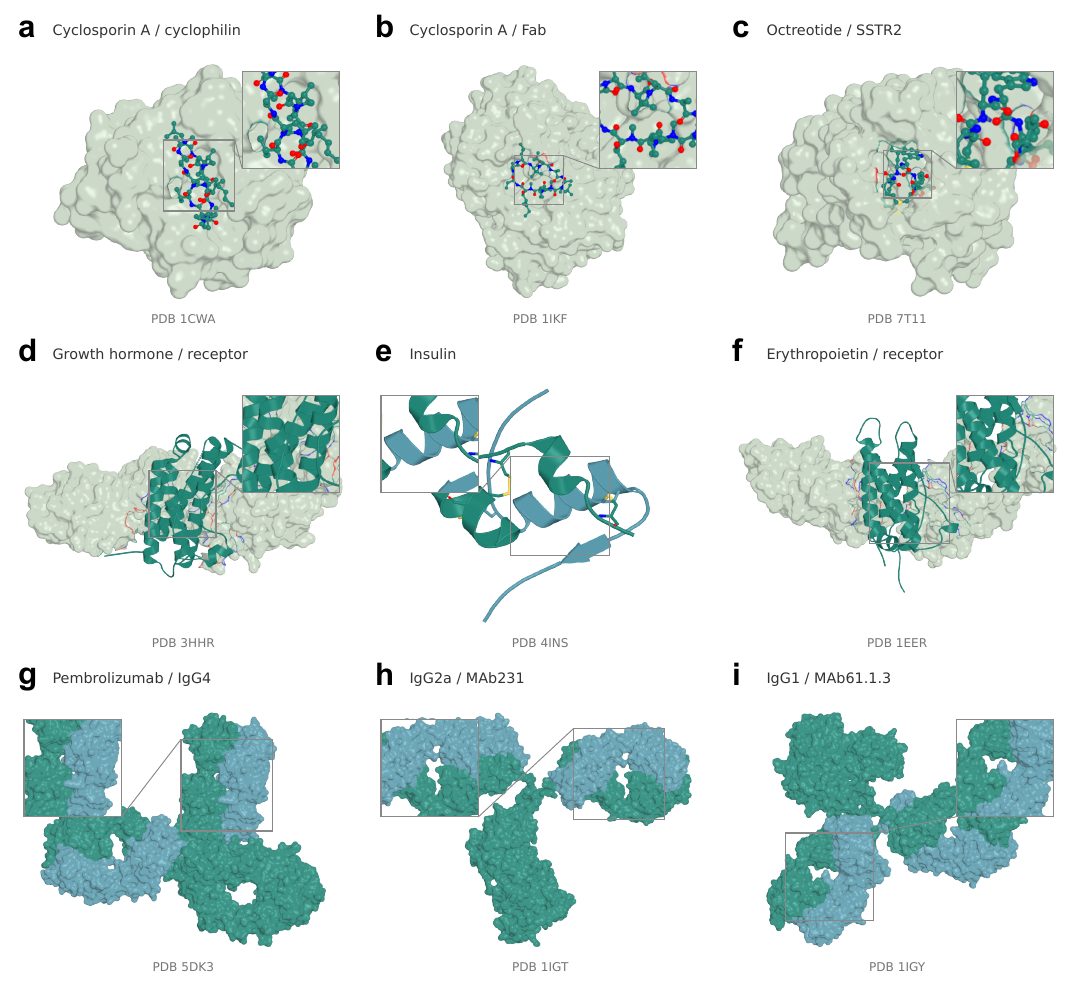}
    \caption{
    \textbf{Representative structures of cyclic peptides, non-antibody proteins and full-length antibodies.}
    \textbf{a--c}, Cyclic peptide examples: cyclosporin A bound to cyclophilin A (PDB: 1CWA), cyclosporin A bound to an antibody Fab fragment (1IKF), and octreotide bound to somatostatin receptor 2 (7T11).
    \textbf{d--f}, Non-antibody protein examples: human growth hormone bound to its receptor (3HHR), porcine insulin (4INS), and human erythropoietin bound to its receptor (1EER).
    \textbf{g--i}, Full-length antibody examples: pembrolizumab IgG4 (5DK3), IgG2a MAb231 (1IGT), and IgG1 MAb61.1.3 (1IGY).
    The latter two antibodies illustrate intact immunoglobulin architectures.
    Where present, target proteins are shown as pale-green surfaces.
    Protein ligands are shown as cartoons, whereas intact antibodies are shown as surfaces with heavy and light chains distinguished by colour.
    Grey boxes identify regions reproduced in the enlarged insets.
    Structures are rendered from deposited coordinates and independently scaled.
    }
    \label{fig:cyclic_peptide_protein_full_length_antibody}
\end{figure}

\begin{figure}[!htbp]
    \centering
    \includegraphics[width=1\linewidth,height=0.70\textheight,keepaspectratio]{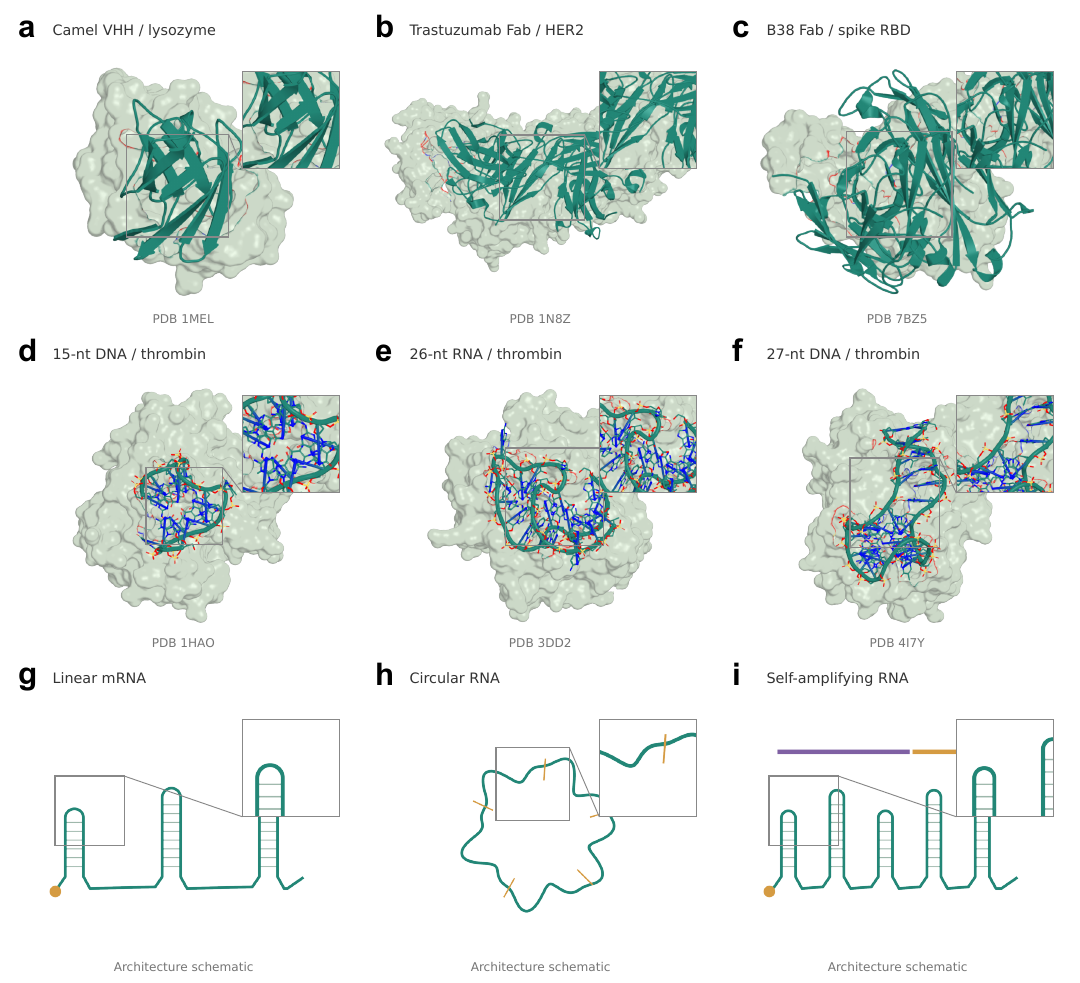}
    \caption{
    \textbf{Representative antibody-fragment and oligonucleotide structures, and long-RNA architectures.}
    \textbf{a--c}, Antibody fragments and nanobodies: a camel single-domain antibody bound to lysozyme (PDB: 1MEL), trastuzumab Fab bound to HER2 (1N8Z), and B38 Fab bound to the SARS-CoV-2 spike receptor-binding domain (7BZ5).
    \textbf{d--f}, Oligonucleotide examples: a 15-nucleotide DNA aptamer (1HAO), a 26-nucleotide RNA aptamer (3DD2), and a 27-nucleotide DNA aptamer (4I7Y), each bound to thrombin.
    These aptamers serve as structural representatives of the oligonucleotide modality.
    Target proteins are shown as pale-green surfaces, with antibody fragments and nucleic acids represented as cartoons or sticks.
    \textbf{g--i}, Schematic architectures of linear mRNA, circular RNA and self-amplifying RNA, respectively.
    RNA schematics illustrate molecular organization without specifying sequence, chain length or experimentally determined folding.
    Grey boxes identify regions reproduced in the enlarged insets.
    Molecular views are independently scaled.
    }
    \label{fig:antibody_fragments_oligonucleotides_long_rna}
\end{figure}

\begin{figure}[!htb]
    \centering
    \includegraphics[width=1\linewidth,height=0.70\textheight,keepaspectratio]{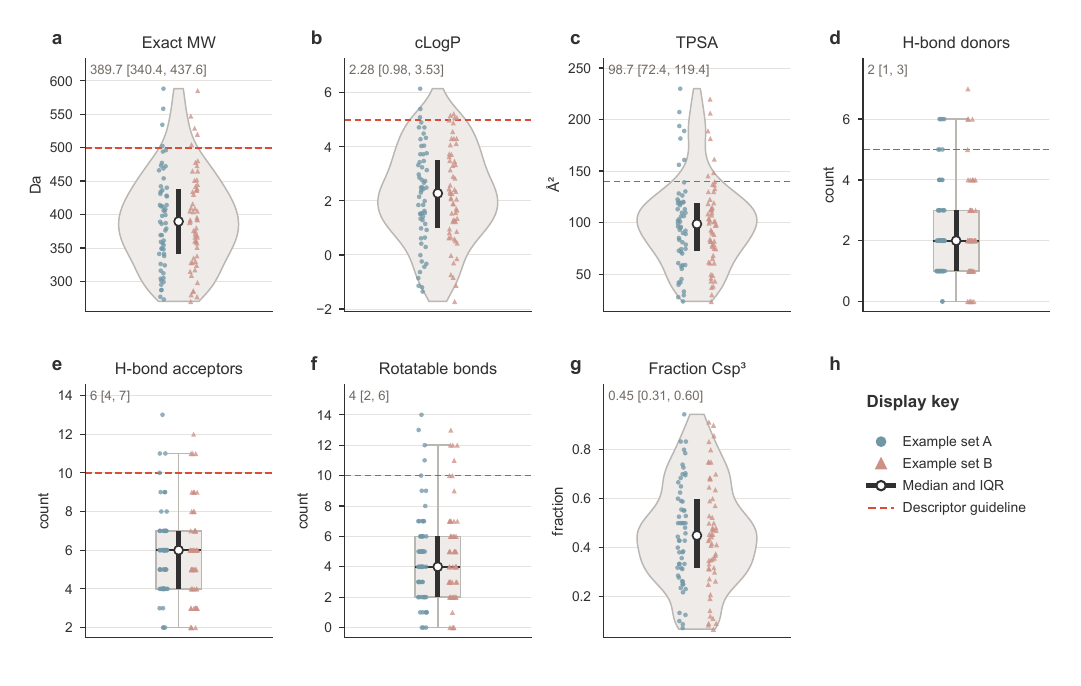}
    \caption{
\textbf{Physicochemical profiles of selected reference molecules.}
The figure summarizes seven molecular descriptors for 116 reference structures selected for illustration and divided into two example sets ($n=58$ each), shown as blue circles and terracotta triangles.
\textbf{a--g,} Exact molecular mass (Da), calculated octanol--water partition coefficient (cLogP), topological polar surface area (TPSA; \AA$^2$), hydrogen-bond donor count, hydrogen-bond acceptor count, rotatable bond count and the fraction of carbon atoms with $sp^3$ hybridization, respectively.
Grey violins show pooled distributions of continuous descriptors; boxes show the first-to-third-quartile intervals for count descriptors, with whiskers extending to the most extreme observations within 1.5 interquartile ranges (IQRs) of the box boundaries.
Open black circles and thick black intervals indicate pooled medians and IQRs. Numerical annotations report median [Q1, Q3], and each coloured marker represents one molecule.
Red dashed lines mark commonly used physicochemical guideline upper bounds: molecular mass of 500 Da, cLogP of 5, TPSA of 140 \AA$^2$, 5 hydrogen-bond donors, 10 hydrogen-bond acceptors and 10 rotatable bonds.
For all six descriptors with reference lines, the pooled median and upper quartile lie below the corresponding guideline, although individual molecules exceed these values.
The median fraction of $sp^3$-hybridized carbon atoms is 0.45, with an IQR of 0.31--0.60.
\textbf{h,} Key to the graphical encodings.
}
\label{fig:physicochemical_profiles}
\end{figure}
\FloatBarrier
\subsection{Multimetric profiles}
\label{app:radar-profiles}
Figures~\ref{fig:radar_macrocycle_oligonucleotide_antibody}--\ref{fig:radar_nucleic_acids_small_molecules_proteins}
visualize metric trade-offs using common normalization bounds across
the 18 method--state combinations within each modality. Each radial axis
supports comparison within its category-specific bounds. Interpretation
follows the individual metric axes; the logP axis displays hydrophobicity
according to the stated plotting convention.

\begin{figure}[!htbp]
    \centering
    \includegraphics[width=\linewidth,height=0.70\textheight,keepaspectratio]{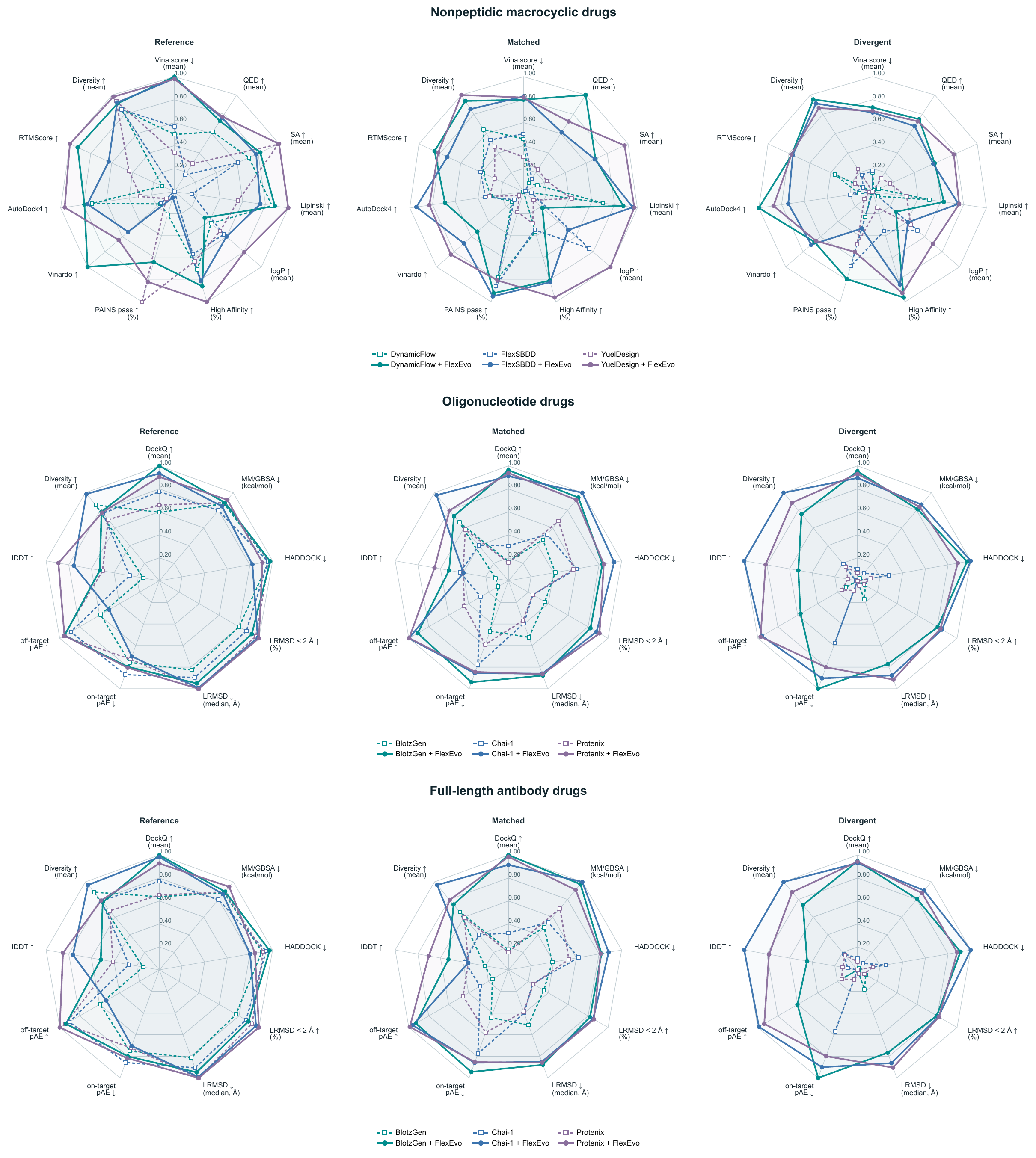}
    \caption{
    \textbf{Performance comparisons for nonpeptidic macrocycles, oligonucleotides and full-length antibodies.}
    Rows show nonpeptidic macrocyclic drugs, oligonucleotide drugs and full-length antibody drugs, respectively.
    Columns correspond to reference, matched and divergent conformations, from left to right.
    DynamicFlow, FlexSBDD and YuelDesign are compared with their FlexEvo variants for macrocycles; BoltzGen, Chai1 and Protenix are compared with their FlexEvo variants for oligonucleotides and antibodies.
    Colors identify model families within each row.
    Dashed lines with open squares indicate base models, whereas solid lines with filled circles indicate the corresponding FlexEvo variants.
    Each metric is min--max normalized to $[0,1]$ across all 18 method--conformation combinations within the same drug modality.
    Metrics marked with downward arrows are reverse-scaled, so outward values follow the indicated preferred direction.
    Normalization bounds are shared across the three columns within each row but differ between modalities.
    Means are used where indicated; LRMSD success rates below $2\,\text{\AA}$ and median LRMSD are shown separately.
    The logP axis displays increasing hydrophobicity according to the stated plotting convention.
    Performance comparisons follow the individual metric axes.
    }
    \label{fig:radar_macrocycle_oligonucleotide_antibody}
\end{figure}

\begin{figure}[!htbp]
    \centering
    \includegraphics[width=\linewidth,height=0.70\textheight,keepaspectratio]{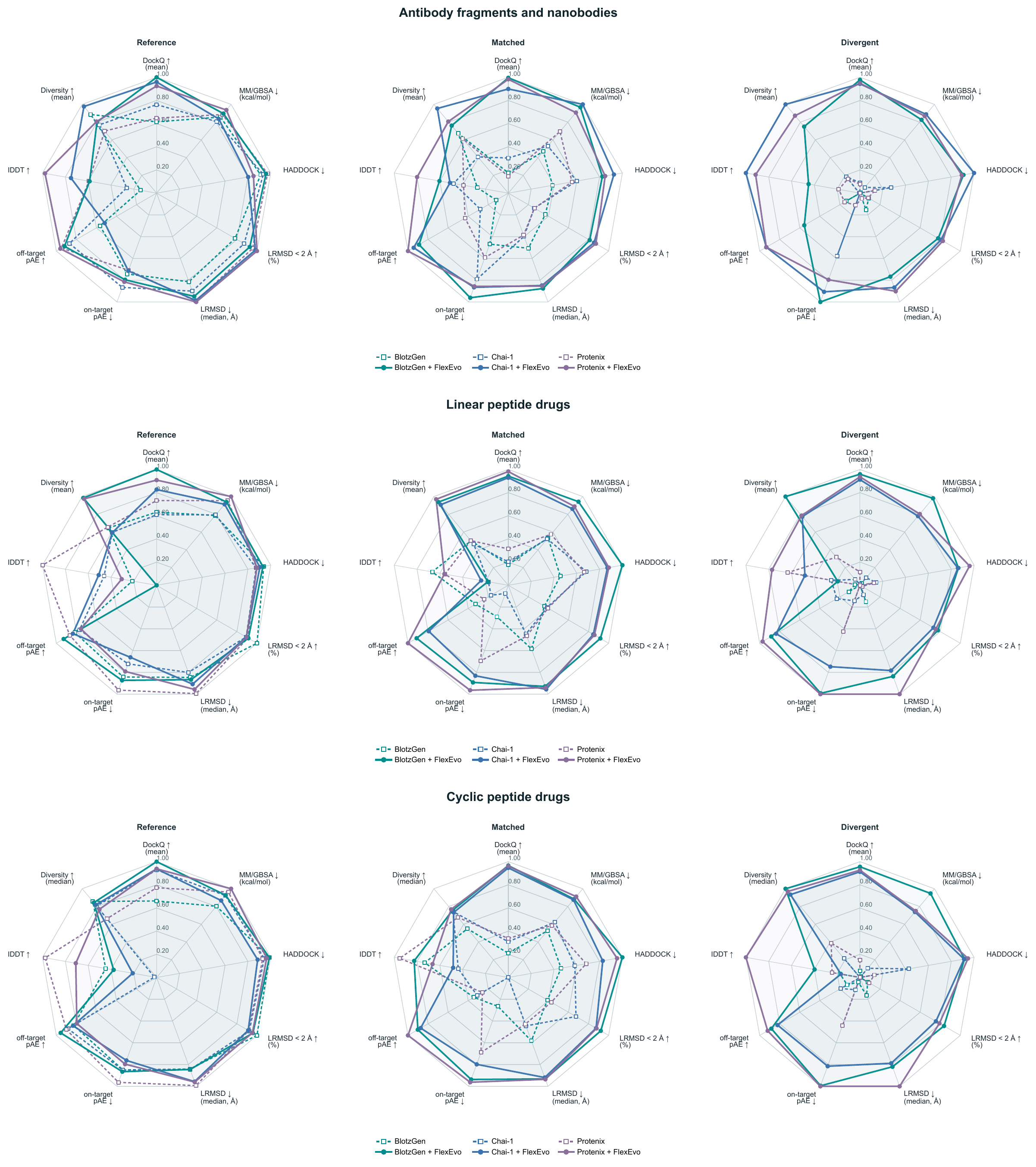}
    \caption{
    \textbf{Performance comparisons for antibody fragments, nanobodies and peptide drugs.}
    Rows show antibody fragments and nanobodies, linear peptide drugs and cyclic peptide drugs, respectively.
    Columns correspond to reference, matched and divergent conformations, from left to right.
    Each panel compares BoltzGen, Chai1 and Protenix with their corresponding FlexEvo variants.
    Colors identify model families within each row.
    Dashed lines with open squares indicate base models, whereas solid lines with filled circles indicate the corresponding FlexEvo variants.
    Each metric is min--max normalized to $[0,1]$ across all 18 method--conformation combinations within the same drug modality.
    Metrics marked with downward arrows are reverse-scaled, so outward values follow the indicated preferred direction.
    Normalization bounds are shared across the three columns within each row but differ between modalities.
    Means are used where indicated; LRMSD success rates below $2\,\text{\AA}$ and median LRMSD are shown separately.
    Diversity is represented by the median for cyclic peptides and by the mean for the other two modalities.
    Performance comparisons follow the individual metric axes.
    }
    \label{fig:radar_antibody_fragments_peptides}
\end{figure}

\begin{figure}[!htbp]
    \centering
    \includegraphics[width=\linewidth,height=0.70\textheight,keepaspectratio]{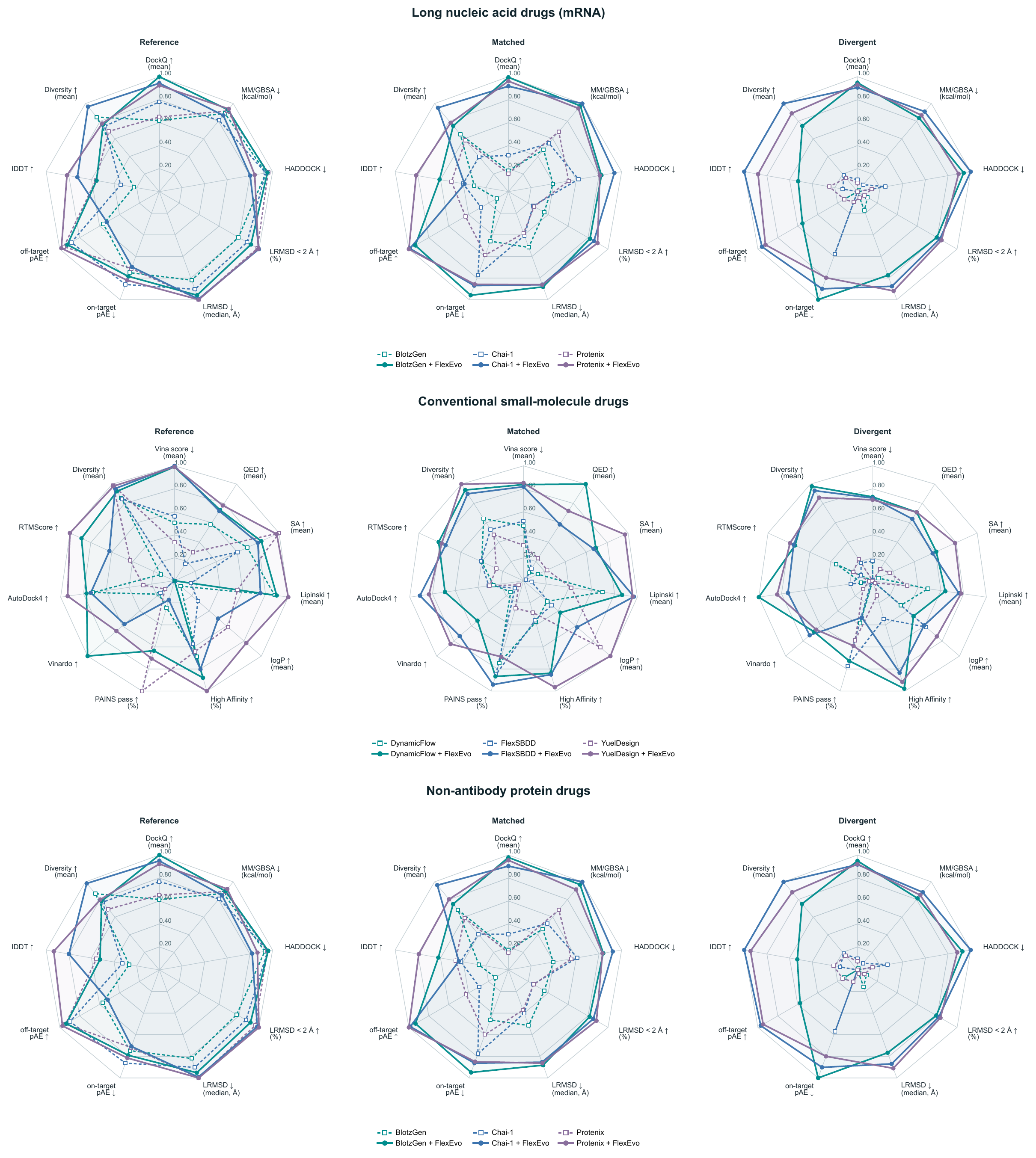}
    \caption{
    \textbf{Performance comparisons for long nucleic acids, small molecules and non-antibody proteins.}
    Rows show long nucleic acid drugs such as mRNA, conventional small-molecule drugs and non-antibody protein drugs, respectively.
    Columns correspond to reference, matched and divergent conformations, from left to right.
    BoltzGen, Chai1 and Protenix are compared with their FlexEvo variants for nucleic acids and proteins; DynamicFlow, FlexSBDD and YuelDesign are compared with their FlexEvo variants for small molecules.
    Colors identify model families within each row.
    Dashed lines with open squares indicate base models, whereas solid lines with filled circles indicate the corresponding FlexEvo variants.
    Each metric is min--max normalized to $[0,1]$ across all 18 method--conformation combinations within the same drug modality.
    Metrics marked with downward arrows are reverse-scaled, so outward values follow the indicated preferred direction.
    Normalization bounds are shared across the three columns within each row but differ between modalities.
    Means are used where indicated; LRMSD success rates below $2\,\text{\AA}$ and median LRMSD are shown separately.
    The logP axis displays increasing hydrophobicity according to the stated plotting convention.
    Performance comparisons follow the individual metric axes.
    }
    \label{fig:radar_nucleic_acids_small_molecules_proteins}
\end{figure}
\FloatBarrier
\subsection{On-target and off-target score diagnostics}
\label{app:interaction-diagnostics}
Figures~\ref{fig:interaction_scores_group_1}--\ref{fig:interaction_scores_group_3}
summarize on/off-target pAE diagnostics using the score definitions
in App.~\ref{app:evaluation-metrics}.
Highlighted candidates have on-target pAE below 5 and minimum
off-target pAE above 10 across the evaluated off-target panel.
Annotated percentages report the fraction of plotted candidates
meeting both criteria.
These model-confidence diagnostics summarize the on/off-target
contrast within the evaluated panel.

\begin{figure}[!htbp]
    \centering
    \includegraphics[width=1\linewidth,height=0.70\textheight,keepaspectratio]{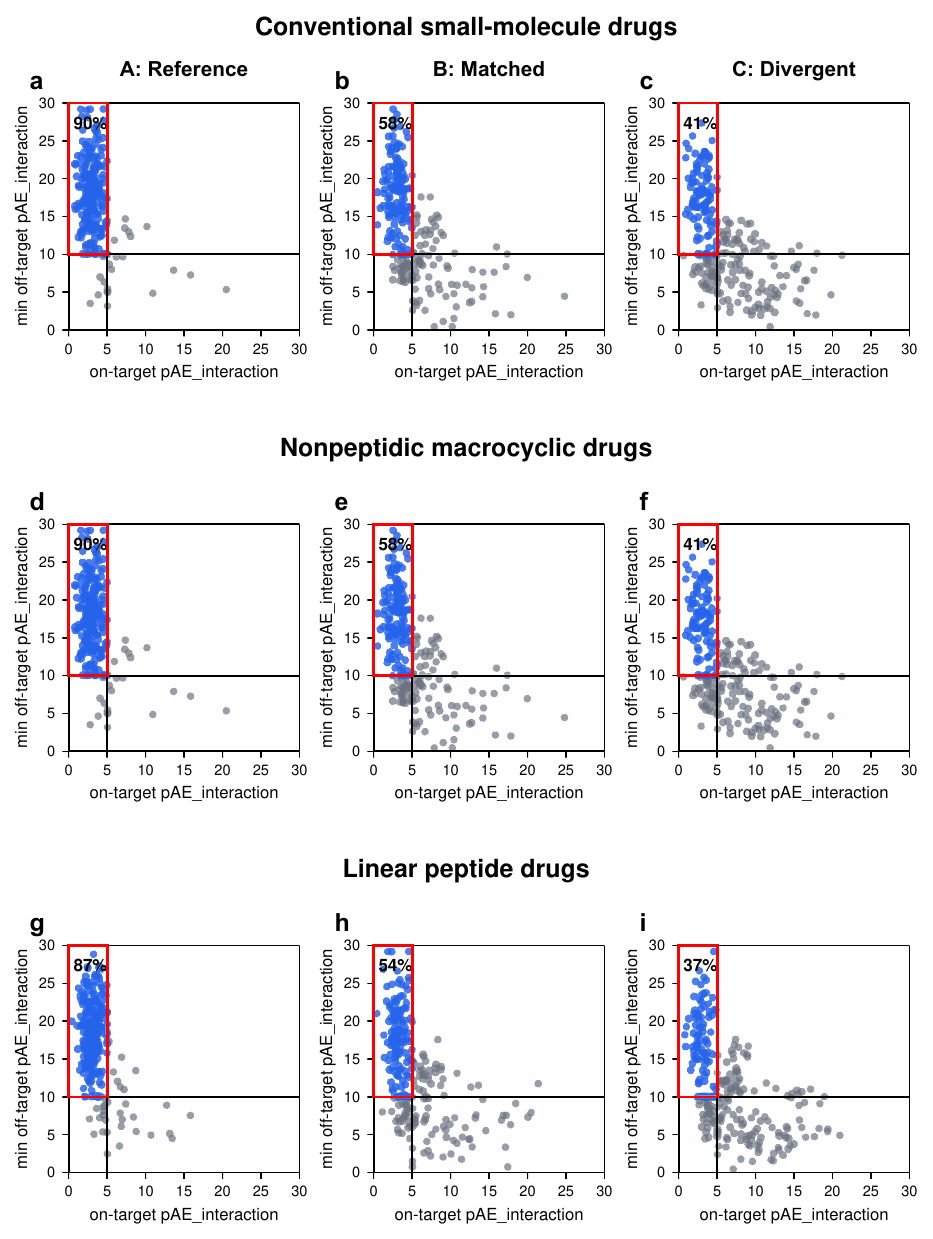}
    \caption{On-target and off-target interaction scores for small-molecule, macrocyclic and linear peptide drug candidates. Rows show \textbf{a}, conventional small-molecule drugs; \textbf{b}, nonpeptidic macrocyclic drugs; and \textbf{c}, linear peptide drugs. Columns retain the original A--C designations. Each point represents a candidate, plotted by its on-target \texttt{pAE\_interaction} score and the minimum score across the evaluated off-targets. Reference lines mark an on-target score of 5 and a minimum off-target score of 10. Red boxes highlight the region with on-target scores below 5 and minimum off-target scores above 10. Blue points fall within this region, whereas grey points fall outside it. Annotated percentages indicate the proportion of candidates within the highlighted region.}
    \label{fig:interaction_scores_group_1}
\end{figure}

\begin{figure}[!htbp]
    \centering
    \includegraphics[width=1\linewidth,height=0.70\textheight,keepaspectratio]{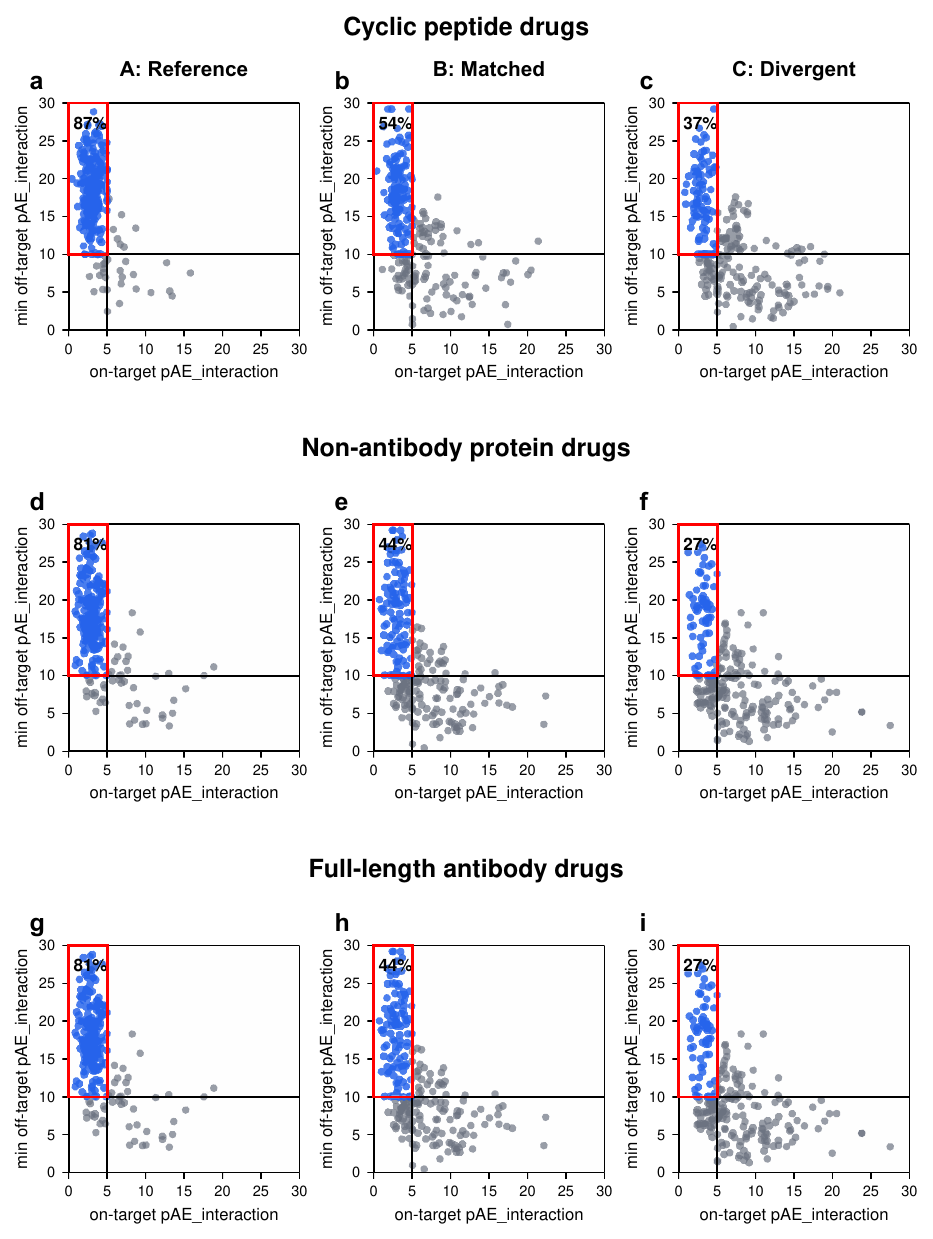}
    \caption{On-target and off-target interaction scores for cyclic peptide, protein and full-length antibody drug candidates. Rows show \textbf{a}, cyclic peptide drugs; \textbf{b}, non-antibody protein drugs; and \textbf{c}, full-length antibody drugs. Columns retain the original A--C designations. Each point represents a candidate, plotted by its on-target \texttt{pAE\_interaction} score and the minimum score across the evaluated off-targets. Reference lines mark an on-target score of 5 and a minimum off-target score of 10. Red boxes highlight the region with on-target scores below 5 and minimum off-target scores above 10. Blue points fall within this region, whereas grey points fall outside it. Annotated percentages indicate the proportion of candidates within the highlighted region.}
    \label{fig:interaction_scores_group_2}
\end{figure}

\begin{figure}[!htbp]
    \centering
    \includegraphics[width=1\linewidth,height=0.70\textheight,keepaspectratio]{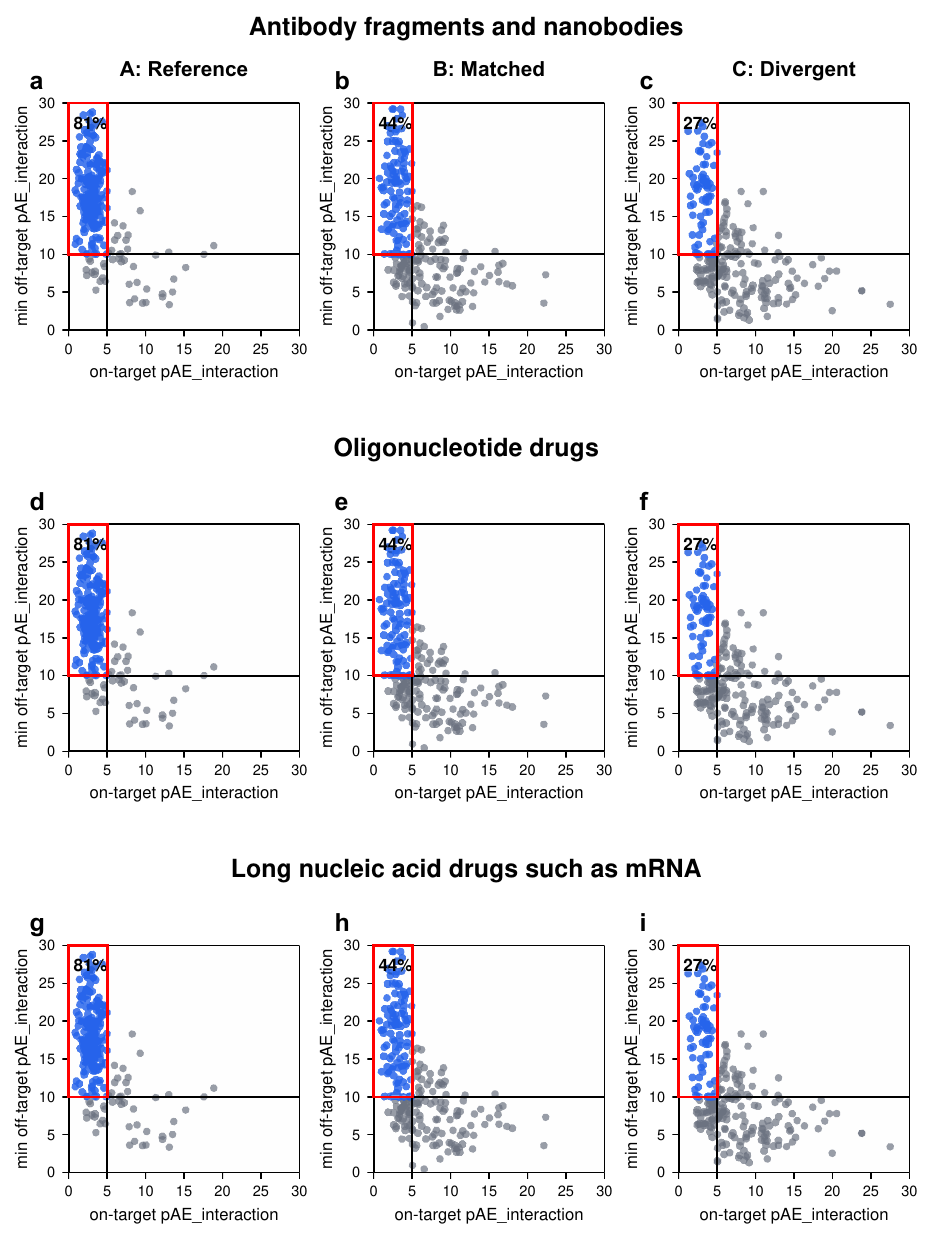}
    \caption{On-target and off-target interaction scores for antibody fragment and nucleic acid drug candidates. Rows show \textbf{a}, antibody fragments and nanobodies; \textbf{b}, oligonucleotide drugs; and \textbf{c}, long nucleic acid drugs, such as mRNA. Columns retain the original A--C designations. Each point represents a candidate, plotted by its on-target \texttt{pAE\_interaction} score and the minimum score across the evaluated off-targets. Reference lines mark an on-target score of 5 and a minimum off-target score of 10. Red boxes highlight the region with on-target scores below 5 and minimum off-target scores above 10. Blue points fall within this region, whereas grey points fall outside it. Annotated percentages indicate the proportion of candidates within the highlighted region.}
    \label{fig:interaction_scores_group_3}
\end{figure}
\FloatBarrier
\subsection{Optimization trajectories across binder categories}
\label{app:synthetic-trajectories}
Figures~\ref{fig:time_cost_small_molecules_linear_peptides}--\ref{fig:time_cost_antibody_fragments_nucleic_acids}
report the recorded optimization trajectories across the nine binder
categories. The horizontal axes show cumulative \textsc{FlexEvo}
optimization time per sample. The curves summarize metric evolution
during adaptation, including local fluctuations and late-stage
stabilization. Shaded bands summarize variability across the recorded
experimental runs. Figure~\ref{fig:optimization_dynamics} presents
representative adaptation dynamics in the main paper, while
Appendix~\ref{app:runtime-profiles} reports the computational cost analysis.

\begin{figure}[!htbp]
    \centering
    \includegraphics[width=1\linewidth,height=0.70\textheight,keepaspectratio]{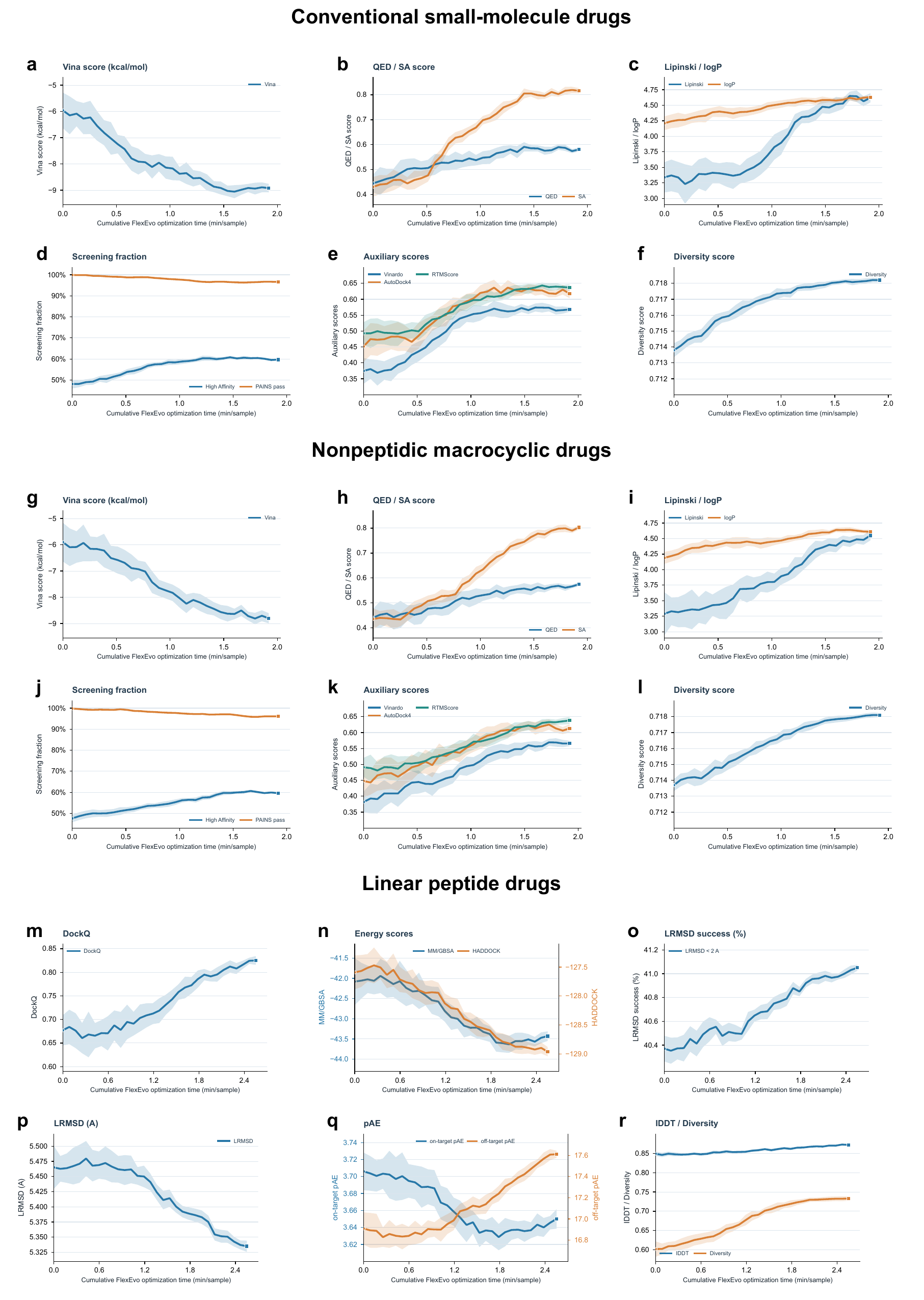}
    \caption{
Optimization trajectories across conventional small-molecule drugs,
nonpeptidic macrocyclic drugs, and linear peptide drugs.
Panels a--f, g--l, and m--r correspond to the three categories,
respectively. For the two small-molecule categories, panels a and g show
Vina scores; b and h, QED and SA scores; c and i, Lipinski scores and
logP; d and j, high-affinity and PAINS-pass fractions; e and k, auxiliary
scoring functions; and f and l, diversity. For linear peptides, panels
m--r show DockQ, MM/GBSA and HADDOCK scores, LRMSD success rate, LRMSD,
on-target and off-target pAE, and IDDT and diversity, respectively.
The horizontal axes indicate cumulative FlexEvo optimization time per
sample. The recorded trajectories show metric evolution during adaptation,
including local fluctuations and late-stage stabilization.
Shaded bands summarize variability across the recorded experimental runs.
}
\label{fig:time_cost_small_molecules_linear_peptides}
\end{figure}

\begin{figure}[!htbp]
    \centering
    \includegraphics[width=1\linewidth,height=0.70\textheight,keepaspectratio]{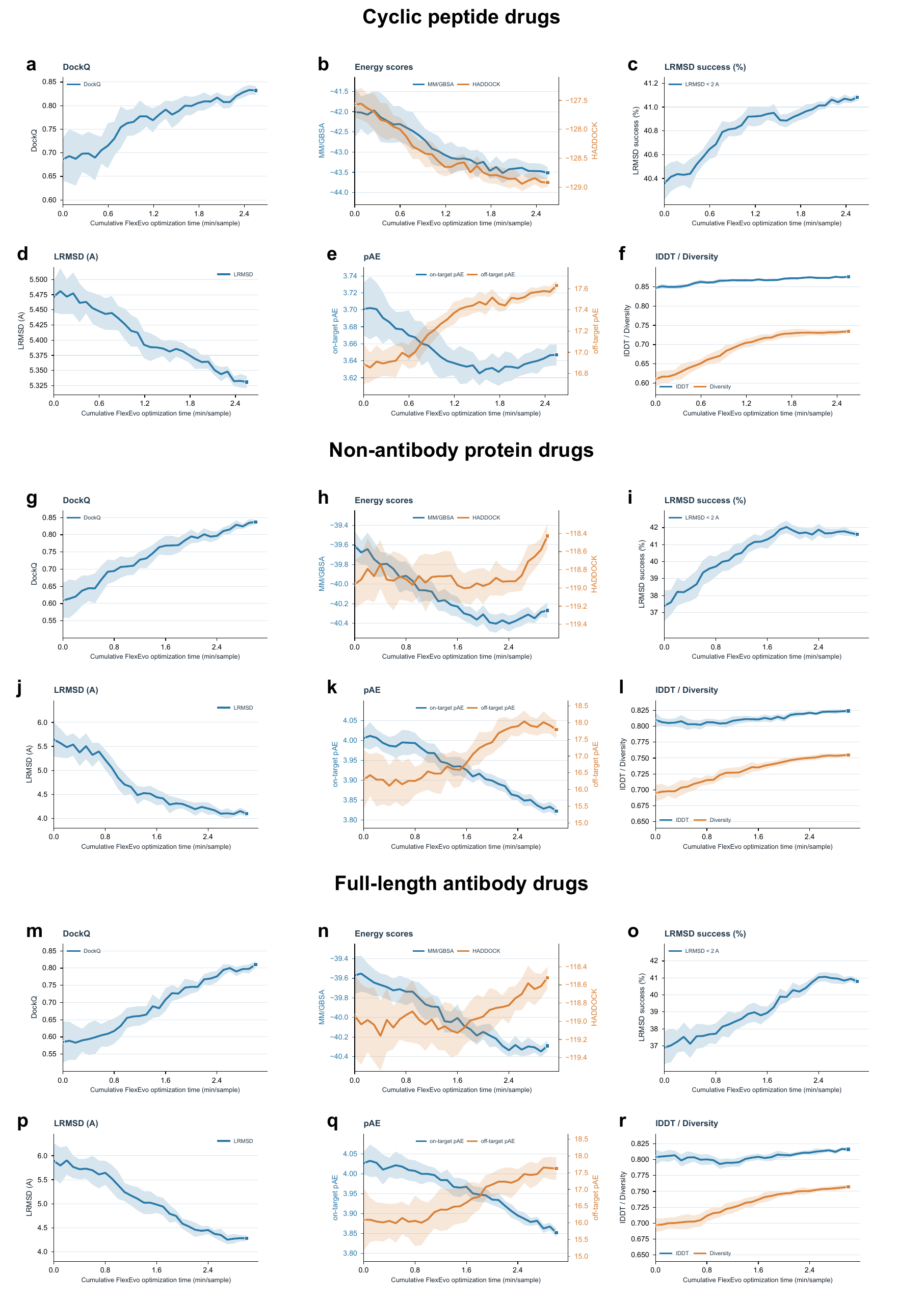}
    \caption{
Optimization trajectories across cyclic peptide drugs, non-antibody
protein drugs, and full-length antibody drugs. Panels a--f, g--l, and
m--r correspond to the three categories, respectively. Within each
category, the six panels report DockQ, MM/GBSA and HADDOCK scores, LRMSD success rate, LRMSD, on-target and off-target pAE, and IDDT and diversity, in that order. The horizontal axes indicate cumulative FlexEvo optimization time per sample. The recorded trajectories show increasing DockQ and LRMSD success rates, decreasing LRMSD and on-target pAE, and increasing diversity over optimization, with local fluctuations and late-stage stabilization. Differences in turning points and convergence profiles illustrate category-dependent optimization behavior. Energy-based metrics are interpreted according to their metric-specific preferred directions. Panels displaying paired metrics use separate left and right axes, with colors matched to the corresponding legends. 
}
\label{fig:time_cost_cyclic_peptides_proteins_antibodies}
\end{figure}

\begin{figure}[!htbp]
    \centering
    \includegraphics[width=1\linewidth,height=0.70\textheight,keepaspectratio]{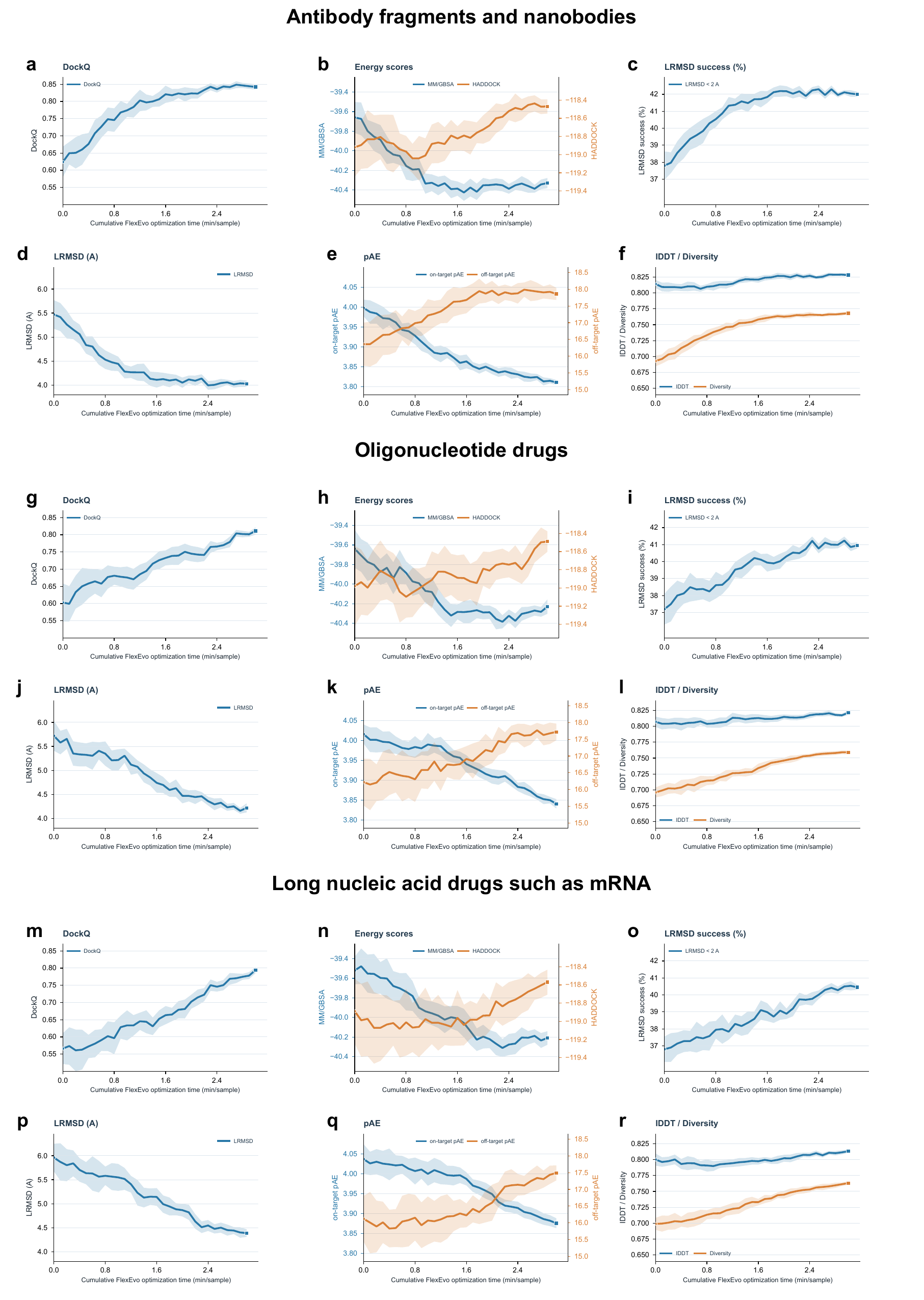}
    \caption{Optimization trajectories across antibody fragments and nanobodies, oligonucleotide drugs, and long nucleic acid drugs such as mRNA. Panels a--f, g--l, and m--r correspond to the three categories, respectively. Within each category, the six panels report DockQ, MM/GBSA and HADDOCK scores, LRMSD success rate, LRMSD, on-target and off-target pAE, and IDDT and diversity, in that order. The horizontal axes indicate cumulative FlexEvo optimization time per sample. The recorded trajectories show higher DockQ and LRMSD success rates, lower LRMSD and on-target pAE, and increased diversity over optimization, together with local fluctuations and category-dependent convergence profiles. Energy and pAE panels use separate left and right axes, with colors identifying the corresponding metrics. 
}
\label{fig:time_cost_antibody_fragments_nucleic_acids}
\end{figure}
\FloatBarrier

\section{Deterministic score bounds and conditional robustness of FlexEvo}
\label{app:flexevo-theory-v30}

This appendix analyzes the coordinate-level scores and selection rules
defined in Appendix~\ref{app:flexevo}. In particular, the
contact and clash kernels reproduce Eq.~\eqref{eq:contact_kernel_app},
and the objective coefficients reproduce
Eqs.~\eqref{eq:penalty_app} and~\eqref{eq:app-candidate-objectives}. The results concern a fixed candidate
pose and its geometric contact and clash descriptors.
Adaptation uses a single observed conformation \(P^0\) and synthetic
perturbation probes derived from it.
An unseen target enters only through an explicit proximity hypothesis,
or through radii measured after that target is supplied.
Held-out natural conformations are not arguments of the objectives, and
are reserved for evaluation.

Throughout, \(R\geq 1\) is the number of represented target residues,
the genome support satisfies
\(\mathcal R_G\subseteq\{1,\ldots,R\}\), and consequently
\(0\leq\Omega(G):=|\mathcal R_G|/R\leq 1\).
Scales satisfy
\(0<z_{\min}\leq z_r\leq z_{\max}<\infty\), and every role extent
\(b_\ell\) is positive and finite.
Set \(b_{\max}=\max_\ell b_\ell\) and write
\([v]_+=\max\{v,0\}\),
\(C(v)=\min\{1,\max\{0,v\}\}\).
Every candidate used in a score formula has \(n=n(x)\geq 1\) finite
coordinates and a finite edit history. Candidate records include that
history; equal coordinates alone need not imply equal objective values.
Every target atom set is finite and nonempty.
The contact, clash, and objective formulas below are not applied to empty
candidate records. If no adapted candidate is available, the documented
genome-level evaluator returns \((0,0)\); this return does not define any
candidate-level expression containing \(1/n\).
We use \(0\leq\rho,\lambda\leq 1\), \(0<\tau<1\), and
\(\varepsilon_q>0\) throughout.
The analysis uses exact real arithmetic: finite-precision implementations
must handle threshold endpoints and non-finite values consistently.
Distances below are numerical values in angstroms.
Throughout every target comparison, the candidate coordinates and their
number are fixed. The bounds for $q$ and $\chi$ do not transfer the
entire genome fitness or the output of a newly run adaptation procedure.
All sums over candidate coordinates run from \(1\) to \(n\), and
all probe extrema and sums run from \(0\) to \(K-1\).

The reference configuration uses
\begin{equation}
\label{eq:fev30-reference}
\begin{aligned}
&(z_{\min},z_{\max})=(0.55,1.75),&
&(b_{\mathrm{stable}},b_{\mathrm{flexible}},b_{\mathrm{forbidden}})
=(3.0,4.5,2.9),\\
&(\rho,\tau,\varepsilon_q)=(0.60,0.75,10^{-9}),&
&(\lambda,\eta)=(0.70,0.10),\\
&(N_G,g_{\max},n_o)=(8,4,2).
\end{aligned}
\end{equation}
The profile bounds below hold for every \(\rho\in[0,1]\), and the
aggregation inequality holds for every \(\lambda\in[0,1]\). Numerical
objective ranges use the stated reference coefficients.

\subsection{Geometry of the spatial prior}

For a covered residue \(r\), let \(L_r=b_{\ell_r}z_r>0\) and
\(Q_r^\top Q_r=I_3\). Its FlexBox is
\begin{equation}
\label{eq:fev30-box}
\mathcal B_r(G)
=
\bigl\{
y\in\mathbb R^3:
\|Q_r^\top(y-c_r)\|_\infty\leq L_r/2
\bigr\}.
\end{equation}

\begin{fevproposition}[Box geometry and support]
\label{prop:fev30-geometry}
Each box is nonempty, compact, and convex, with volume \(L_r^3\),
Euclidean diameter \(\sqrt{3}\,L_r\), and enclosing radius
\(\sqrt{3}\,L_r/2\) about \(c_r\).
For a common center and frame, boxes nest in the order of the products
\(b_{\ell_r}z_r\).
Moreover, with \(\Omega(G)=|\mathcal R_G|/R\),
\begin{equation}
\label{eq:fev30-volume}
\operatorname{Vol}\!\Bigl(
\bigcup_{r\in\mathcal R_G}\mathcal B_r(G)
\Bigr)
\leq
|\mathcal R_G|\,(b_{\max}z_{\max})^3
=
R\,\Omega(G)\,(b_{\max}z_{\max})^3.
\end{equation}
\end{fevproposition}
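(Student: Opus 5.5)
The plan is to reduce everything to the axis-aligned cube $\mathcal C_r=[-L_r/2,L_r/2]^3$ via the affine map $\Phi_r(u)=c_r+Q_r u$. Because $Q_r^\top Q_r=I_3$ and $Q_r$ is square, $Q_r$ is orthogonal: $Q_rQ_r^\top=I_3$, $|\det Q_r|=1$, and $\Phi_r$ is a Euclidean isometry of $\mathbb R^3$. Setting $u=Q_r^\top(y-c_r)$ shows $y\in\mathcal B_r(G)$ iff $u\in\mathcal C_r$, so $\mathcal B_r(G)=\Phi_r(\mathcal C_r)$.

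With this identification, nonemptiness ($c_r$ corresponds to $u=0$), compactness (continuous image of a closed bounded set), and convexity (affine image of a convex set) are immediate. The volume is $|\det Q_r|\,L_r^3=L_r^3$ by the change-of-variables formula. Diameter and enclosing radius are isometry invariants, so I compute them on $\mathcal C_r$. For $u\in\mathcal C_r$ we have $\|u\|_2\le\sqrt3\|u\|_\infty\le\sqrt3L_r/2$, with equality at the vertex $(L_r/2)(1,1,1)$. This gives radius $\sqrt3L_r/2$ about $c_r$. For the diameter, the triangle inequality gives at most $\sqrt3L_r$, and the antipodal vertices attain it.

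Nesting with common $c_r,Q_r$ follows from the defining inequality: if $b_{\ell}z\le b_{\ell'}z'$, then $\|Q_r^\top(y-c_r)\|_\infty\le b_\ell z/2$ implies the same bound with $b_{\ell'}z'/2$.

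For \eqref{eq:fev30-volume}, I use finite subadditivity of Lebesgue measure to bound the union volume by $\sum_{r\in\mathcal R_G}L_r^3$. Each term satisfies $L_r^3\le(b_{\max}z_{\max})^3$, since $0<b_{\ell_r}\le b_{\max}$, $0<z_r\le z_{\max}$, and $t\mapsto t^3$ is increasing on $[0,\infty)$. Summing over $\mathcal R_G$ gives $|\mathcal R_G|(b_{\max}z_{\max})^3$, and $|\mathcal R_G|=R\,\Omega(G)$ by definition.

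No step is a real obstacle. The only point needing care is deducing $Q_rQ_r^\top=I_3$ from $Q_r^\top Q_r=I_3$ for a square matrix. Without it, the set equality $\mathcal B_r=\Phi_r(\mathcal C_r)$ and the isometry claim would fail.
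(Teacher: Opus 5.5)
Your proposal is correct and follows the paper's proof essentially step for step. Like the paper, you identify the box as the image of $[-L_r/2,L_r/2]^3$ under the affine isometry $u\mapsto c_r+Q_ru$, use $\|u\|_2\le\sqrt3\|u\|_\infty$ for the radius, compare $\infty$-norm thresholds for nesting, and apply subadditivity with $L_r\le b_{\max}z_{\max}$ for the union bound; your extra details (deducing $Q_rQ_r^\top=I_3$ because $Q_r$ is square, and attainment at the vertices for the diameter and radius) only make explicit steps the paper leaves implicit.
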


\begin{proof}
The box is the image of \([-L_r/2,L_r/2]^3\) under the affine isometry
\(u\mapsto c_r+Q_ru\), which proves nonemptiness, compactness, convexity,
volume, and diameter.
Every local coordinate satisfies
\(\|u\|_2\leq\sqrt{3}\,\|u\|_\infty\leq\sqrt{3}\,L_r/2\).
Comparing \(\infty\)-norm bounds proves nesting.
Subadditivity of volume and \(L_r\leq b_{\max}z_{\max}\) give
\eqref{eq:fev30-volume}.
\end{proof}

\begin{fevremark}[Editing prior, not a hard pose constraint]
\label{rem:fev30-box-scope}
Role names alone do not order boxes when scales differ; the products
\(b_{\ell_r}z_r\) do.
The volume bound controls the edit prior and ignores overlaps, so it is
generally loose.
A repaired candidate coordinate need not lie in any box.
\end{fevremark}

\subsection{Fixed-pose scores and finite-probe bounds}

Fix a genome \(G\) and probes \((\widetilde P_G^m)_{m=0}^{K-1}\) with
\(\widetilde P_G^0=P^0\) and integer \(K=M+1\geq 1\).
The reference construction has \(K=7\); distinct indices may share the
same geometry.
Candidate coordinates remain fixed across probes: the profile measures
the response of one candidate, not a separately re-optimized pose for
each probe.

For a finite nonempty target atom set \(\mathcal Y(P)\) in one fixed
scoring frame, set
\(d_i(P)=\min_{y\in\mathcal Y(P)}\|p_i(x)-y\|_2\) and
\begin{align}
\label{eq:fev30-kernels}
h_q(d)
&=
\mathbf{1}_{\{2\leq d\leq 4.5\}}
+0.35\,\mathbf{1}_{\{1.2\leq d<2\}}
-1.5\,\mathbf{1}_{\{d<1.15\}},
\\
h_\chi(d)
&=
\mathbf{1}_{\{d<1.15\}}
+0.35\,\mathbf{1}_{\{1.15\leq d<1.45\}},
\nonumber\\
q(x;P)
&=
C\Bigl(n^{-1}\sum_i h_q\bigl(d_i(P)\bigr)\Bigr),
\qquad
\chi(x;P)
=
n^{-1}\sum_i h_\chi\bigl(d_i(P)\bigr).
\nonumber
\end{align}
The indicators inside each kernel are pairwise disjoint, so
\(h_q\in\{-1.5,0,0.35,1\}\) and \(h_\chi\in\{0,0.35,1\}\).
In particular the clash mid-piece is half-open:
\(h_\chi(1.45)=0\), while \(h_\chi(d)=0.35\) for every
\(d\in[1.15,1.45)\).
Contact clipping is applied after averaging; clash is not clipped, but
its average still lies in \([0,1]\).

Write \(q_m=q(x;\widetilde P_G^m)\), \(\chi_m=\chi(x;\widetilde P_G^m)\),
and
\begin{align}
\label{eq:fev30-profile}
q_{\min}&=\min_m q_m,&
q_{\max}&=\max_m q_m,&
\bar q&=K^{-1}\sum_m q_m,&
\Delta_q&=q_{\max}-q_{\min},
\nonumber\\
q_{\mathrm{rob}}
&=
\rho q_{\min}+(1-\rho)\bar q,
&
\chi_{\max}&=\max_m\chi_m,
\\
\kappa_q
&=
\begin{cases}
K^{-1}\sum_m\mathbf{1}_{\{q_m\geq\tau q_0\}},
& q_0>\varepsilon_q,\\
0,
& q_0\leq\varepsilon_q.
\end{cases}
\nonumber
\end{align}

\begin{fevproposition}[Finite-probe contact bound]
\label{prop:fev30-profile}
One has
\(0\leq q_{\min}\leq q_{\mathrm{rob}}\leq\bar q\leq q_{\max}\leq 1\)
and \(0\leq\chi_m\leq\chi_{\max}\leq 1\).
Define
\begin{equation}
\label{eq:fev30-Lprof}
\begin{aligned}
L_\kappa
&=
\begin{cases}
\tau q_0,
& q_0>\varepsilon_q\text{ and }\kappa_q=1,\\
0,
& \text{otherwise},
\end{cases}
\\
L_{\mathrm{prof}}
&=
\max\Bigl\{
0,\ 
q_0-\Delta_q,\ 
L_\kappa,\ 
q_{\mathrm{rob}}-(1-\rho)\tfrac{K-1}{K}\Delta_q
\Bigr\}.
\end{aligned}
\end{equation}
Then every probe satisfies \(q_m\geq q_{\min}\geq L_{\mathrm{prof}}\).
The bound involving \(q_{\mathrm{rob}}\) and \(\Delta_q\) is sharp over
arbitrary tuples in \([0,1]^K\).
\end{fevproposition}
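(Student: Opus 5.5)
The plan is to establish the chain of elementary inequalities first, then verify each of the four candidate lower bounds entering $L_{\mathrm{prof}}$ separately, and finally exhibit an extremal tuple for sharpness.

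\emph{Range and ordering.} Each $q_m$ is the clip $C(\cdot)$ of an average, so $q_m\in[0,1]$; hence $0\le q_{\min}\le\bar q\le q_{\max}\le1$, because a mean lies between the minimum and the maximum. Since $q_{\mathrm{rob}}=\rho q_{\min}+(1-\rho)\bar q$ is a convex combination of $q_{\min}$ and $\bar q$, it lies between them. For clash, $h_\chi\in\{0,0.35,1\}$, so each $\chi_m$ is an average of values in $[0,1]$, and $\chi_m\le\chi_{\max}\le1$ by the definition of the maximum.

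\emph{Each term of $L_{\mathrm{prof}}$ is at most $q_{\min}$.}
\begin{itemize}
\item $0\le q_{\min}$ was shown above.
\item $q_0\le q_{\max}$, so $q_0-\Delta_q\le q_{\max}-\Delta_q=q_{\min}$.
\item If $q_0>\varepsilon_q$ and $\kappa_q=1$, every indicator $\mathbf 1_{\{q_m\ge\tau q_0\}}$ equals $1$, because the average of $K$ indicators equals $1$ only when all of them do. Hence $q_m\ge\tau q_0$ for all $m$, and $q_{\min}\ge L_\kappa$. Otherwise $L_\kappa=0$ and there is nothing to prove.
\item For the last term, write $\bar q-q_{\min}=K^{-1}\sum_m(q_m-q_{\min})$. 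The minimizing index contributes $0$ and the other $K-1$ terms are each at most $\Delta_q$, so $\bar q-q_{\min}\le\frac{K-1}{K}\Delta_q$. Then $q_{\mathrm{rob}}-q_{\min}=(1-\rho)(\bar q-q_{\min})\le(1-\rho)\frac{K-1}{K}\Delta_q$, which rearranges to the claimed bound.
\end{itemize}
Taking the maximum of the four terms gives $q_m\ge q_{\min}\ge L_{\mathrm{prof}}$.

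\emph{Sharpness.} We need a tuple in $[0,1]^K$ on which the $q_{\mathrm{rob}}$ term equals $q_{\min}$. Equality holds exactly when $\rho=1$, or $K=1$, or all non-minimizing entries equal $q_{\max}$. So take $q_{\min}=a$ at one index and $b\ge a$ at the remaining $K-1$ indices. Then $\bar q=a+\frac{K-1}{K}(b-a)$ and $\Delta_q=b-a$, so $q_{\mathrm{rob}}-(1-\rho)\frac{K-1}{K}\Delta_q=a$ exactly. This shows the constant $(1-\rho)\frac{K-1}{K}$ cannot be decreased for any $\rho$ and $K$. Sharpness is meant for this term alone over arbitrary tuples, so the probe-geometric realizability of $q_0$ does not need to be addressed.

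\emph{Main obstacle.} The only delicate points are bookkeeping: the index $m=0$ belongs to the probe set, which is what justifies $q_0\le q_{\max}$, and the $\kappa_q=1$ case must be read with the averaged indicator. Everything else is direct.
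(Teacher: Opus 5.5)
Your proposal is correct and follows the paper's proof essentially step for step. It uses the same clipping and convexity observations for the ranges, the same inequality $q_0\le q_{\max}$, the same reading of $\kappa_q=1$ as forcing every indicator to one, the same averaging estimate $\bar q\le q_{\min}+\frac{K-1}{K}\Delta_q$, and the identical extremal tuple for sharpness (one entry at $q_{\min}$, the remaining $K-1$ at $q_{\min}+\Delta_q$).

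One small wording point concerns your side remark on when equality holds. The condition should be that all entries other than one designated minimizer equal $q_{\max}$; for example, $(0,0,1)$ with $K=3$ and $\rho<1$ is not extremal. This does not affect your sharpness construction.
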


\begin{proof}
Clipping and the clash image give the score ranges.
The average lies between the minimum and maximum, and
\(q_{\mathrm{rob}}\) is a convex combination of the minimum and average.
Since \(q_0\leq q_{\max}\), one has \(q_{\min}\geq q_0-\Delta_q\).
When \(\kappa_q=1\) and \(q_0>\varepsilon_q\), every consistency indicator
equals one, so \(q_{\min}\geq\tau q_0\).
At least one score equals \(q_{\min}\) and all others are at most
\(q_{\min}+\Delta_q\), whence
\[
\bar q\leq q_{\min}+\tfrac{K-1}{K}\Delta_q,
\qquad
q_{\mathrm{rob}}\leq q_{\min}+(1-\rho)\tfrac{K-1}{K}\Delta_q.
\]
Rearrangement gives the remaining bound.
Equality holds for one score equal to \(q_{\min}\) and the other \(K-1\)
equal to \(q_{\min}+\Delta_q\), whenever these values lie in \([0,1]\).
\end{proof}

For \(K=7\) and \(\rho=0.60\), the last bound is
\(q_{\min}\geq q_{\mathrm{rob}}-(12/35)\Delta_q\).
This inequality gives a lower bound on aggregate contact quality shared
by all seven probes for the fixed candidate. It connects the optimized
robust summary and contact variation to worst-probe contact retention.
Transfer to natural target geometries is characterized by
Theorem~\ref{thm:fev30-transfer} under its stated neighborhood conditions.

\begin{fevcorollary}[Bound from the robust summary alone]
\label{cor:fev30-summary-only}
If only \(q_{\mathrm{rob}}\), \(K\), and \(\rho\) are retained, then
\begin{equation}
\label{eq:fev30-summary-only}
q_{\min}
\geq
L_K
:=
\Biggl[
\frac{Kq_{\mathrm{rob}}-(1-\rho)(K-1)}{1+\rho(K-1)}
\Biggr]_+.
\end{equation}
This bound is sharp over arbitrary real-valued tuples in \([0,1]^K\).
Sharpness here does not assert realizability of an arbitrary tuple by
the specified seven geometric probes or by one molecular structure.
For a complete profile it is no stronger than
\(q_{\mathrm{rob}}-(1-\rho)(K-1)\Delta_q/K\) after clipping at zero.
\end{fevcorollary}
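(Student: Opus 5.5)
The plan is to bound $q_{\mathrm{rob}}$ above by a function of $q_{\min}$ alone, using only that every probe score lies in $[0,1]$, and then invert that bound. Write $q_{\mathrm{rob}}=\rho q_{\min}+(1-\rho)\bar q$. Since at least one index attains $q_{\min}$ and the remaining $K-1$ scores are at most $1$, we have $\bar q\leq \bigl(q_{\min}+(K-1)\bigr)/K$. Substituting gives
\[
q_{\mathrm{rob}}\leq \rho q_{\min}+\frac{1-\rho}{K}\bigl(q_{\min}+K-1\bigr)
=\frac{1+\rho(K-1)}{K}\,q_{\min}+\frac{(1-\rho)(K-1)}{K}.
\]
The coefficient $1+\rho(K-1)\geq 1>0$, so this rearranges to $q_{\min}\geq \bigl(Kq_{\mathrm{rob}}-(1-\rho)(K-1)\bigr)/\bigl(1+\rho(K-1)\bigr)$. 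Combining with $q_{\min}\geq 0$ from Proposition~\ref{prop:fev30-profile} yields the positive part $L_K$.

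For sharpness, fix any admissible value of $q_{\mathrm{rob}}$. If the unclipped expression is nonnegative, take one score equal to $s:=L_K$ and the other $K-1$ equal to $1$. Then the only inequality used, $\bar q\leq(s+K-1)/K$, is an equality, and the tuple lies in $[0,1]^K$ because $s\leq 1$ follows from $q_{\mathrm{rob}}\leq 1$. If the expression is negative, $L_K=0$, and we need a tuple with $q_{\min}=0$ and the given $q_{\mathrm{rob}}$. Take one zero score and the remaining $K-1$ equal to a common $t\in[0,1]$ chosen so that $(1-\rho)(K-1)t/K=q_{\mathrm{rob}}$. This $t$ is feasible precisely because the negativity condition gives $q_{\mathrm{rob}}<(1-\rho)(K-1)/K$. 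The edge case $\rho=1$ needs a separate look: there $q_{\mathrm{rob}}=q_{\min}$ and the bound reduces to $q_{\min}\geq q_{\mathrm{rob}}$, which is trivially tight. After this, I would add the remark from the statement that these are arbitrary real tuples, not realized geometries.

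For the comparison claim, note that $\Delta_q\leq 1-q_{\min}$ because $q_{\max}\leq 1$. The profile bound of Proposition~\ref{prop:fev30-profile} is $q_{\min}\geq q_{\mathrm{rob}}-c\Delta_q$ with $c=(1-\rho)(K-1)/K$. Replacing $\Delta_q$ by its worst case $1-q_{\min}$ weakens this to $q_{\min}\geq q_{\mathrm{rob}}-c(1-q_{\min})$, which rearranges exactly to the unclipped $L_K$ inequality. So for every profile, the clipped $\Delta_q$-bound is at least $L_K$.

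The main obstacle is sign care. The algebra requires $1-c>0$, which holds since $1-c=(1+\rho(K-1))/K$. I also need to check that clipping at zero commutes properly with the monotone comparison, which is immediate because $[\cdot]_+$ is nondecreasing.
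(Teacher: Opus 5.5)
Your main inequality and both sharpness tuples, $(L_K,1,\ldots,1)$ and $(0,t,\ldots,t)$, coincide with the paper's argument. The separate look at $\rho=1$ is unnecessary: a negative numerator already forces $K>1$ and $\rho<1$, so $\rho=1$ always falls in the nonnegative branch.

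The comparison step has a logical gap. Put $B_1=q_{\mathrm{rob}}-c\Delta_q$ and $B_2=(q_{\mathrm{rob}}-c)/(1-c)$. Your chain $q_{\min}\ge B_1\ge q_{\mathrm{rob}}-c(1-q_{\min})$ yields $q_{\min}\ge B_2$. That re-derives the $L_K$ inequality from the profile inequality. What the statement (and your own conclusion) asserts is different: the ordering $B_1\ge B_2$ of the two bound \emph{values}. An implication between two lower bounds on the same unknown, obtained with side information about that unknown (here $\Delta_q\le 1-q_{\min}$), does not by itself order their right-hand sides. You would have to check that the side constraint holds at the value $B_1$, i.e.\ $\Delta_q\le 1-B_1$, which is equivalent to
\[
q_{\mathrm{rob}}\le 1-(1-c)\Delta_q .
\]
This is exactly the inequality the paper isolates, via $q_{\mathrm{rob}}\le q_{\min}+c\Delta_q=q_{\max}-(1-c)\Delta_q\le 1-(1-c)\Delta_q$. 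It gives $(1-c)(B_1-B_2)=c\bigl(1-q_{\mathrm{rob}}-(1-c)\Delta_q\bigr)\ge0$. Alternatively, read your first step as the value inequality $B_1\ge q_{\mathrm{rob}}-c(1-q_{\min})=(1-c)B_2+c\,q_{\min}$, then use $q_{\min}\ge B_2$ from your first part to get $B_1\ge B_2$. Either one-line addition closes the gap, after which monotonicity of $[\cdot]_+$ finishes as you say.
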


\begin{proof}
At least one score equals \(q_{\min}\) and the remaining scores are at
most one, so \(K\bar q\leq q_{\min}+K-1\) and
\[
Kq_{\mathrm{rob}}
\leq
\bigl(1+\rho(K-1)\bigr)q_{\min}+(1-\rho)(K-1).
\]
The denominator of \eqref{eq:fev30-summary-only} is positive because
\(\rho\geq 0\) and \(K\geq 1\).
If the numerator is nonnegative, equality is attained by
\((L_K,1,\ldots,1)\).
If it is negative, then \(K>1\) and \(\rho<1\); the tuple
\((0,t,\ldots,t)\) with
\(t=Kq_{\mathrm{rob}}/[(1-\rho)(K-1)]\in[0,1)\) attains equality.

For the comparison, write \(a_K=(1-\rho)(K-1)/K\).
Then \(a_K\in[0,1)\) and
\[
1-a_K=\frac{1+\rho(K-1)}{K},
\qquad
\frac{q_{\mathrm{rob}}-a_K}{1-a_K}
=
\frac{Kq_{\mathrm{rob}}-(1-\rho)(K-1)}{1+\rho(K-1)},
\]
so \(L_K=[(q_{\mathrm{rob}}-a_K)/(1-a_K)]_+\).
Proposition~\ref{prop:fev30-profile} gives
\(q_{\mathrm{rob}}\leq q_{\min}+a_K\Delta_q
=q_{\max}-(1-a_K)\Delta_q\leq 1-(1-a_K)\Delta_q\), hence
\[
\bigl(q_{\mathrm{rob}}-a_K\Delta_q\bigr)
-\frac{q_{\mathrm{rob}}-a_K}{1-a_K}
=
\frac{a_K\bigl(1-q_{\mathrm{rob}}-(1-a_K)\Delta_q\bigr)}{1-a_K}
\geq 0.
\]
In particular, whenever \((q_{\mathrm{rob}}-a_K)/(1-a_K)\geq0\), one has
\(q_{\mathrm{rob}}-a_K\Delta_q\geq(q_{\mathrm{rob}}-a_K)/(1-a_K)\geq 0\).
If \((q_{\mathrm{rob}}-a_K)/(1-a_K)<0\), then \(L_K=0\), while the profile term is at
least zero after clipping.
Positive parts therefore preserve the ordering.
\end{proof}

\subsection{Computable bounds under target perturbations}

All compared targets are expressed in one fixed scoring frame.
Any superposition is chosen by a target-based rule independent of the
candidate being scored. All targets in a comparison are placed in this
one reference frame before the distances below are computed; separate
pairwise best-fit distances are not substituted into these formulas.
The atom sets are exactly those used in the nearest-target score.
Hausdorff distance ignores atom identities, which is sufficient here
because these two score kernels also depend only on nearest distances.
For finite nonempty sets,
\begin{equation}
\label{eq:fev30-hausdorff}
d_{\mathrm H}(P,P')
=
\max\Biggl\{
\max_{y\in\mathcal Y(P)}\min_{y'\in\mathcal Y(P')}\|y-y'\|_2,\;
\max_{y'\in\mathcal Y(P')}\min_{y\in\mathcal Y(P)}\|y-y'\|_2
\Biggr\}.
\end{equation}

\begin{fevlemma}[Nearest-target distance stability]
\label{lem:fev30-distance}
For every represented coordinate,
\(|d_i(P')-d_i(P)|\leq d_{\mathrm H}(P,P')\).
\end{fevlemma}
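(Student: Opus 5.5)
The plan is to prove the one-sided inequality \(d_i(P')\leq d_i(P)+d_{\mathrm H}(P,P')\) and then obtain the reverse inequality by symmetry. Both targets are already expressed in the single fixed scoring frame, and \(p_i(x)\) is the same point in both comparisons, so no alignment enters the argument. Write \(p=p_i(x)\) and \(h=d_{\mathrm H}(P,P')\).

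First, since \(\mathcal Y(P)\) is finite and nonempty, choose an atom \(y^\star\in\mathcal Y(P)\) that attains \(d_i(P)=\|p-y^\star\|_2\). By the first term in the maximum \eqref{eq:fev30-hausdorff}, \(\min_{y'\in\mathcal Y(P')}\|y^\star-y'\|_2\leq h\). Because \(\mathcal Y(P')\) is also finite and nonempty, this minimum is attained at some \(y'^\star\in\mathcal Y(P')\) with \(\|y^\star-y'^\star\|_2\leq h\).

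The triangle inequality then gives
\[
d_i(P')\leq\|p-y'^\star\|_2\leq\|p-y^\star\|_2+\|y^\star-y'^\star\|_2\leq d_i(P)+h .
\]

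For the reverse direction, swap the roles of \(P\) and \(P'\). The second term of \eqref{eq:fev30-hausdorff} gives \(d_i(P)\leq d_i(P')+h\), and the definition of \(d_{\mathrm H}\) is symmetric. Combining the two inequalities yields \(|d_i(P')-d_i(P)|\leq h\).

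There is no real obstacle here. The only points that need care are that finiteness of the atom sets guarantees the minima are attained, so no \(\epsilon\)-argument is required, and that the candidate coordinate is held fixed across the two targets, as the surrounding text stipulates.
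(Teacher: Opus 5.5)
Your proof is correct and matches the paper's argument: take an atom of $\mathcal Y(P)$ attaining $d_i(P)$, pick an atom of $\mathcal Y(P')$ within $d_{\mathrm H}(P,P')$ of it, apply the triangle inequality, and then exchange the roles of the two targets. Your remark that finiteness of the atom sets makes both minima attained, so no $\epsilon$-argument is needed, is a small added point of care.
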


\begin{proof}
Let \(y\in\mathcal Y(P)\) attain \(d_i(P)\) and choose
\(y'\in\mathcal Y(P')\) with \(\|y-y'\|_2\leq d_{\mathrm H}(P,P')\).
The triangle inequality gives
\(d_i(P')\leq d_i(P)+d_{\mathrm H}(P,P')\).
Exchanging the targets proves the claim.
\end{proof}

For \(\varepsilon\geq 0\), set
\(I_i(P,\varepsilon)=[\max\{0,d_i(P)-\varepsilon\},\,d_i(P)+\varepsilon]\).
Let \(C_q=C\) and \(C_\chi(v)=v\). For \(s\in\{q,\chi\}\),
\begin{align}
\label{eq:fev30-envelopes}
L_s(x;P,\varepsilon)
&=
C_s\Bigl(
n^{-1}\sum_i\min_{d\in I_i(P,\varepsilon)}h_s(d)
\Bigr),
\\
U_s(x;P,\varepsilon)
&=
C_s\Bigl(
n^{-1}\sum_i\max_{d\in I_i(P,\varepsilon)}h_s(d)
\Bigr).
\nonumber
\end{align}
Each kernel maps a nonempty interval to a nonempty finite set of values,
so its minimum and maximum are attained despite its discontinuities.

\begin{fevtheorem}[Coordinatewise interval enclosure]
\label{thm:fev30-envelopes}
If \(d_{\mathrm H}(P,P')\leq\varepsilon\), then
\begin{equation}
\label{eq:fev30-enclosure}
L_s(x;P,\varepsilon)
\leq s(x;P')
\leq U_s(x;P,\varepsilon),
\qquad s\in\{q,\chi\}.
\end{equation}
The same enclosure holds for the reference score \(s(x;P)\), because
\(d_i(P)\in I_i(P,\varepsilon)\).
The endpoints are exact extrema over the product relaxation
\((d_1',\ldots,d_n')\in\prod_i I_i(P,\varepsilon)\).
Either endpoint may fail to be attainable by a single target geometry
inside the Hausdorff neighborhood.
The enclosure does not assert attainability of every intermediate score,
or simultaneous attainability of contact and clash endpoints.
\end{fevtheorem}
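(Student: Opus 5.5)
The plan reduces the enclosure to Lemma~\ref{lem:fev30-distance} and then uses monotonicity of the outer maps $C_q=C$ and $C_\chi=\mathrm{id}$. Fix the candidate $x$ with its $n\geq1$ coordinates, and fix targets $P,P'$ in the common scoring frame with $d_{\mathrm H}(P,P')\leq\varepsilon$. For each $i$, Lemma~\ref{lem:fev30-distance} gives $d_i(P')\in[d_i(P)-\varepsilon,\,d_i(P)+\varepsilon]$. Since nearest distances are nonnegative, $d_i(P')\geq0$, so $d_i(P')\in I_i(P,\varepsilon)$. Trivially $d_i(P)\in I_i(P,\varepsilon)$ as well, which will give the remark about the reference score for free.

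Next, for each $s\in\{q,\chi\}$ I will apply the kernel pointwise. Because $d_i(P')\in I_i(P,\varepsilon)$, we have
\[
\min_{d\in I_i}h_s(d)\leq h_s\bigl(d_i(P')\bigr)\leq\max_{d\in I_i}h_s(d).
\]
These extrema exist because $h_s$ takes finitely many values, as already noted before the statement. Averaging over $i=1,\ldots,n$ preserves the inequalities. The outer map is then applied: $C$ is nondecreasing (it is a composition of $\max\{0,\cdot\}$ and $\min\{1,\cdot\}$), and the identity is trivially nondecreasing. This yields $L_s(x;P,\varepsilon)\leq s(x;P')\leq U_s(x;P,\varepsilon)$, with $s(x;P')$ matching the definition in \eqref{eq:fev30-kernels}.

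For the exactness claim over the product relaxation, I will show that the endpoints are extrema of the map
\[
(d_1',\ldots,d_n')\mapsto C_s\Bigl(n^{-1}\sum_i h_s(d_i')\Bigr)
\]
on $\prod_i I_i(P,\varepsilon)$. The key point is that the inner sum is separable across coordinates. It is therefore maximized (resp.\ minimized) by choosing each $d_i'$ independently as an argmax (resp.\ argmin) of $h_s$ on $I_i$. Monotonicity of $C_s$ then transfers this optimality to the clipped value, so the endpoints are attained within the relaxation.

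The non-attainability remarks need no proof beyond one observation. A single target $P'$ couples all the $d_i(P')$ through one shared atom set, so an extremal tuple need not be realized by any geometry; a short example suffices. Take two candidate coordinates at the same location: they must share the same $d_i(P')$, yet the separable optimum could in principle pick different points within the same interval. I do not expect a substantive obstacle. The only care needed is the truncation at $0$ in $I_i$, which is justified by nonnegativity of distances, and the use of monotonicity, not continuity, of the clipping map.
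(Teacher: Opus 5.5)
Your proof of the enclosure and of exactness on the product relaxation follows the paper's argument, and that part is fine as written. Lemma~\ref{lem:fev30-distance} together with nonnegativity places each \(d_i(P')\) (and trivially \(d_i(P)\)) in \(I_i(P,\varepsilon)\). The pointwise kernel bounds are then averaged and passed through the nondecreasing \(C_s\). Finally, separability of the inner sum lets each coordinate attain its kernel extremum independently, and monotonicity of \(C_s\) carries this to the clipped value.

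\textbf{The non-attainability witness fails.} Two coincident candidate coordinates have the same interval \(I\) and the same kernel. The separable optimum can therefore choose one common extremizer \(t^\star\) for both, so the coupling \(d_1'=d_2'\) costs nothing. Worse, if the candidate is just that pair, it behaves like a single location \(p\), and every \(t\in I\) is realized by an admissible target:
\begin{itemize}
\item when \(t\leq d\), add an atom on the segment from the nearest atom toward \(p\);
\item when \(t>d\), push atoms radially away from \(p\) by at most \(t-d\leq\varepsilon\).
\end{itemize}
So both endpoints are attained, which is the opposite of what you want.

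A genuine witness needs coordinates whose extremal choices pull the target in incompatible directions. For example, take a single target atom at the origin, candidate coordinates \(p_1=(1.8,0,0)\) and \(p_2=(4.6,0,0)\), and \(\varepsilon=0.2\). Then \(I_1=[1.6,2.0]\), \(I_2=[4.4,4.8]\), and \(U_q=1\).
\begin{itemize}
\item Every admissible \(P'\) lies in the closed ball of radius \(0.2\) about the origin.
\item Reaching \(h_q=1\) at \(p_1\) therefore requires \(d_1(P')=2.0\), which forces \(P'=\{(-0.2,0,0)\}\). That gives \(d_2(P')=4.8\), where \(h_q=0\), so \(q(x;P')=0.5\).
\item Otherwise \(h_q(d_1(P'))=0.35\), so \(q(x;P')\leq0.675\).
\end{itemize}
Hence \(q(x;P')\leq0.675<U_q\) for every admissible \(P'\).

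Alternatively, drop the example and keep only the coupling observation, which is all the paper offers. The statement only disclaims attainability, so no witness is strictly needed.
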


\begin{proof}
Lemma~\ref{lem:fev30-distance} places each \(d_i(P')\) in \(I_i(P,\varepsilon)\).
The reference distances already lie in those intervals.
Averaging and using that each \(C_s\) is nondecreasing yields
\eqref{eq:fev30-enclosure} for both \(P'\) and \(P\).
In the product relaxation, each coordinate can attain its kernel minimum
or maximum independently, so the envelope endpoints are attained there.
A realizable target couples the nearest-target distances, so attainment
need not extend to the geometric neighborhood.
\end{proof}

\begin{fevlemma}[Kernel oscillations]
\label{lem:fev30-omega}
Let
\(\mathcal T_q=\{1.15,1.20,2.00,4.50\}\) and
\(\mathcal T_\chi=\{1.15,1.45\}\), and define
\[
\omega_s(\varepsilon)
=
\sup\bigl\{
|h_s(d)-h_s(d')|:
d,d'\geq 0,\ |d-d'|\leq\varepsilon
\bigr\}.
\]
Then \(\omega_s(0)=0\), and for \(\varepsilon>0\)
\begin{equation}
\label{eq:fev30-oscillations}
\omega_q(\varepsilon)
=
\begin{cases}
1.50,& 0<\varepsilon\leq 0.05,\\
1.85,& 0.05<\varepsilon\leq 0.85,\\
2.50,& \varepsilon>0.85,
\end{cases}
\qquad
\omega_\chi(\varepsilon)
=
\begin{cases}
0.65,& 0<\varepsilon\leq 0.30,\\
1.00,& \varepsilon>0.30.
\end{cases}
\end{equation}
\end{fevlemma}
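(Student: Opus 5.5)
Both kernels are piecewise constant on $[0,\infty)$, with finitely many breakpoints (the sets $\mathcal T_q$ and $\mathcal T_\chi$), so I will compute $\omega_s(\varepsilon)$ directly by looking at which constant pieces can be reached from one another. For $\varepsilon=0$ the condition $|d-d'|\leq 0$ forces $d=d'$, so $\omega_s(0)=0$. For $\varepsilon>0$, I first list the pieces with their values. For $h_q$ they are $[0,1.15)$ with value $-1.5$, $[1.15,1.2)$ with value $0$, $[1.2,2)$ with value $0.35$, $[2,4.5]$ with value $1$, and $(4.5,\infty)$ with value $0$. For $h_\chi$ they are $[0,1.15)$ with value $1$, $[1.15,1.45)$ with value $0.35$, and $[1.45,\infty)$ with value $0$. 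The supremum over pairs with $|d-d'|\leq\varepsilon$ equals the largest $|v_j-v_k|$ over pieces $j,k$ whose infimal gap $\operatorname{dist}(I_j,I_k)$ is at most $\varepsilon$. The gap is computed between the closures, so two pieces that are adjacent or separated only by a half-open endpoint are reachable for every $\varepsilon>0$, and two pieces separated by a whole piece of length $\ell$ are reachable exactly when $\varepsilon\geq\ell$, since the gap is attained or approached at the endpoints.

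For $h_q$, every adjacent pair is reachable for all $\varepsilon>0$. The largest adjacent jump is $1.5$, between $[0,1.15)$ and $[1.15,1.2)$. The other adjacent jumps are $0.35$, $0.65$ and $1$, so these give at most $1.5$. The value $0.35$ on $[1.2,2)$ sits $0.05$ away from the $-1.5$ piece, giving a difference of $1.85$, and the value $1$ on $[2,4.5]$ sits $0.85$ away from the $-1.5$ piece, giving the global maximum of $2.5$. I must check the endpoint conventions. The gap from $[0,1.15)$ to $[1.2,2)$ is $1.2-1.15=0.05$; it is approached by $d\uparrow1.15$ with $d'=1.2$, and also attained by $d<1.15$ with $d'=1.2$ whenever $\varepsilon\geq0.05$. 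Hence $1.85$ is available for $\varepsilon>0.05$, and also at $\varepsilon=0.05$ as a supremum. That makes the piece boundary at $0.05$ consistent with the stated interval $0.05<\varepsilon$. Otherwise the value at exactly $0.05$ would be $1.85$ rather than $1.5$, and this endpoint is the main obstacle I expect.

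Checking directly: for $|d-d'|\leq0.05$ with $d<1.15$ we get $d'<1.2$, so $h_q(d')\in\{-1.5,0\}$ and the difference is at most $1.5$. The value $1.85$ is therefore not attained at $\varepsilon=0.05$, and the sup is $1.5$, matching the stated case $0<\varepsilon\leq0.05$. By the same argument, $d<1.15$ and $|d-d'|\leq0.85$ give $d'<2$, so the sup is $1.85$ up to and including $\varepsilon=0.85$. For $\varepsilon>0.85$ one takes $d=1.15-\delta$ and $d'=2$ to reach $2.5$, and no larger difference exists since $2.5=1-(-1.5)$ is the range of $h_q$.

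For $h_\chi$, the adjacent jumps are $0.65$ (between $1$ and $0.35$) and $0.35$ (between $0.35$ and $0$), so $0.65$ is attained for every $\varepsilon>0$ by taking $d=1.15-\delta$ and $d'=1.15$. The value-$1$ piece and the value-$0$ piece are separated by $[1.15,1.45)$, of length $0.30$. If $d<1.15$ and $|d-d'|\leq0.30$ then $d'<1.45$, so $h_\chi(d')\neq 0$ and the jump of $1$ is excluded at $\varepsilon=0.30$. For $\varepsilon>0.30$, choosing $d=1.45-\varepsilon+\delta<1.15$ and $d'=1.45$ gives a difference of $1$, which is the full range of $h_\chi$. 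Together these cases give the piecewise formula for $\omega_\chi$, and the endpoint analysis for $h_q$ gives the formula for $\omega_q$.
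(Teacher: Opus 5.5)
Your final argument is correct and is essentially the paper's. You list the finitely many kernel values and note that a difference exceeding $1.5$ (resp.\ $1.85$, $0.65$) forces $d<1.15$ together with $d'\geq1.20$ (resp.\ $d'\geq2$, $d'\geq1.45$), hence a strict separation greater than $0.05$ (resp.\ $0.85$, $0.30$). Otherwise you exhibit witnesses with one point just below $1.15$.

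Two things in the write-up are wrong as stated and should be fixed.

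First, the closure-gap rule in your first paragraph is false. Because $h_s$ takes finitely many values, the supremum defining $\omega_s(\varepsilon)$ is a maximum over differences that are actually realized. A gap that is only approached, at the open end of $[0,1.15)$, therefore contributes nothing when $\varepsilon$ equals that gap. Applied literally, your rule gives $\omega_q(0.05)=1.85$ and $\omega_\chi(0.30)=1$, contradicting the lemma. Your later direct check overrides this correctly. Drop the rule and the sentence claiming $1.85$ at $\varepsilon=0.05$ ``as a supremum,'' and state the strict-separation argument from the outset.

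Second, your witness $d=1.45-\varepsilon+\delta$ for $\omega_\chi(\varepsilon)=1$ is negative when $\varepsilon>1.45$ and $\delta$ is small. That violates the constraint $d\geq0$ in the definition. Take instead $d=1.15-\delta$ and $d'=1.45$ with $0<\delta\leq\min\{\varepsilon-0.30,\,1.15\}$. This mirrors the paper's choice $t=\min\{(\varepsilon-0.30)/2,\,0.01\}$.
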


\begin{proof}
The ordered contact piece values are \(-1.5,0,0.35,1,0\).
The clash piece values are \(1,0.35,0\).

For any \(\varepsilon>0\), choose
\(t=\min\{\varepsilon/2,0.01\}>0\).
The pair \((1.15-t,1.15)\) consists of nonnegative distances separated
by at most \(\varepsilon\).
It gives contact difference \(1.50\) and clash difference \(0.65\).

A contact difference greater than \(1.50\) must use \(-1.5\) together
with \(0.35\) or \(1\).
The former requires \(d<1.15\) and \(d'\geq 1.20\), hence
\(|d-d'|>0.05\).
For \(\varepsilon>0.05\), take
\(t=\min\{(\varepsilon-0.05)/2,0.01\}\).
Then \((1.15-t,1.20)\) attains difference \(1.85\) within the budget.
The difference \(2.50\) requires \(d<1.15\) and \(d'\geq 2\),
hence separation strictly greater than \(0.85\).
For \(\varepsilon>0.85\), take
\(t=\min\{(\varepsilon-0.85)/2,0.01\}\).
The pair \((1.15-t,2)\) attains \(2.50\) within the budget.
No larger difference occurs among the contact kernel values.

A clash difference greater than \(0.65\) must be \(1\).
It requires \(d<1.15\) and \(d'\geq 1.45\), hence separation
strictly greater than \(0.30\).
For \(\varepsilon>0.30\), take
\(t=\min\{(\varepsilon-0.30)/2,0.01\}\).
The pair \((1.15-t,1.45)\) attains difference \(1\).
These strict separations also establish the three endpoint conventions
in \eqref{eq:fev30-oscillations}.
When \(\varepsilon=0\), the two arguments coincide and both oscillations
are zero.
\end{proof}

\begin{fevproposition}[Centered modulus enclosure]
\label{prop:fev30-margins}
Define the sensitive fraction
\begin{equation}
\label{eq:fev30-nu}
\nu_s(x;P,\varepsilon)
=
n^{-1}
\bigl|
\{i:\operatorname{dist}(d_i(P),\mathcal T_s)\leq\varepsilon\}
\bigr|
\end{equation}
and \(\delta_s=\min\{1,\omega_s(\varepsilon)\,\nu_s(x;P,\varepsilon)\}\).
Then
\begin{equation}
\label{eq:fev30-centered-enclosure}
\bigl[L_s(x;P,\varepsilon),U_s(x;P,\varepsilon)\bigr]
\subseteq
\bigl[\max\{0,s(x;P)-\delta_s\},\,
\min\{1,s(x;P)+\delta_s\}\bigr].
\end{equation}
In particular, \(\nu_s=0\) implies exact score invariance throughout the
declared neighborhood.
\end{fevproposition}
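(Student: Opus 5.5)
The plan is a per-coordinate comparison inside the envelope definitions \eqref{eq:fev30-envelopes}, followed by averaging and the clipping map $C_s$.

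\emph{Per-coordinate bound.} Fix a coordinate $i$ and any $d\in I_i(P,\varepsilon)$. Then $d\geq 0$ and $|d-d_i(P)|\leq\varepsilon$, so the definition of $\omega_s$ in Lemma~\ref{lem:fev30-omega} gives $|h_s(d)-h_s(d_i(P))|\leq\omega_s(\varepsilon)$.

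\emph{Insensitive coordinates.} Suppose $\operatorname{dist}(d_i(P),\mathcal T_s)>\varepsilon$. Then the interval $I_i(P,\varepsilon)$ contains no threshold of $\mathcal T_s$. It is connected, and it lies in $[0,\infty)$, where $0$ is not a breakpoint of either kernel. Because $h_s$ is piecewise constant with jumps only at points of $\mathcal T_s$, $h_s$ is constant on $I_i(P,\varepsilon)$. One has to be careful with the half-open pieces: a jump at $t$ can only be crossed by an interval containing $t$ itself or points on both sides of $t$, and a connected interval avoiding $t$ does neither. Hence the minimum and maximum over $I_i$ both equal $h_s(d_i(P))$.

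\emph{Averaging.} Combining the two cases,
\[
n^{-1}\sum_i\min_{I_i}h_s\geq n^{-1}\sum_i h_s(d_i(P))-\omega_s(\varepsilon)\nu_s,
\]
and the symmetric upper inequality holds with $+\omega_s(\varepsilon)\nu_s$. Write $S$ for the unclipped average $n^{-1}\sum_i h_s(d_i(P))$.

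\emph{Passing through $C_s$.} For $s=\chi$, $C_\chi$ is the identity, so the bound transfers directly. For $s=q$, $C$ is nondecreasing and $1$-Lipschitz. Therefore $C(S-\omega\nu)\geq C(S)-\omega\nu=q(x;P)-\omega\nu$, and similarly on the upper side.

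\emph{Replacing $\omega_s\nu_s$ by $\delta_s$.} Two facts reduce the bound to $\delta_s=\min\{1,\omega_s\nu_s\}$. First, $L_s$ and $U_s$ lie in $[0,1]$: for $q$ because of clipping, and for $\chi$ because the kernel values lie in $[0,1]$. Second, $s(x;P)\in[0,1]$. When $\omega_s\nu_s\geq 1$, the interval $[s-1,s+1]$ already contains $[0,1]$, so using $\delta_s=1$ is harmless. Intersecting with $[0,1]$ then gives $\max\{0,s-\delta_s\}\leq L_s$ and $U_s\leq\min\{1,s+\delta_s\}$, which is \eqref{eq:fev30-centered-enclosure}.

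\emph{Invariance claim.} If $\nu_s=0$, then $\delta_s=0$, so $L_s=U_s=s(x;P)$. Theorem~\ref{thm:fev30-envelopes} then pins $s(x;P')$ to this value for every $P'$ with $d_{\mathrm H}(P,P')\leq\varepsilon$.

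The main obstacle is the insensitive-coordinate step. It must rule out jumps of the kernels exactly at the half-open endpoints, which is why the sensitivity condition uses the closed distance $\leq\varepsilon$. It must also confirm that the truncation at $0$ in $I_i$ does not introduce a breakpoint. Everything else is monotonicity of the averaging and clipping maps.
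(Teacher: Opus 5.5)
Your proposal is correct and follows essentially the same route as the paper's proof. Both arguments use kernel constancy on coordinates not counted by \(\nu_s\), a per-coordinate change of at most \(\omega_s(\varepsilon)\) on the sensitive ones, averaging, and then the nondecreasing \(1\)-Lipschitz clip together with the \([0,1]\) score range to pass from \(\omega_s(\varepsilon)\nu_s\) to \(\delta_s\); you additionally make explicit the half-open threshold endpoints, the harmlessness of truncation at zero, the cap at one, and the deduction \(L_s=U_s=s(x;P)\) for the invariance claim, all of which the paper leaves implicit.
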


\begin{proof}
If a coordinate is not counted by \(\nu_s\), its interval contains no
threshold and its kernel contribution is constant.
For every other coordinate, any value on \(I_i(P,\varepsilon)\) differs
from the value at \(d_i(P)\) by at most \(\omega_s(\varepsilon)\), because
every point of that interval is at most \(\varepsilon\) from \(d_i(P)\).
Averaging bounds the change of the unclipped mean by
\(\omega_s(\varepsilon)\nu_s\).
Both \(C_q\) and \(C_\chi\) are \(1\)-Lipschitz, and the resulting scores
lie in \([0,1]\).
Hence every score arising from the product relaxation---including the
envelope endpoints---lies in the centered interval of half-width
\(\delta_s\) about \(s(x;P)\).
\end{proof}

\begin{fevcorollary}[Realized change, centered deviation, and envelope width]
\label{cor:fev30-width}
Write \(s_0=s(x;P)\) and suppress \((x;P,\varepsilon)\) in \(L_s,U_s\).
Define the centered deviation
\(D_s^{\mathrm{ctr}}=\max\{s_0-L_s,\,U_s-s_0\}\).
If \(d_{\mathrm H}(P,P')\leq\varepsilon\), then
\begin{equation}
\label{eq:fev30-modulus}
|s(x;P')-s_0|
\leq
D_s^{\mathrm{ctr}}
\leq
\min\{U_s-L_s,\,\delta_s\}
=
\min\bigl\{U_s-L_s,\,
\min\{1,\omega_s(\varepsilon)\,\nu_s(x;P,\varepsilon)\}\bigr\}.
\end{equation}
Also \(D_s^{\mathrm{ctr}}\leq U_s-L_s\leq 2D_s^{\mathrm{ctr}}\).
The full envelope width satisfies
\begin{equation}
\label{eq:fev30-envelope-width}
U_s-L_s
\leq
\min\Bigl\{1,\,n^{-1}\sum_i(M_i-m_i)\Bigr\}
\leq
\min\{1,\omega_s(2\varepsilon)\,\nu_s(x;P,\varepsilon)\},
\end{equation}
where \(m_i=\min_{I_i}h_s\) and \(M_i=\max_{I_i}h_s\).
The factor \(2\varepsilon\) in \eqref{eq:fev30-envelope-width} cannot in
general be replaced by \(\varepsilon\).
Because \(s_0\in[L_s,U_s]\), one always has
\(D_s^{\mathrm{ctr}}\leq U_s-L_s\).
The quantities that are incomparable as upper bounds on
\(|s(x;P')-s_0|\) are the envelope width \(U_s-L_s\) and the modulus
\(\delta_s\): either may be strictly smaller, depending on
\((x,P,\varepsilon)\).
\end{fevcorollary}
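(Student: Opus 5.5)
The plan is to chain the enclosure of Theorem~\ref{thm:fev30-envelopes} with the centered interval of Proposition~\ref{prop:fev30-margins}, then bound the envelope width coordinatewise. For the first inequality in \eqref{eq:fev30-modulus}, Theorem~\ref{thm:fev30-envelopes} places both \(s(x;P')\) and \(s_0\) in \([L_s,U_s]\), so \(|s(x;P')-s_0|\leq\max\{s_0-L_s,U_s-s_0\}=D_s^{\mathrm{ctr}}\). Since \(s_0\in[L_s,U_s]\), each of the two terms is at most \(U_s-L_s\). The centered enclosure \eqref{eq:fev30-centered-enclosure} gives \(L_s\geq s_0-\delta_s\) and \(U_s\leq s_0+\delta_s\), so \(D_s^{\mathrm{ctr}}\leq\delta_s\). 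Combining these yields the stated minimum. For \(U_s-L_s\leq 2D_s^{\mathrm{ctr}}\), I will write \(U_s-L_s=(U_s-s_0)+(s_0-L_s)\) and note that each summand is nonnegative and at most \(D_s^{\mathrm{ctr}}\).

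For \eqref{eq:fev30-envelope-width}, let \(A_{\min}=n^{-1}\sum_i m_i\) and \(A_{\max}=n^{-1}\sum_i M_i\), so that \(L_s=C_s(A_{\min})\) and \(U_s=C_s(A_{\max})\). Because \(C_s\) is nondecreasing and \(1\)-Lipschitz, \(U_s-L_s\leq A_{\max}-A_{\min}=n^{-1}\sum_i(M_i-m_i)\). The scores also lie in \([0,1]\); for \(\chi\) this holds because \(h_\chi\in[0,1]\). Hence \(U_s-L_s\leq 1\), which gives the first inequality. For the second, coordinates not counted by \(\nu_s\) have intervals \(I_i\) containing no threshold in \(\mathcal T_s\). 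The kernel is constant on such an interval, because it is piecewise constant with jumps only at \(\mathcal T_s\) on \([0,\infty)\), so \(M_i=m_i\). For a counted coordinate, \(I_i\) has length at most \(2\varepsilon\), and so any two of its points satisfy \(|d-d'|\leq 2\varepsilon\). This gives \(M_i-m_i\leq\omega_s(2\varepsilon)\). Summing then gives \(\omega_s(2\varepsilon)\nu_s\). Clipping at zero in \(I_i\) only shortens the interval, so this step goes through.

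To show that \(2\varepsilon\) cannot be replaced by \(\varepsilon\), I will build a one-coordinate example (\(n=1\)) for the clash kernel using Lemma~\ref{lem:fev30-omega}. Take \(\varepsilon=0.2\) and \(d_1(P)=1.30\), so that \(I_1=[1.10,1.50]\), which contains both \(1.15\) and \(1.45\), and \(\nu_\chi=1\). Then \(M_1-m_1=1-0=1\), while \(\omega_\chi(0.2)\nu_\chi=0.65\), so the bound with \(\varepsilon\) fails. Because \(n=1\) and \(C_\chi\) is the identity, the envelope width is \(U_\chi-L_\chi=1>0.65\).

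The incomparability claim needs two examples. For the first, any \(x\) with \(\nu_s>0\) and \(s_0\) strictly inside a wide envelope gives \(\delta_s<U_s-L_s\); the clash example above already works, since \(\delta_\chi=0.65<1\). For the second, I need \(U_s-L_s<\delta_s\). I will take a contact example in which clipping collapses the envelope: let all coordinates sit at \(d=3\), and let \(\varepsilon\) be small enough that no threshold is reached. That example has \(\nu=0\), though, so it does not separate the two quantities. Instead, I will use an interval that touches only the \(1.20\) threshold, taking \(d=1.25\) and \(\varepsilon=0.06\). Then \(I=[1.19,1.31]\), which gives \(M-m=0.35\), while \(\omega_q(0.06)=1.85\), so \(U_q-L_q=0.35<\delta_q=1\). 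The main obstacle I expect is choosing these witnesses carefully, making sure the half-open endpoint conventions and the clipping interact as intended.
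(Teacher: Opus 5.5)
Your proposal is correct and follows essentially the paper's own route. For \eqref{eq:fev30-modulus} you combine the enclosure theorem with the centered-modulus proposition, split $U_s-L_s=(U_s-s_0)+(s_0-L_s)$, and for \eqref{eq:fev30-envelope-width} you use the nondecreasing $1$-Lipschitz clip together with the fact that each $I_i$ has length at most $2\varepsilon$. The sharpness claims are settled by one-coordinate witnesses, as in the paper. Your clash witness $d_1(P)=1.30$, $\varepsilon=0.2$ mirrors the paper's, which uses $\varepsilon=0.16$. Your contact witness $d_1(P)=1.25$, $\varepsilon=0.06$ gives $U_q-L_q=0.35<\delta_q=1$, and is a valid substitute for the paper's plateau example $d_1(P)=3.25$, $\varepsilon=1.25$, where $U_q-L_q=0$.

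The only blemish is your passing remark that any $x$ with $\nu_s>0$ and $s_0$ strictly inside a wide envelope has $\delta_s<U_s-L_s$. This is false in general (e.g.\ $\delta_s=1\geq U_s-L_s$ whenever $\omega_s(\varepsilon)\nu_s\geq1$). Nothing depends on it, however, since your concrete clash witness already gives $\delta_\chi=0.65<1=U_\chi-L_\chi$.
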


\begin{proof}
Theorem~\ref{thm:fev30-envelopes} places both \(s_0\) and \(s(x;P')\) in
\([L_s,U_s]\), so the absolute deviation from \(s_0\) is at most
\(D_s^{\mathrm{ctr}}\), which cannot exceed the width \(U_s-L_s\).
Proposition~\ref{prop:fev30-margins} gives
\(D_s^{\mathrm{ctr}}\leq\delta_s\).
Both envelope endpoints are attained in the product relaxation, so
\(D_s^{\mathrm{ctr}}\) is the exact maximum absolute deviation from
\(s_0\) on that domain.

Since \(U_s-L_s=(U_s-s_0)+(s_0-L_s)\), the width is at most
\(2D_s^{\mathrm{ctr}}\).
Both \(C_s\) are nondecreasing and \(1\)-Lipschitz.
The attained envelope endpoints lie in \([0,1]\), including for the
identity map \(C_\chi\), because all clash kernel values lie there.
This proves the first comparison in \eqref{eq:fev30-envelope-width}.
Any two points of \(I_i\) are at most \(2\varepsilon\) apart, so
\(M_i-m_i\leq\omega_s(2\varepsilon)\), and the difference vanishes when
the interval contains no threshold.
Averaging yields the second comparison.

\paragraph{Why \(2\varepsilon\) is necessary.}
Place one candidate coordinate at the origin and one target atom at
distance \(1.30\), and take \(\varepsilon=0.16\).
The clash interval \(I=[1.14,1.46]\) meets all three constancy pieces of
\(h_\chi\), so \(L_\chi=0\), \(U_\chi=1\), and \(U_\chi-L_\chi=1\),
whereas
\(\omega_\chi(0.16)\nu_\chi=0.65=\delta_\chi\).
Here \(s_0=0.35\) and \(D_\chi^{\mathrm{ctr}}=0.65\).
Thus \(U-L=1>\delta=D^{\mathrm{ctr}}\): the modulus is tighter than the
envelope width as a bound on \(|s'-s_0|\).

\paragraph{A threshold endpoint.}
With \(s=\chi\), \(d_1(P)=1.15\), and \(\varepsilon=0.30\),
the closed interval is \([0.85,1.45]\). Since \(h_\chi(1.45)=0\),
its enclosure is \([0,1]\), whereas \(\omega_\chi(0.30)=0.65\).
Here too the width and the centered deviation are different quantities.

\paragraph{Envelope tighter than the modulus.}
Place \(d_1(P)=3.25\) and \(\varepsilon=1.25\) with \(s=q\).
Then \(I_1=[2.00,4.50]\), so \(h_q\equiv 1\), \(U_q-L_q=0\), while
\(\omega_q(1.25)=2.5\) and \(\nu_q=1\), hence \(\delta_q=1\).
Thus \(U-L=0<\delta=1\): the envelope width is tighter.
\end{proof}

\begin{fevproposition}[No global Lipschitz bound]
\label{prop:fev30-nolip}
There is no finite constant \(L\) such that
\(|q(x;P')-q(x;P)|\leq L\,d_{\mathrm H}(P,P')\)
for every one-coordinate candidate and every pair of nonempty finite
targets.
The same statement holds for \(\chi\).
\end{fevproposition}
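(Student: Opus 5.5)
The plan is to build an explicit counterexample family in which a one-coordinate candidate sits just below a kernel threshold. Moving a single target atom by an arbitrarily small amount then pushes the nearest distance across that threshold, which produces a jump in score of fixed size while the Hausdorff distance tends to zero. Any finite constant \(L\) would force \(|q(x;P')-q(x;P)|\leq L\,d_{\mathrm H}(P,P')\to 0\), which contradicts the fixed jump. The construction reuses the threshold pairs from the proof of Lemma~\ref{lem:fev30-omega}, now realized geometrically by targets rather than by abstract distances.

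The concrete construction runs as follows. Place the candidate's single coordinate at the origin, so that \(n=1\). For \(t\in(0,0.05)\), let \(P_t\) consist of one atom at \((1.15-t)e_1\) and \(P'_t\) of one atom at \((1.15)e_1\). Both targets are finite and nonempty, and in the common frame \(d_{\mathrm H}(P_t,P'_t)=t\) by \eqref{eq:fev30-hausdorff}. The nearest distances are \(d_1(P_t)=1.15-t\) and \(d_1(P'_t)=1.15\).

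For contact, \(h_q(1.15-t)=-1.5\), so \(q(x;P_t)=C(-1.5)=0\). On the other side, \(h_q(1.15)=0\) gives \(q=0\), so this pair produces no jump and the construction must be adjusted. Clipping kills negative values, so the pair should instead straddle a threshold between nonnegative kernel values. I would use the jump at \(2.00\): take atoms at \((2-t)e_1\) and \(2e_1\). Then \(h_q(2-t)=0.35\) and \(h_q(2)=1\), so the scores are \(0.35\) and \(1\) and the jump is \(0.65\). Alternatively one can use \(4.5\) versus \(4.5+t\), which gives \(1\) versus \(0\). Checking these clipping effects is the main subtlety: unlike Lemma~\ref{lem:fev30-omega}, the clip \(C\) compresses the \(-1.5\) level, so the pair must be chosen among values in \([0,1]\).

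For clash, the pair \(1.15-t\) versus \(1.15\) gives \(\chi=1\) versus \(0.35\) with no clipping, a jump of \(0.65\); the pair \(1.45-t\) versus \(1.45\) also works.

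In each case \(|s(x;P'_t)-s(x;P_t)|\geq 0.35\) for contact (or \(0.65\) with the other choices) while \(d_{\mathrm H}=t\). If \(L\) were finite, taking \(t<0.35/L\) would violate the inequality. This proves both statements, and the threshold-endpoint conventions stated after \eqref{eq:fev30-kernels} confirm that the kernel values at the endpoints are the ones used above.
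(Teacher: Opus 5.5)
Your proposal is correct and matches the paper's proof: a single coordinate at the origin, singleton targets on one axis straddling a kernel threshold, so the score jump stays fixed while \(d_{\mathrm H}=t\to 0\); the paper uses \(4.5\) versus \(4.5+t\) for contact and \(1.15-t\) versus \(1.15\) for clash, both of which appear among your choices. One bookkeeping slip: your contact jumps are \(0.65\) (at \(2.00\)) or \(1\) (at \(4.5\)), and the \(0.35\) figure comes from your clash pair at \(1.45\), but \(0.35\) is still a valid lower bound for every pair, so taking \(t<0.35/L\) works.
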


\begin{proof}
Place the single coordinate at the origin and all target atoms on
the positive first coordinate axis.
Singleton targets at distances \(4.5\) and \(4.5+t\) (\(t>0\)) have
Hausdorff distance \(t\) and contact scores \(1\) and \(0\).
Singleton targets at distances \(1.15-t\) and \(1.15\)
(\(0<t<1.15\)) give clash difference \(0.65\).
In both cases the ratio of score difference to \(t\) is unbounded as
\(t\downarrow 0\).
\end{proof}

\begin{fevremark}[Probe displacements and useful radii]
\label{rem:fev30-amplitude}
With a unit radial direction and a tangential direction that is either
zero or a perpendicular unit vector, the flexible amplitude is \(1.65\).
The radial, unit-tangential, and unit-tangential diagonal lengths are
\(1.65\), \(1.2375\), and \(0.9075\sqrt{2}\), respectively.
With zero tangential direction, the last two lengths become \(0\) and
\(0.9075\).
The non-flexible fallback amplitude is \(0.879\).
Thus each defined mode displaces each target atom by at most \(1.65\).
Matching each atom to its displaced copy in both directions gives
\(d_{\mathrm H}(P^0,\widetilde P_G^m)\leq 1.65\).
This bound concerns the generated probes and establishes no coverage of
natural target conformations.

At a large radius, an enclosure may reduce to the codomain \([0,1]\).
An interval enclosure can nevertheless improve on the centered modulus
bound.
For example, if every \(d_i(P)=2.85\) and \(\varepsilon=1.65\), then
every distance interval is \([1.20,4.50]\), giving
\([L_q,U_q]=[0.35,1]\).
Here \(\omega_q(\varepsilon)\nu_q=2.50\), so the centered modulus
bound alone gives \([0,1]\).
\end{fevremark}

\subsection{Transfer from probe neighborhoods}

\begin{fevdefinition}[Declared probe neighborhoods]
\label{def:fev30-neighborhoods}
For a fixed genome and candidate, finite radii
\(\varepsilon_m\geq 0\) are prescribed. Over the class of finite
nonempty target atom sets in the fixed scoring frame, define
\[
\mathcal U_m
=
\{P:d_{\mathrm H}(P,\widetilde P_G^m)\leq\varepsilon_m\},
\qquad
L_s^m=L_s(x;\widetilde P_G^m,\varepsilon_m),
\quad
U_s^m=U_s(x;\widetilde P_G^m,\varepsilon_m).
\]
\end{fevdefinition}

\begin{fevtheorem}[Union and intersection enclosures]
\label{thm:fev30-transfer}
For the neighborhoods in Definition~\ref{def:fev30-neighborhoods}
and \(s\in\{q,\chi\}\):
\begin{enumerate}
\item if \(P^\star\in\bigcup_m\mathcal U_m\), then
\begin{equation}
\label{eq:fev30-union}
\min_m L_s^m
\leq s(x;P^\star)
\leq\max_m U_s^m;
\end{equation}
\item if \(P^\star\in\bigcap_{m\in J}\mathcal U_m\) for
\(\emptyset\neq J\subseteq\{0,\ldots,K-1\}\), then
\begin{equation}
\label{eq:fev30-intersection}
\max_{m\in J}L_s^m
\leq s(x;P^\star)
\leq\min_{m\in J}U_s^m.
\end{equation}
\end{enumerate}
Membership in an unspecified single neighborhood supports only the
union statement.
If the active index set
\(J_\star=\{m:P^\star\in\mathcal U_m\}\) is known and nonempty, item~2 with
\(J=J_\star\) is valid and at least as tight as item~1.
A union bound restricted to the known active indices is also valid,
but is generally weaker than their intersection bound.
\end{fevtheorem}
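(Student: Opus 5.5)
The plan is to reduce both items to Theorem~\ref{thm:fev30-envelopes}, applied once per probe neighborhood. Fix \(m\) with \(P^\star\in\mathcal U_m\). By Definition~\ref{def:fev30-neighborhoods} this means \(d_{\mathrm H}(P^\star,\widetilde P_G^m)\leq\varepsilon_m\). All targets lie in the one fixed scoring frame and the candidate coordinates are fixed, so Theorem~\ref{thm:fev30-envelopes} applies with \(P=\widetilde P_G^m\), \(P'=P^\star\) and \(\varepsilon=\varepsilon_m\). It yields the single-neighborhood enclosure
\[
L_s^m\leq s(x;P^\star)\leq U_s^m,\qquad s\in\{q,\chi\}.
\]
This one inequality, valid for every active index, is the only analytic input the argument needs.

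For item~1, membership in the union gives at least one active index \(m_0\). The single-neighborhood enclosure at \(m_0\), combined with the trivial comparisons \(\min_m L_s^m\leq L_s^{m_0}\) and \(U_s^{m_0}\leq\max_m U_s^m\), gives \eqref{eq:fev30-union}. This bound does not require knowing which \(m_0\) is active. That is why membership in an unspecified single neighborhood supports only the union statement.

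For item~2, the enclosure holds simultaneously for every \(m\in J\). Taking the maximum of the lower endpoints and the minimum of the upper endpoints over the nonempty finite set \(J\) gives \eqref{eq:fev30-intersection}. As a by-product this shows that \(\max_{J}L_s^m\leq\min_J U_s^m\), so the interval is nonempty.

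For the comparison remarks, take \(J=J_\star\), which is valid by the definition of \(J_\star\). Over any nonempty subset of indices one has \(\max_{J_\star}L\geq\min_{J_\star}L\geq\min_{m}L\), and symmetrically for the upper endpoints. Hence the intersection bound is contained in the union bound restricted to \(J_\star\), which is in turn contained in the full union bound. This settles "at least as tight" and "valid but generally weaker."

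There is no real obstacle in this argument. The only care points are that every \(\varepsilon_m\) is finite, so each interval \(I_i\) and its extrema are well defined, and that all Hausdorff distances are computed in the common frame, as Theorem~\ref{thm:fev30-envelopes} requires.
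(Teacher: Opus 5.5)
Your proposal is correct and follows the paper's proof: both apply Theorem~\ref{thm:fev30-envelopes} to each neighborhood that contains $P^\star$, then use a single active interval for the union bound and intersect the intervals indexed by $J$ for the second item. Your containment chain for $J=J_\star$ spells out the tightness remarks that the paper leaves implicit, but containment only shows that the restricted union bound is never tighter than the intersection bound, not that it is strictly weaker in general.
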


\begin{proof}
Theorem~\ref{thm:fev30-envelopes} places the score in every interval whose
neighborhood contains \(P^\star\).
A point of one interval lies between the global minimum lower endpoint
and the global maximum upper endpoint.
A point of every interval indexed by \(J\) lies in their intersection.
\end{proof}

\begin{fevcorollary}[Profile form of a union lower bound]
\label{cor:fev30-profile-transfer}
Let \(E_q=\max_m(q_m-L_q^m)\) and
\(E_\chi=\max_m(U_\chi^m-\chi_m)\).
Both are nonnegative.
For every \(P^\star\in\bigcup_m\mathcal U_m\),
\begin{equation}
\label{eq:fev30-profile-transfer}
q(x;P^\star)\geq[L_{\mathrm{prof}}-E_q]_+,
\qquad
\chi(x;P^\star)\leq\min\{1,\,\chi_{\max}+E_\chi\}.
\end{equation}
The direct bounds in \eqref{eq:fev30-union} are at least as tight.
\end{fevcorollary}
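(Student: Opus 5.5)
The plan is to reduce both inequalities in \eqref{eq:fev30-profile-transfer} to the union enclosure of Theorem~\ref{thm:fev30-transfer} and then compare its endpoints with the finite-probe profile quantities.

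First, nonnegativity of \(E_q\) and \(E_\chi\). Theorem~\ref{thm:fev30-envelopes} applies with \(P=P'=\widetilde P_G^m\), since \(d_{\mathrm H}=0\leq\varepsilon_m\). This gives \(L_q^m\leq q_m\) and \(\chi_m\leq U_\chi^m\) for every \(m\). Hence each term \(q_m-L_q^m\) and \(U_\chi^m-\chi_m\) is nonnegative, and so are their maxima.

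Next, the contact bound. For \(P^\star\in\bigcup_m\mathcal U_m\), item~1 of Theorem~\ref{thm:fev30-transfer} gives \(q(x;P^\star)\geq\min_m L_q^m\). For each \(m\), the definition of \(E_q\) gives \(L_q^m=q_m-(q_m-L_q^m)\geq q_m-E_q\). Proposition~\ref{prop:fev30-profile} gives \(q_m\geq q_{\min}\geq L_{\mathrm{prof}}\). Together these yield
\[
q(x;P^\star)\geq\min_m L_q^m\geq L_{\mathrm{prof}}-E_q .
\]
Since \(q\in[0,1]\), we may take the positive part. This also shows \(\min_m L_q^m\geq[L_{\mathrm{prof}}-E_q]_+\), because every \(L_q^m\) is a clipped value in \([0,1]\); so the direct bound is at least as tight.

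Then the clash bound, which mirrors the contact case. Item~1 of Theorem~\ref{thm:fev30-transfer} gives \(\chi(x;P^\star)\leq\max_m U_\chi^m\). For each \(m\), \(U_\chi^m\leq\chi_m+E_\chi\leq\chi_{\max}+E_\chi\). The clash envelope endpoints lie in \([0,1]\), as noted in the proof of Corollary~\ref{cor:fev30-width}, so \(\max_m U_\chi^m\leq\min\{1,\chi_{\max}+E_\chi\}\). This gives the stated upper bound, again with the direct bound at least as tight.

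The main point needing care is the clipping: the argument relies on \(L_q^m\geq0\) and \(U_\chi^m\leq1\) when it passes to \([\cdot]_+\) and \(\min\{1,\cdot\}\). Both facts follow from \(C_q\) mapping into \([0,1]\) and from all \(h_\chi\) values lying in \([0,1]\). Everything else is direct chaining of earlier results.
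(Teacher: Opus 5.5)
Your proposal is correct and follows the paper's proof essentially step for step: nonnegativity comes from the enclosure theorem applied at each probe center, and then \(L_q^m\geq q_m-E_q\geq L_{\mathrm{prof}}-E_q\) and \(U_\chi^m\leq\chi_m+E_\chi\leq\chi_{\max}+E_\chi\) are fed into the union enclosure. You also spell out the clipping step, via \(L_q^m\geq 0\) and \(U_\chi^m\leq 1\), which the paper leaves implicit; this makes the "at least as tight" claim fully precise.
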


\begin{proof}
Since each radius is nonnegative, its center belongs to its own
neighborhood. The enclosure theorem therefore gives
\(L_q^m\leq q_m\) and \(\chi_m\leq U_\chi^m\), proving nonnegativity.
Thus \(L_q^m\geq q_m-E_q\geq L_{\mathrm{prof}}-E_q\) and
\(U_\chi^m\leq\chi_m+E_\chi\leq\chi_{\max}+E_\chi\).
Apply \eqref{eq:fev30-union}.
\end{proof}

\begin{fevremark}[Prospective versus retrospective radii]
\label{rem:fev30-retrospective}
Prospective bounds prescribe the radii independently of the held-out
target; coverage remains a separate hypothesis.
Setting \(\varepsilon_m=d_{\mathrm H}(P^\star,\widetilde P_G^m)\) after
\(P^\star\) is supplied yields an a~posteriori intersection enclosure.
It is not an input of \(f_{\mathrm{pres}}\) or \(f_{\mathrm{rob}}\).
For an unobserved natural target, coverage needs an additional hypothesis
or an independently justified model of target variation.
For an observed target, neighborhood membership can instead be checked
directly.
\end{fevremark}

\begin{fevtheorem}[Combining valid probes before score aggregation]
\label{thm:fev30-joint-intervals}
Let $\emptyset\ne J\subseteq\{0,\ldots,K-1\}$ index probe
neighborhoods that all contain
the same target $P^\star$. For each candidate coordinate, define
\begin{align}
\label{eq:fev30-joint-distance}
\underline d_i^J
&=\max_{m\in J}[d_i(\widetilde P_G^m)-\varepsilon_m]_+,
&
\overline d_i^J
&=\min_{m\in J}\{d_i(\widetilde P_G^m)+\varepsilon_m\},
\\
I_i^J&=[\underline d_i^J,\overline d_i^J].\nonumber
\end{align}
Every interval $I_i^J$ is nonempty. For $s\in\{q,\chi\}$, set
\begin{align}
\label{eq:fev30-joint-score}
L_s^J
&=C_s\!\left(\frac1n\sum_i\min_{d\in I_i^J}h_s(d)\right),
&
U_s^J
&=C_s\!\left(\frac1n\sum_i\max_{d\in I_i^J}h_s(d)\right).
\end{align}
Then
\begin{equation}
\label{eq:fev30-joint-nesting}
\max_{m\in J}L_s^m
\leq L_s^J
\leq s(x;P^\star)
\leq U_s^J
\leq\min_{m\in J}U_s^m.
\end{equation}
The two new endpoints are exact over the Cartesian-product relaxation
$\prod_i I_i^J$, but need not be attained by a common target geometry.
\end{fevtheorem}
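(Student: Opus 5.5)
The plan is to work coordinatewise on the nearest-target distances and then push the resulting inclusions through the kernel extrema and the monotone clipping maps.

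\emph{Nonemptiness and the distance bracket.} For each $m\in J$, Lemma~\ref{lem:fev30-distance} applied to the pair $(\widetilde P_G^m,P^\star)$ with $d_{\mathrm H}\le\varepsilon_m$ gives $d_i(\widetilde P_G^m)-\varepsilon_m\le d_i(P^\star)\le d_i(\widetilde P_G^m)+\varepsilon_m$. Since $d_i(P^\star)\ge0$, this places $d_i(P^\star)$ in $I_i(\widetilde P_G^m,\varepsilon_m)$ for every $m\in J$. Taking the maximum of the lower endpoints (after positive parts) and the minimum of the upper endpoints over $m\in J$ gives
\[
\underline d_i^J\le d_i(P^\star)\le\overline d_i^J .
\]
Hence each $I_i^J$ is nonempty and contains the realized distance. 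Because $J$ is finite, $I_i^J=\bigcap_{m\in J}I_i(\widetilde P_G^m,\varepsilon_m)$ is a closed interval. As before, $h_s$ takes finitely many values on it, so its minimum and maximum are attained.

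\emph{Middle inequalities.} Since $d_i(P^\star)\in I_i^J$, we have $\min_{I_i^J}h_s\le h_s(d_i(P^\star))\le\max_{I_i^J}h_s$. Averaging over $i$ and applying the nondecreasing map $C_s$ (clipping for $q$, identity for $\chi$) yields $L_s^J\le s(x;P^\star)\le U_s^J$. This mirrors the proof of Theorem~\ref{thm:fev30-envelopes}.

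\emph{Outer inequalities.} For each fixed $m\in J$, $I_i^J\subseteq I_i(\widetilde P_G^m,\varepsilon_m)$. A minimum over a subset is no smaller and a maximum is no larger, so coordinatewise $\min_{I_i^J}h_s\ge\min_{I_i(\widetilde P_G^m,\varepsilon_m)}h_s$, and similarly for the maxima. Averaging and applying the monotone $C_s$ gives $L_s^J\ge L_s^m$ and $U_s^J\le U_s^m$ for every $m\in J$. Taking the maximum and minimum over $m$ completes \eqref{eq:fev30-joint-nesting}.

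\emph{Exactness over the product.} Each coordinate of $\prod_i I_i^J$ can independently attain its kernel minimizer (respectively maximizer), so the unclipped means, and hence $L_s^J,U_s^J$, are attained there. Non-attainability by a common geometry is only a caveat, as in Theorem~\ref{thm:fev30-envelopes}, and needs no proof.

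\emph{Main obstacle.} The only delicate point is the nonemptiness step: the intersection of the intervals must be shown to contain $d_i(P^\star)$ rather than assumed nonempty. The hypothesis that one common $P^\star$ lies in all the neighborhoods supplies exactly this, together with the positive-part convention $\max\{0,\cdot\}$ matching the definition of $I_i$.
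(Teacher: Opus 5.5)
Your proposal is correct and follows essentially the same route as the paper's proof: Lemma~\ref{lem:fev30-distance} places $d_i(P^\star)$ in every $I_i(\widetilde P_G^m,\varepsilon_m)$ and hence in their intersection $I_i^J$, kernel bracketing through the monotone $C_s$ gives the middle inequalities, the inclusion $I_i^J\subseteq I_i(\widetilde P_G^m,\varepsilon_m)$ gives the outer ones, and independent coordinatewise attainment gives exactness on the product relaxation. You also note explicitly that $d_i(P^\star)\geq 0$ is what justifies the positive-part lower endpoint, a small point the paper leaves implicit.
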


\begin{proof}
For each $m\in J$, Lemma~\ref{lem:fev30-distance} places $d_i(P^\star)$
inside $I_i(\widetilde P_G^m,\varepsilon_m)$.
It therefore belongs to their intersection, which is exactly $I_i^J$,
so every $I_i^J$ is nonempty. For each coordinate,
\[
\min_{d\in I_i^J}h_s(d)
\leq h_s\bigl(d_i(P^\star)\bigr)
\leq\max_{d\in I_i^J}h_s(d).
\]
Averaging and applying the nondecreasing map $C_s$ proves
$L_s^J\leq s(x;P^\star)\leq U_s^J$ directly. This argument does not
require the intersected intervals to have a common center or radius.
For every $m\in J$, the inclusion
$I_i^J\subseteq I_i(\widetilde P_G^m,\varepsilon_m)$ gives
\[
\min_{I_i^J}h_s\geq\min_{I_i(\widetilde P_G^m,\varepsilon_m)}h_s,
\qquad
\max_{I_i^J}h_s\leq\max_{I_i(\widetilde P_G^m,\varepsilon_m)}h_s.
\]
Average these inequalities and apply the monotone map $C_s$.
Taking the largest per-probe lower bound and smallest per-probe upper
bound proves the outside inequalities in \eqref{eq:fev30-joint-nesting}.
Independent attainment of the scalar extrema proves exactness on the
relaxed domain, as in Theorem~\ref{thm:fev30-envelopes}.
\end{proof}

This refinement can be strict. Consider two candidate coordinates with
probe distance vectors $(2.4,4.4)$ and $(2.0,4.0)$.
Use radii $0.3$ and $0.5$, respectively.
The separate contact enclosures are $[0.5,1]$ and $[0.675,1]$.
The intersected coordinate intervals are $[2.1,2.5]$ and $[4.1,4.5]$,
where both contact contributions equal one.
Consequently, the refined contact enclosure is $[1,1]$.
The example is geometrically consistent: on one axis, place candidates
at $0,100$, the two probe atom sets at $\{2.4,104.4\}$ and
$\{2.0,104.0\}$, and the common target at $\{2.2,104.2\}$.
Its Hausdorff distance from each probe is $0.2$.
This is an analytical score example, not a claim of chemical validity.

The refinement requires simultaneous membership for every index in $J$.
It cannot replace the union bound when only one unspecified covering
probe is known to exist. An empty $I_i^J$ disproves simultaneous
membership, whereas nonempty coordinate intervals alone do not prove
that any common target geometry exists.

\subsection{Objective ranges}

Write \(\mathcal S_G\) and \(\mathcal F_G\) for the stable and flexible
residue sets of \(G\), and
\(d_r(x)=\min_i\|p_i(x)-c_r\|_2\).
For nonempty \(\mathcal S_G\),
\begin{equation}
\label{eq:fev30-anchor}
a(x;G)
=
|\mathcal S_G|^{-1}
\sum_{r\in\mathcal S_G}
\exp\!\bigl(-(d_r(x)-3.1)^2/4.5\bigr);
\end{equation}
otherwise \(a=0\).
Hence \(a\in[0,1]\).
For \(n\geq 2\), the methods appendix sets
\begin{equation}
\label{eq:fev30-geometry-scores}
\begin{aligned}
g(x)
&=
C\Bigl(
1-0.16\sum_i\mathbf{1}_{\{d_i^{\mathrm{self}}<0.55\}}
-0.03[n-12]_+
\Bigr),
\\
c(x)
&=
n^{-1}\sum_i\mathbf{1}_{\{
\exists j\neq i:\ 0.65\leq\|p_i-p_j\|_2\leq 2.70\}},
\end{aligned}
\end{equation}
with \(d_i^{\mathrm{self}}=\min_{j\neq i}\|p_i-p_j\|_2\), and with the
conventions \(g=c=1\) for \(n=1\) and \(g=c=0\) for an empty candidate.
Thus \(g,c\in[0,1]\).
Also \(\kappa_q,\Delta_q\in[0,1]\) by \eqref{eq:fev30-profile}.

Let \(t_{ir}=\|Q_r^\top(p_i-c_r)\|_\infty/(b_{\ell_r}z_r)\) and
\(w_r^{\mathrm{box}}=\min\{1,\max\{0.1,z_r\}\}\).
The methods penalty maps are
\begin{align}
\label{eq:fev30-penalties}
r_{\mathrm{flex}}(x;G)
&=
\frac{
0.85\sum_{r\in\mathcal F_G}[1-d_r(x)/3]_+
+0.05\sum_{r\in\mathcal F_G}w_r^{\mathrm{box}}
\sum_i\mathbf{1}_{\{t_{ir}\leq 0.55\}}
}{\max\{1,|\mathcal F_G|\}},
\nonumber\\
\pi_{\mathrm{forb}}(x;G)
&=
n^{-1}\sum_i\sum_{\substack{r\in\mathcal R_G:\\ \ell_r=\mathrm{forbidden}}}
\Bigl(
\mathbf{1}_{\{t_{ir}\leq 0.5\}}
+0.2\,\mathbf{1}_{\{0.5<t_{ir}<0.7\}}
\Bigr),
\\
e(x)
&=
\sum_{o\in\mathcal H_x}w_o,
\qquad
\Pi
=
0.95r_{\mathrm{flex}}+1.25\pi_{\mathrm{forb}}
+0.70\chi_{\max}+0.40e,
\nonumber
\end{align}
where \(\mathcal H_x\) is a finite indexed sequence of edit records
(repeated operations are counted separately), and every record
weight satisfies \(0\leq w_o\leq 0.06\).
All summands are nonnegative.
The reference candidate objectives are
\begin{align}
\label{eq:fev30-objectives}
f_{\mathrm{pres}}
&=
1.50a+q_0+0.80q_{\mathrm{rob}}-1.40\chi_{\max},
\nonumber\\
f_{\mathrm{rob}}
&=
0.90g+0.80c+0.75\kappa_q-0.75\Delta_q-\Pi.
\end{align}
In particular, \(q_{\mathrm{rob}}\) appears inside \(f_{\mathrm{pres}}\) by
definition: the preservation objective already mixes input-state contact
with probe-profile contact.

\begin{fevproposition}[Range and aggregation bounds]
\label{prop:fev30-ranges}
Every scored candidate with \(n(x)\geq 1\) satisfies
\[
-1.40\leq f_{\mathrm{pres}}\leq 3.30,
\qquad
-0.75-\Pi\leq f_{\mathrm{rob}}\leq 2.45.
\]
These endpoint inequalities are obtained by independent component
extrema and need not be attained jointly by one molecular candidate.
For any fixed integers \(N\geq 1\) and \(H\geq 0\), if
\(n(x)\leq N\) and the edit history has at most \(H\) records, then
\(\Pi\leq B(N,H,R)\) with
\begin{equation}
\label{eq:fev30-budget}
B(N,H,R)
=
0.95(0.85+0.05N)+1.25R+0.70+0.024H.
\end{equation}
For a finite nonempty adapted pool \(\mathcal X_G\), define
\[
M_j=\max_{x\in\mathcal X_G} f_j(x;G),\qquad
\mu_j=|\mathcal X_G|^{-1}\sum_{x\in\mathcal X_G} f_j(x;G).
\]
Whenever the standard aggregation
\(F_j=\lambda M_j+(1-\lambda)\mu_j-\eta_j\Omega(G)\) is used,
\begin{equation}
\label{eq:fev30-aggregate}
\mu_j-\eta_j\Omega(G)
\leq F_j
\leq M_j-\eta_j\Omega(G).
\end{equation}
If \emph{every} \(x\in\mathcal X_G\) satisfies \(n(x)\leq N\) and
\(|\mathcal H_x|\leq H\), then, with
\(\eta_{\mathrm{pres}}=0\) and \(\eta_{\mathrm{rob}}=0.10\),
\[
-1.40\leq F_{\mathrm{pres}}\leq 3.30,
\qquad
-0.85-B(N,H,R)\leq F_{\mathrm{rob}}\leq 2.45.
\]
\end{fevproposition}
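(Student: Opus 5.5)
The proof is componentwise. I will bound each summand of $f_{\mathrm{pres}}$ and $f_{\mathrm{rob}}$ separately, using the ranges already established, and then pass to the genome level through the convex-combination structure of $F_j$.

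\textbf{Candidate objectives.} For $f_{\mathrm{pres}}$, I will use $a\in[0,1]$ from \eqref{eq:fev30-anchor} and $q_0,q_{\mathrm{rob}}\in[0,1]$ from Proposition~\ref{prop:fev30-profile}. The same proposition gives $\chi_{\max}\in[0,1]$. The upper bound follows by taking $a=q_0=q_{\mathrm{rob}}=1$ and $\chi_{\max}=0$, which gives $1.50+1+0.80=3.30$. The lower bound follows by taking the three positive terms at zero and $\chi_{\max}=1$, which gives $-1.40$.

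For $f_{\mathrm{rob}}$, I will use $g,c\in[0,1]$ (including the stated $n=1$ convention), $\kappa_q,\Delta_q\in[0,1]$, and $\Pi\ge0$, since all summands of $\Pi$ are nonnegative. The upper bound is $0.90+0.80+0.75=2.45$. The lower bound keeps $-\Pi$ explicit and takes $-0.75\Delta_q\ge-0.75$.

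I will state explicitly that each extremum is taken independently, so joint attainment is not claimed.

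\textbf{Penalty budget.} I bound the four summands of $\Pi$ one at a time.

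\begin{enumerate}
\item For $r_{\mathrm{flex}}$, each $[1-d_r/3]_+\le1$, so the first sum is at most $0.85|\mathcal F_G|$. Since $w_r^{\mathrm{box}}\le1$ and the indicator sum over $i$ is at most $n\le N$, the second sum is at most $0.05N|\mathcal F_G|$. Dividing by $\max\{1,|\mathcal F_G|\}$, which is at least $|\mathcal F_G|$ when $\mathcal F_G\neq\emptyset$ and leaves zero otherwise, gives $r_{\mathrm{flex}}\le0.85+0.05N$.
\item For $\pi_{\mathrm{forb}}$, the inner bracket is at most $1$, because its two indicators are disjoint and the larger coefficient is $1$. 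There are at most $R$ forbidden residues, and averaging over $i$ gives $\pi_{\mathrm{forb}}\le R$.
\item $\chi_{\max}\le1$.
\item $e\le0.06H$, so $0.40e\le0.024H$.
\end{enumerate}

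Summing these with the weights of $\Pi$ yields $B(N,H,R)$.

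\textbf{Genome aggregation.} The pool is finite and nonempty, so $\mu_j\le M_j$. Hence $\lambda M_j+(1-\lambda)\mu_j$ lies in $[\mu_j,M_j]$ for every $\lambda\in[0,1]$, and subtracting $\eta_j\Omega(G)$ gives \eqref{eq:fev30-aggregate}.

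Under the uniform size and history hypotheses, every candidate obeys the candidate ranges, with $f_{\mathrm{rob}}\ge-0.75-B$. The mean and the maximum inherit both sides.
\begin{itemize}
\item For $F_{\mathrm{pres}}$, $\eta_{\mathrm{pres}}=0$, so the candidate range $[-1.40,3.30]$ carries over directly.
\item For $F_{\mathrm{rob}}$, I use $0\le\Omega\le1$ with $\eta=0.10$. The penalty lowers the lower end by at most $0.10$, giving $-0.85-B$. It never raises the upper end, so $F_{\mathrm{rob}}\le2.45$ still holds.
\end{itemize}

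The only step needing care is the $r_{\mathrm{flex}}$ normalization, in particular the $\max\{1,\cdot\}$ denominator when $\mathcal F_G$ is empty. Everything else is routine bookkeeping.
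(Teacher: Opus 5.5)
Your proposal is correct and follows the paper's proof essentially step for step. Both arguments substitute the $[0,1]$ component ranges together with $\Pi\geq 0$, bound the four terms of $\Pi$ in the same way (including the empty-$\mathcal F_G$ case for $r_{\mathrm{flex}}$), and derive the genome-level ranges from $\mu_j\le\lambda M_j+(1-\lambda)\mu_j\le M_j$ together with $0\le\eta_{\mathrm{rob}}\Omega(G)\le 0.10$.
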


\begin{proof}
Substitute the component ranges into \eqref{eq:fev30-objectives}:
\(a,q_0,q_{\mathrm{rob}},g,c,\kappa_q,\Delta_q,\chi_{\max}\in[0,1]\)
and \(\Pi\geq 0\).

For the budget, if \(\mathcal F_G=\emptyset\) then \(r_{\mathrm{flex}}=0\).
Otherwise each proximity term satisfies \([1-d_r/3]_+\leq 1\) and each
\(w_r^{\mathrm{box}}\leq 1\), while
\(\sum_i\mathbf{1}_{\{t_{ir}\leq 0.55\}}\leq n\leq N\).
Hence the numerator of \(r_{\mathrm{flex}}\) is at most
\(0.85|\mathcal F_G|+0.05|\mathcal F_G|N\), and dividing by
\(|\mathcal F_G|\) yields \(r_{\mathrm{flex}}\leq 0.85+0.05N\).
For each coordinate and each forbidden residue, the two occupancy
indicators in \eqref{eq:fev30-penalties} are disjoint, so their
contribution is at most one; summing over at most \(R\) forbidden
residues and averaging over \(n\) coordinates gives
\(\pi_{\mathrm{forb}}\leq R\).
Finally \(\chi_{\max}\leq 1\) and \(e\leq 0.06H\), proving
\eqref{eq:fev30-budget}.

Since \(\mu_j\leq M_j\), their convex combination lies between them;
subtracting \(\eta_j\Omega(G)\) gives \eqref{eq:fev30-aggregate}.
The uniform assumptions on the pool give
\(f_{\mathrm{rob}}(x;G)\geq -0.75-B(N,H,R)\) for every candidate.
The lower bound on \(F_{\mathrm{rob}}\) follows together with
\(\eta_{\mathrm{rob}}\Omega(G)\leq 0.10\).
\end{proof}

\begin{fevremark}[Coordinate and history bounds are separate assumptions]
\label{rem:fev30-size-bounds}
The integers $N$ and $H$ in Proposition~\ref{prop:fev30-ranges} bound
coordinates per candidate and records per edit history. They are not the
inner or outer population sizes $N_x$ and $N_G$.
In particular, the maximum inner population of twelve does not imply
$n(x)\leq12$. A fixed number of adaptation rounds alone also does not
specify $N$ or $H$ without bounds on initialization and the records and
coordinates produced by each invoked operation.
For a realized finite nonempty pool of finite candidate records, one
may take $N=\max_x n(x)$ and $H=\max_x|\mathcal H_x|$; this gives an
a posteriori bound, not a uniform bound for every possible input.
\end{fevremark}

\begin{fevremark}[Equal-aggregate support comparison]
\label{rem:fev30-support}
If two standard-aggregation evaluations share identical \(M_j\) and \(\mu_j\) for both
objectives, then
\(F_{\mathrm{rob}}(G)-F_{\mathrm{rob}}(G')
=0.10\bigl(\Omega(G')-\Omega(G)\bigr)\).
In general, changing support also changes scores and probes, so this
identity is not a general preference for smaller support.
The coordinatewise maxima \(M_{\mathrm{pres}}\) and \(M_{\mathrm{rob}}\)
need not be attained by the same candidate.
The documented zero-return branch that assigns
\((F_{\mathrm{pres}},F_{\mathrm{rob}})=(0,0)\)
when boxes or adapted candidates are missing satisfies the numerical
ranges but is excluded from \eqref{eq:fev30-aggregate}; it is not a
feasibility penalty.
\end{fevremark}

\subsection{Selection: what is and is not preserved}

Encode each genome by residue roles, with an additional symbol for
absence.
The residue-role distance \(D_G\) is the Hamming distance between these
encodings: it is a metric on encodings and a pseudometric on full genomes
(scales are ignored).

For maximized objective pairs, record \(u\) dominates \(v\) when both
coordinates of \(u\) are at least those of \(v\), with at least one
strict inequality. Nondominated fronts are formed by repeatedly
removing the current nondominated records.
The outer selector visits increasing front rank, then decreasing
crowding, followed by objective sum and identifier. The counterexamples
below have equal first-front sums and specify the identifier order, so
they do not depend on the direction of objective-sum tie-breaking.
Crowding conventions are assumed to produce a defined total traversal
order, including fronts with a constant objective coordinate.
The counterexample uses the standard boundary-crowding convention:
both members of a two-record front have infinite crowding.
It defers a record whose role distance to an already accepted record is
at most two, then backfills deferred records.

\begin{fevproposition}[Outer survivor facts and a dominance failure]
\label{prop:fev30-selection}
For a finite input population \(U\) of uniquely identified records with
finite real objective pairs, and integer capacity \(N_G\geq 1\), the
selected set is a subset of \(U\) of size \(\min\{N_G,|U|\}\).
For nonempty \(U\), at least one selected record lies on the first front.
Distinct records accepted before backfill have role distance at least
three; this separation need not survive backfill.
For every integer capacity \(N_G=N\geq2\) and every target size
\(R\geq3(N-1)+2\), there exists an input on which the selector retains
a dominated record while excluding one of its dominators.
Thus dominance preservation is not guaranteed uniformly over target
sizes and scored input populations. The displayed condition on \(R\)
is sufficient for the construction below; it is not asserted to be
necessary. At capacity one, the sole survivor, when present, lies on
the first front.
\end{fevproposition}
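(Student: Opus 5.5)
The three structural facts come straight from how the selector works; only the dominance failure needs a construction.

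\emph{Selected-set size.} The selector walks through the fronts in order of rank. It accepts a record unless that record is within role distance two of a record it has already accepted, in which case the record is deferred. After the walk, deferred records fill the remaining slots in traversal order. Every record of \(U\) is therefore either accepted or deferred. Filling stops exactly when the capacity is reached or the records run out, so the selected set is a subset of \(U\) of size \(\min\{N_G,|U|\}\).

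\emph{A first-front survivor.} When \(U\) is nonempty, the first front is nonempty because dominance is a strict partial order on a finite set. The first record visited belongs to the first front. The accepted set is empty at that moment, so nothing can cause a deferral and this record is accepted.

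\emph{Separation before backfill.} Each record is accepted only if its distance to every previously accepted record exceeds two. Since \(D_G\) takes integer values (it is a Hamming distance), the distance is at least three. Backfilled records were deferred precisely because they were close to an accepted record, so the separation can fail after backfill.

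\emph{Capacity one.} The sole survivor is the first record visited, which lies on the first front.

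\emph{Dominance failure.} For capacity \(N\ge2\) the plan is an explicit input. Build \(N-1\) first-front records \(A_1,\dots,A_{N-1}\) with equal objective sums, distinct objective pairs lying on an antidiagonal, and mutual role distance at least three. Disjoint supports do this: give each \(A_k\) its own block of three residues. Add one more first-front record \(A_N\) with role distance at most two to \(A_1\); for example, it differs from \(A_1\) in one residue. Add a dominated record \(B\) that is dominated by \(A_N\) and lies on the second front.

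The role encoding needs room. It needs \(3(N-1)\) residues for the separated blocks and a couple of residues of slack so that \(B\) is at distance at least three from every \(A_k\). This is where the hypothesis \(R\ge3(N-1)+2\) is used.

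Order the first front by crowding and then identifier so that \(A_1,\dots,A_{N-1}\) are visited before \(A_N\). When \(N-1\le2\), boundary crowding is infinite and the identifier decides the order. For larger \(N\), choose the objective values so that the two extreme points are \(A_1\) and another \(A_k\), and place \(A_N\) in the interior with small crowding.

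Tracing the selector:
\begin{enumerate}
\item \(A_1,\dots,A_{N-1}\) are accepted.
\item \(A_N\) is deferred, being within distance two of \(A_1\).
\item \(B\) is visited on the second front, is far from every accepted record, and is accepted.
\end{enumerate}
This fills the \(N\) slots, so \(A_N\) is excluded while \(B\), which it dominates, is retained.

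The main obstacle is arranging the crowding and tie-breaking so that \(A_N\) really is visited after the others. I would first try a two-record first front for \(N=2\), then pad with additional well-separated extreme records for larger \(N\), keeping \(A_N\) strictly interior.
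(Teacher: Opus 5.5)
Your structural arguments (set size, first-front survivor, pre-backfill separation from integrality of $D_G$, capacity one) match the paper's, and your dominance counterexample is valid. It is arranged the other way round from the paper's.

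The paper keeps the first front at two records, $C=(2,0)$ and $A=(1,1)$. Both have boundary crowding and equal sum, so only the identifier order matters. All $N-1$ remaining records $B_j=(t_j,1-t_j)$ sit on the second front. Each is dominated by the deferred $A$, lies at role distance $3$ from $C$ and $6$ from the others, so their order within that front is irrelevant.

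You put $N$ records on the first front and a single dominated record $B$ on the second. For $N=2$ the two constructions coincide. For $N\ge 3$, yours must control the traversal order inside a front of size $N$. That relies on the convention that the two endpoints of the antidiagonal get infinite crowding and interior points finite crowding. The paper needs only the two-record boundary convention it states explicitly, and it yields $N-1$ survivors all dominated by the same excluded record.

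Your ``main obstacle'' is smaller than you suggest. Deferral is checked only against already accepted records, so you need $A_1$ visited before $A_N$, not all of $A_1,\dots,A_{N-1}$. This holds as soon as $A_1$ is an endpoint and $A_N$ is interior. Only two records of such a front are extreme, so ``padding with extreme records'' should be read as keeping $A_1$ an endpoint.

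You can even drop the crowding assumption. Take $B$ componentwise below every first-front point, so $B$ is dominated by all of them. Fix the traversal order, which depends only on objectives and identifiers. Only then assign encodings: disjoint blocks to the first-visited record and the other $N-2$, a near-copy of the first-visited record to one later record, and $B$ far from all blocks.

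Finally, your mechanism shows that any backfilled record breaks the separation. To conclude that separation ``need not survive,'' you should also exhibit an instance where backfill actually occurs. The paper uses two records at role distance one with capacity two.
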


\begin{proof}
Until capacity is reached, each visited record is accepted or deferred.
If the first pass finishes below capacity, every input record has been
accepted or deferred, so backfill fills capacity or exhausts the remainder.
The first visited record of nonempty \(U\) lies on the first front and is
accepted.
Each first-pass acceptance has integer distance greater than two from
all prior acceptances. To see why backfill can violate this separation,
take capacity two and two records whose role encodings differ at exactly
one residue. Whichever record is visited first is accepted; the other
is deferred and then restored by backfill. Both survive with distance one.

Fix integers \(N\geq2\) and \(R\geq R_0:=3(N-1)+2\).
For an operator-level counterexample at capacity \(N_G=N\),
take \(N+1\) records with objective pairs
\[
C=(2,0),\qquad A=(1,1),\qquad
B_j=(t_j,1-t_j),\quad t_j=\tfrac14+\tfrac{j}{2N},\quad 1\leq j<N.
\]
The first front is \(\{C,A\}\); both have boundary crowding and sum two.
Let identifiers place \(C\) first. The \(B_j\) are pairwise nondominated;
\(A\) dominates each of them, whereas \(C\) dominates none.
Thus they form the second front.

On the first \(R_0\) residue positions, let the role encoding of \(C\)
be stable everywhere except at position \(R_0\), which is flexible.
Let \(A\) change only the first position to flexible. Partition the first
\(3(N-1)\) positions into disjoint triples \(J_j\), and let \(B_j\)
change exactly the triple \(J_j\) to flexible. At every remaining
position \(R_0+1,\ldots,R\), give all records the same stable role.
Assign every record any common admissible scale vector. Every record
has both a stable and a flexible role. Then
\[
D_G(C,A)=1,\qquad D_G(C,B_j)=3,\qquad
D_G(B_j,B_k)=6\quad(j\ne k).
\]
The selector accepts \(C\), defers \(A\), and accepts all \(B_j\),
in any within-front order, filling capacity before backfill.
It therefore excludes a dominator of every selected \(B_j\).
The reference capacity \(N_G=8\) is covered whenever \(R\geq23\).
This disproves an unconditional guarantee for the selection operator
on arbitrary scored records; it does not assert molecular realizability
of the specified pairs and encodings.
\end{proof}

\begin{fevremark}[Why the target-size qualification matters]
\label{rem:fev30-small-target}
For \(R\leq2\), any two role encodings have \(D_G\leq2\).
For a nonempty input and positive capacity, the first record is accepted;
all subsequent records reached in the first pass are deferred.
The final selected set is therefore the first \(\min\{N_G,|U|\}\)
records of the original traversal; no backfill is needed when the first
acceptance already fills capacity. Every strict dominator of a record has a lower
front rank and occurs earlier in that traversal. Consequently, on such
small targets the selector cannot retain a record while excluding one
of its dominators. This observation does not guarantee retention of all
nondominated records when their number exceeds capacity.
The negative statement in Proposition~\ref{prop:fev30-selection} is
therefore a statement about specified sufficiently large target sizes,
not about every fixed \(R\geq1\).
\end{fevremark}

\begin{fevremark}[Inner selection and main-text wording]
\label{rem:fev30-inner}
For inner capacity \(N_x=N\geq2\), use the same objective pairs
and centroids \(c_C=(0,0,0)\), \(c_A=(0.5,0,0)\), and
\(c_{B_j}=(2j,0,0)\). With the \(1\)~\AA\ centroid niche,
\(C\) is accepted, \(A\) is deferred, and all \(B_j\) are accepted.
This includes every capacity allowed by the reference inner rule
\(N_x=\max\{4,\min\{12,2|\mathcal X_0|\}\}\).
Here too the objective pairs and centroids are inputs to the selection
operator; their simultaneous realization by the molecular evaluator is
not asserted.
Nondominated rank is a selection criterion; unconditional preservation
of nondominated records is not a property of the documented rule.
The documented rule therefore uses nondominated sorting with crowding
and niching, without guaranteeing retention of every nondominated record
or of every dominator of a retained record.
\end{fevremark}

\begin{fevremark}[Evaluation count]
\label{rem:fev30-count}
If initialization evaluates \(N_G\) genomes and each of
\(g_{\max}\) completed generations evaluates \(n_o\) offspring per
parent in a population of size \(N_G\), there are
\(N_G(1+g_{\max}n_o)\) scheduled genome evaluations.
The reference values \((N_G,g_{\max},n_o)=(8,4,2)\) give \(72\).
This count assumes the stated schedule is completed without early
termination or extra evaluations.
It counts repeated genomes, not distinct genomes, low-level score calls,
or wall-clock time.
Finite scheduling implies termination only when each invoked subroutine
also terminates and all required scoring and feedback operations are
defined. For an empty parent pool, the reference implementation uses
infinite nearest-coordinate distances and zero edit-use counts, yielding
zero feedback without evaluating a minimum over an empty set.
\end{fevremark}

\subsection{Established properties and their role in adaptation}

The analysis establishes geometric control of the editing prior,
deterministic finite-probe score bounds, and target-transfer enclosures
under the stated Hausdorff-neighborhood conditions.
Proposition~\ref{prop:fev30-geometry} bounds the spatial extent of the
FlexBox prior. Proposition~\ref{prop:fev30-profile} converts contact
retention and variation into a lower bound shared by all evaluated probes.
For $K=7$ and $\rho=0.60$, this includes
\[
q_{\min}\geq q_{\mathrm{rob}}-\frac{12}{35}\Delta_q.
\]
Theorem~\ref{thm:fev30-transfer} extends the geometric score enclosures
to target conformations satisfying its explicit neighborhood conditions.

These results complement the molecular structural safeguards in
Appendix~\ref{app:adaptation}. Permitted rigid and annotated torsional
transformations preserve the supplied molecular graph and covalent bond
lengths, while the acceptance gate enforces coordinate and bond-length
consistency before survivor selection. The geometric descriptors quantify
spatial support and target contacts, and the category-specific validation
specification addresses chemical and physical structure checks.
Together, these components give an explicit account of structural
preservation, perturbation scoring, and conditional cross-state score
control within the implemented adaptation procedure.

\end{document}